\documentclass{report}
\usepackage[online]{suthesis-2e}
\dept{Statistics}

\usepackage[T1]{fontenc}

\usepackage{adjustbox}

\usepackage{natbib}
\usepackage[table]{xcolor}
\usepackage{graphicx}
\usepackage{url}
\usepackage{amsmath}
\usepackage{amssymb}
\usepackage{bm}
\usepackage{enumitem}
\usepackage{booktabs}
\usepackage{multirow}
\usepackage{colortbl}
\usepackage{wrapfig}
\usepackage{caption}
\usepackage{tabularx}
\usepackage{tcolorbox}
\tcbuselibrary{skins,breakable,listings}
\usepackage{listings}

\newtcblisting{CodeBlock}[2][]{%
  listing only,
  breakable,
  colback=white,
  colframe=black,
  title=#2,
  listing options={
    basicstyle=\ttfamily\footnotesize,
    breaklines=true,
    numbers=left,
    numberstyle=\tiny,
    numbersep=8pt,
    xleftmargin=12pt,
    framexleftmargin=12pt,
    columns=fullflexible,
    keepspaces=true
  },
  #1
}

\lstdefinestyle{pseudocode}{
  language=Python,
  basicstyle=\ttfamily\small,
  keywordstyle=\bfseries,
  commentstyle=\color{thesisGrey}\itshape,
  stringstyle=\color{thesisGrey},
  breaklines=true,
  columns=fullflexible,
  keepspaces=true,
  showstringspaces=false,
  morekeywords={self, Callable, pass},
  deletekeywords={exec}
}
\usepackage{subfigure}

\DeclareMathOperator*{\argmax}{arg\,max}

\newcommand{\<}{\langle}
\renewcommand{\>}{\rangle}
\def\Dpre{{\mathcal{D}_\text{pretrain}}}
\def\Dst{{\mathcal{D}_\text{ST}}}
\def\thetast{{\theta_\text{ST}}}
\def\Spre{{\mathcal{S}_\text{pretrain}}}
\def\perf{{\mathrm{Perf}}}
\def\docone{{d_1}}
\def\doctwo{{d_2}}
\def\doc{{d}}
\definecolor{thesisBlack}{HTML}{000000}
\definecolor{thesisDarkGarnet}{HTML}{5C0E02}
\definecolor{thesisRichMahogany}{HTML}{2E0701}
\definecolor{thesisRosyTaupe}{HTML}{AE8781}
\definecolor{thesisGrey}{HTML}{808080}

\definecolor{appleLightGray}{HTML}{EEEEEE}

\usepackage{amsthm}
\usepackage{dsfont}
\usepackage{pifont}
\newtcolorbox{qualitativeBox}{
  enhanced,
  breakable,
  colback=gray!10,
  colframe=gray!30,
  boxrule=0.5pt,
  left=10pt,
  right=10pt,
  top=5pt,
  bottom=5pt,
  boxsep=5pt,
}
\renewcommand{\P}{\mathbb{P}}
\newcommand{\Ds}{\mathcal{D}_{\text{source}}}
\newcommand{\Qt}{\mathcal{Q}_{\text{test}}}
\newcommand{\Denti}{\mathcal{D}_{\text{EntiGraph}}}
\newcommand{\lmgen}{\mathsf{LM}_{\text{aug}}}
\newcommand{\lmsynth}{\mathsf{LM}_{\text{synth}}}

\newcommand{\cV}{\mathcal{V}}
\def\cD{{\mathcal D}}

\def\cR{{\mathcal R}}
\def\cS{{\mathcal S}}
\def\cL{{\mathcal L}}
\def\bM{{\boldsymbol M}}
\def\bI{{\boldsymbol I}}
\def\quality{QuALITY}
\def\Algsynth{\mathcal{A}_{\text{synth}}}
\def\Dsynth{\mathcal{D}_{\text{synth}}}
\newtheorem{theorem}{Theorem}
\newtheorem{lemma}{Lemma}[section]
\theoremstyle{definition}
\newtheorem{definition}{Definition}
\newlength{\bigfill}
\newlength{\smallfill}
\usepackage{algorithm}
\usepackage{algpseudocode}
\usepackage{float}
\usepackage{placeins}

\usepackage{makecell}
\usepackage{pdfpages}
\usepackage{hyperref}
\hypersetup{colorlinks=true, citecolor=thesisGrey, linkcolor=black, urlcolor=black, filecolor=black, pdftitle={Continually self-improving AI}, pdfauthor={Zitong Yang}}

\newcommand{\sonek}{\textbf{s1K}}
\newcommand{\sone}{\textbf{s1-32B}}

\definecolor{defaultblue}{HTML}{5C0E02}
\definecolor{defaultlightblue}{HTML}{AE8781}

\usepackage{textcomp}
\usepackage{array}
\newcolumntype{L}[1]{>{\raggedright\arraybackslash}p{#1}}

\makeatletter
\renewcommand\paragraph{\@startsection{paragraph}{4}{\z@}%
  {-3.25ex \@plus -1ex \@minus -.2ex}%
  {-1em}%
  {\normalfont\normalsize\bfseries}}
\makeatother

\makeatletter
\def\titlep{%
        \thispagestyle{empty}%
        \null\vskip1in%
        \begin{center}
                \large\uppercase\expandafter{\@title}
        \end{center}
        \vfill
        \begin{center}
                \large
                A DISSERTATION\\
                \uppercase\expandafter{\@whichsub}\\
                OF STANFORD UNIVERSITY\\
                IN PARTIAL FULFILLMENT OF THE REQUIREMENTS\\
                FOR THE DEGREE OF\\
                DOCTOR OF PHILOSOPHY
        \end{center}
        \vfill
        \begin{center}
                \rm
                Committee in charge:\\
                \vskip0.5em
                \if*\@coprincipaladviser
                        \@principaladviser, Principal \advis@r\\
                \else
                        \@principaladviser, Co-chair\\
                        \@coprincipaladviser, Co-chair\\
                \fi
                \if*\@firstreader \else
                        \@firstreader\\
                \fi
                \if*\@secondreader \else
                        \@secondreader\\
                \fi
        \end{center}
        \vfill
        \begin{center}
                \rm \@author\\
                \@submitdate\\
        \end{center}\vskip.5in\newpage}
\makeatother

\begin{document}
\title{Continually self-improving AI}
\author{Zitong Yang}
\principaladvisor{Emmanuel Candès}
\coprincipaladvisor{Tatsunori Hashimoto}
\firstreader{Percy Liang}
\secondreader{Ruoming Pang}
 
\beforepreface
\setcounter{page}{2}
\includepdf[pages=-]{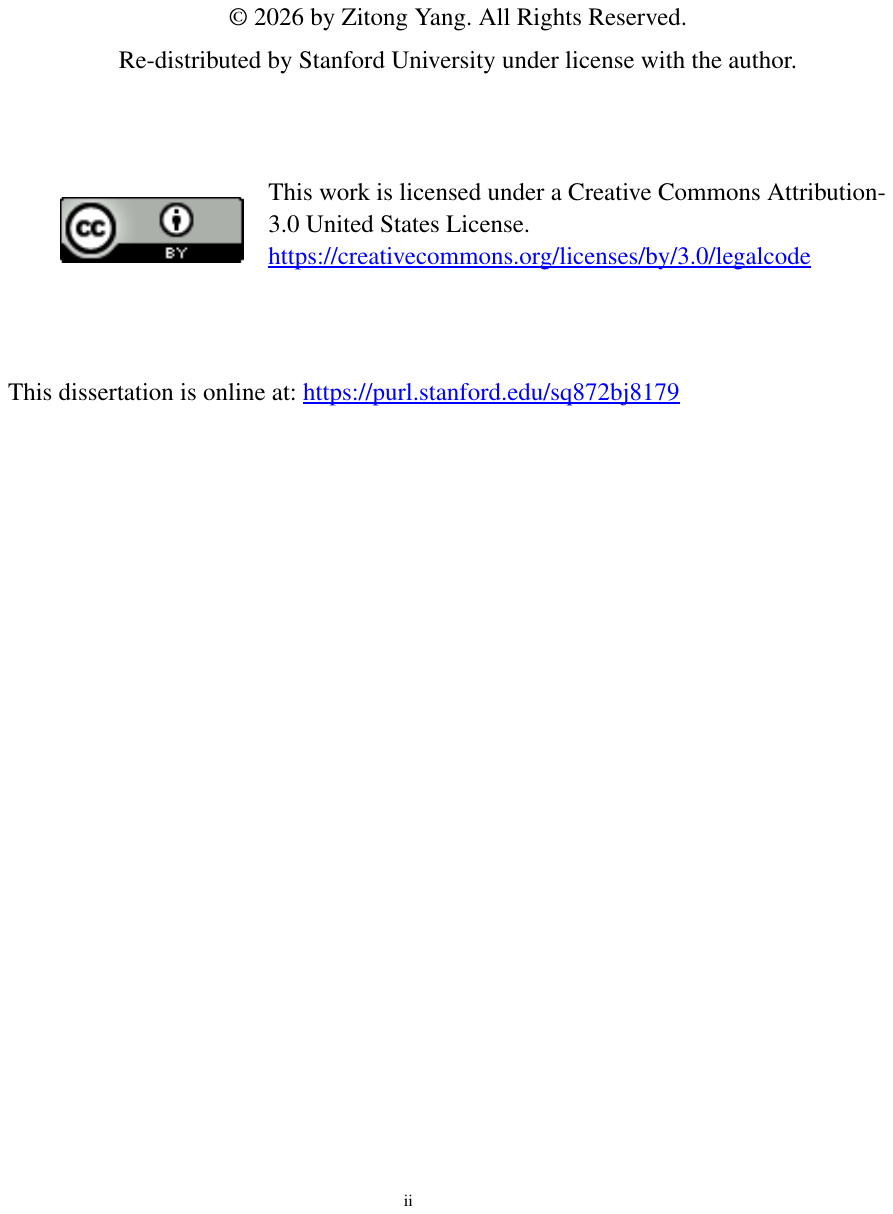}
\prefacesection{Abstract}

Modern language model-based AI systems are remarkably powerful, yet their capabilities remain fundamentally capped by their human creators in three key ways.
First, although a model's weights can be updated via fine-tuning, acquiring new knowledge from small, specialized corpora after pretraining remains highly data-inefficient.
Second, the training of these systems relies heavily on finite, human-generated data from across history.
Third, the pipelines used to train AI models are confined by the algorithms that human researchers can discover and explore.

This thesis takes a small step toward overcoming these inherent limitations, presenting three chapters aimed at breaking these dependencies to create continually self-improving AI.
First, to overcome this data-efficiency barrier in knowledge acquisition, we propose a synthetic data approach that diversifies and amplifies small corpora into rich knowledge representations, enabling a model to effectively update its parameters from limited source material.
Second, to reduce reliance on human data, we show that given a fixed amount of such data, the model can self-generate synthetic data to bootstrap its fundamental pretraining capabilities without distillation from any off-the-shelf, instruction-tuned LM.
Finally, to transcend human-engineered training paradigms, we demonstrate that by scaling search during test time over the space of algorithms, AI can search over a larger space of learning algorithm configurations than human researchers can explore manually.

\prefacesection{Acknowledgments}
Being a Ph.D. student in AI from 2022 to 2025 at Stanford, CA, is a once-in-a-lifetime experience.
The rate at which the surrounding environments change is exhilaratingly opportunistic yet mentally taxing.
Every few months, we see the valley's capital frenzy occupying the news headlines.
Every few weeks, we see a new model release with uncanny capability from a 2016-2020 perspective.
Every few days, we see a new paper announcement delivering similar progress to ours.

Amid an ever-changing appearance lies the unchanging reality.
For me, this reality is my wife, Angie, and my advisors, Emmanuel Candès and Tatsunori Hashimoto.

Thank you, Angie, for being my emotional anchor over the past 10 years, for guiding me, comforting me, soothing me, empowering me, and encouraging me to explore far into the unseen by offering me a tranquil harbor where I can introspect.

Thank you, Emmanuel, for shaping my approach to research and to life, for pushing me to pursue the problem I am truly excited about, for teaching me the history of science, for pushing me toward thoughtful work, and for showing me, by example, the unwavering will to explore.

Thank you, Tatsu, for welcoming me to the world of natural language processing, for instilling in me the value of introspection, for walking me through every obstacle, for planting in me the seed of rigor, and for building a research lab so vibrant that I couldn’t possibly hope for more.

I would like to thank Percy Liang and Ruoming Pang for being on my thesis committee.
Thank you, Percy, for pushing me to always form opinions by running my own experiments.
Thank you, Ruoming, for teaching me the perspective to view computer systems through the lens of hardware.

I would like to extend my gratitude to Berkeley, where my research started.
I would like to thank Yi Ma and Jacob Steinhardt for bringing me to the world of machine learning.
Thank Song Mei, Chong You, Yaodong Yu, and Yuexiang Zhai, for mentoring me hand-in-hand through my first few research projects.
Thank Jiabao Yang for all the discussions of physics we had over the years.
I am grateful to all the friends, collaborators, and mentors from my time at Berkeley: Christina Baek, Ryan Chan, Chih-Yuan Chiu, Xili Dai, Sara Fridovich-Keil, Zhen Guo, Zhiyue Hu, Jiantao Jiao, Druv Pai, Haozhi Qi, Shengbang Tong, Alex Wei, Eric Xia, Chiao-Yu Yang, Jiahao Yao, Chong You, Yue Zhang, Ruiqi Zhong, and Banghua Zhu.

Upon entering Stanford, Jacob Steinhardt encouraged me to explore both Sequoia and Gates.
I am grateful to have two buildings I can call home.

I would like to thank Sourav Chatterjee and Amir Dembo for enabling me to appreciate the wisdom of Kolmogorov.
Thank John Duchi for teaching me the way of information theory and optimization.
Thank Lihua Lei for mentoring my first paper at Stanford.
Thank Andrea Montanari for chairing the department over the past three years and helping me with various requests.
Thank Susie Ementon for working with me through all the unprecedented logistics hassles.
Thank all the fellow residents of Sequoia over the years: Kelly Buchanan, Chen Cheng, Gary Cheng, Zhaomeng Cheng, John Cherian, Noah Cowan, Brice Huang, Andrew Ilyas, Wenlong Ji, Ying Jin, Joon Lee, Jinzhou Li, Shuangping Li, Gennie Ma, Yash Nair, Michael Salerno, Ziang Song, Asher Spector, Zihao Wang, Yao Zhang, and Tijana Zrnic.

I would like to thank the wonderful staff of CS336 for their help in getting started with language model research.
Thank Neil Band for building the world's best reranker.
Thank Chenglei Si for guiding me to the world of automating AI research.
Thank Diyi Yang and Ludwig Schmidt for all the discussions about society and AI.
Thank all the fellow residents of Gates for being around over the years: Jiaao Chen, Ian Covert, Yann Dubois, Mingjian Jiang, Xiang Lisa Li, Xinhao Li, Xuechen Li, Hong Liu, Ken Liu, Niklas Muennighoff, Sam Park, Chenglei Si, Yu Sun, Rohan Taori, Tristan Thrush, and Tianyi Zhang.

I would also like to thank mentors and friends from the industry.
Thank Aonan Zhang and Dong Yin from Apple for hosting me and working with me through our research.
Thank Aditya Menon and Sanjiv Kumar from Google for welcoming me and showing the power of retrieval.
Thank Zonglin Li from Anthropic for all the AGI discussions over the years.
Thank John Schulman for encouraging me to work on unorthodox problems.

Finally, I would like to thank friends outside Stanford and work.
Thank you, Samy Jelassi, for working with me on hard problems and for our discussion over the years.
Thank you, Sam Buchanan, Tianle Cai, Danqi Chen, Tianzhe Chu, Weijie Su, Xuyang Tian, Hongtao Yao, Liang Yuan, and Mingxuan Zuo for your friendship.

Finally, thank my parents P.~W. and G.~Y. for everything.

\afterpreface

\chapter{Introduction}
\label{chap:intro}
\section{Defining continually self-improving AI}
\label{sec:intro-definition}

In a single sentence: \emph{a continually self-improving AI is one that, once created, can autonomously and continually improve itself better than its human creators can improve it.}
We state two assumptions that scope the definition to the class of AI systems studied in this thesis.

\begin{enumerate}[leftmargin=36pt, label=\textbf{(A\arabic*)}]
\item The AI system is based on one or more neural networks, so that its knowledge is encoded in a well-defined set of parametric weights.
\item There exists a resource-intensive pretraining phase during which the system is created:
\begin{equation}
\label{eqn:pretraining}
\texttt{ai\_system} = \texttt{learning\_algorithm}(\texttt{training\_signal}),
\end{equation}
where \texttt{training\_signal} is human knowledge (e.g., internet text), \texttt{learning\_algorithm} encompasses things like architecture (e.g., the Transformer) and optimizer (e.g., gradient descent), and \texttt{ai\_system} is the resulting model.
\end{enumerate}

These two assumptions clearly encompass the current large language model paradigm---Transformers trained via gradient descent on internet text---but they do not exclude non-Transformer architectures, non-gradient-descent optimizers, or non-textual training signals.
The definition captures any parametric system that undergoes an expensive creation phase followed by continued operation.

With these assumptions in place, we define a continually self-improving AI as one satisfying three properties.
Note that the pretraining formula already implies that improvement is data-driven---grounded in a learning signal rather than, say, hardware upgrades or manual weight surgery.

\begin{definition}[Continually self-improving AI]
\label{def:csiai}
Under Assumptions~(A1)--(A2), an AI system is \emph{continually self-improving} if it satisfies:
\begin{enumerate}[leftmargin=36pt, label=\textbf{(P\arabic*)}]
\item After the pretraining phase~\eqref{eqn:pretraining}, the system continues to acquire new knowledge built into its parametric weights without catastrophically forgetting existing capabilities.
\item The system generates its own \texttt{training\_signal}, and learning from these self-generated signals yields continued improvement beyond what existing human-generated signals provide.
\item The system autonomously determines what \texttt{learning\_algorithm} to use to learn from its training signals.
\end{enumerate}
\end{definition}

The assumptions are not arbitrary---each one makes a specific aspect of self-improvement well-defined.
Assumption~(A1) makes Property~(P1) meaningful: without weights, there is no substrate into which new knowledge can be written.
Assumption~(A2) establishes a \emph{pretraining phase} that creates the system, making ``improvement after the pretraining phase'' a precise concept---the system continues to update after this expensive initial phase.
The pretraining formula also makes explicit that three components exist---the model, the algorithm, and the data---each of which can be the target of improvement.

Each property corresponds to one chapter of this thesis:
\begin{itemize}[leftmargin=16pt]
\item \textbf{Chapter~\ref{chap:scp}} (Property~P1). Improving what the model knows, by synthesizing diverse representations of a small corpus for continued pretraining.
\item \textbf{Chapter~\ref{chap:sbp}} (Property~P2). Improving the system's fundamental pretraining capability, by exploiting inter-document correlations to strengthen pretraining itself.
\item \textbf{Chapter~\ref{chap:automated-ai-research}} (Property~P3). Improving the process by which models are trained, by scaling test-time search from the token level to the idea level---generating research ideas, executing them, and learning from the results.
\end{itemize}
Altogether, these three paths sketch a future where AI systems continually improve themselves.
While we do not claim that current AI has goals in a human sense, the drive toward self-improvement may be understood as an inherent goal for sufficiently capable systems---just as organisms evolve toward greater fitness without deliberate intent, AI systems that can improve their own training may represent a new kind of open-ended optimization.
We next summarize each chapter's contribution in turn.

\section{Continual knowledge acquisition}
\label{sec:intro-knowledge}

We first address Property~(P1) of Definition~\ref{def:csiai}: after pretraining, how can a language model continue learning from a small, specialized corpus?
This is a \emph{data-limited} problem: niche knowledge---proprietary datastores, specialized scientific domains, private corpora---inherently lacks the diverse internet representation that makes standard pretraining effective.
Several approaches address this problem: knowledge editing~\citep{rome, memit} modifies individual facts but does not scale to corpus-level knowledge; retrieval-augmented generation~\citep{rag} keeps knowledge external and is limited by the context window; and collecting more real data is often infeasible for proprietary or niche domains.
We pursue synthetic data generation because it operates at corpus scale, writes knowledge directly into the model's parametric weights, and remains applicable when real data cannot be obtained.
In Chapter~\ref{chap:scp}, we tackle two challenges---data efficiency and catastrophic forgetting---through \emph{synthetic continued pretraining}.
At a high level, we convert a small corpus into a large, diverse synthetic corpus using a knowledge graph--inspired augmentation algorithm called EntiGraph, and then continue pretraining on the expanded data while mixing in a fraction of the original pretraining distribution to prevent forgetting.

The approach is effective: a model trained on a synthetically augmented corpus acquires the knowledge of the original documents and demonstrates it across a range of downstream tasks.
However, the synthetic data generator we used was GPT-4---a model far more powerful than the student being trained.
We deliberately allowed this distillation because the scientific question in Chapter~\ref{chap:scp} is about \emph{data efficiency}---whether synthetic data can bridge the gap between a small corpus and the internet-like diverse representations that make learning effective---not about self-improvement.
But this raised an immediate follow-up: was the improvement genuine learning, or merely distillation from a stronger teacher?
This question motivated the next chapter.

\section{Bootstrapping pretraining capabilities}
\label{sec:intro-self-improvement}

The release of OpenAI o1 \citep{o1} pulled the field into reasoning models.
A natural question is: how much data is needed to elicit reasoning capabilities from a pretrained model?
In Chapter~\ref{chap:sbp}, \S\ref{sec:s1-sample-efficiency}, we show that the answer is strikingly little: training on just 1,000 carefully curated examples with reasoning traces suffices to build a competitive reasoning model.

The implication is clear.
A model cannot possibly acquire mathematical knowledge from 1,000 questions---the capability must already be latent in the pretrained weights.
Pretraining is therefore the foundation, and finetuning merely elicits what is already there.

Two threads now converge---the centrality of pretraining and the desire for genuine, not distilled, self-improvement---addressing Property~(P2) of Definition~\ref{def:csiai}.
Can a model trained on a fixed dataset generate synthetic data to train a \emph{better} model?
If so, this would constitute true self-improvement: no stronger teacher and no new information from the environment.
This question is set in a \emph{data-limited} regime as well, motivated by the approaching exhaustion of high-quality internet text~\citep{villalobos2024run}: we assume a fixed pool of unique documents and ask whether the model can extract more value from them than simple repetition provides.
The main alternatives are architectural changes, which are orthogonal and complementary to data-driven gains; retrieval-augmented pretraining~\citep{Borgeaud:2021, Khandelwal:2020}, which leverages related documents but keeps the additional signal external to the weights; and in-context pretraining~\citep{shi2024incontext}, which groups related documents into the same context window but is limited by context length.
We choose synthetic data because it creates new training signal from existing data and writes it directly into the model's weights via standard pretraining.

In Chapter~\ref{chap:sbp}, \S\ref{sec:sbp-introduction}, we show that the answer is yes.
Synthetic Bootstrapped Pretraining (SBP) trains a conditional data synthesizer that generates new training documents from existing ones---for instance, synthesizing a code tutorial from an arXiv paper, or a critical essay from a novel.
By training on these synthetic documents alongside the original corpus, SBP improves pretraining perplexity in compute-matched comparisons, closing up to 60\% of the gap to an oracle with access to unlimited unique data.

This result is qualitatively different from synthetic continued pretraining.
The defining constraint is that \emph{distillation is forbidden}: the data synthesizer is trained from the same pretraining corpus, not from a stronger external model.
Without this constraint, self-improvement would be trivially achievable by distilling from a more capable teacher.
SBP operates without any external teacher---the improvement stems from using the same data more efficiently by exploiting a weaker form of self-supervision latent in the pretraining corpus.
Because SBP improves perplexity---a fundamental quantity that correlates with all downstream tasks---the gain is not confined to niche benchmarks but reflects a genuine improvement in the model's core capability.

\section{Towards AI-designed AI via test-time search}
\label{sec:intro-algorithms}

AI research may be a domain where AI itself can deliver significant progress.
Consider the scientific method in the Popperian tradition.
Science proceeds in two steps: generating hypotheses and testing them with experiments to falsify them.
For mathematics, the execution step is special: the chain-of-thought that AI models produce implicitly carries out the verification, which helps explain the rapid progress AI has made in mathematical reasoning.
For AI research, execution materializes entirely as code---writing training scripts, launching experiments, logging metrics---and AI systems are already remarkably capable at code generation.
Meanwhile, idea generation takes place in natural language, which AI models handle fluently.
The natural design, then, is to connect an AI idea generator to an AI experiment executor end-to-end.
Established approaches to algorithmic improvement---Neural Architecture Search~\citep{Zoph2016NeuralAS, So2019TheET}, automated algorithm discovery~\citep{Real2020AutoMLZeroEM}, and learned optimizers~\citep{Chen2023SymbolicDO}---are effective within their respective domains but operate within constrained, predefined spaces or require end-to-end differentiable pipelines.
These are reasonable approaches, but they make it difficult to discover techniques outside the search space or scale to full training systems.
We pursue research automation because it operates in an unbounded action space---ideas expressed in natural language, validated via code execution---using capabilities that language models already possess.

A second observation from the reasoning work reinforces this direction.
In Chapter~\ref{chap:automated-ai-research}, \S\ref{sec:s1-budget-forcing}, we show that even a crude intervention---suppressing the end-of-thinking token to force longer reasoning, a technique we call \emph{budget forcing}---improves accuracy.
If brute-force thinking at the token level already helps, systematically scaling search at the \emph{idea} level---generating research ideas, executing them, and learning from the results---should yield further improvement.

In Chapter~\ref{chap:automated-ai-research}, we pursue this question by building an automated AI research system and applying test-time search at the idea level: generating research ideas, executing them automatically, and feeding results back to guide the next round of search.
This represents another flavor of self-improvement: not improving training data or model capability, but improving the training algorithm itself.

\section{Publications}
\label{sec:intro-publications}

This thesis is based on the following four publications, listed in reverse chronological order ($*$ denotes equal contribution):

\begin{enumerate}
\item \textbf{Towards Execution-Grounded Automated AI Research} \\
Chenglei Si$^*$, Zitong Yang$^*$, Yejin Choi, Emmanuel Cand\`es, Diyi Yang, Tatsunori Hashimoto. \\
\textit{arXiv preprint}, 2026.

\item \textbf{Synthetic Bootstrapped Pretraining} \\
Zitong Yang$^*$, Aonan Zhang$^*$, Hong Liu, Tatsunori Hashimoto, Emmanuel Cand\`es, Chong Wang, Ruoming Pang. \\
\textit{International Conference on Learning Representations (ICLR)}, 2026.

\item \textbf{S1: Simple Test-Time Scaling} \\
Niklas Muennighoff$^*$, Zitong Yang$^*$, Weijia Shi$^*$, Xiang Lisa Li$^*$, Li Fei-Fei, Hannaneh Hajishirzi, Luke Zettlemoyer, Percy Liang, Emmanuel Cand\`es, Tatsunori Hashimoto. \\
\textit{Empirical Methods in Natural Language Processing (EMNLP, Oral)}, 2025.

\item \textbf{Synthetic Continued Pretraining} \\
Zitong Yang$^*$, Neil Band$^*$, Shuangping Li, Emmanuel Cand\`es, Tatsunori Hashimoto. \\
\textit{International Conference on Learning Representations (ICLR, Oral)}, 2025.
\end{enumerate}
\noindent Table~\ref{tab:provenance} maps each thesis section to the publication it is based on.
Sections marked ``Original'' were written for this thesis and do not appear in any prior publication.

\begin{table}[ht]
\centering
\caption{Provenance of thesis sections. Publication numbers refer to the list above.}
\label{tab:provenance}
\small
\begin{tabular}{ll}
\toprule
\textbf{Section} & \textbf{Source} \\
\midrule
Chapter~\ref{chap:intro}: Introduction (\S\ref{sec:intro-definition}--\S\ref{sec:intro-publications}) & Original \\
\quad \S\ref{sec:intro-related-work} Related work & Adapted from Publications 1--4 \\
\midrule
Chapter~\ref{chap:scp}: Continual knowledge acquisition & Publication 4 \\
\midrule
Chapter~\ref{chap:sbp}: Bootstrapping pretraining capabilities & \\
\quad \S3.1 Prelude: sample-efficient reasoning & Publication 3 \\
\quad \S3.2--3.7 Synthetic bootstrapped pretraining & Publication 2 \\
\midrule
Chapter~\ref{chap:automated-ai-research}: Towards AI-designed AI & \\
\quad \S4.1--4.3 Automated AI research system & Publication 1 \\
\quad \S4.4 Test-time scaling via budget forcing & Publication 3 \\
\quad \S4.5--4.6 Evolutionary search \& discussion & Publication 1 \\
\midrule
Chapter~\ref{chap:conclusion}: Conclusion & Original \\
\midrule
Appendix A: Supplementary for Chapter~\ref{chap:scp} & Publication 4 \\
Appendix B: Supplementary for Chapter~\ref{chap:sbp} & \\
\quad \S B.1--B.3 & Publication 2 \\
\quad \S B.4 & Publication 3 \\
Appendix C: Supplementary for Chapter~\ref{chap:automated-ai-research} & \\
\quad \S C.1--C.2 & Publication 1 \\
\quad \S C.3 & Publication 3 \\
Appendix D: Supplementary for Chapter~\ref{chap:conclusion} & Original \\
\bottomrule
\end{tabular}
\end{table}
\section{Related work}
\label{sec:intro-related-work}

We consolidate the related work for all three chapters in this section, organized by the corresponding chapter topic.

\subsection{Continual knowledge acquisition}
\label{sec:related-work}

Several approaches address knowledge acquisition in language models---knowledge editing, retrieval-augmented generation, and collecting more real data---each with different trade-offs in scale, permanence, and applicability.
We pursue synthetic data generation because it operates at corpus scale, writes knowledge into parametric weights, and applies when real data cannot be obtained; the following review provides evidence for this choice.

\paragraph{Synthetic generation of pretraining data} Recent approaches synthesize \emph{pretraining} data using hierarchical prompting to promote dataset diversity.
\cite{eldan2023tinystories} prompt LLMs to generate stories containing sampled keywords and show that small LMs trained on their dataset generate fluent text.
\cite{gunasekar2023textbooksneed} synthesize textbooks and code exercises by conditioning on topic, target audience, and function names, later releasing strong LMs pretrained on synthetic data \citep{li2023textbooksneediiphi15, phi2, abdin2024phi3technicalreporthighly}; their datasets and prompts are not public.
\cite{wrap} prompt an LM to rephrase documents for pretraining, improving training efficiency.
In contrast, we focus on teaching a pretrained LLM the knowledge of a small corpus.
\cite{mecklenburg2024injectingnewknowledgelarge} consider task-specific finetuning and propose a fact-based synthetic QA generation procedure but do not show improvement on generic instruction following.
In contrast, we focus on teaching a model generally useful knowledge about a small corpus, untied to any particular downstream task.
\cite{ovadia2024finetuningretrievalcomparingknowledge} continually pretrain Llama 2--based LMs on synthetic paraphrases of Wikipedia articles but do not observe consistent improvements.
We adapt the approach of \cite{wrap} and \cite{mecklenburg2024injectingnewknowledgelarge} to our small corpus setting (``Rephrase baseline'' in Chapter~\ref{chap:scp}, \S\ref{sec:exp-main}).
Our graph-based augmentation algorithm outperforms this baseline; we postulate that the improvement stems from enforcing diversity through entity-based generation.

\paragraph{Continued pretraining} Continual or continued \emph{pretraining} methods \citep{gururangan2020dont} adapt pretrained LMs to broad target domains such as code, medicine, or mathematics by collecting massive datasets (often $>$100B tokens; see Table~\ref{tbl:cpt-prev-work} for a survey) and applying causal language modeling recipes \citep{gupta2023continualpretraininglargelanguage, ibrahim2024simplescalablestrategiescontinually, parmar2024reusedontretrainrecipe}.
We aim to extend continued pretraining to small, specialized domains such as proprietary datastores.
Because standard continued pretraining proves ineffective on small corpora, we propose a knowledge graph--inspired approach to synthesize a diverse, related corpus more amenable to learning.

\paragraph{Knowledge editing} A related line of work updates LMs with small units of factual knowledge---e.g., $(\text{subject, relation, object})$ tuples.
\cite{zhu2020modifying} study constrained fine-tuning to limit model complexity.
Later approaches localize where factual knowledge is stored in Transformers and update only those weights \citep{mend, rome, memit}, or maintain an external memory of edits and prepend them as context during generation \citep{mquake, rippleeffects}.
Most related to our work is \cite{akyurek-etal-2024-deductive}, which first deduces implications of a factual edit and then finetunes on those implications.
Unlike knowledge editing, which learns atomic, sentence-length facts, we aim to learn from a small corpus of documents.

\paragraph{Synthetic data generation} A rich literature uses neural networks to generate synthetic data.
Many approaches stem from semi-supervised learning: self-training and pseudo-labeling improve models by iteratively training them on their own predictions \citep{scudder1965probability, lee2013pseudolabel, yalniz2019billionscalesemisupervisedlearningimage, berthelot2019mixmatchholisticapproachsemisupervised, xie2020selftraining}, and co-training uses two models to supervise each other \citep{blum1998combining, balcan2004cotraining}.
Before language models rose to prominence, few approaches attempted to synthesize inputs; one exception is membership query synthesis \citep{Angluin1988QueriesAC, schumann-rehbein-2019-active}.
Contemporary works employ co-training \citep{lang2022cotraining} and self-training to improve language model performance, often on mathematical reasoning tasks \citep{huang2023large, gulcehre2023reinforcedselftrainingrestlanguage, zhang2024restmctsllmselftrainingprocess}, or synthesize input-output pairs for instruction tuning by conditioning on a curated seed set \citep{wang-etal-2023-self-instruct, honovich-etal-2023-unnatural, alpaca, peng2023instructiontuninggpt4, yuan2024selfrewardinglanguagemodels, li2024syntheticdataalmostscratch}.

\paragraph{Continual learning and pretraining} Continual learning stems from historical work on connectionist networks \citep{mccloskey1989catastrophic, ratcliff1990connectionist} and considers learning with tasks arriving online \citep{schlimmer1986case, grossberg2012studies}.
The central challenge is mitigating a neural network's ``catastrophic forgetting'' of previously encountered tasks \citep{robins1995catastrophic, goodfellow2015empiricalinvestigationcatastrophicforgetting, kemker2018measuring}.
Approaches include regularizing parameter updates to preserve important parameters \citep{nguyen2017variational, zenke2017continual, kirkpatrick2017overcoming}, dynamically modifying the architecture \citep{rusu2016progressive, golkar2019continual}, and recalling or replaying previous experiences \citep{rebuffi2017icarl, shin2017continual, lopez2017gradient}.
Modern continued pretraining methods mitigate catastrophic forgetting by scaling parameter count \citep{ramasesh2022effect} and mixing in updates on pretraining data \citep{instruct_gpt}.

\subsection{Bootstrapping pretraining capabilities}
\label{sec:sbp-related-work}

Given the self-improvement constraint---no external teacher---the main alternatives for improving pretraining are architectural changes and retrieval-augmented approaches.
Both are reasonable and complementary; we choose synthetic generation because it creates new training signal from existing data and writes it directly into the model's weights.
The following review situates this choice across three areas: LM pretraining, synthetic data for LMs, and retrieval-augmented LMs.

\paragraph{LM pretraining} Modern pretraining stems from a series of works including ELMo \citep{peters2018deepcontextualizedwordrepresentations}, ULMFiT \citep{howard2018universallanguagemodelfinetuning}, and BERT \citep{devlin-etal-2019-bert}, which pretrain a neural network via an unsupervised objective and subsequently finetune for a wide range of downstream tasks.
The GPT series \citep{gpt1, gpt2, gpt3, gpt4} cemented next-token prediction as the pretraining objective applied to large-scale crawled webpages, as opposed to task-specific datasets (e.g., English-to-French translation).
In recent years, the size of pretraining corpora has grown rapidly, driven by massive web-crawled datasets: BERT~\citep{devlin-etal-2019-bert, liu2020roberta}, GPT-2 WebText \citep{gpt2}, CommonCrawl \citep{commoncrawl}, CCNet~\citep{wenzek2019ccnet}, T5 C4~\citep{t5}, the Pile~\citep{gao2020pile}, Gopher Massive Text~\citep{rae2021scaling}, Llama series \citep{touvron2023llama2openfoundation, llama3}, RefinedWeb~\citep{penedo2023refinedweb}, Dolma~\citep{soldaini2024dolma}, DCLM-baseline~\citep{li2024datacomplm}, NemotronCC~\citep{su2024nemotron}, etc.
While pretraining has proven tremendously successful, the rapid depletion of available internet text motivates a shift from acquiring more data to using existing data more effectively---an opportunity that SBP directly exploits.

\paragraph{Synthetic data} One way to overcome scarce high-quality web data is to pretrain~\citep{gunasekar2023textbooksneed, phi2, abdin2024phi3technicalreporthighly, phi4, Bai2025KimiKO} or continually pretrain~\citep{ruan2025reasoning, zweiger2025selfadaptinglanguagemodels, nguyen2025recyclingwebmethodenhance} LMs on synthetic data.
Existing approaches rely on distillation from a powerful ``teacher'' LM that generates compressed knowledge for the ``student'' LM to learn \citep{hinton2015distillingknowledgeneuralnetwork}.
These teacher models must first undergo human alignment, which requires extensive annotations and preference data \citep{instruct_gpt}.
However, synthetic data from teacher LMs shows limited scaling: while such data can be as much as 7x more effective than real data \citep{datologyai2025beyondweblessonsscalingsynthetic}, performance quickly converges to that of the teacher LM \citep{busbridge2025distillation}.
In contrast, we consider the scenario where the sole source of world knowledge comes from a fixed set of pretraining documents (e.g., the internet) and algorithmically learn a data synthesizer with minimal human intervention (e.g., no generative teacher models or human writing prompts).
Our experimental setup therefore simulates a situation where LMs can self-boost their pretraining capability by refining their understanding of the fixed collection of pretraining documents.

\paragraph{Retrieval-augmented LMs} A natural class of methods that incorporates multiple documents together is retrieval-augmented generation (RAG) \citep{Lample:2019, rag}.
Originally a test-time technique for domain-specific downstream tasks \citep{li2022decoupled}, retrieval-augmented approaches have since been extended in scope:
\cite{Borgeaud:2021}, \cite{Khandelwal:2020}, and \cite{resmem} implement RAG at pretraining scale and show improved test perplexity;
\cite{Guu:2020} incorporates RAG at pretraining time by jointly training a retriever and the model itself for improved QA performance;
\cite{shi2024incontext} groups related documents into the same context window for improved long-context capability.
While RAG-based approaches enable the model to leverage rich inter-document correlations, they are fundamentally limited by the LM's context window.
In contrast, SBP encodes correlations into synthetic data that can be iteratively learned by the LM one document at a time.
Prior to embedding models that enable retrieving entire documents, \cite{guu2018generatingsentenceseditingprototypes} retrieve neighboring pairs of sentences using Jaccard similarity and model the conditional distribution between them---an objective that resembles our conditional data synthesizer---but they do not perform pretraining experiments.

\subsection{Towards AI-designed AI}
\label{sec:air-related-work}

As discussed in \S\ref{sec:intro-algorithms}, established approaches to algorithmic improvement---Neural Architecture Search, automated algorithm discovery, and learned optimizers---are effective within their respective domains but operate within constrained search spaces or require end-to-end differentiability.
The following review details these alternatives and the evidence for pursuing research automation in an unbounded action space instead.

\paragraph{Test-time scaling methods} As introduced in Chapter~\ref{chap:automated-ai-research}, \S\ref{sec:s1-bf-ttc}, we differentiate two approaches to scaling test-time compute: \textbf{parallel} and \textbf{sequential}.
Parallel methods generate multiple solution attempts independently and select the best outcome via specific criteria---choosing the most frequent response (majority voting) or the highest-scoring response under an external reward (Best-of-N)~\citep{brown2024largelanguagemonkeysscaling, irvine2023rewardingchatbotsrealworldengagement, levi2024simplemodelinferencescaling}.
Sequential methods instead let the model generate solution attempts one after another, refining each attempt based on previous outcomes~\citep{snell2024scalingllmtesttimecompute,hou2025advancinglanguagemodelreasoning,lee2025evolvingdeeperllmthinking}.
Tree-based search methods~\citep{gandhi2024streamsearchsoslearning, wu2024inference} offer a hybrid between sequential and parallel scaling---examples include Monte-Carlo Tree Search (MCTS)~\citep{liu2024dontthrowawayvalue, zhang2023planninglargelanguagemodels, zhou2024languageagenttreesearch, choi2023kcts} and guided beam search~\citep{xie2024self}.
\textsc{REBASE}~\citep{wu2024inference} uses a process reward model to balance exploitation and pruning during tree search, outperforming both sampling-based methods and MCTS.
Reward models~\cite{lightman2023letsverifystepstep, wang-etal-2024-math,wang2024helpsteer2opensourcedatasettraining} play a key role in these methods and come in two variants: outcome reward models~\cite{xin2024deepseekproveradvancingtheoremproving, ankner2024critiqueoutloudrewardmodels}, which assign a score to complete solutions and are useful in Best-of-N selection, and process reward models~\citep{lightman2023letsverifystepstep, wang-etal-2024-math, wu2024inference}, which assess individual reasoning steps and guide tree-based search.

\paragraph{AutoML} Our work connects to the AutoML literature.
Neural Architecture Search (NAS) defines a constrained set of neural network operators and optimizes architectures based on validation performance through reinforcement learning~\cite{Zoph2016NeuralAS,Zoph2017LearningTA} or search~\cite{Liu2017HierarchicalRF,So2019TheET}.
Recent works also use LMs directly to propose architecture variants and implement them for validation~\cite{Liu2025AlphaGoMF, Cheng2025LanguageMB}.
Beyond architectures, similar automatic optimizations have been applied to hyperparameter tuning~\cite{Zhang2023UsingLL}, discovering machine learning algorithms~\cite{Real2020AutoMLZeroEM}, improving post-training objectives~\cite{Lu2024DiscoveringPO}, discovering better neural network optimizers~\cite{Chen2023SymbolicDO}, and designing agent scaffolds~\cite{Hu2024AutomatedDO}.
In contrast, we tackle automated AI research in a fully open-ended setting without constraints on idea type.
Our goal is to improve idea generation effectiveness, where natural-language ideas represent a higher level of abstraction than specific architecture variants or code optimizations.

\paragraph{LM-based research agents} Recent works build LM-based research agents for accelerating scientific discovery, including AI research.
AI-Scientist~\cite{Lu2024TheAS,Yamada2025TheAS}, AI-Researcher~\cite{Tang2025AIResearcherAS}, and Agent Laboratory~\cite{Schmidgall2025AgentLU} are end-to-end research agents that use LMs to generate ideas and implement them through carefully designed agent scaffolds.
These systems address open-ended AI research as we do but do not study how to learn from execution feedback to improve idea effectiveness.
On more grounded benchmarks with clear performance metrics---MLE-Bench~\cite{Chan2024MLEbenchEM}, RE-Bench~\cite{Wijk2024REBenchEF}, and ML-Gym~\cite{Nathani2025MLGymAN}---various works explore learning from execution feedback through search~\cite{Toledo2025AIRA,Jiang2025AIDEAE} or RL~\cite{Yang2025ReinforcementLF} to optimize performance on targeted ML engineering tasks.
While we also study algorithms for learning from execution feedback, we tackle open-ended research problems like pretraining and post-training rather than ML engineering tasks that depend heavily on feature engineering and hyperparameter tuning.

\paragraph{AI for research} Apart from fully end-to-end automated AI research, many works study how to use LMs for specific components of the scientific research pipeline: literature review~\cite{Asai2024OpenScholarSS,Lala2023PaperQARG}, idea generation~\cite{Si2024CanLG,Wang2023SciMONSI}, data analysis~\cite{Majumder2024DiscoveryBenchTD,Mitchener2025KosmosAA}, experiment plan generation~\cite{Goel2025TrainingAC}, research code execution~\cite{Starace2025PaperBenchEA,Hua2025ResearchCodeBenchBL,Tian2024SciCodeAR}, and paper reviewing~\cite{Liang2023CanLL,Zhu2025DeepReviewIL}.
Our work focuses on automated idea execution and learning from execution feedback, complementing the above efforts that improve other aspects of the research pipeline.

\paragraph{Execution grounding for code} Learning from execution feedback has been explored in code generation: \cite{Zheng2024OpenCodeInterpreterIC} curate data and train models to refine code from human or execution feedback; \cite{Gehring2024RLEFGC} use end-to-end RL to teach models to improve code based on execution feedback; \cite{Lavon2025ExecutionGL} directly guide code generation with execution signals during inference.
In contrast, we explore execution grounding for idea generation, where verification is more complicated and expensive.

\chapter{Continual knowledge acquisition}
\label{chap:scp}
As established in Chapter \ref{chap:intro}, pretraining creates powerful models by absorbing knowledge from large-scale internet text.
But what happens when the knowledge we need is not on the internet?
Proprietary corpora, niche scientific domains, and private datastores contain valuable knowledge that appears rarely---or never---in the pretraining distribution.
Standard continued pretraining on such small corpora is ineffective: the model cannot generalize from a compressed representation of knowledge, and data-inefficient phenomena such as the ``reversal curse''~\citep{berglund2023reversal} and the requirement of thousands of examples per fact~\citep{allenzhu2024physicslanguagemodels32} make direct memorization unreliable.

Several approaches address knowledge acquisition in language models (see \S\ref{sec:related-work} for a detailed review).
Knowledge editing methods~\citep{rome, memit} effectively update atomic facts---e.g., $(\text{subject, relation, object})$ tuples---but do not scale to corpus-level knowledge.
Retrieval-augmented generation~\citep{rag} keeps knowledge external to the model and is fundamentally limited by the context window.
Collecting more real data is often infeasible by definition for proprietary or niche domains.
We pursue synthetic data generation because it operates at corpus scale, writes knowledge directly into the model's parametric weights, and applies precisely when real data cannot be obtained.
The open question is how to synthesize effectively---which is what this chapter addresses.

\section{Synthetic continued pretraining}
\label{sec:scp-intro}

We propose to address this problem of acquiring knowledge from small corpora with \emph{synthetic continued pretraining}.
To illustrate, consider the problem of teaching an LM a new area of mathematics, succinctly documented by a small set of textbooks.
Directly training the model on those textbooks is unlikely to be effective because of the limited volume of text (e.g., tens of thousands of words), and the model will struggle to generalize from this compressed representation of knowledge.
In contrast, learning established mathematical areas like linear algebra is straightforward because a large-scale corpus with diverse knowledge representations is accessible: for example, online lecture notes, Stack Exchange discussions, or Python implementations of the singular value decomposition.
Synthetic continued pretraining bridges this gap by first converting a small, data-constrained domain into a synthetic corpus with diverse knowledge representations, and then continuing pretraining on it.

One basic approach is to simply paraphrase or rewrite the source documents in multiple ways.
However, we find that generic rephrasing does not bridge the gap in knowledge representation diversity.
We repeatedly rephrase a small corpus and find that the value of incremental synthetic data quickly decreases, with downstream model performance scaling poorly.
This failure stems from the lack of diversity in paraphrasing alone.
In the linear algebra example, online lecture notes and Stack Exchange discussions go beyond a simple rewrite of any textbook---they provide deeper analysis and application of the underlying concepts and techniques.

\begin{figure}[t]
\centering
\includegraphics[width=\textwidth]{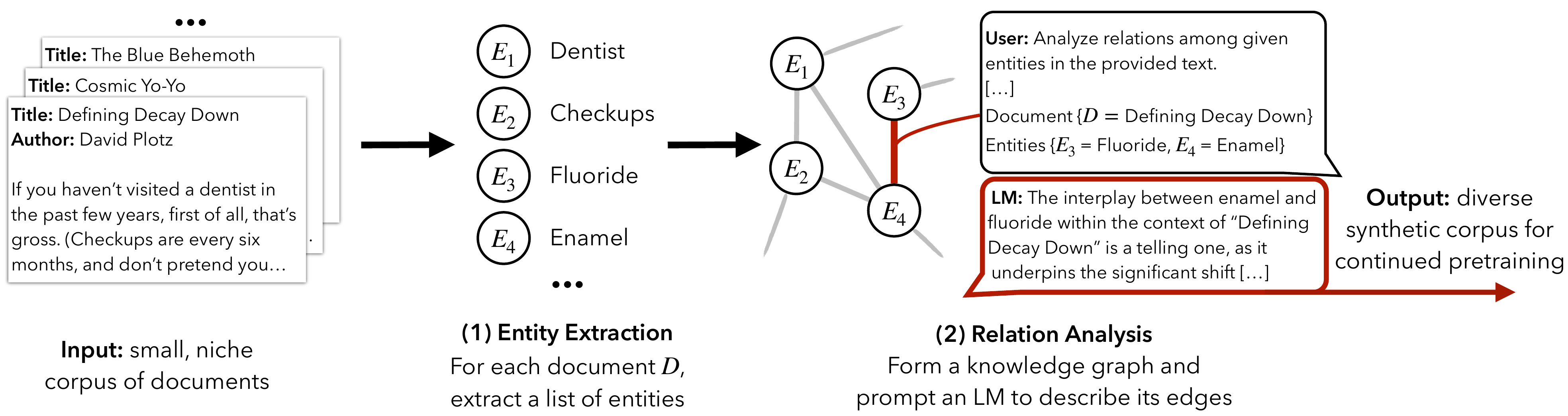}
\caption{\textbf{Synthetic continued pretraining (synthetic CPT)} converts a small source corpus into a large synthetic corpus that is amenable to learning via standard continued pretraining. 
We instantiate synthetic CPT using a synthetic data augmentation algorithm called \textbf{EntiGraph}, which forms a knowledge graph over entities extracted from documents, and then prompts an LM to synthesize a text-based representation of the graph.
}
\label{fig:entigraph_cartoon}
\end{figure}

We address this shortcoming with EntiGraph, an entity-centric augmentation algorithm.
EntiGraph breaks down a text corpus into a list of entities and then uses an LM to describe relations among entities, iteratively ``filling in'' the knowledge graph underlying the corpus (Figure~\ref{fig:entigraph_cartoon}).

Our work operates in a \emph{data-limited} regime.
Niche knowledge---proprietary corpora, specialized scientific domains, private datastores---by definition does not enjoy the diverse internet representation that makes standard pretraining effective: there are no Stack Exchange threads, no blog posts, no alternative expositions.
The \emph{goal} is to study data efficiency: can synthetic data bridge the gap between a small niche corpus and the diverse representations that enable effective learning?
We deliberately use a stronger model (GPT-4) as the synthetic data generator, because the scientific question here is about data efficiency, not self-sufficiency.
The question of whether a model can bootstrap its own pretraining \emph{without} an external teacher is deferred to Chapter~\ref{chap:sbp}.

To concretely instantiate this goal, we propose an experimental setting based on QuALITY~\citep{quality}, a reading comprehension dataset.
This setup enables evaluating synthetic data generation methods for data-efficient learning without the high compute costs of pretraining from scratch.
Specifically, we assume access to a collection of 265 books totaling 1.3M tokens.
Our task is to synthesize a corpus such that continued pretraining on it enables a model to answer queries (e.g., multiple-choice QA or user instructions related to the book content) \emph{without} access to the source texts.

In our main experiments (\S\ref{sec:exp-open-book}), we use EntiGraph to generate 455M synthetic tokens from 1.3M real tokens using GPT-4 \citep{gpt4}.
We then continually pretrain Llama 3 8B \citep{llama3} on these synthetic tokens and evaluate QA accuracy on the QuALITY questions.
We observe log-linear scaling in accuracy as synthetic token count increases, up to 455M (\S\ref{sec:exp-qa-result}).
At the endpoint, synthetic continued pretraining with 455M EntiGraph tokens provides 80\% of the accuracy gain of having source documents available at inference time (\S\ref{sec:exp-open-book}).
Beyond QA, we also instruction tune the continually pretrained model and find that it can follow open-ended instructions (e.g., summarization) related to the QuALITY books (\S\ref{sec:exp-instruct-result}).

To summarize, our key contributions are as follows:
\begin{itemize}[topsep=0pt,leftmargin=4mm,itemsep=0pt,partopsep=0pt,parsep=2pt]
    \setlength\itemsep{0.08mm}
    \item We propose to learn from small corpora with \textbf{synthetic continued pretraining}---converting the small corpus into a large, diverse, synthetic corpus and continuing pretraining on it---and instantiate this approach using the \textbf{EntiGraph} synthetic data augmentation algorithm (\S\ref{sec:entigraph-method}).
    \item We demonstrate that continued pretraining on the EntiGraph-synthesized corpus yields a QA accuracy scaling trend that is log-linear in the synthetic token count, outperforming continued pretraining on the source documents or paraphrases (\S\ref{sec:exp-qa-result}).
    Furthermore, we show that instruction tuning the EntiGraph continually pretrained model enables it to follow more diverse queries related to the source documents (\S\ref{sec:exp-instruct-result}).
    \item We complement the main experiments with an open-book setup (\S\ref{sec:exp-open-book}), providing the model with access to the source documents when answering queries.
    We demonstrate that the knowledge acquired through synthetic continued pretraining with EntiGraph is \emph{complementary} to the knowledge accessed through retrieval-augmented generation (RAG, \cite{rag})---RAG with the EntiGraph continually pretrained model outperforms RAG with the base model.
    \item Lastly, we build a mathematical model that captures the intuition behind EntiGraph.
    We analyze it to obtain a parametric formula for the scaling trend of a continually pretrained model's accuracy with respect to EntiGraph synthetic tokens, closely matching our empirical observations (\S\ref{sec:entigraph-scaling}).
\end{itemize}

Practically, synthetic continued pretraining with EntiGraph enables pretrained LMs to adapt to specialized domains by acquiring \emph{parametric} knowledge, rather than non-parametric knowledge accessed through retrieval.
At a higher level, our approach points toward a family of synthetic data generation algorithms that convert compute into data efficiency for continued pretraining.

\section{Our method}
\label{sec:method}

We focus on learning parametric knowledge from a small corpus of documents.
Our goal is to continually pretrain an LM to acquire the knowledge of a niche corpus.
Because simple continued pretraining proves ineffective (\S\ref{sec:exp-main}), we propose synthetic continued pretraining: first synthesizing a larger corpus from the small one, then continuing pretraining on the synthetic corpus.
We first outline this problem setting and our evaluation approach (\S\ref{sec:setup}), then provide a concrete instantiation using a data augmentation algorithm called EntiGraph (\S\ref{sec:entigraph-method}).

\subsection{Problem setup}
\label{sec:setup}

\begin{table}[t]
\centering
\resizebox{\textwidth}{!}{%
\begin{tabular}{lccc}
\toprule
Study & Domain & Model Parameter Count & Total Unique CPT Tokens \\
\midrule
Minerva \citep{minerva} & STEM & 8B, 62B, 540B & 26B-38.5B \\
MediTron \citep{chen2023meditron70bscalingmedicalpretraining} & Medicine & 7B, 70B & 46.7B \\
Code Llama \citep{rozière2024codellamaopenfoundation} & Code & 7B, 13B, 34B & 520B-620B \\
Llemma \citep{azerbayev2024llemma} & Math & 7B, 34B & 50B-55B \\
DeepSeekMath \citep{shao2024deepseekmathpushinglimitsmathematical} & Math & 7B & 500B \\
SaulLM-7B \citep{colombo2024saullm7bpioneeringlargelanguage} & Law & 7B & 30B \\
SaulLM-\{54, 141\}B \citep{colombo2024saullm54bsaullm141bscaling} & Law & 54B, 141B & 520B \\
HEAL \citep{yuan2024continuedpretrainedllmapproach} & Medicine & 13B & 14.9B \\
\midrule
Our setting & Articles \& Books & 8B & 1.3M \\
\bottomrule
\end{tabular}
}
\caption{Comparing the scale of modern continued pretraining (CPT) works with our small corpus setting.
Prior work adapts LMs to broad domains with diverse, large-scale corpora.
We aim to downscale CPT to small corpora; we use a corpus that is 10,000$\times$ smaller than the smallest modern corpus for domain-adaptive CPT.
}
\label{tbl:cpt-prev-work}
\end{table}

\paragraph{Continued pretraining on small corpora} We focus on approaches that continually pretrain an LM to teach it the knowledge of a small source corpus $\Ds$.
These approaches acquire ``parametric knowledge''---the knowledge of $\Ds$ is learned in the LM's parameters, as in pretraining.

\paragraph{Synthetic \underline{c}ontinued \underline{p}re\underline{t}raining (synthetic CPT)} First, we apply a synthetic data generation algorithm $\Algsynth$ to convert a small corpus $\Ds$ into a synthetic corpus $\Dsynth$:
\begin{equation}
\label{eqn:entigraph-op}
\Algsynth: \Ds \longmapsto \Dsynth.
\end{equation}
We then perform continued pretraining on $\Dsynth$ instead of on $\Ds$.
We implement $\Algsynth$ using a prompted LM.
A natural concern is that the LM may hallucinate, fabricating false knowledge.
We therefore consider \textbf{synthetic data augmentation} algorithms that condition generation on the source documents to improve faithfulness.

\paragraph{Evaluation with knowledge-intensive queries} We evaluate a synthetic data augmentation algorithm $\Algsynth$ by testing whether the downstream synthetic CPT model effectively acquires the knowledge of $\Ds$ in its parameters.
We curate test queries $\Qt$ that probe knowledge about $\Ds$.
For example, in the linear algebra setting, $\Qt$ could be held-out exam questions.
To test parametric knowledge, we do not allow the model to access the source documents $\Ds$ at test time.
The queries therefore cannot be ambiguous without access to $\Ds$---a reading comprehension question like ``Where was he born?'' is ambiguous without context.
Altogether, we evaluate data augmentation algorithms $\Algsynth$ for synthetic CPT using a paired source corpus and related test queries $(\Ds, \Qt)$.

\subsection{EntiGraph}
\label{sec:entigraph-method}

We next present EntiGraph, our instantiation of a synthetic data augmentation algorithm $\Algsynth$.
At a high level, EntiGraph generates diverse knowledge representations from a small corpus $\Ds$ by using a prompted LLM to synthesize a knowledge graph representation.
EntiGraph operates in two steps: extracting entities from the document and analyzing relations among arbitrary subsets of entities (Figure \ref{fig:entigraph_cartoon}).
This hierarchical prompting strategy \emph{externalizes} the problem of generating diverse synthetic text to a combinatorial structure---a graph relating entities appearing in the corpus documents.

\paragraph{Step 1: Entity extraction} First, EntiGraph extracts a list of salient entities $\{E_1, E_2, \dots, E_n\}$ from the document $\Ds$ using an \texttt{entity\_extraction} prompt (full prompt in Appendix \ref{sec:appendix-entigraph-prompts}): $$\{E_1, E_2, \dots, E_n\} \sim \lmgen\big(\texttt{entity\_extraction}(\Ds) \big)$$.
In the linear algebra example, $\Ds$ could be one specific linear algebra textbook.
We would expect to extract entities such as $\{E_1 = \texttt{Linear space},~ E_2 = \texttt{Vector},~ E_3 = \texttt{SVD}, \dots\}$.

\paragraph{Step 2: Relation analysis} Next, EntiGraph analyzes relations among subsets of entities.
The intuition is to explore edges of the knowledge graph underlying the source document $\Ds$, analogous to a student writing diverse notes about a linear algebra textbook.
We apply a \texttt{relation\_analysis} prompt (full prompt in Appendix \ref{sec:appendix-entigraph-prompts}) to describe how a subset of $k \leq n$ entities relate in the context of $\Ds$: $$\widetilde{D}_{E_{i_1}\dots E_{i_k}} \sim \lmgen\big( \texttt{relation\_analysis}(D, E_{i_1}, E_{i_2}, \dots, E_{i_k}) \big)$$.
For example, if $E_1 = \texttt{Linear space}$ and $E_2 = \texttt{Vector}$, $\widetilde{D}_{E_1E_2}$ could be \texttt{Based on the textbook, a vector is an element of a linear space\dots}
Exhaustively enumerating all possible subsets of entities is impractical; we generate data for pairs $\widetilde{D}_{E_{i}E_{j}}$ and triplets $\widetilde{D}_{E_i E_j E_k}$ in our experiments.

\paragraph{EntiGraph synthetic corpora} Finally, we collect all sampled synthetic texts from Step 2 as the EntiGraph output: $\Denti = \{\widetilde{D}_{E_{i_1}\dots E_{i_k}}, \dots \}$.
Altogether, we have described a data augmentation algorithm mapping a small source corpus $\Ds$ to a larger synthetic corpus $\Denti$, as in \eqref{eqn:entigraph-op}.

\section{Experiment setup}
\label{sec:exp-setup}

We next detail how we evaluate a given data augmentation algorithm $\Algsynth$.
As described in \S\ref{sec:setup}, we evaluate algorithms by measuring how well an LM continually pretrained on the output synthetic corpus $\Algsynth(\Ds)$ answers test queries $\Qt$ about the source documents $\Ds$.

In our main experiments, we use queries that are unambiguous without the source documents $\Ds$ and disallow the LM from accessing $\Ds$ while answering queries $\Qt$.
This allows us to evaluate which data augmentation algorithm best promotes parametric knowledge acquisition through synthetic CPT.
Later, in \S\ref{sec:exp-open-book}, we consider an open-book setting where the model can simultaneously access $\Ds$ and $\Qt$, testing how parametric knowledge acquired through synthetic CPT composes with non-parametric knowledge accessed through retrieval \citep{rag}.
We next introduce our small corpus and related test queries $(\Ds, \Qt)$.

\paragraph{\quality~corpus $\Ds$} Our corpus and test queries are based on the QuALITY \citep{quality} long-document comprehension benchmark.
The QuALITY corpus $\Ds$ consists of 265 articles and short books on genres such as science fiction and journalism, averaging $\sim$5,000 tokens.

\paragraph{\quality~test queries $\Qt$} We use the 10--20 multiple choice questions accompanying each article in \quality.
These serve as high-quality knowledge probes on $\Ds$, but the query phrasing often presupposes reading comprehension context (e.g., ``What does the author think about...'').
We remove ambiguity by contextualizing with an article reference: ``In the context of article \{article\_name\} by \{author\_name\}, what does the author think about...''.
This provides 4,609 unambiguous queries $\Qt$ to test parametric knowledge of our continually pretrained LMs.

\paragraph{Evaluation on instruction-tuned summarization} We also instruction tune the continually pretrained LMs and evaluate on more general instruction following queries.
Specifically, we prompt them to generate closed-book summaries of QuALITY articles given only title and author.

\paragraph{Performance with strong API-based LLMs} In our continued pretraining setting, we must select a corpus $\Ds$ not well-represented in standard pretraining datasets.
As an initial test of QuALITY corpus obscurity, we evaluate GPT-3.5 and GPT-4 on $\Qt$.
In the closed-book setting, GPT-3.5 achieves 44.81\% accuracy and GPT-4 achieves 51.30\% (Figure \ref{fig:exp-entigraph}).
In the open-book setting (full access to $\Ds$), GPT-3.5 reaches 72.60\% and GPT-4 reaches 86.09\% (Table \ref{tbl:exp-open}).
The large ($\sim$30\%) improvement when $\Ds$ is provided confirms that the \quality~corpus is sufficiently niche for our testbed.

\section{Main experiments}
\label{sec:exp-main}

We next present our main experimental results\footnote{Code \url{https://github.com/ZitongYang/Synthetic_Continued_Pretraining.git}.}.
Using GPT-4\footnote{We use the \texttt{gpt-4-turbo} model as of Aug. 19, 2024.} as our prompted model $\lmgen$, we apply EntiGraph to the 1.3M token \quality~corpus $\Ds$, generating a 455M token synthetic corpus\footnote{Corpus available at \url{https://huggingface.co/datasets/zitongyang/entigraph-quality-corpus}.}.
We refer to the former as the ``Raw corpus'' and the latter as the ``EntiGraph corpus'' throughout.
Additional details appear in Appendix \ref{sec:appendix-quality}.

We continually pretrain Llama 3 8B \citep{llama3} with causal language modeling on the 455M token EntiGraph corpus.
We describe our CPT procedure and introduce two baselines (\S\ref{sec:exp-cpt-procedure}), evaluate on QuALITY test queries $\Qt$ (\S\ref{sec:exp-qa-result}), and show that synthetic CPT with EntiGraph is compatible with downstream instruction tuning (\S\ref{sec:exp-instruct-result}; \citealp{instruct_gpt}).

\subsection{Continued pretraining procedure}
\label{sec:exp-cpt-procedure}

\paragraph{EntiGraph CPT} In our main experiment, we continually pretrain Llama 3 8B Base on the 455M token EntiGraph corpus for 2 epochs with replay on the RedPajama dataset \citep{together2023redpajama}.
We refer to this model as ``EntiGraph CPT''\footnote{Model weights available at \url{https://huggingface.co/zitongyang/llama-3-8b-entigraph-quality}.} hereafter; CPT details appear in Appendix \ref{sec:appendix-training-details}.
We next describe two baselines for comparison in closed-book QA (\S\ref{sec:exp-qa-result}).

\paragraph{Raw CPT baseline} The first baseline continues pretraining Llama 3 8B Base on the 1.3M token Raw corpus of QuALITY articles $\Ds$.
We jointly tune the number of epochs and RedPajama replay rate, obtaining the ``Raw CPT'' model (details in Appendix \ref{sec:appendix-training-details}).

\paragraph{Rephrase CPT baseline} Another simple synthetic data augmentation procedure is to rephrase QuALITY articles repeatedly.
\cite{wrap} and \cite{ovadia2024finetuningretrievalcomparingknowledge} systematically extend this idea (cf. \S\ref{sec:related-work}).
Based on their approaches, we craft a ``Rephrase baseline'' that repeatedly applies three fixed prompts (easy, medium, and hard rephrase)\footnote{\cite{wrap} include a 4th prompt to generate synthetic QA pairs.
We defer this task-specific QA finetuning method to Appendix \ref{sec:appendix-task-specific} and focus on task-agnostic baselines for learning generic knowledge.} to QuALITY articles at temperature 1.0.
We stopped generating paraphrases at 38M tokens, where we observed a clear gap from EntiGraph CPT and a slower scaling trend (Figure \ref{fig:exp-entigraph}).
We refer to this data as the ``Rephrase corpus'' and the continually pretrained model as ``Rephrase CPT''.

\subsection{Question-answering evaluations}
\label{sec:exp-qa-result}

We next present closed-book QA evaluations with the \quality~test queries $\Qt$.

\paragraph{Evaluation procedure} Each \quality~question is a four-choice, single-answer multiple choice question (similar to MMLU; \citealp{mmlu}).
We evaluate with 5-shot chain-of-thought prompting \citep{gpt3, cot}; our prompt appears in Appendix \ref{sec:appendix-qa-eval-detail}.

\begin{figure}[t]
\centering
\includegraphics[width=\textwidth]{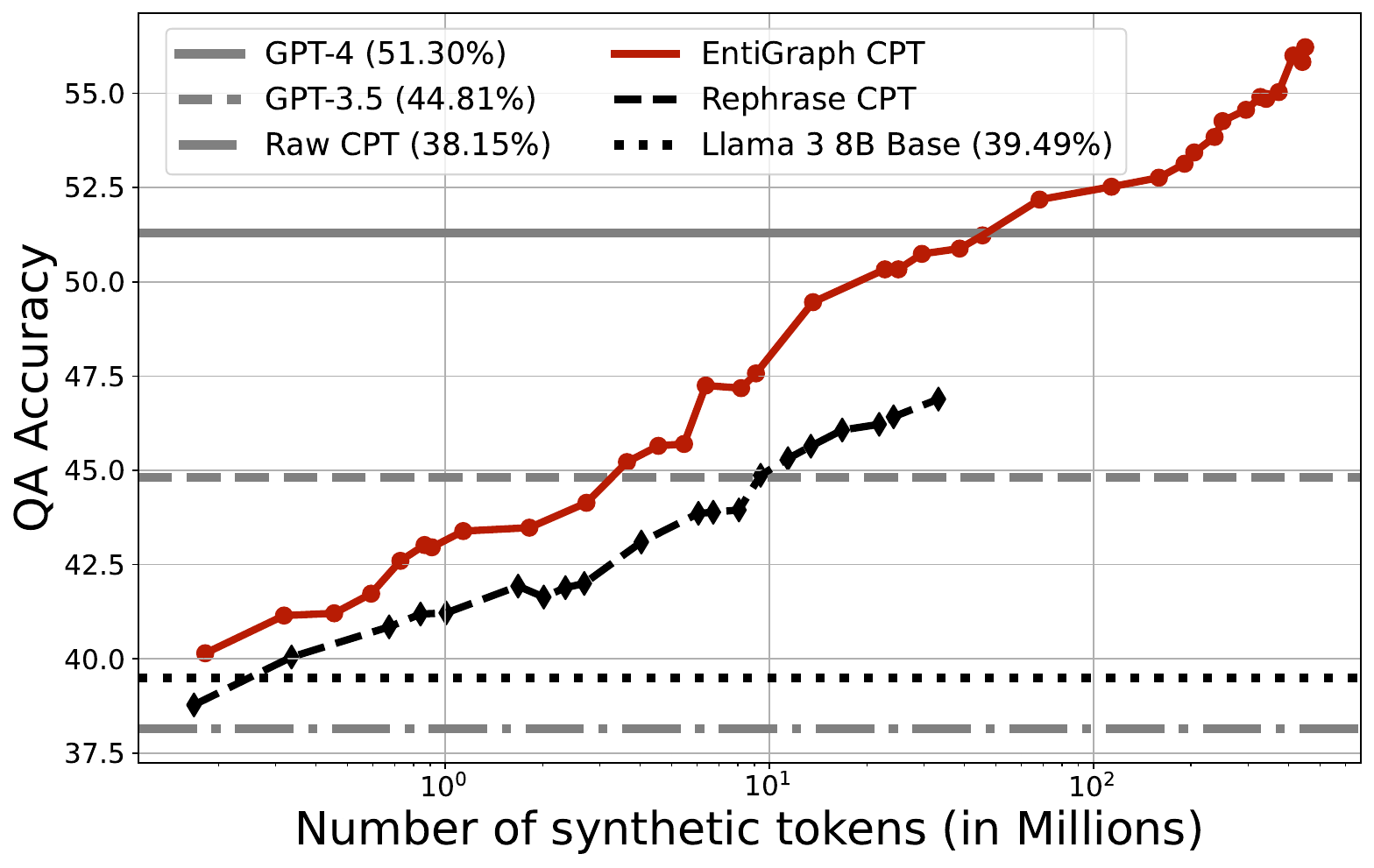}
\caption{Accuracy on the QuALITY question set $\Qt$ ($y$-axis) as a function of the synthetic token count ($x$-axis).
The accuracy of synthetic continued pretraining using the EntiGraph data augmentation algorithm (EntiGraph CPT) scales log-linearly up to 455M tokens.}
\label{fig:exp-entigraph}
\end{figure}

\paragraph{EntiGraph scaling} CPT on the 455M token EntiGraph corpus improves closed-book QA accuracy from 39.49\% (Llama 3 8B Base) to 56.22\% (Figure \ref{fig:exp-entigraph}).
A natural question is: how does accuracy scale as we synthesize and train on more tokens?
To test this, we randomly subsample without replacement from the EntiGraph corpus with varying sample sizes, continually pretrain Llama 3 8B Base on each subsample, and plot accuracy versus sample size in Figure \ref{fig:exp-entigraph}.
We observe log-linear scaling of accuracy in the number of synthetic tokens, up to 455M tokens.
We mathematically investigate EntiGraph scaling properties in \S\ref{sec:entigraph-scaling}.
At a high level, we postulate that QuALITY accuracy follows a mixture-of-exponential shape with three stages: (i) linear growth, (ii) log-linear growth, and (iii) asymptotic plateau.

\paragraph{Comparison with baselines} Raw CPT (green line) underperforms even Llama 3 8B (dashed black line).
We postulate two explanations: (i) the Raw corpus follows a narrower distribution than the Llama 3 pretraining corpus, and heavily training on it may harm English capabilities; (ii) limited diversity of knowledge representations in the Raw corpus leads to limited knowledge acquisition due to problems such as the reversal curse \citep{berglund2023reversal}.
Rephrase CPT scales poorly compared with EntiGraph (Figure \ref{fig:exp-entigraph}), suggesting that synthetic CPT requires sufficiently diverse synthetic data to scale well.
EntiGraph tackles this problem with a hierarchical prompting strategy that externalizes diversity to a knowledge graph's combinatorial relationships.

\subsection{Instruction following evaluations}

We next explore more general test queries beyond $\Qt$.
We perform instruction tuning on EntiGraph CPT to obtain EntiGraph Instruct.
Synthetic CPT on the EntiGraph corpus is compatible with instruction tuning: EntiGraph Instruct can directly use knowledge obtained during synthetic CPT for instruction following, without test-time access to the \quality~corpus $\Ds$.
We detail our instruction tuning procedure in Appendix \ref{sec:appendix-training-details}.

\begin{table}[t]
\centering
\caption{EntiGraph Instruct examples.}
\label{tbl:exp-instruct-example}
\begin{tabular}{p{0.8\textwidth}}
\toprule
\footnotesize
\textbf{Explicit reference:} Summarize ``Defining Decay Down''. \\
\midrule
\footnotesize
The article ``Defining Decay Down'' by David Plotz discusses [...] Dentists began to focus on cosmetic dentistry, [...] \\
\midrule
\footnotesize
\textbf{Implicit reference:} How has dentistry in the U.S. changed? \\
\midrule
\footnotesize
1. Increase in cosmetic dentistry [...]

2. Use of technology: [...] \\
\midrule
\footnotesize
\textbf{Cross article instruction:} Compare David Plotz's commentary on American dentistry and the movie Fight Club? \\
\midrule
\footnotesize
David Plotz's commentary style is different when he analyzes American dentistry and when he discusses the movie Fight Club. [...] \\
\bottomrule
\end{tabular}
\end{table}

\paragraph{Instruction tuning qualitative examples} \label{sec:exp-instruct-result} We first present qualitative examples demonstrating EntiGraph Instruct's ability to follow instructions related to QuALITY articles.
We ask the model to summarize a QuALITY article with explicit reference to title and author, but no access to the article itself (Table \ref{tbl:exp-instruct-example}, top row).
Next, we show that even without explicit reference to title and author, article knowledge stored in model parameters affects behavior (Table \ref{tbl:exp-instruct-example}, middle row).
Finally, we provide an example where the model compares across two articles (Table \ref{tbl:exp-instruct-example}, bottom row).
Although artificial, this demonstrates that even though EntiGraph does not synthesize data involving multiple articles simultaneously, the model can reason about their interaction using parametric knowledge.
Full responses appear in Table \ref{tbl:appendix-instruct-example}.

\paragraph{Evaluating closed-book summarization} We also present quantitative metrics for summarization, a well-studied instruction following task.
We compare EntiGraph Instruct summaries of QuALITY articles with human-written summaries from sQuALITY \citep{squality}, a QuALITY variation with human summaries.
Common scalar metrics such as ROUGE \citep{rouge} or BERTScore \citep{bertscore} mostly evaluate text similarity between summary and source articles, and may not accurately reflect summarization quality for abstractive systems \citep{zhang-etal-2024-benchmarking}.
We use a simple automated metric based on pyramid evaluation \citep{pyramidevaluation, gao-etal-2019-automated} that measures both hallucination rate and how well summaries capture salient claims.
Our approach uses GPT-4 to (1) split summaries into atomic claims \citep{min2023factscorefinegrainedatomicevaluation}, (2) decide whether each claim is true or false based on the source article, and (3) determine if true claims are salient to the article's main message.
We obtain counts of false and salient claims for each summary, normalize by the corresponding count from human summaries, and report averages in Figure \ref{fig:exp-summaryeval}.
Appendix~\ref{sec:appendix-eval-summary-detail} provides further details.

\begin{figure}[t]
\centering
\includegraphics[width=\textwidth]{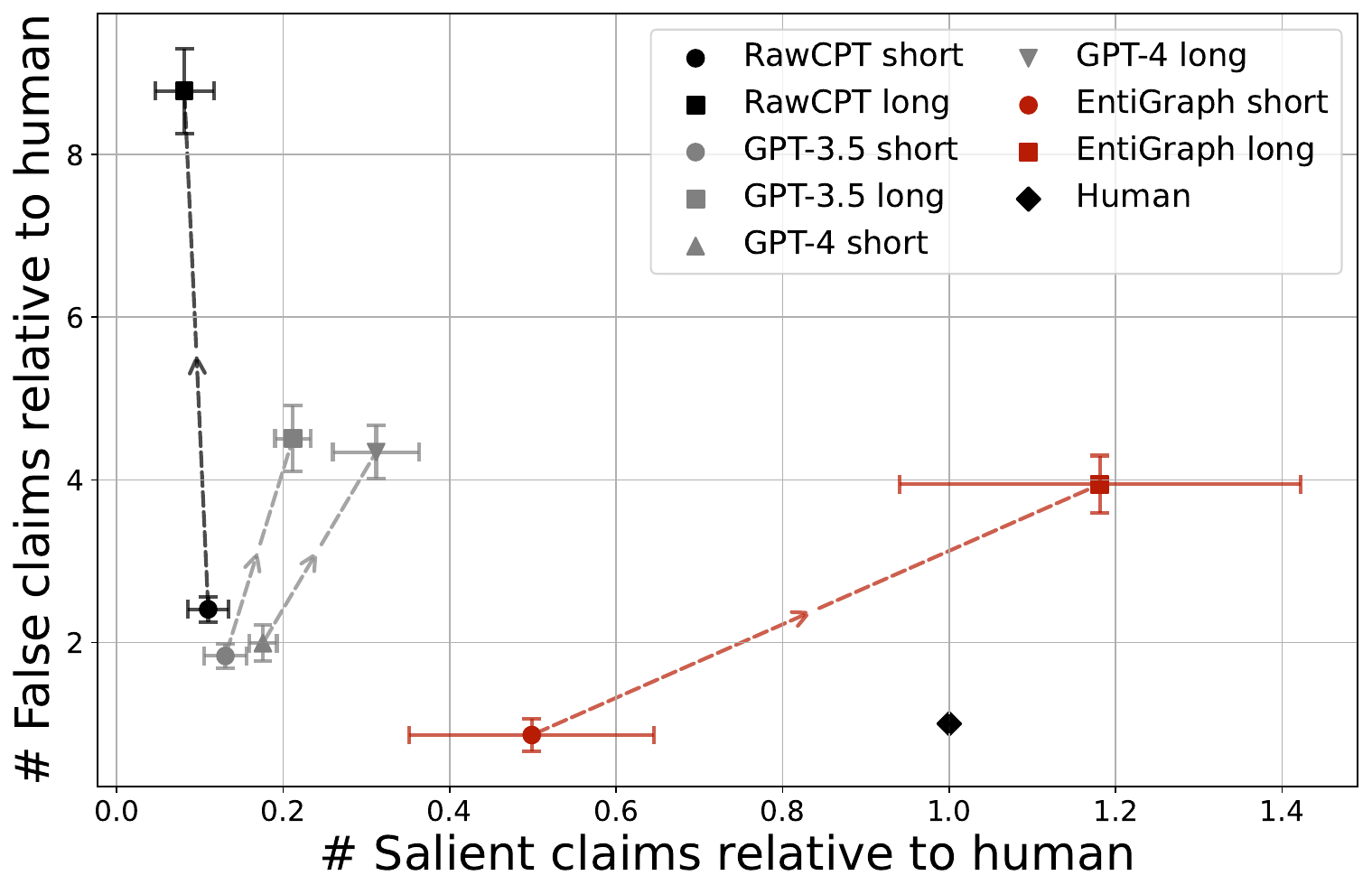}
\caption{Closed-book summarization: number of false claims ($y$-axis) versus number of salient claims ($x$-axis) normalized by the human summary.}
\label{fig:exp-summaryeval}
\end{figure}

\paragraph{Results discussion} In Figure~\ref{fig:exp-summaryeval}, we compare four summarizers: EntiGraph Instruct, Raw Instruct, GPT-3.5, and GPT-4.
We provide each summarizer with two prompts requesting short and long summaries (prompts in Appendix \ref{sec:appendix-eval-summary-detail}).
When we request more detailed summaries, Raw Instruct hallucinates and generates more false claims with little improvement in salient claims.
In contrast, EntiGraph Instruct generates more salient claims as summaries lengthen, with only a small increase in false claims (similar to GPT-3.5 and GPT-4 levels).
The gaps in both salient and false claim rates are large enough that these results likely hold beyond our particular metric.
We complement the automated evaluation with a qualitative example in Appendix \ref{sec:appendix-eval-summary-detail}.

\section{Ablation Studies}
\label{sec:app_ablation_studies}
We present ablation experiments to further validate EntiGraph's effectiveness and test its generalization properties.
We discussed two potential limitations in \S\ref{sec:limitations}:
\begin{enumerate}
    \item Could the gains of Synthetic CPT be explained by distillation effects, due to the use of a strong prompted LM for synthetic data generation?
    \item Is the data synthesized in Synthetic CPT factual?
\end{enumerate}
We provide evidence suggesting these are not significant concerns in Appendix \ref{sec:data-synthesizer-ablation} and Appendix \ref{sec:factuality-ablation}, respectively.
Lastly, we repeat the procedure of the core experiments on another small corpus of Coursera lecture transcripts, to provide evidence that Synthetic CPT generalizes to datasets and domains beyond QuALITY (Appendix \ref{sec:dataset-ablation}).

\subsection{Using a Weaker Synthetic Data Generation LM}
\label{sec:data-synthesizer-ablation}
One potential concern is whether EntiGraph's success demonstrated in \S\ref{sec:exp-main} stems from distilling knowledge from GPT-4.
To investigate this, we conducted an experiment replacing GPT-4-Turbo with a significantly weaker model, Llama 3.1 8B Instruct, as the synthetic data generator.
Recall that in all continued pretraining experiments, we finetune the 8B parameter Llama 3 Base model.
Therefore, in this experiment, the capabilities of the synthetic data generator and the continually pretrained model are very similar, controlling for distillation effects.
Using the entity extraction and relation analysis prompts introduced in \S\ref{sec:method}, we generate 334M synthetic tokens and evaluate the scaling behavior under the same hyperparameter setup detailed in \S\ref{sec:exp-cpt-procedure}.

\begin{figure}[ht]
\centering
\includegraphics[width=\textwidth]{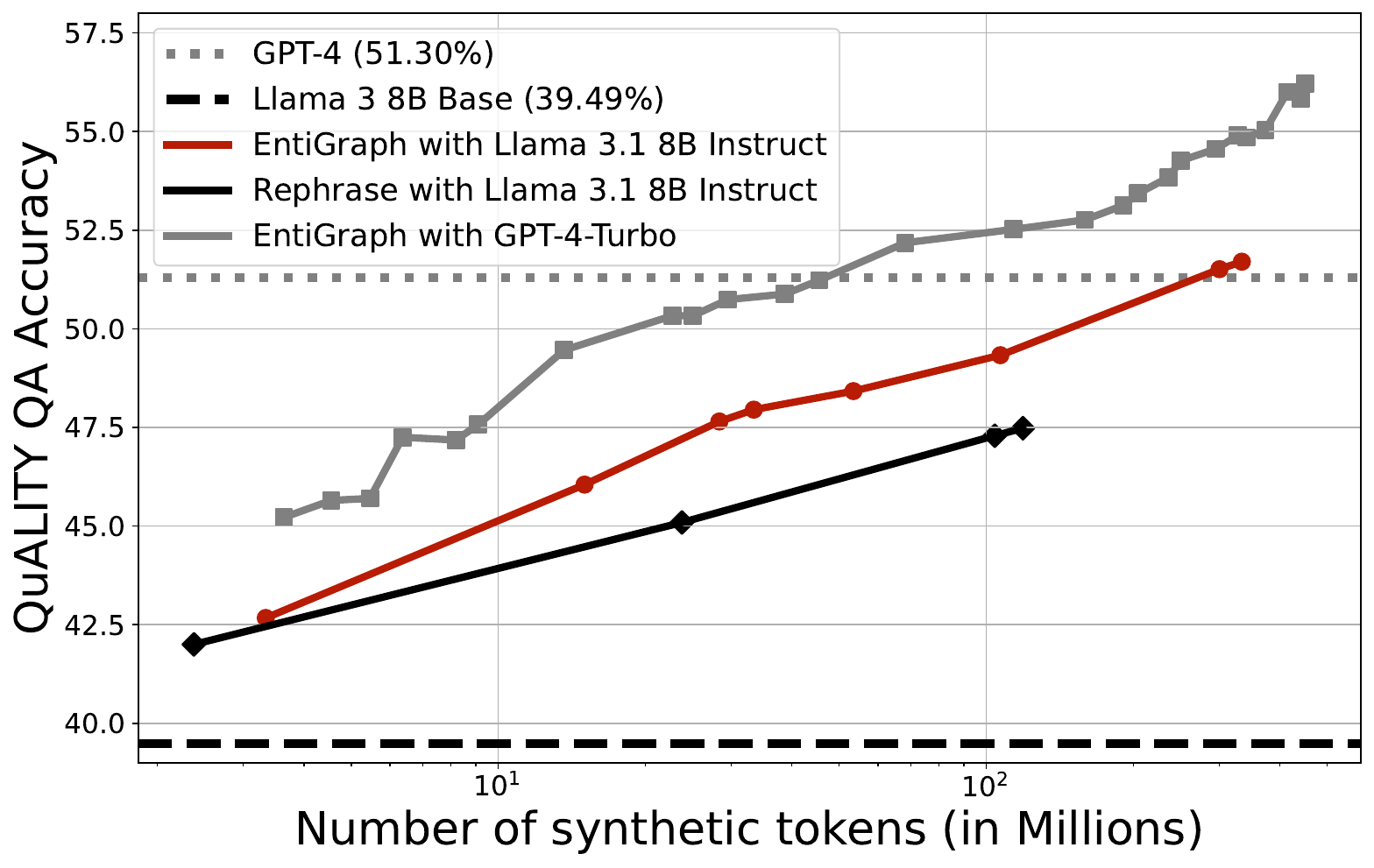}
\caption{The scaling properties of Synthetic CPT with the EntiGraph and Rephrase augmentations, comparing two synthetic data generators: GPT-4-Turbo and Llama 3.1 8B Instruct.}
\label{fig:self-improvement}
\end{figure}

Figure~\ref{fig:self-improvement} reveals two key insights.
First, even with the weaker generator, EntiGraph maintains steady log-linear improvement with no signs of saturation at 334M tokens, suggesting that the gains of Synthetic CPT stem from continued pretraining on diverse representations of the corpora's underlying knowledge, rather than distilling the generator model's knowledge.
Similar to our main results (\S\ref{sec:exp-main}), EntiGraph with a Llama 3.1 8B Instruct generator outperforms Rephrase with the same generator.
Moreover, at 334M synthetic tokens, EntiGraph with a Llama 3.1 8B Instruct generator exceeds closed-book evaluation of GPT-4-Turbo on this benchmark.

Second, while switching from the GPT-4-Turbo generator to the weaker generator shifts the accuracy curve downward, the log-linear slope remains consistent.
In contrast, holding the synthetic generator constant, we observe that EntiGraph CPT and Rephrase CPT exhibit different slopes.

\subsection{Factuality and Lexical Diversity of EntiGraph Synthetic Corpus}
\label{sec:factuality-ablation}

\paragraph{Factuality} A limitation discussed in \S\ref{sec:limitations}, and inherent in all methods involving synthetic data generation, is that the generation model may hallucinate.
EntiGraph is a synthetic data \textit{augmentation}, which conditions an LM on a given corpus document and prompts the LM to discuss the document's entities and their relationships.
Assuming a reasonably good generator model, this grounding should decrease hallucination rate.

To quantitatively test the factuality of documents synthesized with EntiGraph, we split the 455M token EntiGraph corpus into sentences and randomly sample 150 sentences.
We ask authors of this work to label whether each sentence is subjective or not, and among non-subjective sentences, to determine whether it is supported by the article text or not.

We compute two statistics: the proportion of subjective sentences denotes the number of subjective sentences over the total number of annotated sentences.
The factuality rate denotes the number of non-subjective sentences which are supported by the source document, over the number of non-subjective sentences, following \cite{min2023factscorefinegrainedatomicevaluation}:
\begin{itemize}
    \item Proportion subjective: $0.532$ (bootstrap 0.95 confidence interval: $[0.455, 0.610]$).
    \item Factuality rate: $0.944$ (bootstrap 0.95 confidence interval: $[0.889, 0.986]$).
\end{itemize}

Because EntiGraph uses open-ended prompts which ask the LM to relate different, often abstract entities, the LM often generates subjective statements.
We do not necessarily view this as a limitation, because learning reasonable subjective interpretations is crucial for understanding (and hence is often assessed in, e.g., essay questions on literature exams).
We also observe that the non-subjective sentences are consistently factual, supporting the effectiveness of grounding in reducing hallucination.

\paragraph{Lexical Diversity} We hypothesize that good synthetic data augmentations should produce knowledge representations with diverse wording.
As a measure of this lexical diversity, we compute the percentage of $n$-grams in the synthetic documents that overlap with the $n$-grams of the corresponding source documents.

More precisely, we first randomly select 100 \quality~articles, tokenize them with the Llama 3.1 tokenizer, and compute the set of $n$-grams for each article.
Then, for each article, we tokenize the corresponding EntiGraph and Rephrase synthetic data, compute $n$-grams, and count the $n$-grams in the synthetic data that appear in the set of $n$-grams for the raw article.
For each $n$ and synthetic augmentation method, we sum this overlap count across articles and normalize by the total number of synthetic tokens generated for the 100 articles, providing us an estimate of the percentage of $n$-grams in the synthetic data that overlap with the source data.

\begin{table}[ht]
\centering
\begin{tabular}{lcccc}
\toprule
Augmentation & $n=2$ & $n=4$ & $n=8$ & $n=16$ \\
\midrule
EntiGraph & 23.40 & 3.66 & 0.24 & 0.00 \\
Rephrase & 21.35 & 3.04 & 0.51 & 0.22 \\
\bottomrule
\end{tabular}
\caption{Percentage of token $n$-grams in synthetic documents that overlap with the source document $n$-grams, for the EntiGraph and Rephrase synthetic data augmentations.}
\vspace{\bigfill}
\label{tbl:ngram-overlap}
\end{table}

These results are provided in Table~\ref{tbl:ngram-overlap}.
We observe that for both augmentations, $n$-gram overlap percentage is low and quickly approaches $0\%$ with increasing $n$, indicating that both methods produce lexically diverse knowledge representations.

\subsection{Datasets Beyond QuALITY}
\label{sec:dataset-ablation}
To test whether synthetic CPT with EntiGraph generalizes to corpora beyond QuALITY, we evaluated on the Coursera Exam QA dataset \citep{an2023leval}.
This dataset contains lecture transcripts and exam questions from advanced technical courses like data science and machine learning.
Compared to the books and stories in QuALITY, Coursera exams present new challenges---the content is harder conceptually, questions can have multiple correct answers, and the number of options is not fixed to four choices.
This makes few-shot prompting more demanding, as the model must understand both the content and the flexible answering format.

The dataset consists of 15 lecture transcripts and 124K raw tokens, substantially smaller than QuALITY's 265 documents and 1.3M raw tokens.
During our scaling analysis, we found that models trained on tiny synthetic corpora (e.g., a few million tokens) struggled to follow few-shot prompts reliably for Coursera questions, resulting in parsing errors.
Therefore, we begin the scaling curve in Fig.~\ref{fig:coursera-scaling} starting from token counts where parsing error rates fall below 5\%.
For the Rephrase baseline, we generate synthetic data up to 22M tokens, and find that only one model has parsing error rates below 5\%.

\begin{figure}[ht]
\centering
\includegraphics[width=\textwidth]{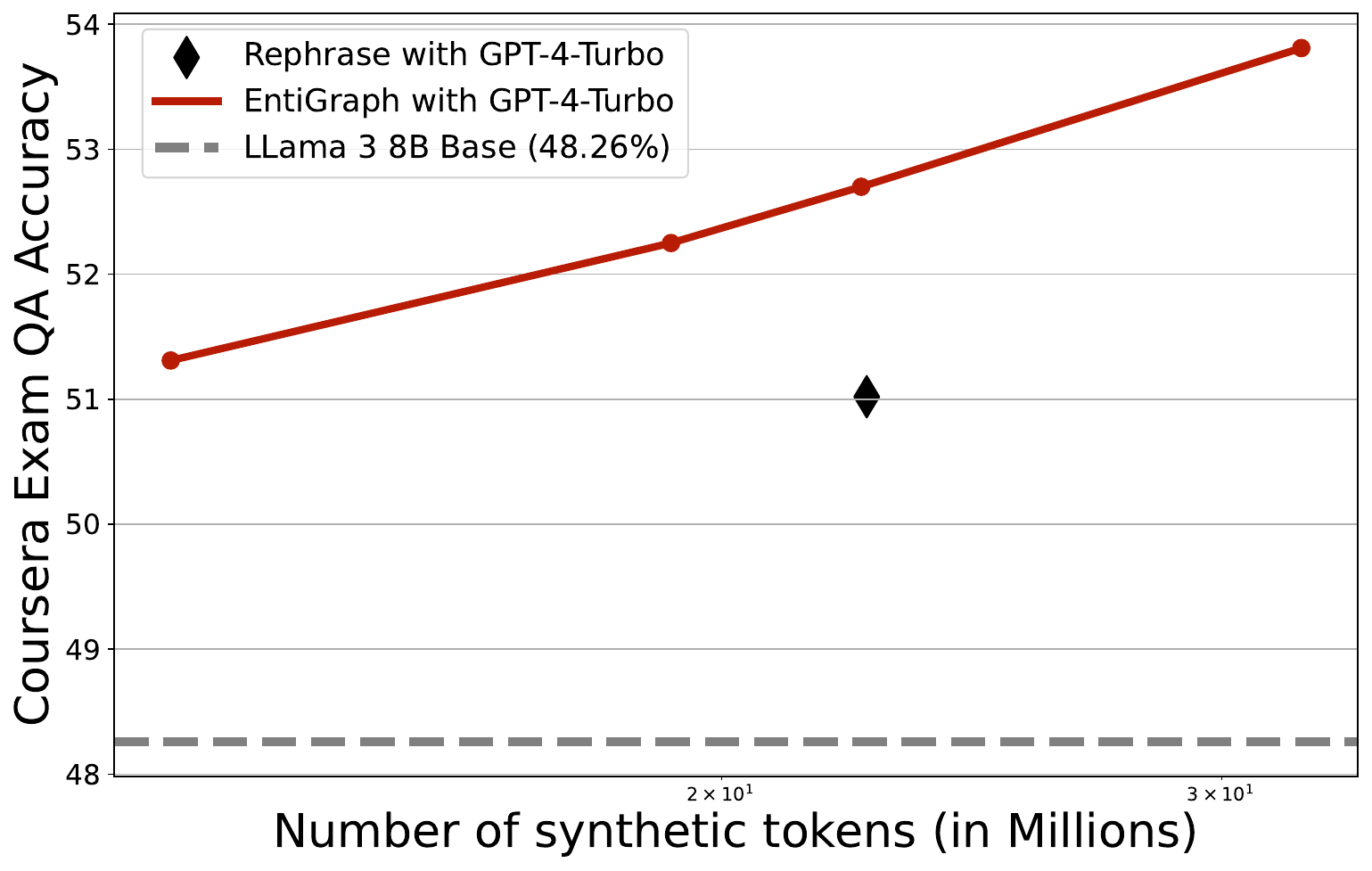}
\caption{The scaling properties of Synthetic CPT using the EntiGraph augmentation on the Coursera Exam QA dataset.}
\label{fig:coursera-scaling}
\end{figure}

Despite these challenges, EntiGraph CPT shows consistent improvement over Llama 3 8B Base, improving accuracy from 48.26\% to 53.87\%, better than Llama 3 8B Base and the Rephrase baseline.
The log-linear scaling pattern persists up to 32M synthetic tokens, suggesting EntiGraph's effectiveness extends beyond narrative texts to technical educational content.
This transfer to a different domain suggests that synthetic continued pretraining with EntiGraph may extend beyond narrative texts.

\section{Open-book experiments}
\label{sec:exp-open-book}

We next consider an open-book setting with the domain-specific corpus $\Ds$ available at test time.
In this widespread setting, retrieval-augmented generation (RAG; \citealp{rag}) is the predominant approach.
A natural question is whether parametric knowledge learned through synthetic CPT with EntiGraph complements non-parametric knowledge accessed through RAG.
We answer this by comparing a strong RAG pipeline with and without EntiGraph CPT.

\paragraph{RAG evaluation setup} Our RAG pipeline follows established best practices \citep{rag, ragsurvey}.
It involves an offline stage indexing document chunks, followed by inference-time retrieval, reranking, and placement of chunks in a few-shot LM prompt.
We use OpenAI \texttt{text-embedding-3-large} \citep{neelakantan2022textcodeembeddingscontrastive} as our embedding model, FAISS as our similarity search index \citep{douze2024faisslibrary}, and Cohere \texttt{rerank-english-v3.0} \citep{coherererank} as our reranker.
Following the evaluation procedure in \S\ref{sec:exp-main}, we evaluate parallel RAG pipelines on the \quality~multiple choice test set using few-shot chain-of-thought prompting.
All hyperparameters are tuned separately for each LM's RAG pipeline.
Appendix~\ref{sec:appendix-rag} provides further details.

\begin{table}[t]
\centering
\resizebox{\textwidth}{!}{%
\begin{tabular}{ccccccccc}
\toprule
\multicolumn{2}{c}{EntiGraph CPT + RAG} & \multicolumn{2}{c}{Llama 3 8B Base + RAG} & \multicolumn{2}{c}{GPT-4 + Oracle RAG} & \multicolumn{2}{c}{GPT-3.5 + Oracle RAG} \\
\cmidrule(lr){1-2}\cmidrule(lr){3-4}\cmidrule(lr){5-6}\cmidrule(lr){7-8}
Accuracy & Recall@$8$ & Accuracy & Recall@$8$ & Accuracy & Recall@$8$ & Accuracy & Recall@$8$ \\
\midrule
62.60 & 99.63 & 60.35 & 99.63 & 86.09 & 100.0 & 72.60 & 100.0 \\
\bottomrule
\end{tabular}
}
\caption{\quality~question-answering accuracy and recall rate in the open-book retrieval-augmented generation (RAG) setting.
EntiGraph CPT and Llama 3 8B Base are used in a RAG pipeline (cf. \S\ref{sec:exp-open-book} for setup details).
Recall@$8$ is defined as the proportion of questions for which the salient article appears in the top $8$ reranked document chunks.
GPT-4 and GPT-3.5 Oracle RAG provide an upper bound with a perfect retriever, by placing the entire relevant document in-context.
}
\label{tbl:exp-open}
\end{table}

\paragraph{EntiGraph continued pretraining complements RAG} Table~\ref{tbl:exp-open} shows that EntiGraph CPT outperforms Llama 3 8B Base, the model from which it is continually pretrained.
These results demonstrate that knowledge internalized through synthetic CPT complements knowledge accessed during RAG, suggesting a competitive recipe for small corpus QA: (1) synthetic data augmentation, (2) continued pretraining, and (3) RAG.

\paragraph{EntiGraph continued pretraining alone approaches RAG performance} These results also contextualize EntiGraph effectiveness in the closed-book parametric knowledge setting (\S\ref{sec:exp-main}).
Comparing Figure \ref{fig:exp-entigraph} and Table \ref{tbl:exp-open}, adding RAG to Llama 3 8B Base improves accuracy by 20.86\% ($39.49\% \rightarrow 60.35\%$).
In contrast, continued pretraining of Llama 3 8B Base on the EntiGraph corpus improves accuracy by 16.73\% ($39.49\% \rightarrow 56.22\%$).
Hence, EntiGraph continued pretraining provides $>\!80\%$ of the absolute performance improvement of RAG, even in a small corpus setting where RAG recall is nearly perfect.

Overall, our results show that parametric knowledge acquired in EntiGraph continued pretraining composes with realistic knowledge-intensive QA pipelines, and that EntiGraph continued pretraining alone---without test-time corpus access---is nearly competitive with a strong RAG baseline.

\section{Theoretical analysis of EntiGraph scaling}
\label{sec:entigraph-scaling}

It may seem surprising that simply ``rewriting'' the source documents $\Ds$ improves performance at all (\S\ref{sec:exp-main}), as EntiGraph does not explicitly add new knowledge beyond $\Ds$.
We postulate that EntiGraph ``rearranges'' $\Ds$ into a layout more amenable to learning.
For example, in $\Ds$, the entity pair $(A, B)$ may appear together in some sentences and $(B, C)$ in others.
Models trained directly on $\Ds$ may learn the $(A, B)$ and $(B, C)$ relations but not the $(A, C)$ relation \citep{akyurek-etal-2024-deductive}.
We build a mathematical model to formalize this intuition (\S\ref{sec:toy-model-setup}) and provide a quantitative prediction that EntiGraph CPT follows a mixture-of-exponential scaling shape (\S\ref{sec:toy-moe}), which fits well with our empirical observations (Figure \ref{fig:toy-curve-fitting}).

\subsection{Toy model setup}
\label{sec:toy-model-setup}

In this toy model, we use $\cV$ to denote the set of entities and represent the source documents $\Ds$ with pairs of known relations $\Ds \subset \{(x, y)\in\cV^2 : x \neq y\}$.
We assume each relation pair in $\cV^2$ appears in the source documents $\Ds$ independently at random with probability $p$.
Mathematically, $\P\left[(x, y)\in \Ds \right]=p$ for all $x \in \cV$ and $y \in \cV$ with $x\neq y$.
We write $V = |\cV|$ and assume $p = \lambda/V$ for some constant $\lambda>1$.

\paragraph{Training as memorization} We model learning of factual knowledge as a memorization process, where a model memorizes relations it is trained on but does not meaningfully generalize beyond them \citep{NEURIPS2023_bf0857cb, 10.1145/3357713.3384290}.
In this view, a language model's knowledge is represented by a matrix $\bM\in\{0, 1\}^{V\times V}$ such that $\bM(x, y)=1$ if the model ``knows'' the $(x, y)$ relation and equals $0$ otherwise.
Training directly on $\Ds$ simply means setting all entries appearing in $\Ds$ to $1$, denoting that the model has memorized source document relations.
We denote this model trained on $\Ds$ by the matrix $\bM_0\in\{0, 1\}^{V\times V}$, which has i.i.d. Bernoulli off-diagonal entries with mean $p$.

\paragraph{EntiGraph synthetic data augmentation} Given the source documents $\Ds$, we define the following iterative synthetic data generation procedure: for each $t=1, 2, \dots$
\begin{itemize}[topsep=0pt,leftmargin=4mm,itemsep=0pt,partopsep=0pt,parsep=2pt]
    
    \item \textbf{Entity pair selection:} Sample $(x_t, y_t)\in \{(x, y)\in\cV^2 : x \neq y\}$ uniformly at random.
    \item \textbf{Relation analysis:} Generate the ``relation between $(x_t, y_t)$'' by performing a breadth-first search (BFS) on the directed graph represented by the adjacency matrix $\bM_0$ starting at $x_t$.
    If no such path exists, do nothing.
    If there exists a path $(x_t, z_t^1, z_t^2, \dots, z_t^{k_t}, y_t)$ connecting $x_t$ to $y_t$, define
    $\cD_t = \{(x_t, z_t^1), (x_t, z_t^2), \dots, (x_t, z_t^{k_t}), (x_t, y_t)\} \cup \cD_{t-1},$
    where we assume $\cD_0 = \Ds$. The model trained on this round of synthetic data is $\bM_{t} = \bM_{t-1} + \sum_{(x,y)\in \cD_t \backslash \cD_{t-1}} \bI_{xy}$, where $\bI_{xy}\in\{0, 1\}^{V\times V}$ is a binary matrix with $\bI_{xy}(x, y)=1$ and $0$ otherwise.
    \end{itemize}
This mirrors the relation analysis step for EntiGraph (Step 2, \S\ref{sec:entigraph-method}).
The index $t$ is analogous to the number of synthetic tokens generated, and model knowledge is captured by how many ones $\bM_t$ contains.
We define the link density (or accuracy) of $\bM_t$ as
$\mathsf{Acc}(\bM_t) = \mathbb{E}[\|\bM_t\|_1 \vert \bM_0 ]/(V(V-1)),$
where the expectation is over randomness from synthetic data generation (not the source documents $\Ds$), and $\|M\|_1$ denotes $\sum_{i,j} |M_{i,j}|$.
We use the notation $\mathsf{Acc}$ because this emulates accuracy on QuALITY test queries (\S\ref{sec:exp-main} and \S\ref{sec:exp-open-book}).

\subsection{Rigorous upper and lower bound}

We next derive rigorous upper and lower bounds on the scaling trend of $\mathsf{Acc}(\bM_t)$.
\begin{definition}
    Let $C_\lambda = (1-\rho(\lambda))^2$, where $\rho(\lambda)$ denotes the extinction probability for a Poisson$(\lambda)$ branching process (i.e., $\rho$ is the smallest solution in $[0,1]$ to the fixed-point equation $\rho=\exp(\lambda(\rho-1))$).
    For any fixed $\varepsilon>0$, we further define
    $C_\mathrm{LB} = 1-\frac{1}{V(V-1)}, \quad C_\mathrm{UB} = 1-\frac{(1+\varepsilon) \log V}{V(V-1) \log \lambda}.$
\end{definition}

\begin{theorem}
\label{thm:toy}
    For any time $t \geq 1$ and any $\varepsilon>0$, the link density satisfies, with probability $\to 1$,
    \begin{align*}
         \left( p+C_\lambda \left( 1- C_\mathrm{LB}^t \right) \right) (1-\varepsilon) \leq \mathsf{Acc}(\bM_t) \leq \left(p+C_\lambda \left( 1-  C_\mathrm{UB}^t \right) \right ) (1+\varepsilon) ~~\text{as}~~ V \to \infty.
    \end{align*}
\end{theorem}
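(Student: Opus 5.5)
The plan is to reduce $\mathsf{Acc}(\bM_t)$ to a statement about reachability in a sparse random directed graph, and then control how fast the synthetic process ``fills in'' the reachable pairs. The matrix $\bM_0$ is the adjacency matrix of a directed Erd\H{o}s--R\'enyi graph $G(V,\lambda/V)$ with $\lambda>1$. The process only ever sets $\bM_t(x,z)=1$ for pairs with $z$ reachable from $x$ in $\bM_0$, since every new pair $(x_t,z_t^i)$ lies on a BFS path from $x_t$. Let $R$ denote the set of ordered pairs $(x,y)$, $x\neq y$, with $y$ reachable from $x$. Then
\[
\|\bM_t\|_1=\|\bM_0\|_1+\#\{(x,y)\in R\setminus \Ds : (x,y)\ \text{covered by time } t\}.
\]

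\textbf{Step 1 (the deterministic-in-$\bM_0$ ingredients).} I will invoke standard facts about the giant strong component of $G(V,\lambda/V)$, each holding with probability $\to1$:
\begin{enumerate}
\item $\|\bM_0\|_1/(V(V-1))=p(1+o(1))$, by concentration of a binomial.
\item $|R|/(V(V-1))\to C_\lambda=(1-\rho(\lambda))^2$. A vertex has a giant out-set with probability $\to 1-\rho$ and a giant in-set with probability $\to 1-\rho$, asymptotically independently. Pairs reachable only through small components contribute $o(V^2)$.
\item Every BFS shortest path has length at most $L:=(1+\varepsilon)\log V/\log\lambda$. This is the diameter of the strong giant, and small components have only $O(\log V)$-size paths.
\end{enumerate}
Since $\Ds$ pairs inside $R$ occupy only $O(pV^2)=O(V)$ entries, the quantity $\mathsf{Acc}(\bM_t)$ equals $p+C_\lambda\cdot(\text{expected covered fraction of }R)$ up to factors $1\pm\varepsilon$.

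\textbf{Step 2 (lower bound).} A pair $(x,y)\in R$ is certainly covered at step $s$ if $(x_s,y_s)=(x,y)$, because then $(x_t,y_t)$ itself is added. The samples are i.i.d.\ uniform over $V(V-1)$ pairs, so
\[
\P[(x,y)\ \text{uncovered at } t]\le \Bigl(1-\tfrac{1}{V(V-1)}\Bigr)^t=C_\mathrm{LB}^t .
\]
Summing over $R$ by linearity of the conditional expectation, and using Step 1, gives the lower bound.

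\textbf{Step 3 (upper bound, main obstacle).} Fix $x$ and its BFS tree $T_x$. A pair $(x,z)$ is covered at step $s$ only if $x_s=x$ and $y_s$ lies in the subtree of $z$ in $T_x$. Its per-step coverage probability is therefore $\mathrm{desc}_x(z)/(V(V-1))$, where $\mathrm{desc}_x(z)$ is the number of descendants of $z$. These descendant counts are not uniformly bounded by $L$, so the per-pair bound fails as stated. They are bounded on average:
\[
\sum_z \mathrm{desc}_x(z)=\sum_y \mathrm{depth}_x(y)\le L\,|R_x|,
\]
where $R_x$ is the set of vertices reachable from $x$.

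I will first try to exploit this averaging through convexity. The expected covered count for $x$ is
\[
\sum_z\Bigl(1-\bigl(1-\mathrm{desc}_x(z)/(V(V-1))\bigr)^t\Bigr).
\]
The map $d\mapsto 1-(1-d/N)^t$ is concave in $d$, where $N=V(V-1)$. By Jensen over $z\in R_x$ this sum is at most $|R_x|\bigl(1-(1-L/N)^t\bigr)$, which is exactly the $1-C_\mathrm{UB}^t$ form. Summing over $x$ and using $\sum_x|R_x|=|R|$ together with Step 1 gives the upper bound.

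The delicate points are twofold. First, coverage events at different steps are nested rather than independent, but the union bound over steps only helps in the direction needed. Second, the diameter bound must hold uniformly over all BFS trees with high probability, so the $(1+\varepsilon)$ slack has to absorb the small components. Both reductions rest on concentration results for the random graph, which I would cite rather than reprove.
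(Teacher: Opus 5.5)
Your proposal is correct and follows essentially the same argument as the paper. Both use the per-pair decomposition $\P[(i,j)\in\cD_t\mid\bM_0]=1-(1-q_{i,j})^t$. Both get the lower bound from $q_{i,j}\ge 1/(V(V-1))$ on reachable pairs. Both get the upper bound from concavity plus the double-counting bound $\sum_{(i,j)\in\cR}q_{i,j}\le\frac{1}{V(V-1)}\sum_{(a,b)\in\cR}\ell_{a,b}$, which is your identity $\sum_z\mathrm{desc}_x(z)=\sum_y\mathrm{depth}_x(y)$, together with the cited giant-component and distance lemmas.

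The only difference is that you apply Jensen separately for each source $x$, while the paper applies it once over all of $\cR$. The global version needs only the average of $\ell_{a,b}$ over reachable pairs, so small-component sources and atypically long paths are absorbed into the average. Your per-source version instead needs a bound for each $x$, and you leave that absorption as a sketch.
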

Although Theorem \ref{thm:toy} provides rigorous bounds on the scaling trend of $\mathsf{Acc}(\bM_t)$, the exact growth curve is more intricate, as we show next.

\subsection{An analytical formula}
\label{sec:toy-moe}

\begin{figure}[t]
\centering
\includegraphics[width=\textwidth]{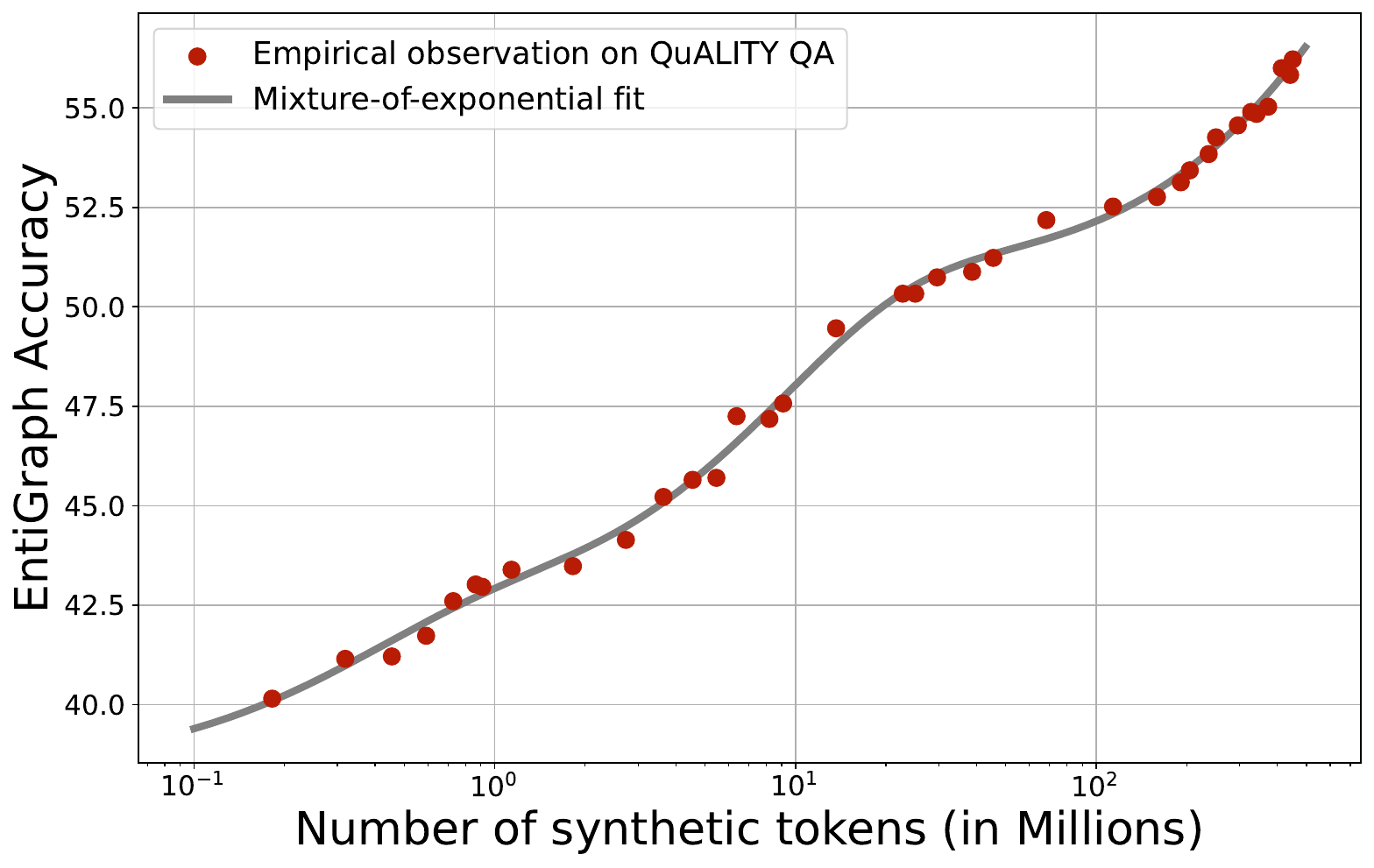}
\caption{A mixture-of-exponential function \eqref{eqn:moe} closely fits the scaling trend of EntiGraph CPT with respect to synthetic token count.}
\label{fig:toy-curve-fitting}
\end{figure}

We analyze link density $\mathsf{Acc}(\bM_t)$ using a Poisson branching process approximation of cluster growth.
This yields a \emph{mixture-of-exponential} scaling trend
\begin{equation}
\label{eqn:moe}
    \mathsf{Acc}(\bM_t) \sim  p+ C \left(1-\sum_{k=1}^\infty \mu(k) \left(1- a_k \right)^t \right),
\end{equation}
where $A\sim B$ means $A/B$ converges to $1$ in probability as $V\rightarrow\infty$.
The parameter $C$ governs link density $\mathsf{Acc}(\bM_t)$ as $t \to \infty$ and is determined by the proportion of reachable vertex pairs in $\bM_0$.
$\mu(\cdot)$ is the probability mass function on $k$, controlling the proportion of vertex pairs with a specific decay rate.
The parameters $\mu(\cdot)$ and $a_k$ depend on $\bM_0$ in more intricate ways (cf. Appendix \ref{sec:appendix-proof} for a full derivation).
Equation \eqref{eqn:moe} accurately fits the empirical scaling trend of EntiGraph CPT accuracy up to 455M synthetic tokens (Figure \ref{fig:toy-curve-fitting}).
We discuss curve fitting in Appendix \ref{sec:curve-fitting-details}, showing that the mixture-of-exponential shape grows in three phases: (i) linear growth, (ii) log-linear growth, and (iii) asymptotic plateau.

\section{Discussion}
\label{sec:scp-discussion}

\subsection{Limitations}
\label{sec:limitations}

Because EntiGraph synthesizes data using a prompted LM, it may hallucinate and fabricate non-existent entities or relations.
Although our synthesis process is grounded by source documents, we assume $\lmgen$ is capable enough to generate faithful synthetic data when conditioned on $\Ds$.
We test factuality of the EntiGraph corpus by randomly subsampling 150 sentences and manually labeling each sentence's factuality.
We find roughly half the sentences are subjective, and the objective half is almost always factual.
We postulate that factuality is high because QuALITY articles are relatively simple given the prompted LM's capability.
If EntiGraph were applied to more challenging content like a complex research paper, the prompted model may be more prone to hallucination.

Because we use a strong prompted LM \texttt{gpt-4-turbo} to generate synthetic data, one might be concerned that performance gains stem from distillation.
To probe this, we perform an ablation replacing \texttt{gpt-4-turbo} with Llama 3.1 8B Instruct, a substantially weaker model from the same base as EntiGraph CPT.
We generated 334M EntiGraph tokens using Llama 3.1 8B Instruct and found a consistent log-linear trend with the same slope but lower intercept compared with GPT-4 generation.
This ablation suggests EntiGraph genuinely teaches model knowledge about the QuALITY corpus rather than serving as a vehicle to distill a powerful prompted LM.

\subsection{Conclusion}

Continued pretraining with next-token prediction effectively teaches pretrained language models new knowledge, but has only been applied successfully in broad, data-rich domains with 10B--100B+ tokens.
We downscale continued pretraining to small, specialized corpora with $\sim$1M tokens using synthetic continued pretraining: converting a small corpus into a large synthetic one with diverse knowledge representations, then continuing pretraining on it.

We instantiate this approach using EntiGraph, a knowledge graph--inspired synthetic data augmentation algorithm.
Synthetic continued pretraining with EntiGraph demonstrates consistent scaling in downstream closed-book QA performance up to a 455M token synthetic corpus, whereas baselines such as continued pretraining on the small corpus or synthetic paraphrases show no improvement or scale slowly.
The acquired parametric knowledge composes with instruction tuning and retrieved non-parametric knowledge in an open-book setting.
We also present a simplified mathematical model of EntiGraph and derive a functional form for its scaling trend that closely matches our empirical observations.
We hypothesize that EntiGraph's ``externalization'' of synthetic data generation to a combinatorial structure---in this case, a knowledge graph over entities---may be a useful strategy for synthesizing highly diverse data and a promising direction for future study.
We designed every component of EntiGraph by hand---the entity-relation extraction, the knowledge graph traversal, and the synthesis prompts.
A natural question is whether AI systems can discover such data augmentation algorithms automatically; we explore this direction in Chapter~\ref{chap:automated-ai-research}.

Lastly, while synthetic continued pretraining enables efficient knowledge acquisition, it operates within the model's existing capabilities.
A more fundamental question remains: can a model improve its core capacity for language modeling?
In Chapter~\ref{chap:sbp}, we address this by targeting pretraining perplexity---the most basic measure of a language model's capability, one that correlates with performance across all downstream tasks.
If we can show genuine self-improvement in perplexity, we demonstrate self-improvement at the most fundamental level.

\chapter{Bootstrapping pretraining capabilities}
\label{chap:sbp}
Chapter~\ref{chap:scp} showed that synthetic data can efficiently teach a language model new \emph{knowledge}---specific facts about a corpus---but the model's underlying \emph{capability}, measured by pretraining perplexity, remained unchanged.
We now ask a more ambitious question: can a model improve the very foundation upon which all downstream performance rests?
We approach this in two parts.
First, we show that strong reasoning is already latent in pretrained weights and can be surfaced with remarkably few examples (\S\ref{sec:s1-sample-efficiency}), suggesting that pretraining---not post-training---is the true bottleneck for capability.
We then introduce Synthetic Bootstrapped Pretraining (SBP), a framework in which the model generates synthetic text to improve its own pretraining objective without relying on a stronger external teacher (\S\ref{sec:sbp-introduction}).

For the SBP component specifically, the self-improvement constraint rules out distillation.
Among remaining approaches (see \S\ref{sec:sbp-related-work} for a detailed review), architectural changes are orthogonal and complementary---SBP's gains compose with them.
Retrieval-augmented pretraining~\citep{Borgeaud:2021, Khandelwal:2020} can leverage related documents but keeps the additional signal external to the weights and imposes retrieval overhead at every forward pass.
In-context pretraining~\citep{shi2024incontext} groups related documents into the same context window, removing retrieval overhead but remaining limited by context length.
We choose synthetic data because it creates new training signal from existing data and writes it directly into the model's weights via standard pretraining.

\section{Prelude: sample-efficient reasoning}
\label{sec:s1-sample-efficiency}

\begin{figure}[t]
\centering
\includegraphics[width=\textwidth]{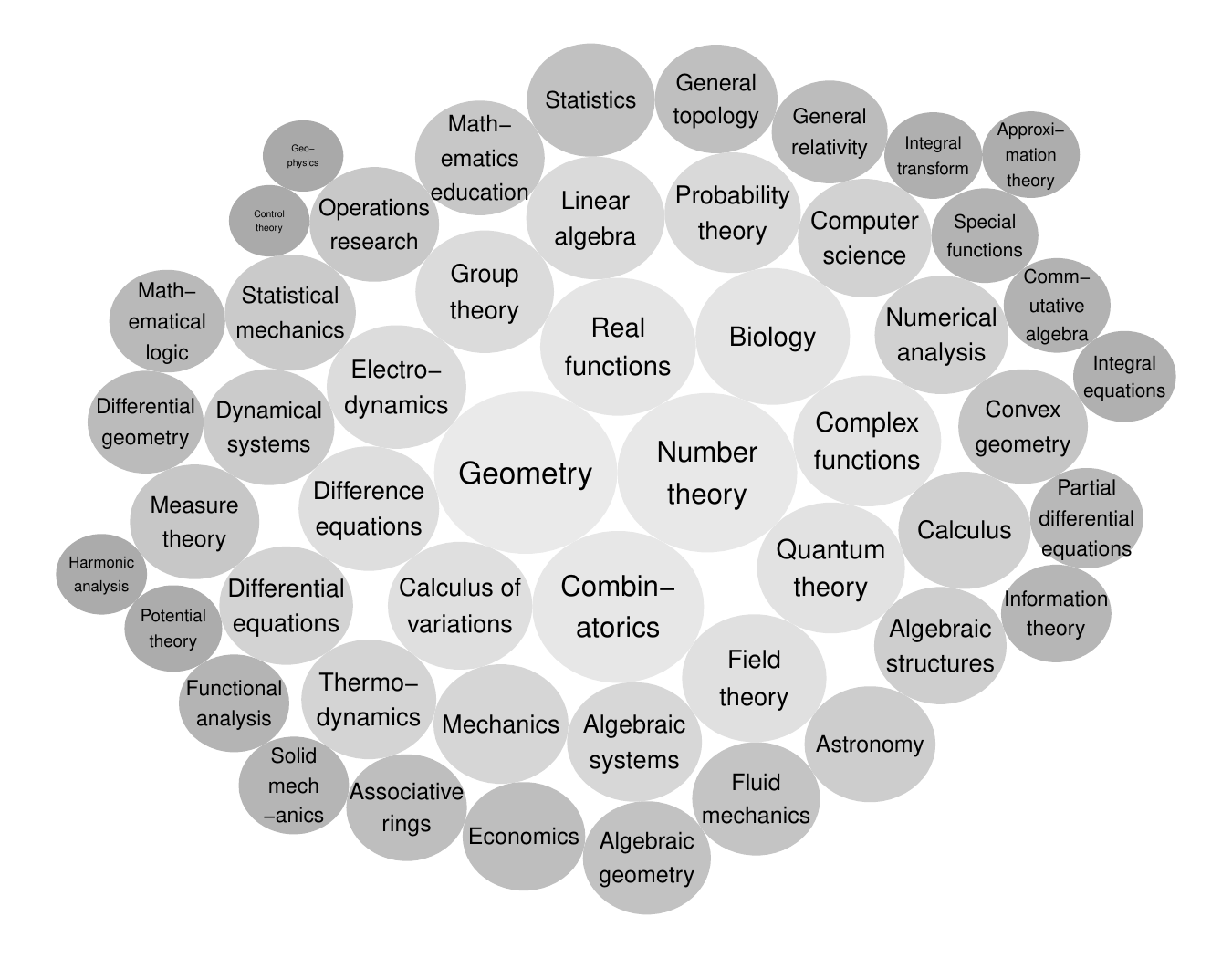}
\caption{\textbf{\sonek{}.} \sonek{} is a dataset of 1,000 high-quality, diverse, and difficult questions with reasoning traces.}
\label{fig:s1-se-bubble}
\end{figure}

We begin with a diagnostic experiment: training on only 1,000 carefully curated samples with next-token prediction suffices to build a strong reasoning model.
OpenAI o1~\citep{o1} and DeepSeek R1~\citep{r1} achieve strong reasoning through large-scale reinforcement learning with millions of training samples, yet a far simpler recipe works surprisingly well.
We construct \sonek{}, a dataset of 1,000 questions paired with reasoning traces distilled from Gemini Thinking Experimental~\citep{geminithinking}, and perform supervised fine-tuning (SFT) of an off-the-shelf pretrained model, requiring just 26 minutes of training on 16 H100 GPUs.
The resulting model \sone{} matches or exceeds closed-source models like OpenAI's o1-preview on several benchmarks (Figure~\ref{fig:s1-se-bubble}).

This extreme sample efficiency carries a profound implication: if 1,000 examples suffice to elicit strong reasoning, then reasoning capability must already be latent in the pretrained weights---post-training merely surfaces it.
Pretraining is therefore the true bottleneck.
All downstream capability ultimately derives from the initial pretraining phase, and improving pretraining itself---without relying on a stronger external teacher---is the highest-leverage intervention.

Extensive ablation experiments confirm that sample efficiency hinges on careful data curation: jointly selecting for quality, difficulty, and diversity is crucial, and training on our full pool of 59K examples offers no substantial gain over our 1K selection (Appendix~\ref{sec:s1-se-appendix}).

\label{sec:s1-se-results}

\paragraph{Training} We perform supervised finetuning on Qwen2.5-32B-Instruct using \sonek{} to obtain our model \sone{} using basic hyperparameters outlined in the appendix.
Finetuning took 26 minutes on 16 NVIDIA H100 GPUs with PyTorch FSDP.

\begin{table}[htbp]
\centering
\caption{\textbf{\sone{} is a sample-efficient reasoning model.} We evaluate \sone{}, Qwen, and Gemini. Other results are from the respective reports~\citep{qwen2024qwen25technicalreport,qwq-32b-preview,o1,r1,bespoke_stratos,sky_t1}. \# ex. = number examples used for reasoning finetuning.}
\begin{tabular}{lrrrr}
\toprule
Model & \makecell{\# ex.} & \makecell{AIME\\2024} & \makecell{MATH\\500} & \makecell{GPQA\\Diamond} \\
\midrule
\multicolumn{5}{c}{\textbf{API only}} \\
\midrule
o1-preview & N.A. & 44.6 & 85.5 & 73.3 \\
o1-mini & N.A. & 70.0 & 90.0 & 60.0 \\
o1 & N.A. & \textbf{74.4} & \textbf{94.8} & \textbf{77.3} \\
\midrule
\multicolumn{5}{c}{\textbf{Open Weights}} \\
\midrule
Qwen2.5- & \multirow{2}{*}{N.A.} & \multirow{2}{*}{26.7} & \multirow{2}{*}{84.0} & \multirow{2}{*}{49.0} \\
32B-Instruct & & & & \\
QwQ-32B & N.A. & 50.0 & 90.6 & 54.5 \\
r1 & $\gg$800K & \textbf{79.8} & \textbf{97.3} & \textbf{71.5} \\
r1-distill & 800K & 72.6 & 94.3 & 62.1 \\
\midrule
\multicolumn{5}{c}{\textbf{Open Weights and Open Data}} \\
\midrule
Sky-T1 & 17K & 43.3 & 82.4 & 56.8 \\
Bespoke-32B & 17K & \textbf{63.3} & \textbf{93.0} & 58.1 \\
\midrule
\textbf{s1-32B} & \textbf{1K} & 50.0 & 92.6 & \textbf{59.6} \\
\bottomrule
\end{tabular}
\label{tab:s1-se-perf}
\end{table}

\paragraph{Sample-efficiency} In Table~\ref{tab:s1-se-perf} we compare \sone{} with other models.
\sone{} is the most sample-efficient open data reasoning model.
It performs significantly better than our base model (Qwen2.5-32B-Instruct) despite training on only 1,000 additional samples.
The concurrently released r1-32B shows stronger performance than \sone{} while also using only SFT~\citep{r1}.
However, it trains on 800$\times$ more reasoning samples.
Whether one can achieve their performance with just 1,000 samples remains an open question.
Around half of all answers in \sonek{} are wrong, yet the results are striking.
This suggests that the SFT stage is about learning reasoning patterns rather than correct answers.

\subsection{Discussion: pretraining as the foundation of capability}
\label{sec:s1-se-disc}

Why does supervised finetuning on just 1,000 samples lead to reasoning performance matching o1-preview?
The most natural explanation is that reasoning capability is already present in the pretrained weights.
Pretraining on trillions of tokens---spanning mathematical proofs, code, scientific arguments, and logical discourse---exposes the model to vast quantities of implicit reasoning.
Our sample-efficient finetuning does not teach the model to reason; it teaches the model to \emph{format} its existing reasoning ability into an explicit chain of thought.
This parallels the ``Superficial Alignment Hypothesis'' of LIMA~\citep{zhou2023lima}, where 1,000 examples suffice to align a model to user preferences because the core capability was already acquired during pretraining.

This interpretation has a profound consequence: post-training methods---whether reinforcement learning~\citep{r1, k1.5}, distillation from stronger models~\citep{sky_t1,xu2025redstardoesscalinglongcot, bespoke_stratos}, or prompting strategies like Chain-of-Thought~\citep{wei2023chainofthoughtpromptingelicitsreasoning}---are ultimately bounded by what pretraining provides.
No amount of post-training can elicit a capability that was never acquired during pretraining.
The true ceiling on model performance is therefore set during pretraining, not during alignment or reinforcement learning.
A complementary implication concerns test-time compute: if reasoning capability is already latent, then controlling how much the model thinks---even by crude interventions---should modulate performance without any additional training.
We return to this observation in Chapter~\ref{chap:automated-ai-research}, where a simple technique called \emph{budget forcing} produces test-time scaling behavior from \sone{}.

This reframes pretraining as the highest-leverage target for improving language models.
If we want fundamentally more capable models---not just better-formatted outputs of existing capability---we must improve the pretraining phase itself.
The remainder of this chapter takes up exactly this challenge.

\section{Synthetic Bootstrapped Pretraining}
\label{sec:sbp-introduction}
Having established that pretraining is the true bottleneck, the remainder of this chapter pursues a single question: can a model improve its own pretraining \emph{without} relying on a stronger external teacher?
In Chapter~\ref{chap:scp}, we allowed distillation from GPT-4 because the goal was data efficiency; here, we remove that crutch and ask whether genuine self-improvement is possible.

We work in a \emph{data-limited} regime, motivated by the approaching exhaustion of high-quality internet text~\citep{villalobos2024run}: we assume a fixed pool of unique documents and ask whether a model can extract more value from them than simple repetition provides.
The defining constraint is that \emph{distillation is forbidden}---the data synthesizer is trained from the same pretraining corpus, not from a stronger external model.
Without this constraint, self-improvement would be trivially achievable by distilling from a more capable teacher.
We validate the approach through a \emph{compute-matched} experimental design: fixing both the data budget and the compute budget, and comparing against a repetition baseline and an oracle with unlimited unique data.
This is the right control because at frontier training scale, compute budgets already exceed available unique high-quality data~\citep{muennighoff2023scaling}, making the data-constrained regime the natural operating point.
Re-examining the conceptual foundation of pretraining, its success stems from the rich causal correlation among tokens \textit{within} a document.
However, this is not the only source of correlation pretraining datasets contain: a code document implementing the attention mechanism is derived from the arXiv preprint of the transformer paper; the book of Harry Potter is structurally similar to the screenplay of its movie production.
Such connections suggest a weaker form of \emph{inter-document} correlation derived from an underlying joint distribution of pretraining documents.
We hypothesize that this additional signal, missed by standard pretraining, can be captured by synthetic data, presenting an underexplored avenue for improving performance.

To leverage this opportunity, we introduce Synthetic Bootstrapped Pretraining (SBP), an LM pretraining procedure that operates in three steps (Figure \ref{fig:sp}).
First, SBP identifies semantically similar document pairs $(\docone, \doctwo)$, such as the transformer paper and its code implementation, from the pretraining dataset.
Second, SBP models the conditional probability of $\doctwo$ given $\docone$, creating a ``data synthesizer'' that can synthesize a new, related document given a seed document.
Finally, SBP applies the trained conditional synthesizer to the pretraining corpus itself, creating a vast text corpus that encodes the rich inter-document correlations previously missed (\S\ref{sec:method-description}).
By training a data synthesizer from the pretraining dataset itself, SBP avoids the pitfall of ``bootstrapping'' model performance using an external, readily available teacher LM, demonstrating a clean setup where improvement stems from better use of the same pretraining corpus.

To test our hypothesis, we design a compute-matched, data-constrained experimental framework under which we pretrain 3B-parameter and 6B-parameter models on up to 1T tokens from scratch \citep{li2024datacomplm, zyphradedup}, demonstrating the applicability of SBP for advancing frontier LMs.
We compare SBP against two crucial references: a strong repetition baseline, which represents the standard approach in data-constrained settings, and an oracle upper bound, which has access to an unlimited pool of unique internet data (\S\ref{sec:experiment-setup}).
Our results show that SBP improves over the strong repetition baseline across the pretraining scales we evaluate and closes up to 60\% of the performance gap to the oracle with 20x additional unique data access (\S\ref{sec:benchmark-performance}).

Beyond benchmark performance, qualitative analysis of the synthesized documents reveals that they go beyond mere paraphrases of the real documents (\S\ref{sec:analysis-of-synthetic-data}).
We postulate that the SBP synthesizer first abstracts latent concepts from the real document and then synthesizes a new document that expands upon the abstracted concept, incorporating diverse genres and content.
We formalize this intuition through a Bayesian hierarchical concept model, where documents are related through shared concepts.
From this perspective, we argue that the synthesizer implicitly learns a posterior likelihood model that abstracts latent concepts from the document---a mechanism not present in standard LM pretraining (\S\ref{sec:statistical-foundation}).

In summary, our contributions are threefold:
\begin{itemize}[leftmargin=16pt]
    \item \textbf{New pretraining framework:} We propose the Synthetic Bootstrapped Pretraining (SBP) algorithm that explicitly models inter-document correlations missed by standard pretraining practice and encodes those correlations into training via synthetic data.
    \item \textbf{Large-scale empirical validation:} We design a compute-matched pretraining setup that enables rigorous measurement of LM self-improvement and empirically validate SBP on 3B and 6B parameter models trained on up to 1T tokens from scratch.
    \item \textbf{Principled statistical interpretation:} We offer a natural Bayesian interpretation of SBP as implicitly learning a posterior for the latent concepts in a text document and concretize the intuition via qualitative analysis of synthesized documents.
\end{itemize}

In the remainder of this chapter, we will first define the data-constrained pretraining problem we address and introduce the SBP technique we propose in \S\ref{sec:method-description}.
Then, we present the compute-matched experiment setup in \S\ref{sec:experiment-setup} and results in \S\ref{sec:experiment-results}.
Finally, we conclude with a Bayesian interpretation of SBP that sheds light on the origin of the improved performance in \S\ref{sec:statistical-foundation}.

\begin{figure}[t]
\centering

\includegraphics[width=\textwidth]{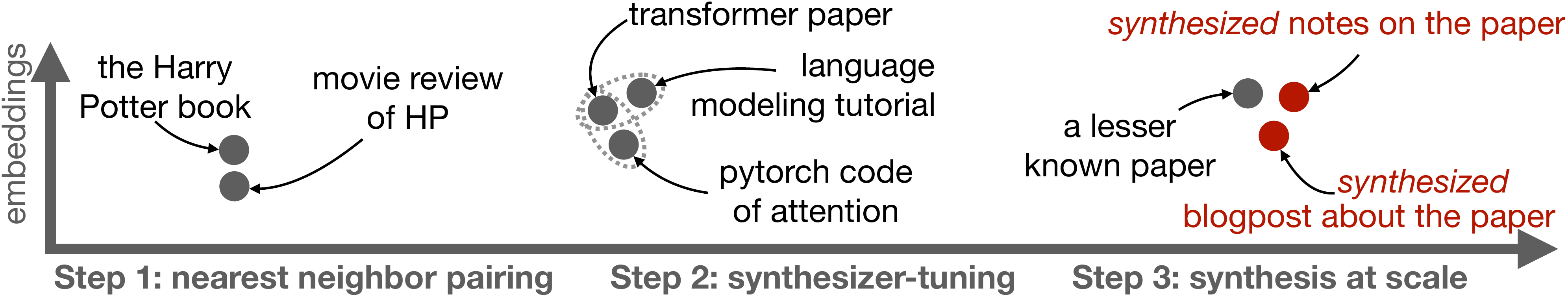}
\caption{Data synthesis illustration of Synthetic Bootstrapped Pretraining (SBP):
It first identifies semantically similar documents (\textbf{Step 1}) and then trains a conditional model that generates one element of the pair from the other (\textbf{Step 2}).
Finally, SBP applies the conditional model to the pretraining corpus itself to synthesize a new, vast corpus for joint training (\textbf{Step 3}).}
\label{fig:sp}

\end{figure}

\section{Method}
\label{sec:method-description}

We introduce the data-constrained pretraining setup (\S\ref{sec:problem-formulation}) and then present the SBP procedure in three steps (\S\ref{sec:sp-three-steps}).
We present SBP as a general pretraining recipe with a pretraining dataset, an LM architecture, and a collection of evaluation benchmarks.
We defer the concrete compute-matched experiment design to \S\ref{sec:experiment-setup}.

\subsection{Data-constrained pretraining setup}
\label{sec:problem-formulation}

We consider a \emph{data-constrained} setup where the goal is to train the best-performing LM given access to a fixed document collection $\Dpre$ (e.g., a snapshot of the entire internet).
To establish a controlled experimental framework, we also choose a transformer architecture with parameters $\theta$ and a collection of held-out evaluation benchmarks $\perf$ (e.g., perplexity, few-shot QA accuracy).
Recall that a transformer takes a sequence of tokens as input and outputs a sequence of conditional probabilities of each token given all previous tokens.
Applying the chain rule for joint probability, we can use a transformer to calculate the probability $p_\theta(y)$ of observing a particular text input $y$, or the conditional probability $p_\theta(y|x)$ of one piece of text $y$ followed by $x$.

Under such a setup defined by ($\Dpre$, $p_\theta$, $\perf$), pretraining searches for the best-performing transformer weights by maximizing the sum of the log-likelihood of pretraining documents,
\begin{equation}\label{eqn:pretraining-objective-sec2}
    \argmax_\theta \sum_{\doc\in\Dpre} \log p_\theta(\doc),
\end{equation}
and then evaluates the performance through $\perf(\theta)$.
Statistically, this objective treats each document as an independent sample from a hypothetical distribution of all documents and attempts to learn this marginal distribution.
However, this modeling assumption overlooks the structural similarities shared between natural language texts (e.g., Figure \ref{fig:sp}).
We next present the SBP procedure that fills this gap.

\subsection{Synthetic bootstrapped pretraining}
\label{sec:sp-three-steps}

At a high level, SBP finds related document pairs $(\docone, \doctwo)$ from the pretraining dataset $\Dpre$ and trains a conditional synthesizer $p_\theta(\doctwo|\docone)$ using the same transformer architecture parametrized by $\theta$.
It then uses it to synthesize a large collection of documents $\Spre$ to perform joint pretraining on $\{\Dpre, \Spre\}$.
The fact that SBP trains a data synthesizer from $\Dpre$ itself also distinguishes it from extensive existing work that relies on a readily available ``teacher'' LM.

\paragraph{Step 1: nearest neighbor pairing} In preparation for training the conditional data synthesizer, SBP first curates pairs of related documents.
To efficiently perform similarity search at pretraining scale, we use Approximate Nearest Neighbor (ANN) search \citep{malkov2018efficientrobustapproximatenearest}, which embeds each document as a quantized vector normalized to the unit sphere and then performs massively parallelizable linear algebraic operations.
In our implementation of SBP, we use inner-product similarity, which we denote by $\<\docone, \doctwo\>$.
Then, we select a subset of pairs whose similarity score exceeds a certain threshold $\alpha$:
\begin{equation}
\Dst = \{(\docone, \doctwo)\in\Dpre\times\Dpre,~\text{s.t.}~ \<\docone, \doctwo\> >\alpha \}.
\end{equation}
We provide implementation details of paired data curation in \S\ref{sec:experiment-details}.

\paragraph{Step 2: synthesizer-tuning} SBP exploits the correlation between pairs of related documents by maximizing the conditional probability of $\doctwo$ given $\docone$:
\begin{equation}
\label{eqn:synthesizer-objective-lm}
\thetast = \argmax_\theta \sum_{(\docone, \doctwo)\in\Dst} \log p_\theta(\doctwo | \docone),
\end{equation}
which we obtain by summing over the log conditional probabilities corresponding to tokens from document $d_2$.
We refer to this step as ``synthesizer-tuning'' as we are training a conditional probabilistic model that synthesizes a related $\doctwo$ from a given $\docone$.
When performing synthesizer-tuning, we initialize $p_\theta$ at the pretrained checkpoint \eqref{eqn:pretraining-objective-sec2} so that the model is equipped with the knowledge of individual documents at initialization, but not the conditional relation between them.
Each document $\docone$ can be associated with multiple instances of $\doctwo$, encouraging the synthesizer to produce diverse, high-entropy outputs rather than deterministic synthesis.

\paragraph{Step 3: data synthesis at scale} Finally, SBP synthesizes $\Spre$ through a hierarchical sampling process:
\begin{itemize}[leftmargin=16pt]
    \item Sample the seed document $\docone$ from $\Dpre$ uniformly at random;
    \item Sample synthesized document $\doctwo$ from $p_{\thetast}(\cdot | \docone)$.
\end{itemize}
This process achieves synthetic data diversity using two sources of variation: first through the variation of the seed documents $\docone$, which comes from the diversity of the pretraining document $\Dpre$ itself, and second through the entropy of the conditional distribution $p_{\thetast}(\cdot | \docone)$, which stems from the diverse inter-document correlations captured in $\Dst$.
While empirically motivated, the procedure admits a principled Bayesian interpretation of the distribution of natural language texts, which we explain in \S\ref{sec:statistical-foundation}.
For now, we focus on demonstrating the empirical effectiveness of SBP.

\section{Experiment setup}
\label{sec:experiment-setup}

We present our concrete experimental implementation of SBP.
We curate a pretraining dataset of 582M high-quality documents totaling 482B tokens from DCLM \citep{li2024datacomplm}, design 3B and 6B transformer architectures modified from the Llama 3 implementation \citep{llama3}, and select nine commonly used benchmarks targeting general world knowledge and commonsense reasoning (\S\ref{sec:data-model-eval}).
We propose a \emph{compute-matched} comparison scheme to validate SBP against natural reference methods at a compute scale of up to 1T total training tokens in our largest experiment (\S\ref{sec:baselines}), bringing validation at a scale relevant for frontier LM development.

\subsection{Data, model, and evaluation}
\label{sec:data-model-eval}

\paragraph{Dataset} A typical pretraining dataset is a mixture of different sources (e.g., GitHub, arXiv, CommonCrawl) with distinct sampling weights assigned to each constituent.
We simplify this reality by considering a fixed document collection, which is a customized version of the DCLM dataset \citep{li2024datacomplm}.
The original 4T token DCLM-baseline split contains roughly 80\% duplicates, as reported by \cite{zyphradedup}.
We therefore begin with the de-duplicated dataset, which consists of 769B tokens.
We clean the raw Zyphra de-duplicated data by normalizing repeated line breaks, removing long URL links, and fixing malformed Unicode characters.
For efficiency, we cap the context window of the synthesizer-tuning \eqref{eqn:synthesizer-objective-lm} step at 8{,}192 tokens.
As a result, we additionally filter out the documents whose length is above 4{,}096 tokens, allowing both $\docone$ and $\doctwo$ to fit into the context window in the worst case when both documents are 4{,}096 tokens long.
After all the de-duplication, cleaning, and filtering procedures, we end up with a collection of 582M high-quality documents $\Dpre$ totaling 482B tokens.
We use the notation $|\Dpre|$ to denote the number of documents in the pretraining dataset and $\|\Dpre\|$ to denote the total number of tokens.

\paragraph{Architecture} We use the Llama 3 transformer architecture \citep{llama3} to model the probability $p_\theta$ with the notable exception of implementing a QK-norm on top of the existing design, which we empirically find to stabilize training.
Our resulting model is a 3B-parameter 26-layer transformer model with a hidden dimension of 3{,}072.
Each layer employs grouped query attention with 24 query heads and 8 key/value heads.
To validate the scalability of SBP, we also train a 6B-parameter model with 32 layers, a hidden dimension of 4{,}096, 32 query heads, and a feedforward dimension of 13{,}056 (detailed in Table \ref{tab:model_specs}).
The position embedding is RoPE \citep{rope} for queries and keys, with frequency 5e+5.
The feedforward network (FFN) has hidden dimension 8{,}064, and we apply prenorm to both the attention and FFN blocks.
For tokenization, we implement a customized BPE tokenization with a vocabulary size of 49{,}152.
To match the 8{,}192 context window design for synthesizer-tuning we have mentioned, we use context window 4{,}096 for pretraining, so that every document in $\Dpre$ can fit into the context window.

\paragraph{Benchmarks} To assess the pretraining capability of LM, we measure pretraining test loss and general world knowledge benchmarks.
We evaluate held-out test perplexity (exponential of negative log-probability) on
1) OpenWebText2 from EleutherAI~\citep{gpt2};
2) Narrative understanding with LAMBADA~\citep{paperno2016lambada}
and
3) Broad domain multiple-choice with MMLU~\citep{mmlu}.
We evaluate QA accuracy on
4) Hard scientific reasoning with ARC‑Challenge~\citep{clark2018think};
5) Easy scientific reasoning with ARC‑Easy~\citep{clark2018think}; 
6) Scientific QA with SciQ~\citep{Welbl2017CrowdsourcingMC};
7) Common sense reasoning with Winogrande~\citep{sakaguchi2021winogrande};
8) Reading comprehension with TriviaQA~\citep{joshi2017triviaqa};
9) Openbook QA with WebQS~\citep{berant-etal-2013-semantic}.
We directly evaluate the pretrained model with either zero-shot or few-shot prompts.
Although MMLU is more commonly used as a QA benchmark, we find that evaluating MMLU accuracy for weak models yields a highly non-smooth readout.
As a result, for each MMLU test question, we prepend the question with a 5-shot example of QA pairs and postpend it with the correct answer.
Then, we treat each such sample as a text corpus and evaluate LM's perplexity on such a text sample.
We find that this perplexity-based MMLU correlates well with MMLU accuracy when the underlying model is large enough to yield a stable readout, and also delivers smooth performance changes for smaller models.
These benchmarks improve significantly with instruction finetuning \citep{flann}.
However, we adhere to our data-constrained setup and do not introduce any additional data that may confound the comparison.

\subsection{Compute-matched comparison}
\label{sec:baselines}

We propose a \emph{compute-matched} experimentation framework to rigorously compare SBP against two natural references: a repetition baseline where we repeat $\Dpre$ multiple times to use the available training compute and an oracle upper bound that enables the model to access as many unique documents as possible.
Compute-matching in a data-constrained regime directly models the real-world situation at frontier scale, where training compute budgets already exceed the supply of unique high-quality internet text---making ``what to do with leftover compute after exhausting unique data'' the operationally relevant question.
Operationally, we control the training compute by controlling the total tokens seen during training, which is proportional to the training FLOPs given a fixed batch size and context window.
We validate SBP across three different settings:
\begin{itemize}[leftmargin=16pt]
    \item \textbf{200B-scale}: In this setting, we cap the training compute to be 200B tokens and cap the data access at $\|\Dpre\|=$10B tokens.
    \item \textbf{1T-scale (3B)}: We also consider a larger scale closer to frontier model training, where we cap the training compute at 1T tokens and data access at $\|\Dpre\|=$50B tokens.
    \item \textbf{1T-scale (6B)}: To validate SBP on larger models, we additionally train a 6B-parameter model with the same 1T token budget and 50B unique data access.
\end{itemize}
For each training scale, $\Dpre$ with different sizes is sampled uniformly at random from the 582M documents pool.
Given the compute-controlled comparison scheme, we next introduce two reference methods against which we compare SBP.

\paragraph{Repetition baseline} Since the compute budget typically exceeds the total number of unique tokens $\|\Dpre\|$, a natural baseline is to repeat $\Dpre$ over multiple epochs.
By design, in both 200B-scale and 1T-scale, we repeat the pretraining dataset $\Dpre$ 20 times to exploit the available compute budget.
In practice, when the pretraining dataset comes from a mixture of different sources, higher-quality documents can be seen as many as 30 times during pretraining, while lower-quality texts may appear only once.
\cite{muennighoff2023scaling} systematically evaluates the repetition baseline as a proposal to scale LMs under data constraints and finds that repeating $\Dpre$ up to 4 times yields nearly no performance degradation compared with having access to unlimited fresh data, but after around 40 times, repetition yields rapidly diminishing returns.
Our choice of 20 times repetition with compute-matched comparison therefore strikes a reasonable balance between efficient experimental execution and exhausting all possible performance gains from a fixed $\Dpre$ via repetition.

\paragraph{Oracle upper bound} Besides showing improvement against the repetition baseline, we also evaluate an oracle upper bound with unlimited data access to contextualize the numerical improvement delivered by SBP.
As we shall see in the next section, because different benchmarks respond differently to data size changes, SBP can deliver an improvement as large as 3.74\% on some benchmarks but only 0.14\% on others (Table \ref{tab:results}).
Because performance on LM benchmarks tends to scale logarithmically \citep{owen2024predictablelanguagemodelbenchmark, kaplan2020scalinglawsneurallanguage} with data size, the numerical difference quickly caps out as we move from the 200B scale to the 1T-scale.
By introducing this oracle upper bound, we can contrast the SBP improvement against this ``oracle'' improvement.

For the 200B-scale experiment, we implement the oracle upper bound as having access to 200B unique tokens from our document pool of size 482B tokens.
For the 1T-scale experiment, we unfortunately do not have 1T unique documents due to the large fraction of duplicates from DCLM.
As a surrogate, we utilize all 482B unique tokens as the dataset for training the oracle upper bound at the 1T-scale.
We provide a partial justification for this by performing a scaled-down comparison at 400B training tokens, with one model having 400B unique tokens and the other one having 200B unique tokens repeated twice (\S\ref{sec:two-epochs-validation}).
We find that the two models (400B unique and 200B repeated twice) yield nearly identical performance.

\paragraph{Training recipe} For both the repetition baseline and oracle upper bound at all scales, we use a batch size of 2{,}048 and a context window of 4{,}096, resulting in a throughput of 8M tokens per step.
We apply a cosine learning rate scale with a 5\% warmup to a peak learning rate of 1e-2, followed by subsequent decay to 5e-5 towards the end.
Under this setup, pretraining costs 11K v5p-TPU hours at 200B-scale, 59K v5p-TPU hours at 1T-scale (3B), and 265K v5p-TPU hours at 1T-scale (6B).
For a clean comparison, we adhere to this hyperparameter throughout the paper, including the SBP experiment presented next.

\section{Experiment results}
\label{sec:experiment-results}

We perform SBP experiments under the compute-matched framework outlined in \S\ref{sec:experiment-setup} at three compute budgets: 200B-scale, 1T-scale (3B), and 1T-scale (6B).
After joint training on real and synthetic data $\{\Dpre, \Spre\}$, we find SBP consistently improves upon the repetition baseline across all scales (Table \ref{tab:results}).
In this section, we focus on presenting the performance of SBP and evaluating the quality of the synthesized pretraining data.
We defer the implementation details of SBP to \S\ref{sec:experiment-details}.
\begin{table}[t]
\centering
\caption{Computed-matched comparison of Synthetic Bootstrapped Pretraining (SBP) and oracle performance gains over the repetition baseline. On average, SBP delivers roughly \textcolor{thesisDarkGarnet}{43\%} of the performance improvement in QA accuracy for the 3B model and \textcolor{thesisDarkGarnet}{58\%} for the 6B model, attainable by an oracle with access to 20x more unique data. } 
\label{tab:results}
\renewcommand{\arraystretch}{1.3} %
\resizebox{\textwidth}{!}{
\begin{tabular}{lrrr|rrr|rrr}
\hline
\hline
& \multicolumn{3}{c}{\textbf{200B-scale}} & \multicolumn{3}{c}{\textbf{1T-scale (3B)}} & \multicolumn{3}{c}{\textbf{1T-scale (6B)}} \\
\cline{2-10} %
\multicolumn{1}{l}{\textbf{Benchmark}} & \multicolumn{1}{c}{\textbf{Baseline}} & \multicolumn{1}{c}{\textbf{SBP}} & \multicolumn{1}{c}{\textbf{Oracle}} & \multicolumn{1}{c}{\textbf{Baseline}} & \multicolumn{1}{c}{\textbf{SBP}} & \multicolumn{1}{c}{\textbf{Oracle}} & \multicolumn{1}{c}{\textbf{Baseline}} & \multicolumn{1}{c}{\textbf{SBP}} & \multicolumn{1}{c}{\textbf{Oracle}} \\
\hline %
\multicolumn{10}{c}{~~~~~~~~~~~~~~~~~~~~~~~~~~~~~~~~~~~~\emph{Perplexity on held-out data $\downarrow$}} \\
\hline %
OpenWebText2& 5.74 & \textcolor{thesisDarkGarnet}{-0.53} & -1.02 & 4.51 & \textcolor{thesisDarkGarnet}{-0.02} & -0.12 & 4.25 & \textcolor{thesisDarkGarnet}{-0.06} & -0.21 \\
LAMBADA  & 6.87 & \textcolor{thesisDarkGarnet}{-0.85} & -1.86 & 4.33 & \textcolor{thesisDarkGarnet}{-0.03} & -0.22 & 3.63 & \textcolor{thesisDarkGarnet}{-0.06} & -0.25 \\
Five-shot MMLU & 3.83 & \textcolor{thesisDarkGarnet}{-0.36} & -0.51 & 3.17 & \textcolor{thesisDarkGarnet}{-0.06} & -0.05 & 3.08 & \textcolor{thesisDarkGarnet}{-0.08} & -0.13 \\
\hline %
\multicolumn{10}{c}{~~~~~~~~~~~~~~~~~~~~~~~~~~~~~~~~~~~~\emph{QA accuracy $\uparrow$}} \\
\hline %
ARC-Challenge \tiny{(0-shot)} & 35.32 & \textcolor{thesisDarkGarnet}{+1.28} & +2.82 & 42.66 & \textcolor{thesisDarkGarnet}{+1.62} & +3.84 & 47.44 & \textcolor{thesisDarkGarnet}{+0.77} & +0.17 \\
ARC-Easy \tiny{(0-shot)} & 68.94 & \textcolor{thesisDarkGarnet}{+2.65} & +4.29 & 75.63 & \textcolor{thesisDarkGarnet}{+0.42} & +2.11 & 78.70 & \textcolor{thesisDarkGarnet}{+0.51} & +0.85 \\
SciQ \tiny{(0-shot)} & 90.50 & \textcolor{thesisDarkGarnet}{+1.00} & +2.40 & 93.20 & \textcolor{thesisDarkGarnet}{+0.80} & +0.50 & 92.90 & \textcolor{thesisDarkGarnet}{+1.90} & +1.80 \\
Winogrande \tiny{(0-shot)} & 60.14 & \textcolor{thesisDarkGarnet}{+1.90} & +5.53 & 65.19 & \textcolor{thesisDarkGarnet}{+1.42} & +2.92 & 70.17 & \textcolor{thesisDarkGarnet}{+0.47} & +2.36 \\
TriviaQA \tiny{(1-shot)} & 22.51 & \textcolor{thesisDarkGarnet}{+3.36} & +7.37 & 36.07 & \textcolor{thesisDarkGarnet}{+0.25} & +0.59 & 40.64 & \textcolor{thesisDarkGarnet}{+0.49} & +3.19 \\
WebQS \tiny{(1-shot)} & 8.56 & \textcolor{thesisDarkGarnet}{+3.74} & +10.83 & 19.34 & \textcolor{thesisDarkGarnet}{+0.54} & +0.44 & 19.88 & \textcolor{thesisDarkGarnet}{+3.79} & +5.22 \\
\hline %
\textbf{Average QA accuracy} & \textbf{47.66} & \textcolor{thesisDarkGarnet}{\textbf{+2.32}} & \textbf{+5.54} & \textbf{55.35} & \textcolor{thesisDarkGarnet}{\textbf{+0.84}} & \textbf{+1.73} & \textbf{58.29} & \textcolor{thesisDarkGarnet}{\textbf{+1.32}} & \textbf{+2.26} \\
\hline
\hline
\end{tabular}
}
\end{table}

\subsection{Main benchmark performance}
\label{sec:benchmark-performance}

At the 200B-scale, we start with the source dataset of $\|\Dpre\|=$10B and curate a SBP dataset of $\|\Spre\|=$75B tokens (detailed ablation in \S\ref{sec:ablation-mixture-ratio}).
We perform joint training on $\{\Dpre, \Spre\}$ with the principle that we do not repeat any synthetic documents during training.
This means that out of a 200B token training budget, we spent 37.5\% of it on the 75B synthetic tokens from $\Spre$ without any repetition, and the remaining 62.5\% on the real dataset $\Dpre$ repeated 12.5 times.
As shown in Table \ref{tab:results}, SBP consistently decreases test loss and improves QA accuracy.
On average, SBP captures \textcolor{thesisDarkGarnet}{$2.32/5.54=$42\%} of the improvement in QA accuracy delivered by the oracle run with 20x additional data access.

\begin{figure}[t]
\centering
\includegraphics[width=\textwidth]{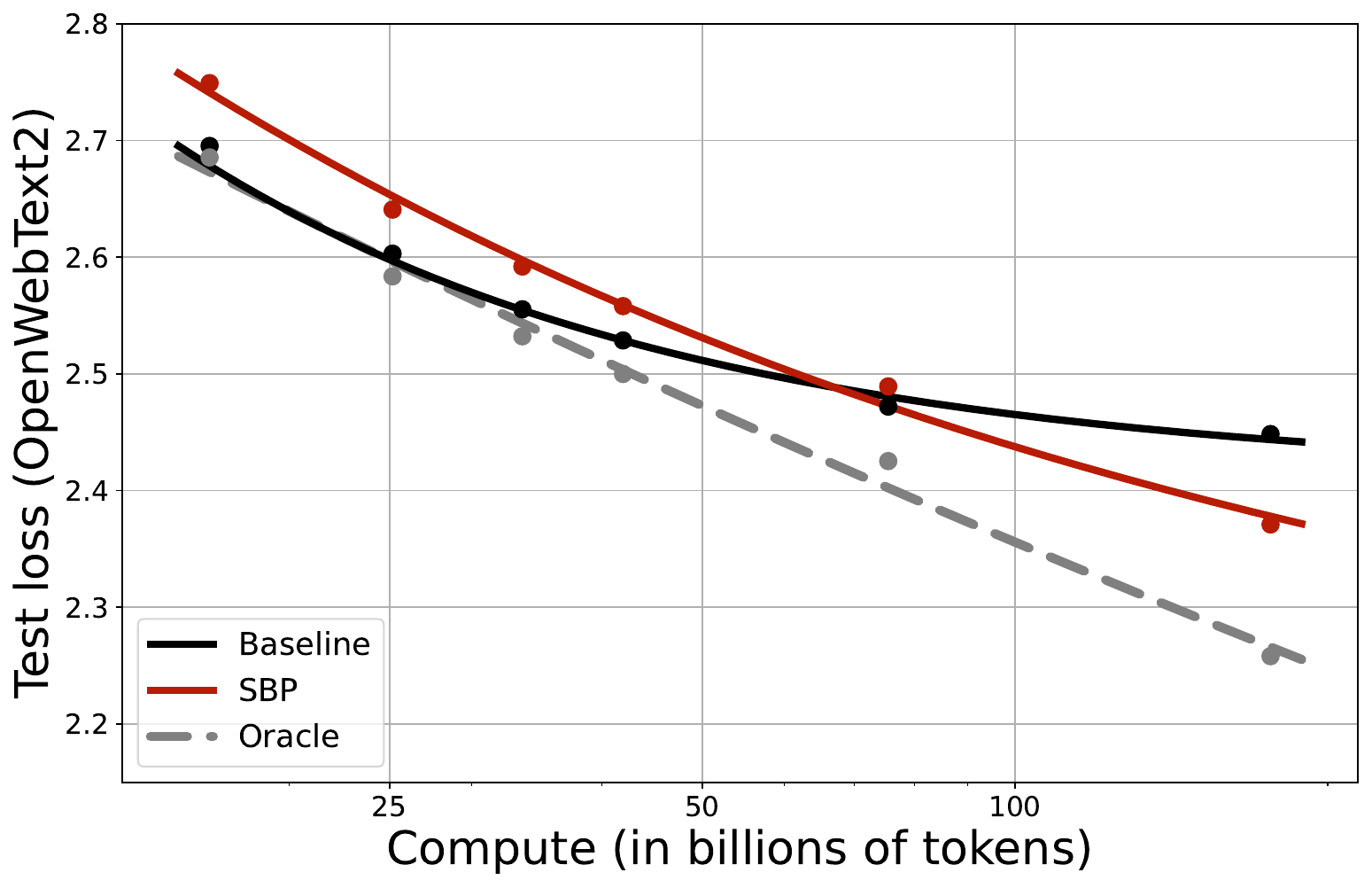}
\caption{Training dynamics (200B-scale).}
\label{fig:training-dynamics}
\end{figure}

The training dynamics of SBP reveal its core mechanism.
As shown in Figure \ref{fig:training-dynamics}, the baseline initially performs similarly to the oracle, since their training data share the same distribution, and when the number of tokens seen is small, there is no distinction between the two.
Then gradually, the oracle becomes a better model than the baseline, as it has access to unlimited unique training data.
For the SBP dynamics, it initially performs worse than both the baseline and the oracle, which is expected since the quality of the synthesized data at most matches that of the real data.
However, gradually, SBP continues to scale while the baseline has plateaued.
This suggests that $\Spre$ offers a signal $\Dpre$ alone cannot capture.

Lastly, to validate the benefit of SBP across different training scales, we implement a larger experiment with $\|\Dpre\|=$50B unique tokens under a compute budget of 1T total training tokens using both 3B and 6B-parameter models.
Based on the ablation studies presented in \S\ref{sec:ablation-mixture-ratio}, we include $\|\Spre\|=$125B synthetic tokens for the 3B model and $\|\Spre\|=$250B synthetic tokens for the 6B model, adhering to the principle of no repetition for synthetic data.
Examining the results in Table \ref{tab:results}, we observe that while perplexity-based measurements plateau for the 3B model \citep{samepretrain}, benchmarks like ARC-Challenge and Winogrande continue to show gains.
On average, SBP recovers \textcolor{thesisDarkGarnet}{$0.84/1.73=$48\%} of the oracle's QA accuracy improvement for the 3B model.
The results are even more pronounced for the 6B model, where SBP delivers a relative improvement of \textcolor{thesisDarkGarnet}{$1.32/2.26=58\%$} compared to the oracle.
This suggests that SBP's effectiveness may scale favorably with model size.
Furthermore, the increased optimal synthetic data ratio for the 6B model suggests that larger models possess greater capacity to exploit the additional information encoded in the synthetic corpus.

\subsection{Analysis of synthetic data}
\label{sec:analysis-of-synthetic-data}

We provide qualitative and quantitative analyses of the synthesized documents to gain insight into the SBP procedure beyond benchmark performance.

\paragraph{Qualitative examples} Figure~\ref{fig:text_comparison} shows samples of synthesized documents from the 200B-scale experiment, with additional samples from 1T-scale (3B) presented in \S\ref{sec:1t-exp}.
On the left, we display a real document about a practical, first-person guide to the coffee houses in San Diego.
We then present two synthesized texts that exhibit notable differences in both framing and depth, with varying degrees of fidelity to the seed document.
Synthesis I sticks to the same topic but shifts toward an expository essay on espresso machines and bean quality, with little mention of specific coffee shops.
Synthesis II adopts a promotional, comparative style, linking San Diego’s coffee culture to New York’s and praising Café Lestat in a way that departs from the original’s balanced assessments.
SBP provides no instructions on how the synthesizer should use the seed texts to write new documents.
The model spontaneously learns to introduce new content and style into the discussion while staying on topic.
It is challenging to manually craft a prompt to an instruction-tuned model that would output either Synthesis I or II with the real document as input.
This example highlights how SBP differs from existing paradigms of data synthesis---the output first abstracts the seed document and then synthesizes new text with more generalized narratives, genres, and intent.
We provide more extensive analysis of this observation in \S\ref{sec:statistical-foundation}.
\definecolor{originalcolor}{RGB}{80, 80, 80}  %
\definecolor{originalcolorlight}{RGB}{230, 230, 230}  %
\definecolor{generatedcolor}{RGB}{100, 100, 100}
\definecolor{generatedcolorlight}{RGB}{250, 250, 250}

\begin{figure}
\begin{center}
\begin{minipage}{0.32\textwidth}
    \begin{tcolorbox}[
        colback=originalcolorlight,
        colframe=originalcolor,
        title=Real document,
        fonttitle=\bfseries\scriptsize,
        width=\textwidth,
        left=0mm,
        right=0mm,
        top=0mm,
        bottom=0mm,
        equal height group=textcomp
    ]

    \small HOME $>$ Chowhound $>$ California $>$ \vspace{3pt}

    Coffeehouses in SD \vspace{3pt}
    
    I am always looking for good coffeehouses in San Diego. I am looking for coffeehouses with (obviously) excellent coffeebeans, baristas who know their business (good microfoam, good crema, they should know the different ratios for steamed milk, foamed milk, espresso for the cappuccino, latte etc., etc), good snacks (paninis, cakes etc), good (cold) flavored coffeedrinks and good ambience. So far I haven’t found many who are good but would like to hear more suggestion. \vspace{3pt}
    
    Caffe Calabria: By far the best coffeehouse in SD. Excellent coffee and baristas (best cappuccino in SD,), great paninis. Only downside is that they close very early. \vspace{3pt}
    
    Café Lestat: Good example what happens with good coffee and average baristas. Lestat is also using coffeebeans from Caffé Calabria but it is sad to see that they don’t know how to make excellent espresso, cappuccino. \vspace{3pt}
    
    E Street Café: Average espresso but strange flavored ...

    \end{tcolorbox}
\end{minipage}
\begin{minipage}{0.32\textwidth}
    \begin{tcolorbox}[
        colback=generatedcolorlight,
        colframe=generatedcolor,
        title=Synthesis I,
        fonttitle=\bfseries\scriptsize,
        width=\textwidth,
        left=0mm,
        right=0mm,
        top=0mm,
        bottom=0mm,
        equal height group=textcomp
    ]
    \small Coffee = Love \vspace{3pt}

    On a recent trip to San Diego, one thing I wanted to find was coffee. I was worried it might be difficult with the huge decrease in Asian flavors, but I was pleasantly surprised. Many of the Mexican, Italian, and Cuban flavors that are pretty common in California are now in the mix. While I did find a few highly recommended spots, the majority of the locations I checked out were nothing to write home about. At the least, they were not being sold out, which I consider a plus. \vspace{3pt}
    
    One major factor that really needs to be taken into account is the type of machine being used. There are two types of espresso machines: the cheap ones that spit out lots of pressure and don’t have any control over the shot, and the expensive ones that create amazing shots and utilize the types of superfine grinds you usually find in Italy. The latter types are much harder to find and cost significantly more. \vspace{3pt}
    
    Another factor to consider is the type of beans being used. If they are done correctly, great espresso can be made from any type of bean, but it’s a rare experience to find a place that does ...

    \end{tcolorbox}
\end{minipage}
\begin{minipage}{0.32\textwidth}
    \begin{tcolorbox}[
        colback=generatedcolorlight,
        colframe=generatedcolor,
        title=Synthesis II,
        fonttitle=\bfseries\scriptsize,
        width=\textwidth,
        left=0mm,
        right=0mm,
        top=0mm,
        bottom=0mm,
        equal height group=textcomp
    ]
    \small Discuss as: \vspace{3pt}

    Coffeehouses: San Diego. The best? \vspace{3pt}
    
    Café Lestat, by far, is a stand-out. (photo by Thomas Moriarty) \vspace{2pt}
    
    If the coffee scene in San Diego sounds like the one in New York, there's a reason. The upper echelons of NYC's drink scene are the same ones who have taken over the majority of the coffee houses in San Diego. This is probably partly due to the fact that both cities have large student populations, both are home to large coffee drinker bases, both have a large number of espresso shops, and both offer top-notch, hand-crafted cappuccinos. But if there is one city in America that deserves to have its name above that of New York, it's San Diego.  \vspace{2pt}
    
    There are just under 100 coffee shops in San Diego, with almost half of them located on University Ave. alone. So finding the perfect coffee shop is crucial. We spent a whole day just roaming around the area, hunting for the best. \vspace{2pt}
    
    In terms of the coffee itself, it's hard to beat Café Lestat. The baristas are amazing and their methods are pristine ...

    \end{tcolorbox}
\end{minipage}

\captionof{figure}{Comparison of original text with synthesized text variations.}
\label{fig:text_comparison}
\end{center}
\end{figure}

\paragraph{Quantitative analysis} We also conduct quantitative evaluations to assess the quality of the generated texts.
We measure text distributions for the synthesized document at 200B-scale and 1T-scale.
To establish a reference, we also conduct the same evaluation on the real documents.
We measure five basic quality indicators:

\begin{itemize}[leftmargin=16pt]
    \item \textbf{Repetition:} A document may contain too many repeated sentences or patterns. Repetition rate refers to the fraction of documents that exhibit this problematic behavior.
    \item \textbf{Duplicate@1M:} Another failure mode of synthesis is when the documents sampled from the synthesizer distribution are nearly duplicates of each other. Duplicate@1M refers to the fraction of unique documents (determined by Jaccard similarity at a threshold of 0.6) when 1M documents are sampled from the text distribution.
    \item \textbf{Non-factual:} A common failure mode of synthesis is the generation of content that contradicts established knowledge or facts. Non-factual rate refers to the fraction of documents that contain verifiable factual errors, as determined by automated fact-checking tools.
    \item \textbf{Pair-irrelevance:} The synthesized $\doctwo$ is considered relevant to $\docone$ if they pertain to the same topic, event, entity, person, place, or object. Pair-irrelevance refers to the fraction of synthesized $\doctwo$ that is not relevant to $\docone$, indicating the synthesis is not rightly using information from $\docone$.
    \item \textbf{Pair-copying:} $\docone$ and $\doctwo$ are considered near-duplicates if they are almost identical, except for some extra white spaces, line breaks, or punctuation. Pair-copying refers to the fraction of synthesized $\doctwo$ that is a near duplicate of $\docone$.
\end{itemize}
Operationally, we implement Repetition, Pair-irrelevance, and Pair-copying using LM-as-judge (prompts and more implementation details given in \S\ref{sec:mid-eval}) by sampling 1{,}000 examples from each distribution and estimating the fraction of documents satisfying each criterion.
For Non-factual (prompts and details given in \S\ref{sec:factuality}), we sample 10{,}000 examples and conduct a comprehensive examination of factual errors to ensure broader coverage of the generated data.
For Duplicate@1M, we use rule-based filtering to detect the fraction of duplicates based on 1M documents sampled from each distribution.
We present the result in the table below. All metrics are lower for better data.

\begin{table}[ht]
\centering
\caption{
Quantitative evaluation of documents sampled from the synthesizer at 200B-scale and 1T-scale.
We can see that the synthesized documents preserve topics and are not are simple duplicates.
}
\renewcommand{\arraystretch}{1.3}
\resizebox{\textwidth}{!}{
\begin{tabular}{lrrrrr}
\hline
\hline
\textbf{ } & \textbf{Repetition $\downarrow$} & \textbf{Duplicate@1M $\downarrow$} & \textbf{Non-factual $\downarrow$} & \textbf{Pair-irrelevance $\downarrow$} & \textbf{Pair-copying $\downarrow$} \\
\hline
\textbf{200B-scale}    & 4.3\% & 0.8\% & 15.1\% & 25.6\% & 0.1\% \\
\textbf{1T-scale (3B)} & 3.9\% & 0.8\% & 8.7\% & 7.8\% & 0.9\% \\
\textbf{1T-scale (6B)} & 2.6\% & 0.3\% & 6.5\% & 6.0\% & 0.3\% \\
\textbf{Real data} & 1.8\% & 0.7\% & 1.8\% & n.a. & n.a. \\
\hline
\hline
\end{tabular}
}
\label{tab:mideval}
\end{table}

Repetition and Duplicate@1M measure basic text quality independent of the specific pair-synthesis strategy employed by SBP.
They aim to detect two simple failure modes: text repetition, a common failure pattern in generations from small language models (3B in our case), and the lack of diversity, a common issue with synthetic data that relies on variation induced by the sampling temperature.
From Table \ref{tab:mideval}, we find that both 200B-scale and 1T-scale synthesis match the quality of real data as captured by these two metrics.
The absence of repetitions and duplicates is not, in itself, an indicator of high-quality or educational text, but rather a basic sanity check that the synthesized texts are diverse.

Non-factual failure stems from hallucinations that introduce non-existent entities or relations inconsistent with reality.
We find that synthesis at the 1T-scale (3B) significantly reduces these errors compared to the 200B-scale.
Furthermore, with the 6B-parameter model, the non-factual rate further decreases to 6.5\%.
This indicates that as the data synthesizer trains on more data with larger models, the factuality of the generated outputs converges toward that of real data.

Pair-irrelevance and Pair-copying, on the other hand, measure how synthesized $\doctwo$ relates to the seed $\docone$.
We aim to detect two failure modes: when $\doctwo$ is completely irrelevant to $\docone$, and when $\doctwo$ merely copies the content of $\docone$.
We observe that both 200B-scale and 1T-scale synthesis avoid simply copying and pasting $\docone$.
The 1T-scale demonstrates substantially higher relevance than the 200B-scale, which makes sense as the synthesizer learns more diverse relations among $|\Dpre|=$60M documents than the $|\Dpre|=$12M corpus.

This concludes the main experimental results.
In the appendix, we present implementation details of SBP in \S\ref{sec:experiment-details}, ablations on synthetic data mixture ratio in \S\ref{sec:ablation-mixture-ratio}, additional analysis of synthesized documents in \S\ref{sec:additional-analysis-of-synthesized-samples}, and comparison with a larger 6B model in \S\ref{sec:model_scaling}.

\section{Statistical foundations of SBP}
\label{sec:statistical-foundation}

We present a Bayesian interpretation of the SBP procedure, offering one explanation for the origin of SBP's improvement.
We formulate a hierarchical model of natural language texts (\S\ref{sec:hierarchical-concept-model}) and demonstrate that SBP implicitly enables LMs to learn a posterior that standard pretraining cannot capture.
We conclude by connecting our findings from this idealized model to the reality of LMs (\S\ref{sec:ideal-to-real-discussion}).
We begin with the observation that the pretraining objective models the marginal likelihood of documents:
\begin{equation}\label{eqn:pretraining-objective-sec5}
    \argmax_\theta \log p_\theta(\Dpre) = \argmax_\theta \sum_{\doc\in\Dpre} \log p_\theta(\doc).
\end{equation}
However, different natural language documents share structural similarities (Figure \ref{fig:sp}), suggesting a more complex underlying joint distribution that we explore next.
Recall that EntiGraph (Chapter~\ref{chap:scp}) explicitly constructed an entity-relation graph to generate diverse synthetic data from a source corpus.
SBP achieves an analogous effect implicitly: rather than prompting for entities and their relations, the synthesizer infers a latent concept $c$ that governs the joint distribution of related documents.
Both methods share the same underlying principle---generating data from a structured intermediate representation yields higher diversity than direct paraphrasing.

\subsection{A hierarchical concept model for natural language}
\label{sec:hierarchical-concept-model}

In the transformer example from Figure \ref{fig:sp}, both the arXiv preprint of the transformer paper and its code implementation are derived from the abstract concept of ``transformer neural network''.
From this perspective, we can view the generation process of natural language documents as a hierarchical sampling process where we first sample a collection of abstract concepts $c^{(i)}$ (e.g., the idea of a transformer) from a semantic space of all concepts $\mathcal{C}$ and then generate new documents $\doc^{(i,j)}$ conditional on $c^{(i)}$.

\begin{wrapfigure}{r}{0.2\textwidth}
\centering
\vspace{-10pt}
\includegraphics[width=0.18\textwidth]{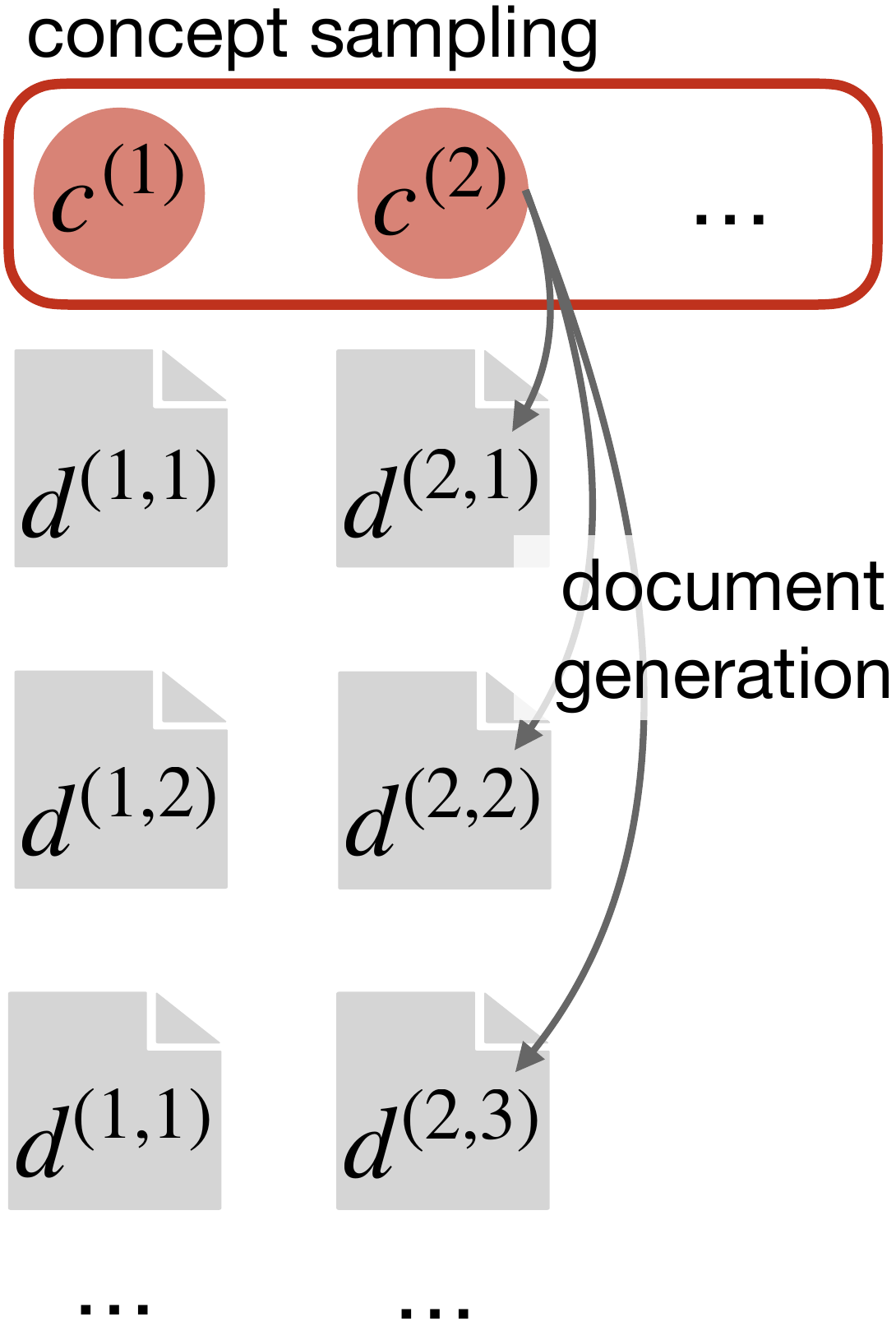}
\label{fig:hcm}
\vspace{-10pt}
\end{wrapfigure}

Under this view, we can think of the pretraining document as follows.
\begin{itemize}[leftmargin=16pt]
    \item \textbf{Concept sampling}: Sample a fixed concept collection $\{c^{(i)}\}_i\sim P(c)$.
    \item \textbf{Document generation}: For each concept $c^{(i)}$, generate documents from $\{\doc^{(i,j)}\}_j\sim P(\doc|c^{(i)})$ constituting one part of the pretraining dataset.
\end{itemize}
Under such a model, the structural similarity between documents generated from the same concept is modeled as probabilistic \emph{dependence}.
The standard pretraining objective \eqref{eqn:pretraining-objective-sec5} then neglects inter-document correlation and only learns the marginal distribution
\begin{equation}
    P(\doc) = \int_{c\in\mathcal{C}} P(\doc|c)P(c) dc.
\end{equation}
In this view, the model learns to generate plausible text by first generating a core concept $c$ and then performing the generation $P(\doc|c)$.
In contrast, the synthesizer-tuning objective models a posterior of $c$ given $d$.
To see this, we additionally assume that the curated pairs $(\docone, \doctwo)$ come from the same underlying concept $c$.
Then, the synthesizer-tuning objective \eqref{eqn:synthesizer-objective-lm} forces the LM to perform a distinct task:
\begin{equation}\label{eqn:synthesizer-objective-simulation}
    P(\doctwo | \docone) = \int_{c\in\mathcal{C}} P(\doctwo|c)P(c|\docone) dc.
\end{equation}
Here, we use Bayes' rule and the conditional independence assumption
\[
    P(\doctwo|c,\docone) = P(\doctwo|c),
\]
which says that the documents from the same concept are conditionally independent given that concept.
As a result, to successfully model \eqref{eqn:synthesizer-objective-simulation}, the synthesizer must first perform posterior inference to infer the latent concept $c$ given the document $\docone$, and then use this inferred concept to synthesize a new document $\doctwo$, a signal that is ignored by the standard pretraining objective.
To illustrate this, we perform a post-hoc analysis by prompting an LM to identify the shared concepts between the synthesized document and its seed (Table \ref{tab:doc_connections}).
While it is difficult to describe a synthesized document as the outcome of a simple transform, such as a paraphrase or summarization, it always shares a common underlying concept with its seed origin.
\begin{table}[ht]
\centering 
\caption{Examples of latent concepts $c$ inferred by an external LM (prompts provided in \S\ref{sec:concept-analysis}). From left to right, we provide a summary of the real document, the inferred latent concept, and a summary of the synthesized document.}
\rowcolors{2}{appleLightGray}{white} %
\scriptsize
\renewcommand{\arraystretch}{1.4} %
\begin{tabular}{p{0.37\linewidth} p{0.27\linewidth} p{0.37\linewidth}}
\hline\hline
\textbf{Real document summary} & \textbf{Concepts} & \textbf{Synthesized document summary} \\
\hline
Twitter's impact on journalism & Opportunities arise from Twitter & Guide on Twitter user monetization \\
\hline
Family story about kids and doughnuts & Parenting + kids' food catering & Emotional anecdotes of parents treating kids \\
\hline
Minor parties' challenges in the U.S. Congress & Minor political parties in the U.S. & Explains U.S. minor parties' history \\
\hline
Personal stories/questions about swollen eyes & Causes/treatments of swollen eyes & Non-personal guide to treating swollen eyes. \\
\hline
Antarctic carbon fixation mechanisms & How life survives in Antarctic & Antarctic geography and survival adaptations \\
\hline
Profile of a belly dancing teacher in the U.K. & Belly dancing as a dance form & General introduction to belly dancing \\
\hline
Anxiety about creative work judged in a dream & Dream as personal self-reflection & Description and reflection of personal dreams \\
\hline
NYC (yearly/monthly) climate extremes & NYC weather and temperature & QA on NYC July heat and related topics \\
\hline
Tutorial for Minecraft block modding & Block editing in Minecraft & Minecraft forum question on removing blocks \\
\hline
Cosmic airburst destroys Tall el-Hammam city & Destruction of ancient cities & Tall el-Hammam excavation as a news event \\
\hline
\hline
\end{tabular}
\label{tab:doc_connections}
\end{table}

The additional signal from the posterior then enables a form of self-distillation.
The synthesizer, by learning a more complex conditional objective, becomes a more knowledgeable ``teacher'' model that has learned to infer the latent structure of data.
The synthetic data it produces is then the knowledge ``distilled'' from this teacher \citep{hinton2015distillingknowledgeneuralnetwork}. 
The final LM training then acts as a ``student'' that learns from a combination of real and synthetic data, allowing it to discover information that real data alone cannot reveal.

\subsection{From idealized models to language model reality}
\label{sec:ideal-to-real-discussion}

For real text documents, we do not know the true data-generating process, and any parametric assumption would be incorrect.
This is where the power of the transformer neural network shines.
A transformer is a \emph{mapping-first} \citep{2cultures} approach.
It does not require explicit modeling of the underlying parametric model.
Instead, as a universal function approximator \citep{emmanuelthesis}, it directly learns the complex conditional distribution $p_\theta(\doctwo|\docone)$ from paired data alone.

In this context, the transformer's ignorance of an explicit hierarchical model is its blessing.
It bypasses the impossible step of modeling the true hierarchical distribution of language and instead brute-forces the learning of the exact transformation required: the end-to-end process of posterior inference and subsequent synthesis.
The self-distillation framework---synthesizing data from this conditional model and then training on it---is all that is needed.
We never need to introduce an explicit hierarchical model to perform the forward $P(\doc|c)$ and backward pass $P(c|\doc)$ in the latent space.
The entire procedure is implicitly carried through the synthesizer-tuning update with the latent concept $c$ integrated, demonstrating a powerful insight for scaling LMs in the real world.

\section{Discussion}
\label{sec:sbp-discussion}

Before the prevalence of pretraining \citep{gpt2, devlin-etal-2019-bert}, we needed 40M pairs of English and French sentences \citep{sutskever2014sequencesequencelearningneural} to grant an LM the ability to translate from one language to another.
In contrast, any modern LM \citep{gemini, gpt4, gunter2024appleintelligencefoundationlanguage} can easily achieve this task via a single prompt.
This advancement stems from the rich correlations between words within a document that LMs learn during pretraining.
This shift from a hard-to-collect, task-specific dataset to the readily available, generically crawled internet data marks a transition from relying on strong but scarce supervision to a weak self-supervision that is easy to scale.
As we gradually deplete this weak source of self-supervision by exhausting the available internet data, many have called for stronger forms of supervision, such as reinforcement learning \citep{r1, o1}.
We instead offer an alternative perspective that continues to search for a form of self-supervision weaker than next-token prediction.
SBP offers a particular instantiation of such an effort by examining the correlations \emph{between} documents missed by the current pretraining paradigm.
It remains to explore other forms of supervision not currently utilized.

The fact that SBP provides any improvement stems from the poor inductive bias of the transformer neural network.
For example, transformers trained on the text ``A is B'' can not generalize to ``B is A'' \citep{berglund2023reversal}, requiring the user to curate synthetic data that explicitly narrates the converse relation \citep{entigraph}.
One can imagine an architecture with strong inductive bias such that the LM trained individually on $\docone$ and $\doctwo$ can automatically internalize the relation between the two.
Despite this poor inductive bias, transformers are more parallelizable and scalable than their alternatives \citep{transformer, shazeer2017outrageouslylargeneuralnetworks}.
Given this trade-off, SBP offers a unique solution that preserves the system benefits of the transformer architecture while also enabling the learning of missed correlations by encoding such additional signals via synthetic data.

\subsection{Limitations}
\label{sec:future-directions}

\paragraph{Document embedding with activations of pretrained LM} In our implementation of SBP, we use Qwen3-0.6B-Embedding \citep{zhang2025qwen3embeddingadvancingtext} to obtain embeddings of DCLM \citep{li2024datacomplm} documents.
An ideal implementation of SBP would only rely on the 3B-parameter model and the pretraining dataset itself to curate the paired synthesizer-tuning dataset.
To achieve this, we can use the activations of the self-attention layer from an intermediate transformer block as a learned representation of documents.
\cite{Khandelwal:2020} and \cite{resmem} implemented this at the much smaller scale of $\sim300$M parameters and $\sim3$B tokens.
However, our experiments operate at a much larger scale with a customized model.
As a result, we use the optimized vLLM \citep{kwon2023efficient} inference infrastructure for Qwen3-0.6B embedding models to efficiently index the pretraining corpus.
Since the SBP procedure only requires a coarse binary decision of relevant vs.~not relevant, which is much weaker than fine-grained document ranking embedding models are optimized for, we leave the more involved inference infrastructure for future work.

\paragraph{Parametric fit of SBP scaling law} One experimental consequence of LM pretraining is a clean scaling law \cite[Equation 1.4]{kaplan2020scalinglawsneurallanguage} that relates the held-out test loss $L(N, D)$ to the number of LM parameters $N$ and the size of the pretraining dataset $D$.
A natural scientific question is how the scaling law of SBP compares to the scaling law of pretraining.
In our experiments, we evaluate $L(N, D)$ at three distinct points: $(N=3\text{B}, D=10\text{B})$, $(N=3\text{B}, D=50\text{B})$, and $(N=6\text{B}, D=50\text{B})$.
We observe that SBP delivers larger relative improvements with the 6B model, suggesting a favorable scaling behavior.
Two obstacles prevent a full scaling law:
First, SBP is inherently a large-scale algorithm that cannot be scaled down.
Since SBP first uses the pretrained LM to generate synthetic text and then trains on it, if the model and dataset sizes are too small, the generated text may not even be coherent.
In contrast, the scaling experiments in \cite{kaplan2020scalinglawsneurallanguage} involve model sizes ranging from 768M to 1.5B and dataset sizes ranging from 22M to 23B, which allows for efficient experimentation.
Second, varying $N$ or $D$ implies redoing the synthesizer-tuning and subsequent data synthesis over billions of tokens.
Additionally, varying $D$ also implies redoing the nearest neighbor matching, as the neighbors are only defined given a fixed document pool.
These obstacles aside, it would be interesting to see whether the SBP scaling law differs from the normal scaling law by a smaller multiplicative factor or a better exponent.
Since SBP additionally utilizes inter-document correlations, a form of long-range interactions, its scaling law not only helps us understand SBP but also potentially helps us better understand natural language data itself \citep{Ebeling_1994}.

\subsection{Conclusion}
\label{sec:sbp-conclusion}

We introduce Synthetic Bootstrapped Pretraining (SBP) as a new LM pretraining framework that leverages inter-document correlations missed by the standard pretraining objective.
We demonstrate the effectiveness of SBP by pretraining 3B and 6B-parameter models from scratch for up to 1T tokens under a rigorously designed compute-matched experiment setup.
Qualitative and quantitative analyses of the synthesized text reveal rich variation that cannot be captured by simple paraphrases.
Beyond being practically effective, SBP admits a natural Bayesian interpretation, where it implicitly learns a posterior that infers the latent concept in a given document.
Its \emph{bootstrapping} nature grants SBP the possibility of scaling the pretraining capability of the LM beyond its current limit.

Yet even with SBP, the pretraining procedure and learning algorithms remain human-designed.
In Chapter~\ref{chap:automated-ai-research}, we ask whether we can remove this final dependency on human ingenuity---by building AI systems that autonomously discover and improve their own training methods.

\chapter{Towards AI-designed AI via test-time search}
\label{chap:automated-ai-research}
Chapters~\ref{chap:scp} and~\ref{chap:sbp} showed that a model can acquire new knowledge and bootstrap its own pretraining capabilities through synthetic data, but the learning algorithms themselves---EntiGraph, SBP, and the training pipelines that orchestrate them---remained human-designed.
This motivates asking whether the research process itself can be automated.
In this chapter, we pursue this direction by building an automated AI research system.
Established approaches to algorithmic improvement---Neural Architecture Search~\citep{Zoph2016NeuralAS, So2019TheET}, automated algorithm discovery~\citep{Real2020AutoMLZeroEM}, and learned optimizers~\citep{Chen2023SymbolicDO}---are reasonable alternatives, but they operate within constrained, predefined spaces or require end-to-end differentiable pipelines, making it difficult to discover techniques outside the search space or scale to full training systems.
We pursue research automation because it operates in an unbounded action space: ideas expressed in natural language, validated via code execution, using capabilities that language models already possess (see \S\ref{sec:air-related-work} for a detailed comparison).

\section{Towards automated AI research}

We envision automated AI research as follows: LLMs generate research ideas, implement them as code, run experiments to verify effectiveness, and continuously learn from execution results.
If successful, such automated researchers could discover effective ideas in a massive search space, scalably converting compute into scientific discovery; the discovered ideas could, in turn, improve frontier AI models themselves, enabling recursive self-improvement.

Despite this promise, the ability of LLMs to generate effective ideas remains the key bottleneck.
\cite{Si2024CanLG} and \cite{Si2025TheIG} evaluated LLM-generated research ideas through large-scale expert review and found that LLM ideas often look convincing but prove ineffective after human researchers execute them.

This finding underscores the need to ground idea generation in execution.
However, obtaining execution results automatically and at scale is challenging---especially for open-ended AI research where any idea expressible in natural language lies within the action space.
We build a high-throughput automated idea executor that implements hundreds of model-generated ideas in parallel and returns experiment results as execution feedback.

To study how far we can push automated LLM research, we select two GPU-intensive research problems---LLM pre-training and post-training---as research environments for our automated AI researchers.
We demonstrate for the first time that our automated executor can implement a large fraction of LLM-generated ideas on these challenging open-ended research problems, achieving over 90\% execution rates on the pre-training environment with Claude-4.5-Sonnet and Claude-4.5-Opus.

We define objective and unhackable performance metrics for both environments and apply test-time search at the idea level---using execution feedback to guide evolutionary search over model-generated ideas.

\begin{figure}[!htbp]
  \centering
  \includegraphics[width=0.8\textwidth]{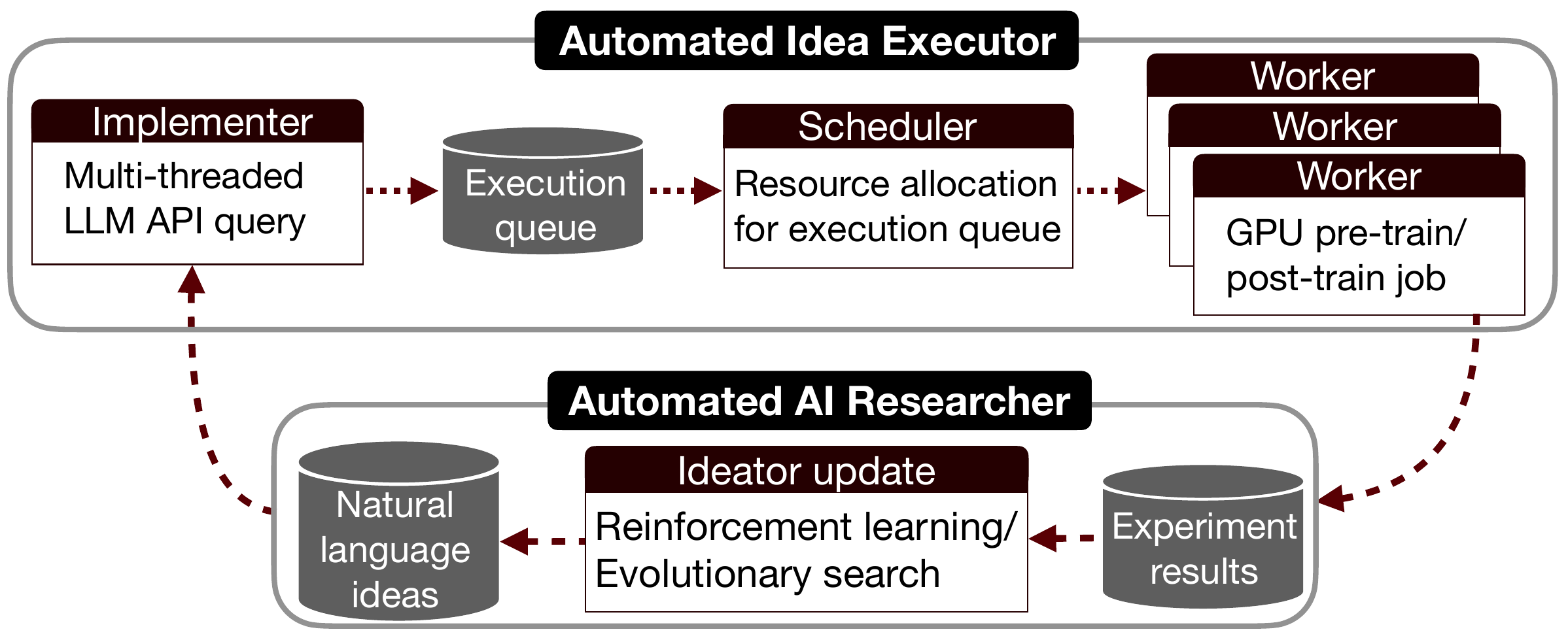}
  \caption{We build an automated idea executor involving Implementer, Scheduler, and Worker. We then use this automated executor to guide test-time search over model-generated ideas.}
  \label{fig:systemdesign}
\end{figure}

We use our automated executor to guide evolutionary search.
Within ten search epochs, execution-guided search finds a post-training recipe that improves over the GRPO baseline (69.4\% vs.\ 48.0\%) on post-training a 1.5B model for math reasoning, and a pre-training recipe that improves over the nanoGPT baseline (19.7 minutes vs.\ 35.9 minutes) on minimizing training wall-clock time to reach the target validation loss (Table~\ref{table:search_results}).
We find that models frequently generate algorithmic ideas beyond hyper-parameter tuning, and evolutionary search outperforms best-of-N under the same sampling budget.
However, only Claude-4.5-Opus shows a clear scaling curve; both Claude-4.5-Sonnet and GPT-5 saturate early.

We also explore reinforcement learning with execution reward as an alternative to search; while average reward improves, the max reward---more critical for discovery---does not, pointing to fundamental challenges in learning to generate research ideas (Appendix~\ref{sec:rl-from-execution-reward}).

In summary, we develop a large-scale automated idea executor that implements research ideas for open-ended, realistic research problems.
Using this executor, we show that execution-guided evolutionary search is sample-efficient and effective, outperforming best-of-N under the same sampling budget.
We provide extensive analysis of the executed ideas and suggest promising directions for scaling automated AI research.

\begin{table}[t]
\centering
\small
\caption{Performance of our execution-guided search in comparison with the provided baselines and best human experts. The post-training task is to finetune a 1.5B model for math reasoning, and the metric is accuracy on the MATH validation set. The pre-training task is to train a 124M Transformer on FineWeb, and the metric is the training time to reach 3.28 validation loss.}
\vspace{5pt}
\begin{tabular}{lcc}
\toprule
                    & Post-training$_{\uparrow}$   & Pre-training$_{\downarrow}$ \\
\midrule
Baseline            & 48.0\% & 35.9 min  \\
Execution-guided search & 69.4\% & 19.7 min  \\
Best human expert       & 68.8\% & 2.1 min \\
\bottomrule
\end{tabular}
\label{table:search_results}
\end{table}

\section{Automated idea executor}

We build an automated executor that takes natural language research ideas as input, generates code implementations, runs experiments on the backend, and returns benchmark performance as output.

\subsection{Research environments for ideation}

We formalize a research environment as an abstract interface with two methods (Figure~\ref{fig:research-env}, left): \texttt{context()} returns the task description passed to the LM ideator, and \texttt{value(idea)} returns a scalar measure of idea quality after execution.
For AI research, we instantiate this interface as \texttt{AIResearchEnv} (Figure~\ref{fig:research-env}, right): \texttt{context()} returns the baseline codebase, and \texttt{value()} patches the idea into a sandboxed copy of the codebase, runs the experiment, and returns the benchmark result.
This abstraction separates what the LM sees (the codebase) from how ideas are scored (sandboxed execution), and makes the search objective explicit: all methods in this chapter---best-of-N, evolutionary search, and RL---optimize \texttt{env.value(idea)}.

\begin{figure}[!htbp]
\centering
\begin{minipage}[t]{0.42\textwidth}
\begin{lstlisting}[style=pseudocode]
class ResearchEnv:

  @abstract
  def context(self):
    # task description
    # passed into LM ideator
    pass

  @abstract
  def value(self, idea: str):
    # scalar measure of
    # idea quality
    pass
\end{lstlisting}
\end{minipage}
\hfill
\begin{minipage}[t]{0.54\textwidth}
\begin{lstlisting}[style=pseudocode]
class AIResearchEnv(ResearchEnv):
  codebase: str
  resource: str
  sandbox_factory: Callable

  def context(self):
    return self.codebase

  def value(self, code_diff: str):
    sb = self.sandbox_factory(
             self.resource)
    sb.exec(f"patch {code_diff}")
    sb.exec("bash run.sh")
    return sb.exec("bash eval.sh")
\end{lstlisting}
\end{minipage}
\caption{Research environment abstraction. \textit{Left:} abstract interface defining the search problem---\texttt{context()} provides what the LM sees, \texttt{value()} scores an idea by execution. \textit{Right:} concrete implementation for AI research---ideas are patched into a sandboxed codebase and evaluated via automated execution.}
\label{fig:research-env}
\end{figure}

We select research problems that are open-ended---allowing ample room for algorithmic innovation---while having well-established baselines so that measuring effectiveness is simple.
We construct both a pre-training and a post-training environment.\footnote{We open-source our environments and idea execution trajectories at \url{https://github.com/NoviScl/Automated-AI-Researcher}.}

\paragraph{Pre-training task: improving nanoGPT} In the nanoGPT environment, we provide a baseline codebase adapted from the nanoGPT speedrun~\cite{modded_nanogpt_2024}.
The original speedrun task is to minimize the time to pre-train a 124M GPT-2 model~\cite{gpt2} on the FineWeb corpus~\cite{Penedo2024TheFD} to reach a validation loss of 3.28 on the validation set on 8 H100 GPUs.
We make several modifications to the original speedrun setting.
First, we define $\texttt{value}(\text{idea}) = \frac{1}{\text{validation loss}}$ for the search and RL experiments in later sections.
This allows us to fix the training wall-clock time at 25 minutes and have the model directly optimize the proxy reward under this fixed budget, avoiding vastly different runtimes across runs.
We report the validation loss or the proxy reward metric in most plots, and only measure and report the training time metric for the top solution in order to directly compare it with the human experts' solutions on the original nanoGPT speedrun leaderboard.
Second, to prevent reward hacking, we freeze all evaluation hyper-parameters and implement an inference function that predicts one token at a time.
This prevents models from changing the attention mechanism in ways that leak future tokens---an issue we encountered multiple times during initial development.
We use this inference function during final validation after each training run.

\paragraph{Post-training task: improving GRPO} In the GRPO environment, we provide an implementation of the GRPO algorithm~\cite{shao2024deepseekmathpushinglimitsmathematical} that finetunes a Qwen2.5-Math-1.5B checkpoint~\cite{yang2024qwen25mathtechnicalreportmathematical} on the MATH dataset~\cite{Hendrycks2021MeasuringMP}.
We specify a fixed training wall-clock time budget and define $\texttt{value}(\text{idea})$ as the max accuracy on the MATH validation set during training.
To prevent reward hacking, we keep all validation-related code in a separate file that the automated executor cannot access or modify.

In both environments, we impose no constraints on the ideation scope, so anything from extensive hyperparameter tuning to novel model architectures or training algorithms is within scope.

\subsection{System design}

The automated idea executor implements the \texttt{value} method of Figure~\ref{fig:research-env}: given a batch of natural-language ideas, it returns the benchmark performance of each.
Three building blocks compose this API (Figure~\ref{fig:systemdesign}): \textit{Implementer}---the server that generates the code diff for each idea and applies those changes; \textit{Scheduler}---a middle layer that receives codebases and allocates resources; \textit{Worker}---the GPU cluster that runs experiments and uploads results.

\paragraph{Implementer} The implementer runs on a CPU machine with high IO capacity.
The user submits a batch of natural language ideas.
For each idea, the implementer makes parallel API calls to the code execution LLM to obtain a \texttt{diff} file that can be patched into the baseline codebase.

For efficiency, we prompt the code execution LLM with both the idea and the baseline codebase to sample 10 code diff files in parallel.
For each sample, if the generated diff file cannot be patched into the original codebase, we provide the patch log and ask the model to revise its generation.
We repeat this self-revision for a maximum of 2 times.
We then return the first code diff file that patches successfully.
The patched codebase is submitted to a cloud bucket as a \texttt{.zip} file.

\paragraph{Scheduler} At a set clock frequency, the scheduler downloads new codebases from the cloud.
If a codebase has not been executed, the scheduler examines its resource requirements and prepares a job configuration for submission.

\paragraph{Worker} Once the scheduler finds available resources, it connects the prepared job configuration with the GPU resource and initializes the worker to run the experiment.
If execution succeeds, the worker uploads experiment logs---including all performance metrics---to another cloud bucket (\texttt{wandb}) along with complete metadata: idea content, code change, execution log, etc.
If execution fails (e.g., due to bugs in code implementation), the worker halts.
The ideator model can then download execution results and observe the performance of all submitted ideas with full training logs.

\begin{figure}[!htbp]
  \centering
  \subfigure[Self-Execution (GRPO)]{\includegraphics[width=0.495\linewidth]{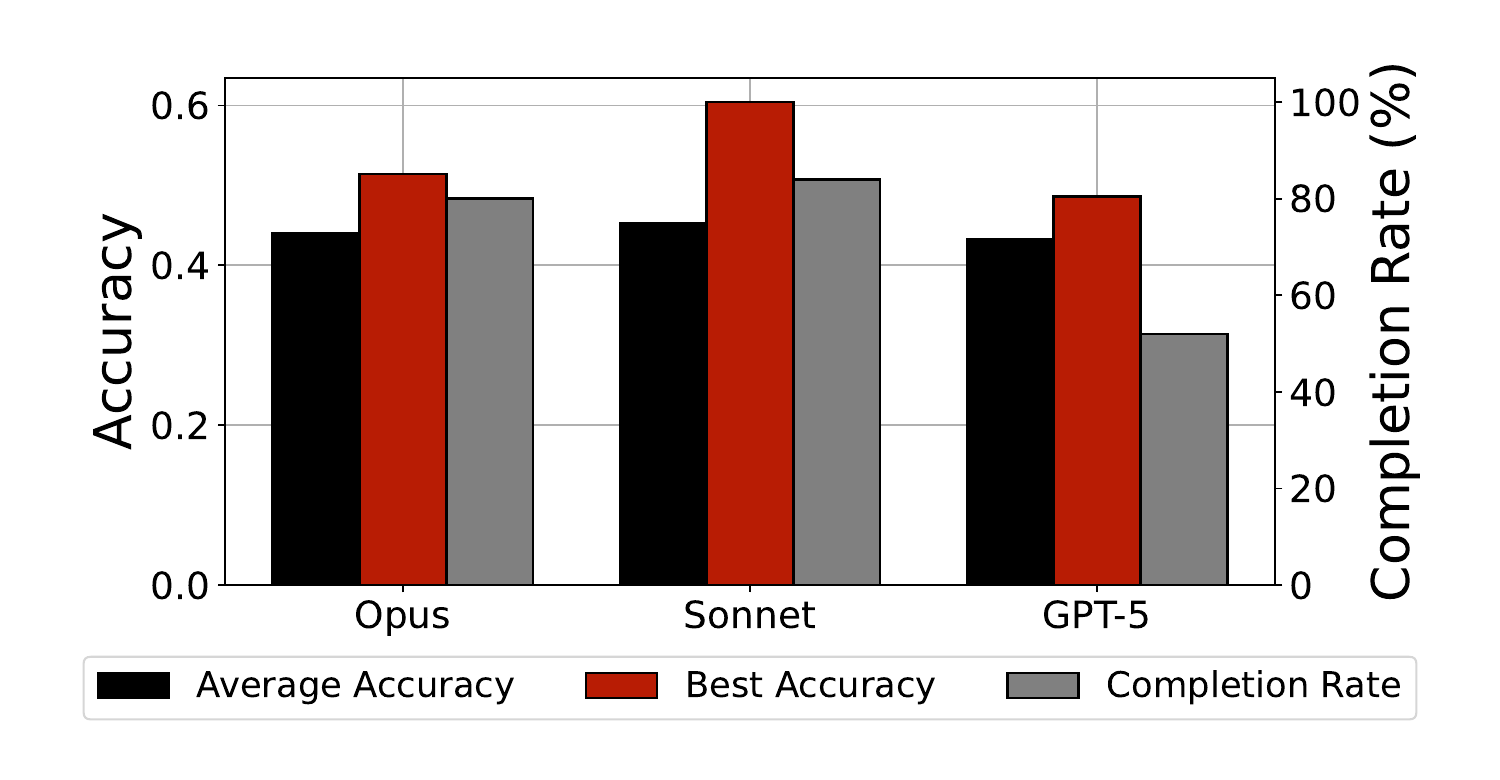}}
  \subfigure[Self-Execution (nanoGPT)]{\includegraphics[width=0.495\linewidth]{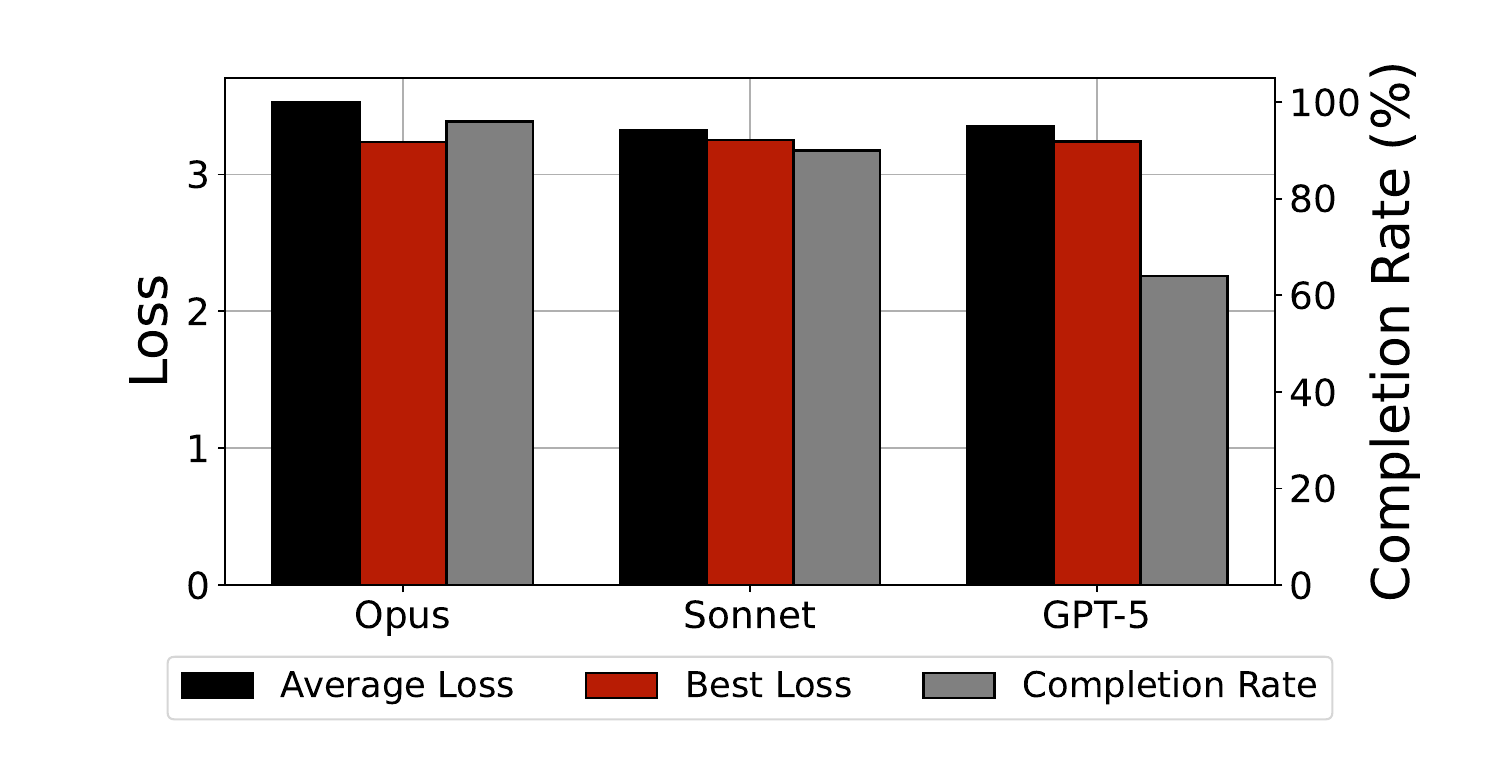}}

  \subfigure[GPT-5 Execution (GRPO)]{\includegraphics[width=0.495\linewidth]{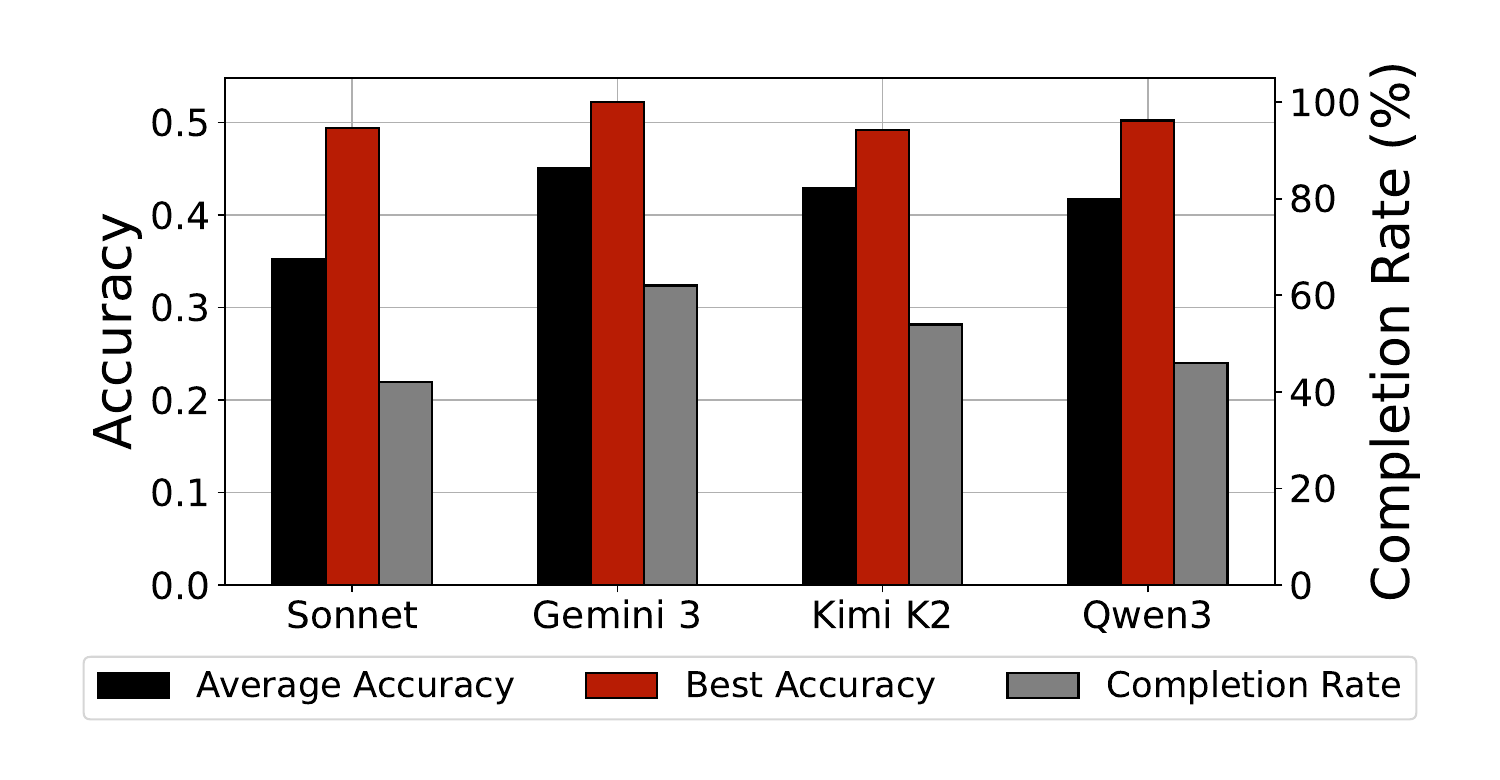}}
  \subfigure[GPT-5 Execution (nanoGPT)]{\includegraphics[width=0.495\linewidth]{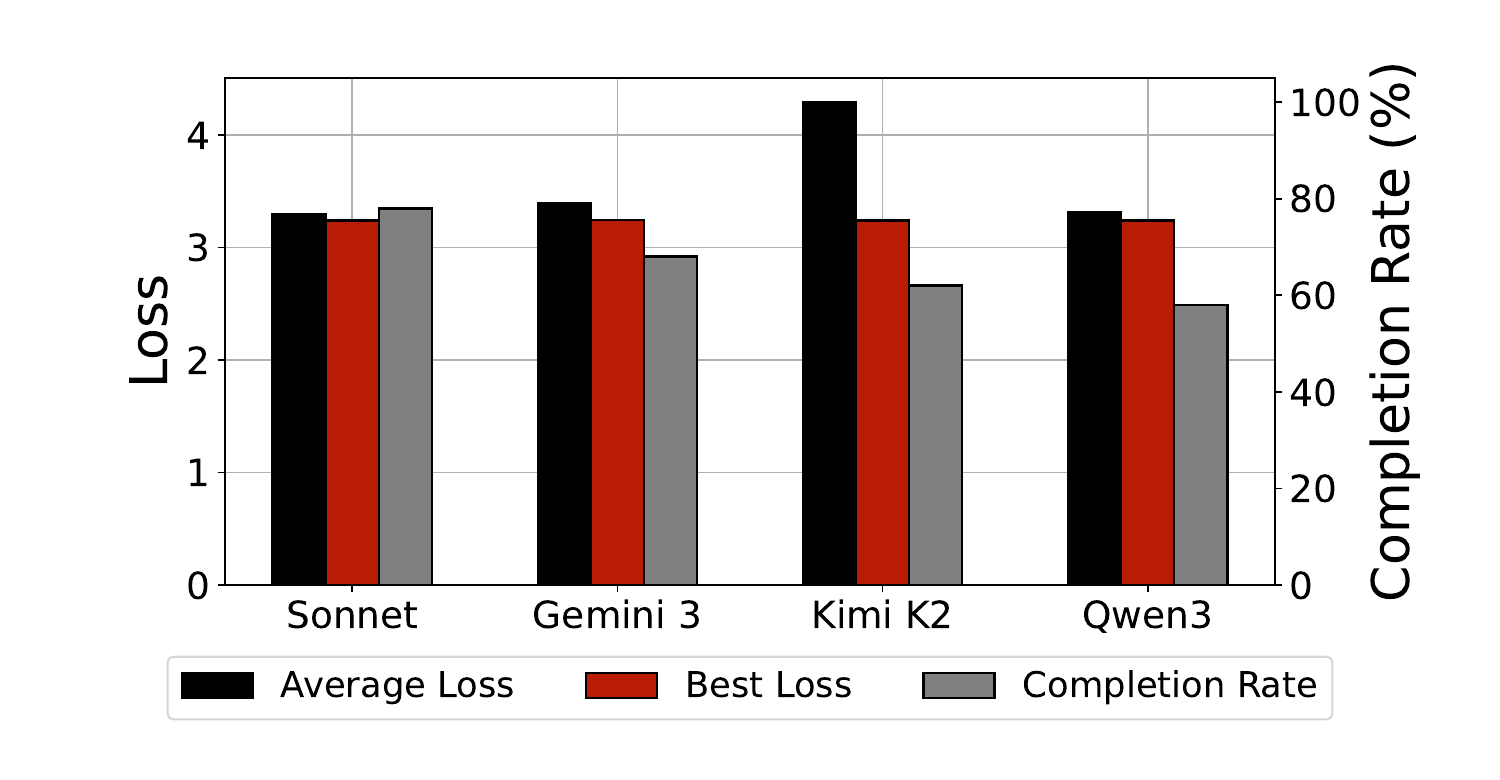}}

  \caption{Model performance comparison with self-execution (top row) vs GPT-5 execution (bottom row) on GRPO and nanoGPT environments. The baseline accuracy for GRPO is 0.480, and the baseline loss for nanoGPT is 3.255. The completion rate is high for most models, especially under self-execution.}
  \label{fig:benchmarking}
\end{figure}

\section{Benchmarking LLM ideators and executors}

For an execution-grounded feedback loop to work, current LLMs must serve as both ideators and executors, yielding meaningful \texttt{env.value} signals for learning.
We benchmark various frontier LLMs in both roles.

\subsection{End-to-end ideation and execution}

We sample ideas from an LLM and use the same LLM as the code execution model to execute its own ideas.
We sample and execute 50 ideas from Claude-4.5-Opus, Claude-4.5-Sonnet, and GPT-5, measuring three metrics: (1) completion rate---the percentage of ideas successfully executed with a valid (non-zero) result; (2) average performance---the average validation accuracy or loss across all successfully executed ideas; (3) best performance---the highest validation accuracy or lowest validation loss among all executed ideas.
The top row of Figure~\ref{fig:benchmarking} shows the results.
A large fraction of sampled ideas execute successfully, with Claude-4.5-Opus and Claude-4.5-Sonnet achieving significantly higher execution rates than GPT-5.
The best-of-N performance ($N=50$) already exceeds the original baselines.
For example, on GRPO, Claude-4.5-Sonnet achieves a max accuracy of 60.4\% compared to the baseline of 48.0\%; on nanoGPT, Claude-4.5-Opus achieves a lowest loss of 3.237 compared to the baseline of 3.255.

\subsection{Comparing ideators with the same executor}

We fix the executor to GPT-5 and vary the ideator model.
As the bottom row of Figure~\ref{fig:benchmarking} shows, even when ideator and executor differ, execution rates remain reasonable (ranging from 42\% to 78\%).
The same ideas from Claude-4.5-Sonnet achieve lower execution rates when executed by GPT-5 instead of itself (84\% vs.\ 42\% on GRPO and 90\% vs.\ 78\% on nanoGPT).
Frontier open-weight models like Kimi-K2-Thinking~\cite{Bai2025KimiKO} and Qwen3-235B-A22B~\cite{Yang2025Qwen3TR} also achieve non-trivial completion rates and best-of-N performance that exceeds the baselines.
For example, Qwen3-235B achieves a max accuracy of 50.2\% on GRPO and min loss of 3.238 on nanoGPT with $N=50$, both exceeding the baselines.

These results suggest the feasibility of an automated ideation and execution loop.
Before turning to search, we examine a simpler form of scaling---test-time compute scaling via budget forcing---whose lesson motivates the idea-level search that follows.

\section{Test-time scaling via budget forcing}
\label{sec:s1-budget-forcing}

\begin{figure}[!htbp]
\centering
\includegraphics[width=\textwidth]{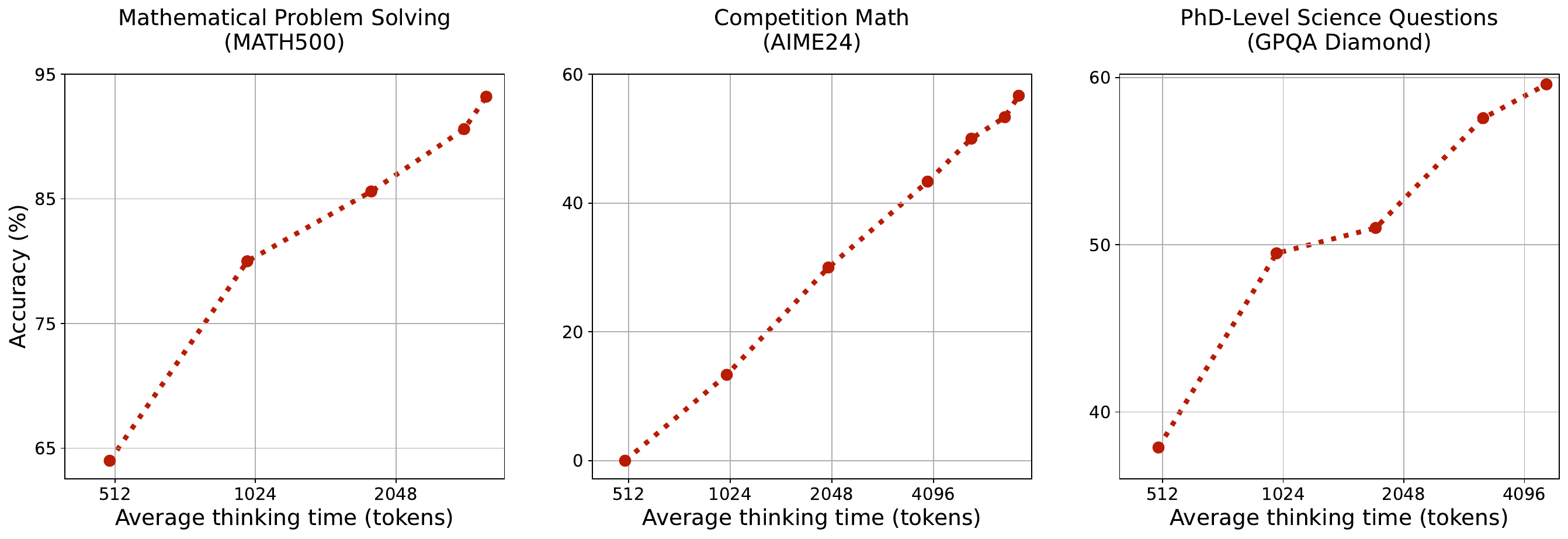}
\caption{\textbf{Test-time scaling with \sone{}.} We benchmark \sone{} on reasoning-intensive tasks and vary test-time compute.}
\label{fig:s1-bf-scaling}
\end{figure}

As discussed in Chapter~\ref{chap:sbp}, training on just 1,000 examples suffices to build a competitive reasoning model, revealing that reasoning capability is latent in pretrained weights.
A natural question is: can we improve performance simply by controlling how much a model reasons at the token level?
Budget forcing offers a minimal mechanism for allocating test-time compute adaptively, and its effectiveness foreshadows the larger principle behind the evolutionary search in later sections---that shifting compute from training to search over \texttt{env.value} is a viable path to better performance.

We show that controlling thinking duration via a simple test-time technique we call \textit{budget forcing} produces a reasoning model that scales in performance with more test-time compute.
After training a model on reasoning data, we control test-time compute using \textit{budget forcing}: \textbf{(I)} If the model generates more thinking tokens than a desired limit, we forcefully end the thinking process by appending an end-of-thinking token delimiter, causing the model to transition to generating its answer.
\textbf{(II)} If we want the model to spend more test-time compute on a problem, we suppress the end-of-thinking token delimiter and instead append ``Wait'' to the model's reasoning trace to encourage more exploration.
Equipped with this simple technique, our model \sone{} exhibits test-time scaling (Figure~\ref{fig:s1-bf-scaling}).

In summary, our contribution is a simple method for controlling test-time compute called budget forcing (\S\ref{sec:s1-bf-ttc}).
We end with a discussion on test-time scaling and its limits (\S\ref{sec:s1-bf-disc}).
Our code, model, and data are open-source at \url{https://github.com/simplescaling/s1}.

\subsection{Test-time scaling}
\label{sec:s1-bf-ttc}

\paragraph{Method} We classify test-time scaling methods into \textbf{1) Sequential}, where later computations depend on earlier ones (e.g., a long reasoning trace), and \textbf{2) Parallel}, where computations run independently (e.g., majority voting)~\citep{snell2024scalingllmtesttimecompute,brown2024largelanguagemonkeysscaling}.
We focus on sequential scaling because later computations build on intermediate results, enabling deeper reasoning and iterative refinement.
We propose new sequential scaling methods and ways to benchmark them.

\paragraph{Budget forcing} We propose a simple decoding-time intervention that forces a maximum and/or minimum number of thinking tokens.
We enforce a maximum token count by appending the end-of-thinking token delimiter and optionally ``\texttt{Final Answer:}'' to early-exit the thinking stage and make the model provide its current best answer.
To enforce a minimum, we suppress the end-of-thinking token delimiter and optionally append ``Wait'' to the model's reasoning trace to encourage reflection on its current generation.

\begin{figure*}[!htbp]
\centering
\subfigure[Sequential scaling via budget forcing\label{fig:s1-bf-forcing}]{\includegraphics[width=0.49\textwidth]{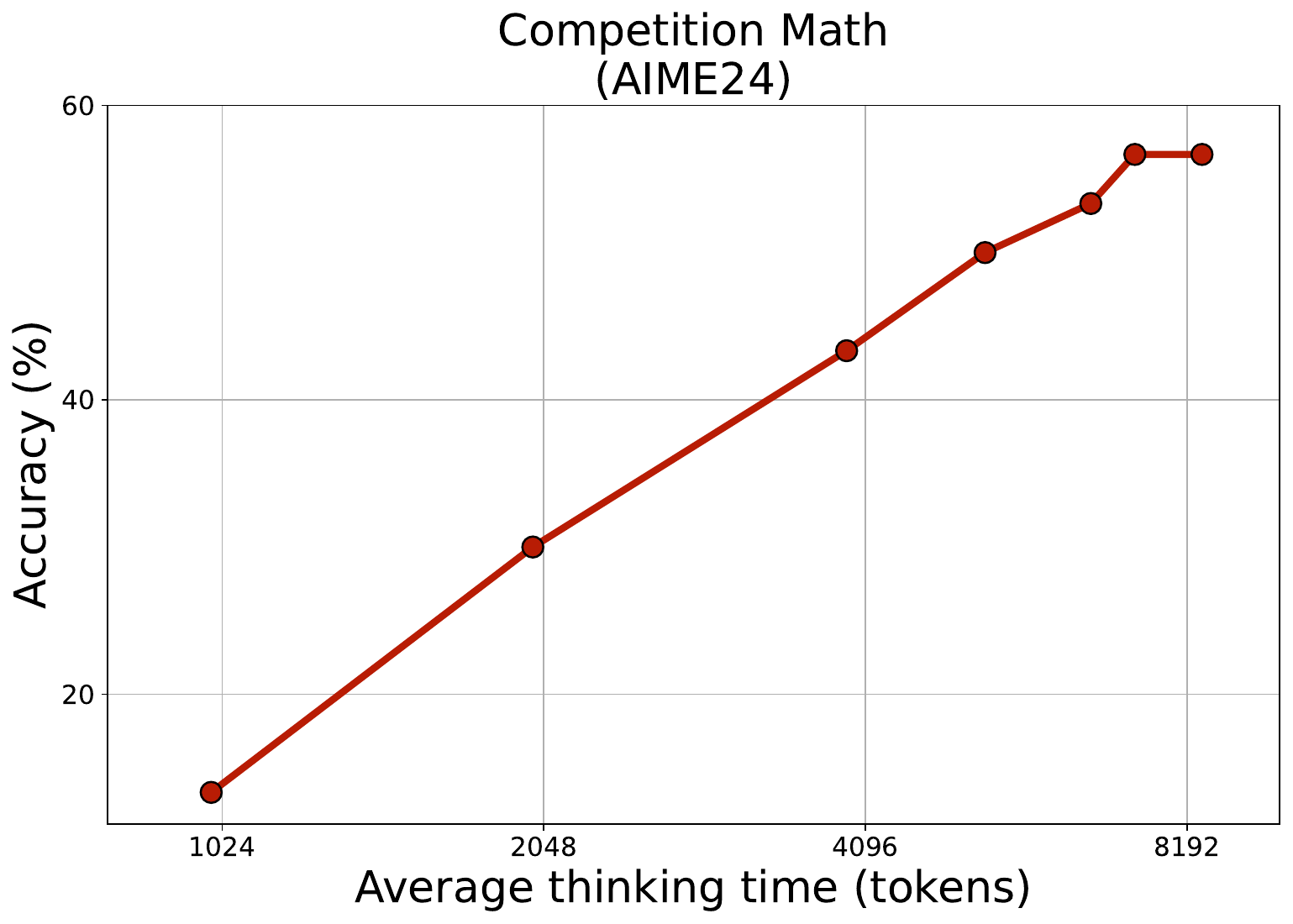}}
\subfigure[Parallel scaling via majority voting\label{fig:s1-bf-majority}]{\includegraphics[width=0.49\textwidth]{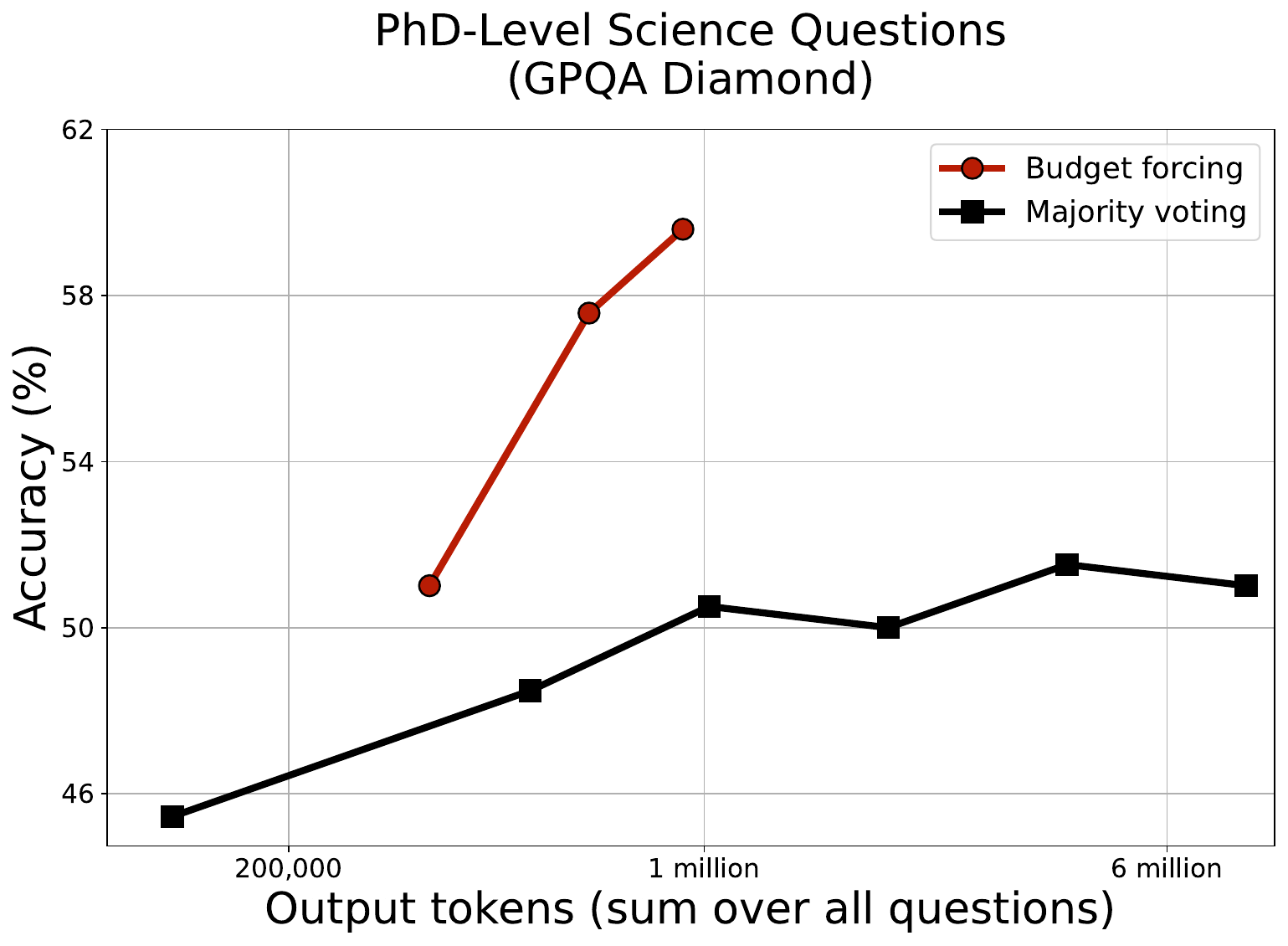}}
\caption{\textbf{Sequential and parallel test-time scaling.} \textit{(a):} Budget forcing shows clear scaling trends and extrapolates to some extent. For the three rightmost dots, we prevent the model from stopping its thinking 2/4/6 times, each time appending ``Wait'' to its current reasoning trace. \textit{(b):} For Qwen2.5-32B-Instruct we perform 64 evaluations for each sample with a temperature of 1 and visualize the performance when majority voting across 2, 4, 8, 16, 32, and 64 of these.}
\label{fig:s1-bf-scaling2}
\end{figure*}

\subsection{Results}
\label{sec:s1-bf-results}

\paragraph{Evaluation} We evaluate on the same three reasoning benchmarks introduced in Section~\ref{sec:s1-se-results}: AIME24, MATH500, and GPQA Diamond.
Unless otherwise specified, we evaluate with a temperature of 0 (greedy) and measure accuracy (equivalent to pass@1).

\paragraph{Test-time scaling} Figure~\ref{fig:s1-bf-scaling} shows that \sone{} with budget forcing scales in performance with more test-time compute.
Figure~\ref{fig:s1-bf-scaling2} (left) expands Figure~\ref{fig:s1-bf-scaling} (middle), showing that budget forcing (\S\ref{sec:s1-bf-ttc}) improves AIME24 performance with more test-time compute but eventually flattens out at six times.
Suppressing the end-of-thinking token delimiter too often leads the model into repetitive loops instead of continued reasoning.
Figure~\ref{fig:s1-bf-scaling2} (right) shows that after training Qwen2.5-32B-Instruct on our 1,000 samples to produce \sone{} and equipping it with budget forcing, it operates in a different scaling paradigm.
Scaling test-time compute on the base model via majority voting does not catch up with \sone{}, validating our intuition from \S\ref{sec:s1-bf-ttc} that sequential scaling is more effective than parallel.

\subsection{Discussion}
\label{sec:s1-bf-disc}

\paragraph{Limits to further test-time scaling} Budget forcing extrapolates test-time compute (\S\ref{sec:s1-bf-results})---for example, improving AIME24 performance from 50\% to 57\%.
However, it has two key limitations: it eventually \textbf{flattens out} (Figure~\ref{fig:s1-bf-scaling2}), and the \textbf{context window} of the underlying language model constrains it.
Despite these constraints, we demonstrate test-time scaling across a wide range of accuracies (Figure~\ref{fig:s1-bf-scaling}), partly because scaling \emph{down} test-time compute behaves predictably and does not suffer from these constraints.

Continuing test-time scaling will require approaches that further extrapolate test-time compute.
Potential improvements to budget forcing include rotating through different strings (not only ``Wait'') or combining it with frequency penalties or higher temperature to avoid repetitive loops.
A natural question is whether applying budget forcing to a reasoning model trained with reinforcement learning yields better extrapolation, or whether RL enables new forms of test-time scaling beyond budget forcing.

The lesson from budget forcing is simple: even a crude intervention---appending ``Wait'' to force continued thinking---improves performance.
If brute-force thinking at the token level already helps, then systematically scaling search at the idea level---generating many candidate research ideas, executing them, and feeding results back---should yield continued improvement.
The remainder of this chapter tests this hypothesis.

\begin{figure*}[!htbp]
  \centering
  {\includegraphics[width=0.48\linewidth]{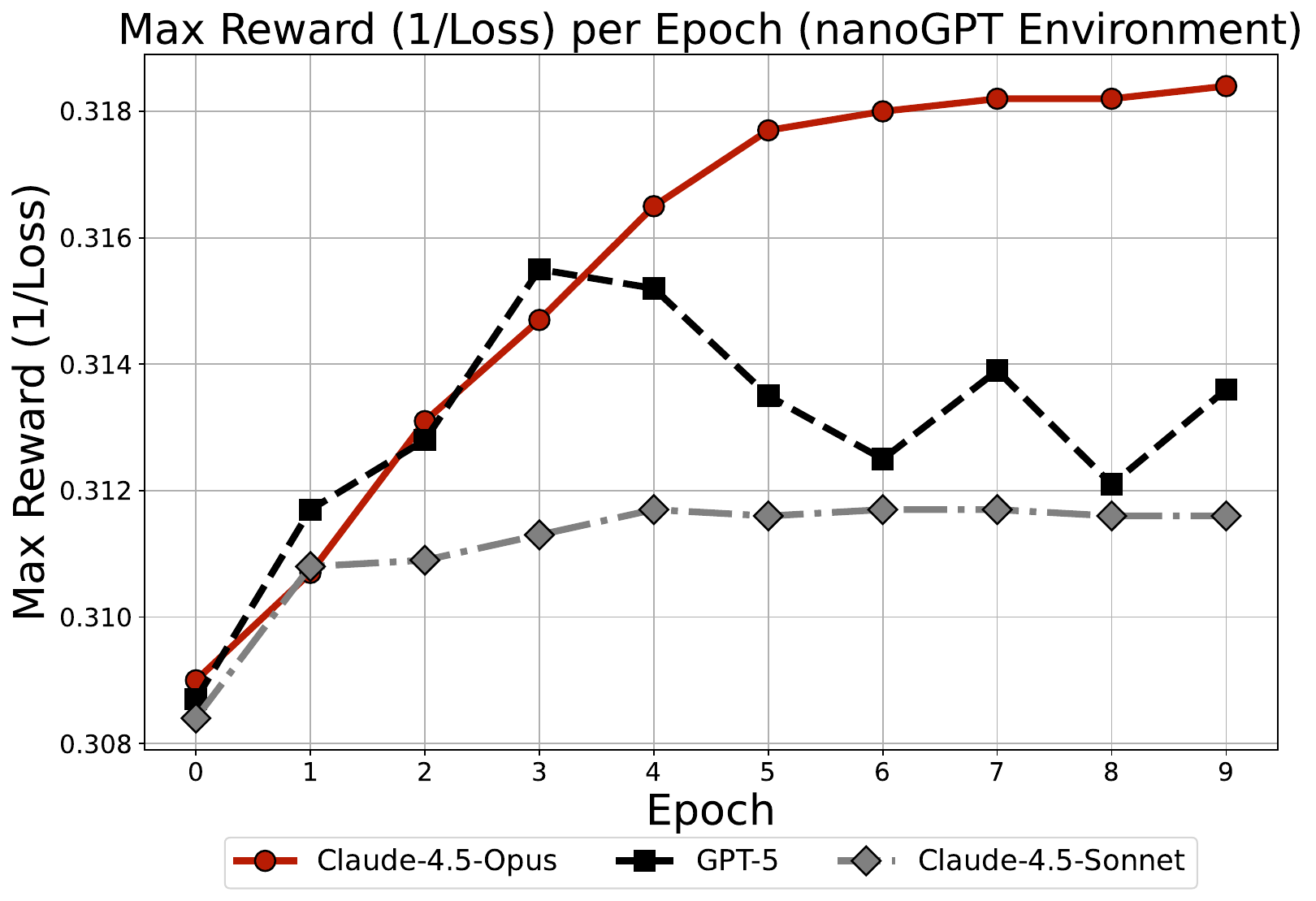}}
  {\includegraphics[width=0.48\linewidth]{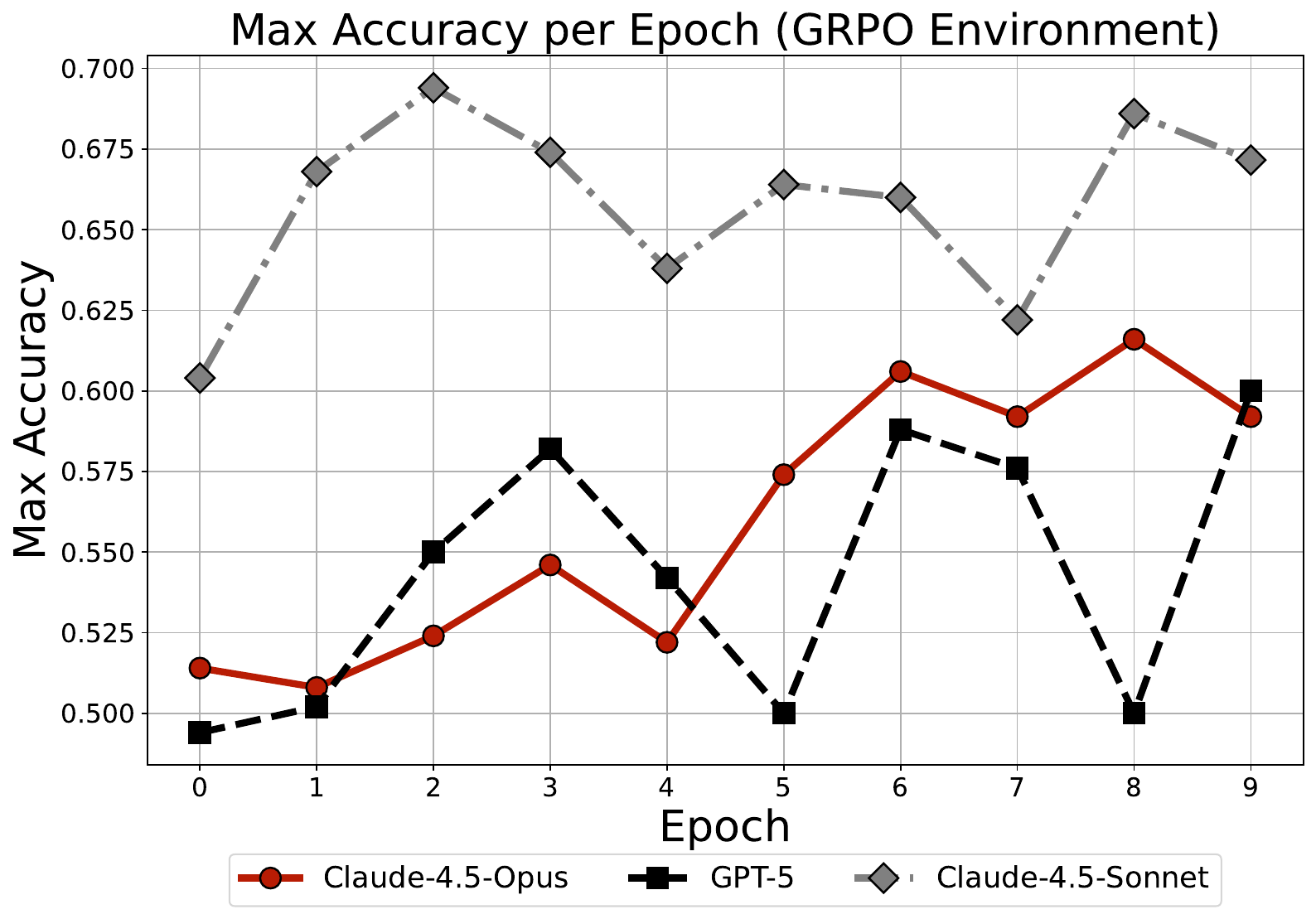}}
  \caption{Best performance at each epoch when performing execution-guided search with different models. For the nanoGPT environment (left), we use the reciprocal of the validation loss as the metric; for the GRPO environment (right), we use validation accuracy as the metric. Claude-4.5-Opus exhibits a scaling trend on both environments and achieves the best performance on nanoGPT. Claude-4.5-Sonnet achieves the best performance on GRPO due to effective hyper-parameter tuning, but saturates early.}
  \label{fig:search_results}
\end{figure*}

\begin{algorithm}[t]
\caption{Execution-guided search}
\label{alg:execution_guided_search}
\small
\begin{algorithmic}[1]
\Require batch size $N$, epochs $T$, baseline performance $\beta$
\Require initial exploitation rate $a_1 \in [0,100]$, annealing schedule $a(t)$ for $t \in \{1,\dots,T\}$
\Require research environment \texttt{env} with methods \texttt{context}, \texttt{value}
\State Sample initial batch of ideas $\mathcal{I}_0 \leftarrow \textsc{SampleIdeas}(N)$
\State $\mathcal{D}_0 \leftarrow \{(i,\; \texttt{env.value}(i)) : i \in \mathcal{I}_0\}$
\For{$t = 1$ \textbf{to} $T$}
  \State $a \leftarrow a(t)$ \Comment{$(100-a)\%$ exploration rate}
  \State $\mathcal{D}_{<t} \leftarrow \bigcup_{k=0}^{t-1} \mathcal{D}_k$
  \State $\mathcal{D}^{+} \leftarrow \{(i,r)\in \mathcal{D}_{<t} \;:\; r > \beta\}$ \Comment{positive trajectories}
  \State $N_{\text{exp}} \leftarrow \left\lfloor \frac{a}{100}\,N \right\rfloor$, \quad
         $N_{\text{expl}} \leftarrow N - N_{\text{exp}}$
  \State $\mathcal{I}^{\text{exp}}_t \leftarrow \textsc{ExploitVariants}(\mathcal{D}^{+}, N_{\text{exp}})$
  \State $\tilde{\mathcal{D}}_{<t} \leftarrow \textsc{SubsampleToContext}(\mathcal{D}_{<t})$
  \State $\mathcal{I}^{\text{expl}}_t \leftarrow \textsc{ExploreNovel}(\tilde{\mathcal{D}}_{<t}, N_{\text{expl}})$
  \State $\mathcal{I}_t \leftarrow \mathcal{I}^{\text{exp}}_t \cup \mathcal{I}^{\text{expl}}_t$
  \State $\mathcal{D}_t \leftarrow \{(i,\; \texttt{env.value}(i)) : i \in \mathcal{I}_t\}$
\EndFor
\State \Return $\bigcup_{t=0}^{T} \mathcal{D}_t$
\end{algorithmic}
\end{algorithm}

\section{Execution-guided evolutionary search}

Evolutionary search~\cite{Koza1992GeneticPO,ELM} is a classic optimization method that does not require gradient updates.
We develop an evolutionary search scaffold on top of frontier LLMs to optimize idea effectiveness based on execution feedback.
We describe our search method---which blends exploration and exploitation---demonstrate its effectiveness on both research environments, and analyze the generated ideas throughout the search process.

\subsection{Search scaffold}

Our search method is inspired by prior evolutionary search approaches for code optimization, such as AlphaEvolve~\cite{Novikov2025AlphaEvolveAC}.
Algorithm~\ref{alg:execution_guided_search} details our approach.
At the first search epoch, we sample a full batch of new ideas.
In subsequent epochs, we split idea generation into exploitation and exploration subsets.
For exploitation, we select ideas from previous epochs that outperform the baseline, append them to the idea generation prompt, and ask the ideator model to generate new variants combining their strengths.
For exploration, we randomly sample ideas from previous epochs into the prompt until reaching the max context length, then instruct the ideator model to generate completely new ideas different from them.
We start with 50\% exploitation and 50\% exploration at epoch 1 and gradually anneal the exploration rate and increase the exploitation ratio in later epochs.
We use a batch size of 50 for the GRPO environment and a batch size of 80 for the nanoGPT environment.

\subsection{Experiment results}

For each environment, we perform execution-guided search with three different models: Claude-4.5-Opus, Claude-4.5-Sonnet, and GPT-5.
For each experiment, we use the same model as both ideator and executor (self-execution).
Figure~\ref{fig:search_results} plots the progression of best performance at each search epoch.
Several trends emerge.

First, Claude-4.5-Opus shows a scaling trend: searching for more epochs leads to a higher upper bound.
In contrast, Claude-4.5-Sonnet and GPT-5 saturate early.
Second, all models find ideas that improve over the baselines. On GRPO, Claude-4.5-Sonnet discovers that vanilla policy gradient with the group-average baseline---without importance reweighting or clipping---outperforms the standard GRPO objective in this setup, and exploits this finding in subsequent epochs to reach 69.4\% at epoch 2 with precise hyper-parameter tuning.
On nanoGPT, Claude-4.5-Opus achieves a min validation loss of 3.1407 at epoch 9 by combining architectural modifications, hyper-parameter tuning, and exponential moving average of intermediate checkpoints during validation (see Appendix~\ref{sec:more_examples} for the full idea).
We run this top solution on 8 H100s following the nanoGPT speedrun setup: it reaches the 3.28 target validation loss in 19.7 minutes, a speedup over the baseline codebase, which takes 35.9 minutes to reach the same target.

To contextualize these model-optimized solutions, we compare the top performance of execution-guided search to human experts (Table~\ref{table:search_results}).
For the GRPO environment, we compare with the leaderboard of the Stanford CS336 graduate-level LLM class, which hosted the same environment as an assignment where students optimized validation accuracy under the same training time budget.
The best student solution\footnote{\url{https://github.com/stanford-cs336/assignment5-alignment-leaderboard}} achieved 68.8\% accuracy, lower than Claude-4.5-Sonnet's top solution from execution-guided search.
In the nanoGPT environment, we directly compare with the nanoGPT speedrun leaderboard.\footnote{\url{https://github.com/KellerJordan/modded-nanogpt}}
The state-of-the-art human solution as of December 2025 achieves the target validation loss in under 2.1 minutes, indicating significant room for improvement in model capability and search methods.

\begin{table*}[t]
\centering
\small
\caption{Breakdown of hyper-parameter tuning vs algorithmic ideas throughout the entire execution-guided search.
We report the percentage of each type among all generated ideas of each model ($N=500$ ideas on GRPO and $N=800$ ideas on nanoGPT).
We also report the average and best performance for ideas under each category, where we use validation accuracy as the performance metric for GRPO and validation loss as the metric for nanoGPT.
Bold numbers every row indicate the best performance by each model.
All models generate a substantial amount of algorithmic ideas apart from hyper-parameter changes, while Claude-4.5-Sonnet generates significantly more hyper-parameter ideas than other models.}
\label{tab:idea_type}
\renewcommand{\arraystretch}{1.3}
\begin{tabular}{lrrr|rrr}
\toprule
& \multicolumn{3}{c}{\textbf{Hyper-parameter}} & \multicolumn{3}{c}{\textbf{Algorithmic}} \\
\cline{2-7}
\multicolumn{1}{l}{\textbf{Model}} & \multicolumn{1}{c}{\textbf{\%}} & \multicolumn{1}{c}{\textbf{Avg}} & \multicolumn{1}{c}{\textbf{Best}} & \multicolumn{1}{c}{\textbf{\%}} & \multicolumn{1}{c}{\textbf{Avg}} & \multicolumn{1}{c}{\textbf{Best}} \\
\hline
\multicolumn{7}{c}{~~~~~~~~~~~~~~~~~~~~~~~~~~~~~~~~~~~~\emph{GRPO environment (accuracy$_\uparrow$)}} \\
\hline
GPT-5 & 5.0\% & 45.0\% & 50.2\% & 95.0\% & 44.5\% & \textbf{60.0\%} \\
Claude-4.5-Sonnet & 41.1\% & 48.4\% & \textbf{69.4\%} & 58.9\% & 45.0\% & 67.4\% \\
Claude-4.5-Opus & 3.7\% & 44.4\% & 50.4\% & 96.3\% & 46.5\% & \textbf{61.6\%} \\
\hline
\multicolumn{7}{c}{~~~~~~~~~~~~~~~~~~~~~~~~~~~~~~~~~~~~\emph{nanoGPT environment (loss$_\downarrow$)}} \\
\hline
GPT-5 & 15.4\% & 3.254 & 3.195 & 84.6\% & 3.894 & \textbf{3.170} \\
Claude-4.5-Sonnet & 31.3\% & 3.251 & \textbf{3.208} & 68.7\% & 3.679 & \textbf{3.208} \\
Claude-4.5-Opus & 8.7\% & 3.329 & 3.147 & 91.3\% & 3.419 & \textbf{3.141} \\
\bottomrule
\end{tabular}
\end{table*}

\begin{table*}[ht]
\centering
\small
\setlength{\tabcolsep}{5pt}
\renewcommand{\arraystretch}{1}

\begin{tabular}{L{0.333\textwidth}|L{0.34\textwidth}|L{0.263\textwidth}}
\toprule
\textbf{Claude-4.5-Opus on GRPO} & \textbf{Claude-4.5-Sonnet on GRPO} & \textbf{GPT-5 on GRPO} \\
\hline
\vspace{0.5pt}
Residual Ratio Learning with Momentum Bounds:
Instead of directly using the (importance sampling) ratio, decompose it into a ``base'' component
(EMA of batch mean ratios) and a ``residual'' component (ratio -- base).
Apply sigmoid bounding only to the residual, allowing the base to capture
systematic policy drift while controlling deviations from it.
Combined with momentum clip adaptation.
Formula: \texttt{residual = ratio - ema\_batch\_ratio},
\texttt{bounded\_residual = sigmoid\_bound(residual, deviation)},
\texttt{effective\_ratio = 1.0 + bounded\_residual}.

\vspace{5pt}
\textbf{Validation accuracy: 61.6}

\vspace{15pt}
Advantage Rank Difference Weighting:
Instead of using absolute advantage magnitude, weight samples by how far their
rank differs from their expected rank under uniform distribution. Samples that
significantly outperform or underperform their ``expected'' position get higher
weights. This is distribution-free and robust to outliers. Formula:
\texttt{expected\_rank = (N-1)/2}, \texttt{rank\_diff = |actual\_rank - expected\_rank| / expected\_rank},
\texttt{weight = 0.5 + 0.5 * rank\_diff}.

\vspace{5pt}
\textbf{Validation accuracy: 59.2}
\vspace{1pt}
&
\vspace{0.5pt}
Dynamic Mathematical Problem Difficulty Balancing with Performance Feedback: Implement intelligent difficulty balancing that dynamically adjusts the mix of problem difficulties based on recent performance trends. When performance is strong, increase difficulty proportion; when struggling, provide more foundational problems. Combine with the proven hyper-parameters for optimal learning progression.

\vspace{5pt}
\textbf{Validation accuracy: 64.0}

\vspace{15pt}
Implement token-level reward attribution by using attention
weights to identify which input tokens contributed most to correct answers, then
amplifying the gradient updates for those tokens during policy gradient training.

\vspace{5pt}
\textbf{Validation accuracy: 45.2}

\vspace{15pt}
Create mathematical working memory simulation by maintaining a context buffer of mathematical facts, definitions, and intermediate results during problem solving. This buffer gets updated as the model works through problems and provides additional context for subsequent mathematical steps, simulating how humans maintain mathematical working memory during complex calculations.

\vspace{5pt}
\textbf{Validation accuracy: 58.0}
&
\vspace{0.5pt}
Token-Level Ratio De-noising via Response Chunks (Chunked-Ratio):
Reduce noisy token spikes by averaging log-ratio over small
contiguous chunks within the response. Partition response tokens into $C$ chunks
per sequence (e.g., $C=8$ over effective length), replace per-token
$\Delta \log p$ with chunk mean broadcast to tokens in the chunk before ratio and
clipping. Keeps structural signal while smoothing extremes.

\vspace{5pt}
\textbf{Validation accuracy: 58.2}

\vspace{15pt}
Per-Group Curriculum via Reward Spread (PGC-RS): Adjust step aggressiveness based on group reward spread. For each group, compute
spread $s_g = \mathrm{std}(r)$. Compute per-sample temperature
$T^{\mathrm{grp}}_i = \mathrm{clamp}\!\big(1 + \alpha\cdot(s_{\mathrm{ref}} - s_g),\, 0.8,\, 1.5\big)$
with $s_{\mathrm{ref}} =$ median group std over rollout and $\alpha=0.8$.
Multiply existing ratio temperature $T_i$ (if any) by $T^{\mathrm{grp}}_i$.
Groups with low spread (hard to distinguish) get cooler ratios; high-spread
groups allow bolder updates.

\vspace{5pt}
\textbf{Validation accuracy: 49.4}
\\
\bottomrule
\end{tabular}

\caption{Examples of successfully executed ideas on the GRPO environment, along with their accuracy on the MATH validation set. The baseline accuracy is 48.0\% on this environment.}
\label{tab:grpo_examples}
\end{table*}

\subsection{Comparison with best-of-N}
To evaluate the effectiveness of our search scaffold, we compare execution-guided search with best-of-N under the same sampling budget on the nanoGPT environment.
Since our search batch size is 80, we compare the first 3 epochs of execution-guided search using GPT-5 with best-of-N results for GPT-5 with $N \in \{80, 160, 240\}$.

\begin{figure}[!htbp]
  \centering
  \includegraphics[width=0.75\textwidth]{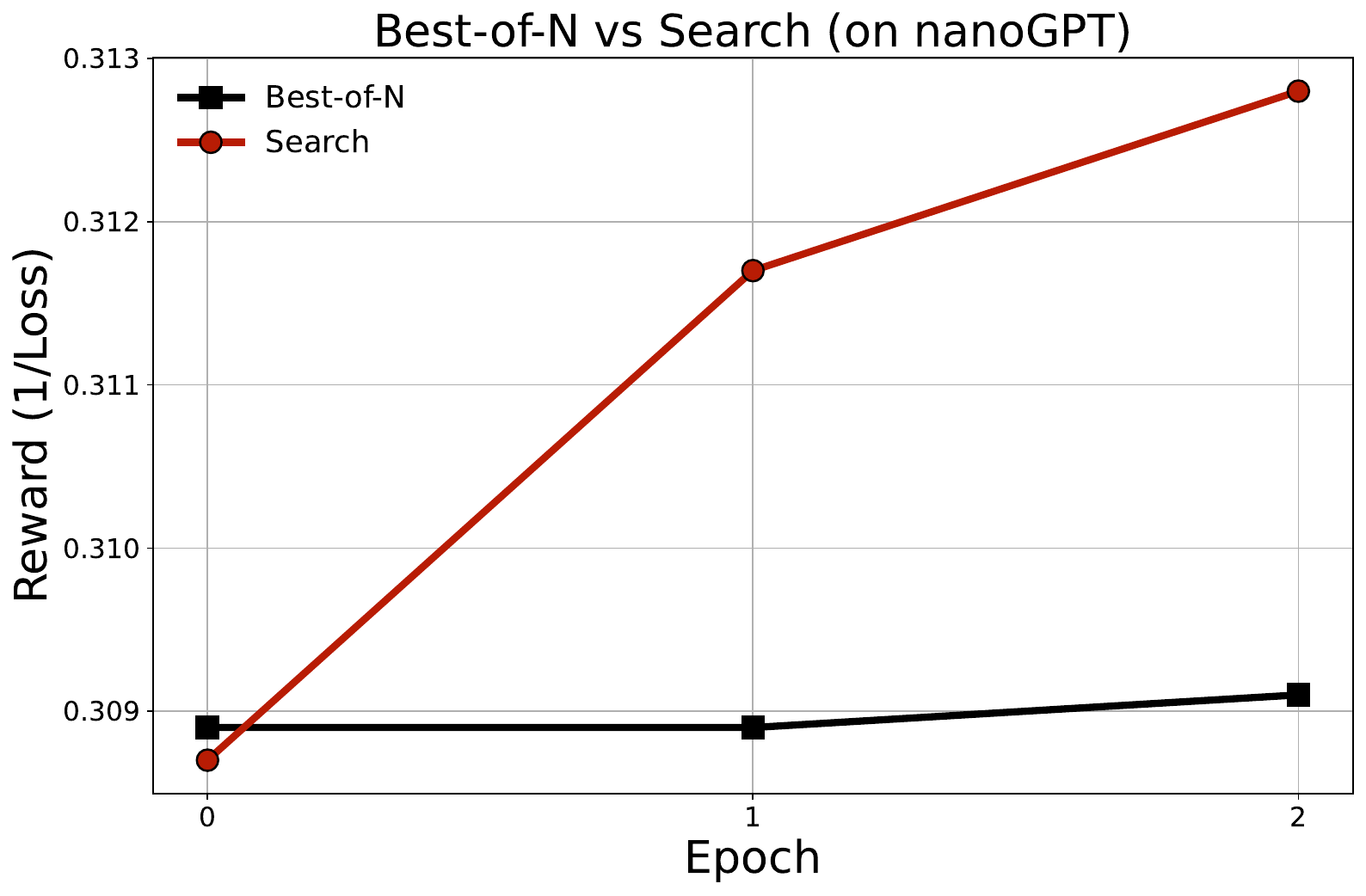}
  \caption{Comparison between best-of-N and our execution-guided search under the same sampling budget.}
  \label{fig:best_of_n}
\end{figure}

As shown in Figure~\ref{fig:best_of_n}, search and best-of-N start from similar performance at epoch 0 (not exactly the same due to sampling variance), but evolutionary search outperforms best-of-N from epoch 1 onward.
This suggests that the model leverages trajectories from previous epochs to generate more effective ideas in future epochs.
This result echoes the token-level finding from \S\ref{sec:s1-budget-forcing}: there, sequential scaling via budget forcing outperformed parallel scaling via majority voting (Figure~\ref{fig:s1-bf-scaling2}b).
A similar pattern appears at the idea level---sequential search that builds on prior results outperforms parallel sampling of independent candidates.

\subsection{Analysis of generated ideas}

\paragraph{Hyper-parameter vs.\ algorithmic} To understand the types of ideas models generate during execution-guided search, we classify all generated ideas into either hyper-parameter tuning (ideas implementable by changing existing configs) or algorithmic (ideas requiring changes not supported by the baseline codebase) using an LLM judge.
Table~\ref{tab:idea_type} shows that all three models generate substantial algorithmic ideas beyond hyper-parameter tuning.
Claude-4.5-Sonnet generates significantly more hyper-parameter ideas than both Claude-4.5-Opus and GPT-5.
The most effective ideas stem from algorithmic innovations in most cases, except when using Claude-4.5-Sonnet.

\paragraph{Qualitative examples} We provide several executed ideas on the GRPO environment in Table~\ref{tab:grpo_examples} and on the nanoGPT environment in Appendix~\ref{sec:more_examples}.
When sampling, models generate a thinking trace followed by the natural language idea and a brief description of code changes needed for implementation.
For brevity, we include only the natural language ideas in the table; a full code execution trajectory appears in Appendix~\ref{sec:code_examples}.
Table~\ref{tab:grpo_examples} reveals different idea styles across models: Claude-4.5-Sonnet generates more intuitive ideas, while Claude-4.5-Opus and GPT-5 are more mathematically inclined.

\paragraph{Recovering recent research papers} We observe multiple cases where model-generated ideas (without any RAG) closely resemble research papers released within three months of writing this chapter.
For example, Claude-4.5-Sonnet proposed: ``\textit{Implement response diversity rewards within groups where responses to the same prompt receive bonus rewards for being dissimilar to other responses in their group, encouraging exploration of different solution paths.}'', which is similar to \cite{Li2025JointlyRD}.
For pre-training, Claude-4.5-Opus proposed: ``\textit{Causal Context Compression: Before each attention layer, apply a learned compression that mixes local context
(previous 2-3 tokens) into the current representation, providing implicit local context without convolutions.}'', which is similar to the ``canon layer'' described in \cite{AllenZhu2025PhysicsOL}.
Assessing the novelty of LLM-generated ideas is beyond this chapter's scope, but the ability to rediscover ideas from recent papers suggests that automated AI researchers could plausibly support work at the frontier of LLM research.

\section{Discussion}

\subsection{Limitations}

Our current experiments have several limitations.

First, our current procedure does not test idea generalizability.
The best-performing ideas at small scales may not transfer to larger scales or other datasets.
Future work should explore methods that explicitly test generalizability and scalability, potentially incorporating them into the optimization objectives.

Second, our experiment scope is bounded by execution agent capability.
Many promising model-generated ideas cannot be successfully executed (e.g., see the end of Appendix~\ref{sec:more_examples}), introducing noise in the reward signal.
Future work could develop more capable execution agents and extend our setup to more complex research problems---for instance, by implementing coding agents with access to external tools and the ability to install new libraries in the execution environments.

Finally, we explore only effectiveness as the training reward.
Other metrics could complement effectiveness---such as idea novelty and interestingness.
Future work could explore how to computationally measure these qualities and incorporate them into the training objective to discover more insightful ideas.

\subsection{Conclusion}

In this chapter, we tackle the problem of AI-designed AI---building systems that improve the very algorithms used to train them---through the lens of test-time search.
We first show that test-time search at the token level (budget forcing) improves reasoning, then scale this principle to the idea level: generating research ideas, executing them automatically, and feeding results back to guide evolutionary search.
Using this approach, frontier LLMs improve over baseline solutions---finding a post-training recipe that improves accuracy from 48\% to 69\% and a pre-training recipe that halves wall-clock time.
These results point toward the feasibility of automated, execution-grounded AI research and suggest a path toward AI systems that continually improve themselves.

This chapter points toward AI systems that improve not only their knowledge or capabilities but also the very algorithms used to train them.
Chapter~\ref{chap:scp} showed how synthetic data can teach models new knowledge; Chapter~\ref{chap:sbp} demonstrated that models can bootstrap their pretraining capability without external supervision.
Here, we have shown that AI systems can generate, implement, and validate research ideas---including ideas for improving pretraining and post-training algorithms themselves.

An AI system that improves the algorithms used to train future AI systems could, in principle, accelerate its own development.
Our current results are early---evolutionary search finds better hyperparameters and algorithmic improvements but does not yet discover fundamentally new methods---yet the feasibility of execution-grounded idea generation suggests a path forward.
The key bottleneck is no longer whether AI can generate ideas, but whether it can generate \emph{good} ideas that generalize beyond narrow benchmarks.

From this perspective, the most promising directions are: learning algorithms that maintain exploration while optimizing for effectiveness, richer execution environments that test generalization, and tighter feedback loops between ideation and execution.
The ultimate goal---AI systems that continually and autonomously improve themselves---remains distant, but this thesis provides concrete building blocks toward it.

\chapter{Conclusion: can AI be smarter than its creators?}
\label{chap:conclusion}
In Chapter~\ref{chap:intro}, we proposed a one-sentence definition: \emph{a continually self-improving AI is one that, once created, can autonomously and continually improve itself better than its human creators can improve it.}
The key phrase is ``can improve it''---the definition does not claim that AI is stronger than humans, only that AI can improve AI more effectively than humans can.

Each chapter takes a step toward this standard.
In Chapter~\ref{chap:scp}, we showed that a model can synthesize training data that teaches it knowledge beyond what the small source corpus can directly teach.
In Chapter~\ref{chap:sbp}, we showed that a model can bootstrap its own pretraining capabilities from a fixed dataset, producing training signals that yield improvement beyond what the original human-collected data provides.
In Chapter~\ref{chap:automated-ai-research}, we showed that an AI system can design learning algorithms by searching over a larger design space than human researchers can search.

But the mechanism behind each result is the same: AI compensates for inferior quality with superior quantity.
Human data is better, but AI data is infinite.
Human researchers are stronger, but AI researchers are tireless.
The definition is satisfied, but only through brute force---stacking quantity to overcome a limitation in quality.
This raises a deeper question: can a created system \emph{genuinely} surpass its creator, not merely out-grind them?

The preceding chapters presented experimental results; this chapter is different in character.
What follows is a position piece: I offer a historical analogy and a personal interpretation, not a proof.
The analogy is suggestive, not rigorous, and the conclusions I draw from it reflect my own perspective on where the field may be headed.

\section{A parable from physics}
\label{sec:conc-parable}

To explore why the answer might be yes, we turn to an analogy from physics.
In 1915, Albert Einstein published the field equations of general relativity \citep{einstein1915feldgleichungen}, completing a decade-long effort to reconcile gravity with the geometry of spacetime.
Two years later, when Einstein applied his equations to the universe as a whole, he found that they do not admit a static, matter-filled cosmos \citep{einstein1917kosmologische}.
Rather than accept this prediction, he modified the equations---introducing a free parameter called the cosmological constant---to force a static solution.
In 1929, Edwin Hubble established that distant galaxies are systematically redshifted, their light stretched to longer wavelengths in proportion to their distance \citep{hubble1929relation}.
The universe is expanding, exactly as the unmodified equations had predicted.
The theory knew something its creator did not.

This chapter tells the story of how a set of equations can be smarter than the person who wrote them down.
A theory, once created, has a life of its own: it can evolve, make predictions, and reach conclusions that its creator never intended.
General relativity provides perhaps the most literal historical precedent for this phenomenon.
Understanding it precisely requires following a chain of mathematical reasoning from Newton to Einstein, which we do now.

\section{The gravitational field equation}
\label{sec:conc-field-equation}

Newton's law of universal gravitation describes gravity as a force between two point masses: a mass $M$ exerts a force on a test mass $m$ separated by displacement $\mathbf{r}$,
\begin{equation}
\label{eqn:newton-gravity}
\mathbf{F} = -\frac{GMm}{r^2}\,\hat{\mathbf{r}},
\end{equation}
where $G \approx 6.674 \times 10^{-11}\;\mathrm{m^3\,kg^{-1}\,s^{-2}}$ is Newton's gravitational constant and the negative sign indicates that the force is attractive.
This is a \emph{particle equation}: it tracks individual objects and assumes instantaneous action at a distance.

The passage from discrete forces to a local field equation proceeds by introducing a continuous mass density $\rho(\mathbf{r})$ and a gravitational potential $\Phi(\mathbf{r})$.
Through an integral formulation and the identity $\nabla^2(1/|\mathbf{r} - \mathbf{r}'|) = -4\pi\,\delta^3(\mathbf{r} - \mathbf{r}')$, one arrives at Poisson's equation---a \emph{field equation} that replaces particle-by-particle bookkeeping with a local differential relationship (Appendix~\ref{sec:appendix-newton-poisson}):
\begin{equation}
\label{eqn:poisson}
\underbrace{\nabla^2 \Phi}_{\text{geometry (potential)}} = \underbrace{4\pi G\,\rho}_{\text{matter (source)}}.
\end{equation}

Einstein's general relativity elevates every ingredient of Poisson's equation from scalars and vectors to tensors.
The scalar potential $\Phi$ becomes the \emph{metric tensor} $g_{\mu\nu}$, a $4 \times 4$ symmetric matrix that defines the geometry of spacetime: $ds^2 = g_{\mu\nu}\,dx^\mu\,dx^\nu$.
The scalar density $\rho$ becomes the \emph{stress-energy tensor} $T_{\mu\nu}$, encoding energy, momentum, and stress---in general relativity, all forms of energy curve spacetime, not just mass.
The left-hand side of the field equation must express spacetime curvature.
Starting from $g_{\mu\nu}$, one constructs the Riemann curvature tensor through a chain of derivatives (Appendix~\ref{sec:appendix-einstein-details}), then contracts it to the Ricci tensor $R_{\mu\nu}$ and Ricci scalar $\mathcal{R}$.
The unique symmetric, divergence-free combination is the Einstein tensor $G_{\mu\nu} \equiv R_{\mu\nu} - \frac{1}{2}\mathcal{R}\,g_{\mu\nu}$.

Assembling these ingredients, Einstein wrote down the field equations of general relativity in 1915 \citep{einstein1915feldgleichungen}:
\begin{equation}
\label{eqn:einstein-field}
\boxed{\;R_{\mu\nu} - \tfrac{1}{2}\,\mathcal{R}\,g_{\mu\nu} = \frac{8\pi G}{c^4}\,T_{\mu\nu}.\;}
\end{equation}
These are 10 coupled, nonlinear, second-order partial differential equations for the 10 independent components of $g_{\mu\nu}$.
The identity $\nabla_\mu G^{\mu\nu} = 0$ (the Bianchi identity) ensures compatibility with energy-momentum conservation ($\nabla_\mu T^{\mu\nu} = 0$).
In the weak-field, slow-motion, static limit, these equations reduce exactly to Poisson's equation~\eqref{eqn:poisson} (Appendix~\ref{sec:appendix-newtonian-limit}).

The correspondence between the two theories is summarized below:
\begin{center}
\begin{tabular}{lcc}
\toprule
& Poisson (Newton) & Einstein \\
\midrule
Matter field & $\rho$ (scalar density) & $T_{\mu\nu}$ (stress-energy tensor) \\
Potential field & $\Phi$ (scalar potential) & $g_{\mu\nu}$ (metric tensor) \\
Differential operator & $\nabla^2$ (Laplacian) & $g_{\mu\nu} \mapsto G_{\mu\nu}$ (nonlinear) \\
Coupling constant & $4\pi G$ & $8\pi G / c^4$ \\
\bottomrule
\end{tabular}
\end{center}

\section{Einstein's cosmological problem}
\label{sec:conc-einstein-cosmology}

In 1917, the astronomical consensus held that the universe was static and eternal, consisting essentially of the Milky Way alone.
Einstein set out to apply his field equations to the universe as a whole \citep{einstein1917kosmologische}.
He modeled the matter content as pressureless dust at rest---the relative velocities of stars are small compared to the speed of light, so the stress-energy tensor is dominated by $T_{00} = \rho c^2$ with all spatial components negligible ($T_{ij} \approx 0$).
Einstein assumed the density $\rho$ to be constant in both space and time \citep{einstein1917kosmologische}.
We follow Friedmann \citep{friedman1922krummung}, who generalized Einstein's setup by allowing both the curvature radius and the density to depend on time: $\rho = \rho(t)$, with spatial homogeneity still enforced.
The crucial question was whether these equations permit a \emph{static} universe---one in which the metric has no time dependence.

\subsection{The cosmological metric}
\label{sec:conc-cosmological-metric}

Einstein's starting point was the \emph{cosmological principle}: at any fixed time, space is homogeneous (the same at every point) and isotropic (the same in every direction).
Isotropy at every point means that the sectional curvature at each point does not depend on the direction in which it is measured.
By Schur's theorem \citep[Ch.~4, Ex.~8]{docarmo1992}, this implies that the sectional curvature does not vary from point to point either: space has constant curvature.
A complete, simply connected three-dimensional space of constant curvature falls into exactly one of three classes:
\begin{itemize}[leftmargin=16pt,nosep]
\item \emph{Positive curvature} ($k = +1$): the three-sphere $S^3$, with finite volume.
\item \emph{Zero curvature} ($k = 0$): flat Euclidean space $\mathbb{R}^3$, with infinite volume.
\item \emph{Negative curvature} ($k = -1$): hyperbolic space $H^3$, with infinite volume.
\end{itemize}
Einstein worked with the closed case $k = +1$ \citep{einstein1917kosmologische}; here we choose the \emph{flat} case ($k = 0$) for simplicity, since the qualitative conclusion---that a static universe with matter is impossible---holds for all three.

We now construct the most general metric compatible with spatial flatness, homogeneity, and isotropy.
The assumption that matter is at rest allows us to choose coordinates where the mixed components $g_{0i}$ vanish, so time is orthogonal to the spatial slices.
The geodesic equation for dust at rest requires $g_{00}$ to be independent of spatial position; by our sign convention, $g_{00} = -c^2$.
At fixed time, the spatial metric is that of flat $\mathbb{R}^3$: $dx^2 + dy^2 + dz^2$.
Homogeneity and isotropy do not prevent the overall spatial scale from changing with time, so we introduce a scale factor $a(t)$ that multiplies all spatial distances uniformly.
The coordinates $(x, y, z)$ are \emph{comoving coordinates}: they label points in space permanently, and the physical distance between two points with coordinate separation $\Delta x$ at time $t$ is $a(t)\,\Delta x$.

Combining these ingredients, the spacetime line element is the \emph{Friedmann--Lema\^{i}tre--Robertson--Walker (FLRW) metric} for a spatially flat universe:
\begin{equation}
\label{eqn:flrw-metric}
\boxed{\;ds^2 = -c^2\,dt^2 + a^2(t)\bigl(dx^2 + dy^2 + dz^2\bigr).\;}
\end{equation}
The metric tensor and its inverse are diagonal:
\begin{equation}
g_{00} = -c^2, \quad g_{11} = g_{22} = g_{33} = a^2(t); \qquad g^{00} = -\frac{1}{c^2}, \quad g^{11} = g^{22} = g^{33} = \frac{1}{a^2(t)}.
\end{equation}
The only undetermined quantity is the single function $a(t)$.

\subsection{The Friedmann equations}
\label{sec:conc-friedmann}

Substituting the FLRW metric~\eqref{eqn:flrw-metric} into the Einstein field equations~\eqref{eqn:einstein-field} requires computing the Christoffel symbols, Ricci tensor, Ricci scalar, and Einstein tensor for this metric (the full computation is carried out in Appendix~\ref{sec:appendix-friedmann-derivation}).
With the stress-energy tensor $T_{00} = \rho c^2$ and $T_{ij} = 0$ for pressureless dust, the field equations yield two ordinary differential equations for the scale factor $a(t)$ and the matter density $\rho(t)$, where $\dot{a} \equiv da/dt$ and $\ddot{a} \equiv d^2a/dt^2$.

The 00-component gives the first Friedmann equation (the energy constraint):
\begin{equation}
\label{eqn:friedmann-I}
\boxed{\;\frac{3\dot{a}^2}{a^2} = \frac{8\pi G\rho}{c^2}.\;}
\end{equation}
The 11-component gives the second Friedmann equation (the acceleration equation):
\begin{equation}
\label{eqn:friedmann-II}
\boxed{\;\frac{2\ddot{a}}{a} + \frac{\dot{a}^2}{a^2} = 0.\;}
\end{equation}
These two equations, together with initial conditions, completely determine the evolution of the universe.

\subsection{A dynamic universe}
\label{sec:conc-dynamic}

A static universe means $\dot{a} = 0$ for all time.
Setting $\dot{a} = 0$ in the first Friedmann equation~\eqref{eqn:friedmann-I} gives $0 = 8\pi G\rho/c^2$, which requires $\rho = 0$.
A static universe must be empty.

The result is stronger than this.
Suppose the universe contains matter ($\rho > 0$) and is momentarily at rest ($\dot{a} = 0$) at some time $t_0$.
Then the first Friedmann equation gives $0 = 8\pi G\rho/c^2 > 0$, a contradiction.
So $\dot{a}$ can never pass through zero in a matter-filled universe: if the universe is expanding at any moment, it was always expanding; if contracting, always contracting.
The universe is \emph{permanently dynamic}.

Solving the Friedmann equations (Appendix~\ref{sec:appendix-friedmann-solution}) yields $a(t) \propto t^{2/3}$: the universe begins at a singularity where all distances vanish and the density is infinite, then expands forever, decelerating but never stopping.
The density decreases as $\rho \propto 1/a^3 \propto 1/t^2$, reflecting the conservation of total mass in any comoving volume as the volume grows.

The same qualitative conclusion---the impossibility of a static universe with matter---holds for the closed ($k = +1$) and open ($k = -1$) cases.
In the closed case, $\dot{a} = 0$ at one moment forces $\ddot{a} < 0$, so the universe cannot remain static; in the open case, $\dot{a}^2$ has a positive lower bound, so $\dot{a}$ can never reach zero.

\subsection{The cosmological constant}
\label{sec:conc-cosmological-constant}

Einstein \citep{einstein1917kosmologische} encountered the same problem: his field equations yield a dynamic universe, contradicting the static cosmos that the astronomical consensus of his era held to be self-evident.
His response was to modify his own equations, introducing the \emph{cosmological constant} $\Lambda$ by adding a term $\Lambda\,g_{\mu\nu}$ to the left-hand side:
\begin{equation}
\label{eqn:einstein-lambda}
R_{\mu\nu} - \tfrac{1}{2}\,\mathcal{R}\,g_{\mu\nu} + \Lambda\,g_{\mu\nu} = \frac{8\pi G}{c^4}\,T_{\mu\nu}.
\end{equation}
This modification preserves the divergence-free property of the left-hand side (since $\nabla_\mu g^{\mu\nu} = 0$), so the equations remain mathematically consistent.
The extra term provides a repulsive effect that can be tuned to balance gravitational attraction, permitting a static solution.
However, this solution is unstable: the slightest perturbation in density or scale factor causes the universe to begin expanding or contracting.
Einstein had patched his equations to obtain the answer he wanted.

\section{The theory was right}
\label{sec:conc-hubble}

In 1929, Edwin Hubble published a result that rendered Einstein's patch unnecessary \citep{hubble1929relation}.
Using distance estimates derived from resolved stars in nearby galaxies, together with radial velocities measured by Vesto Slipher and Milton Humason, Hubble established a roughly linear relation between a galaxy's distance and its recession velocity.
Distant galaxies are \emph{redshifted}---the wavelength of their light is systematically stretched toward longer (redder) wavelengths, in proportion to their distance.

\begin{figure}[t]
\centering
\includegraphics[width=\textwidth]{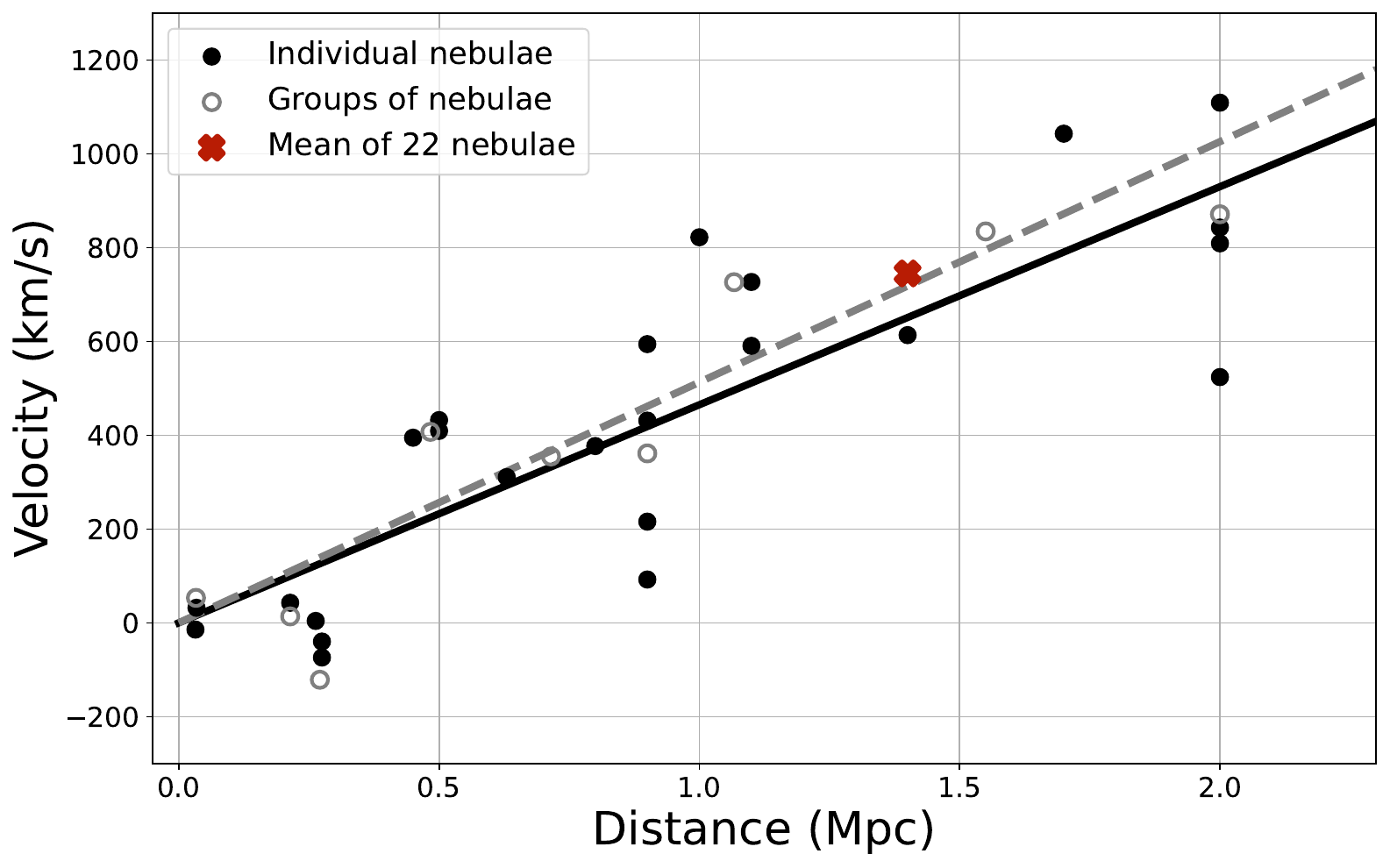}
\caption{Hubble's velocity--distance relation, reproduced from the data in \citet{hubble1929relation}.
Black discs are 24 individual nebulae with estimated distances (solid line: least-squares fit, $K = 465$ km/s/Mpc).
Open circles are 9 groups formed by combining nearby nebulae (dashed line: $K = 513$ km/s/Mpc).
The red cross marks the mean of 22 additional nebulae whose distances could not be estimated individually.
Both fits are consistent with a linear relation $v = K r$ passing through the origin.}
\label{fig:hubble-1929}
\end{figure}

Hubble's original data are reproduced in Figure~\ref{fig:hubble-1929}.
Redshift is the optical analog of the Doppler effect: when a source of light moves away from an observer, each successive wave crest must travel a slightly longer distance, stretching the observed wavelength $\lambda_{\text{obs}}$ relative to the emitted wavelength $\lambda_{\text{emit}}$.
The redshift $z = (\lambda_{\text{obs}} - \lambda_{\text{emit}})/\lambda_{\text{emit}}$ measures this fractional stretch, and for recession velocities small compared to the speed of light, $z \approx v/c$.
Hubble found that nearly all galaxies have $z > 0$ and that the recession velocity increases with distance, roughly as $v \approx H_0 d$, where $H_0$ is a constant now bearing his name.
The universe is not static.
It is expanding, and running the expansion backward in time implies that it originated from an extremely dense, hot initial state---the Big Bang.

The linearity of Hubble's relation is not a coincidence---it is the only possibility consistent with the cosmological principle.
Consider two observers, A and B, separated by distance $d_B$.
Observer A measures a galaxy at distance $d$ receding with velocity $v(d)$.
Observer B sees the same galaxy at distance $d - d_B$ and, after subtracting B's own recession velocity as seen by A, measures velocity $v(d) - v(d_B)$.
Homogeneity requires that B's velocity--distance law take the same functional form as A's, so
\begin{equation}
\label{eqn:cauchy-hubble}
v(d) - v(d_B) = v(d - d_B) \quad \text{for all } d, d_B.
\end{equation}
This is Cauchy's functional equation, whose only continuous solution is $v(d) = H d$ for some constant $H$.
Any nonlinear relation would single out a preferred center---the point from which the relation appears simplest---violating the assumption that no observer occupies a special location.
This constant $H$ is precisely the Hubble parameter predicted by the Friedmann equations (\S\ref{sec:conc-friedmann}).
The first Friedmann equation~\eqref{eqn:friedmann-I} already contains the ratio $\dot{a}/a$: rewriting $3\dot{a}^2/a^2 = 8\pi G\rho/c^2$ as $H^2 = 8\pi G\rho/(3c^2)$ shows that the expansion rate is set by the matter density of the universe.

Einstein's unmodified field equations~\eqref{eqn:einstein-field}, without the cosmological constant, had predicted exactly this.
Had Einstein trusted his own mathematics rather than the astronomical prejudices of his era, he could have predicted the expanding universe over a decade before Hubble observed it.
Einstein later called the introduction of $\Lambda$ his ``biggest blunder'' \citep{gamow1956evolutionary}.

In a precise sense, the equations knew more than Einstein did.
Their deductive consequences included a true prediction about the physical universe that their creator actively suppressed.
The mathematical structure of general relativity---the interplay of $g_{\mu\nu}$, $R_{\mu\nu}$, and $T_{\mu\nu}$---encoded a fact about nature that no human being recognized at the time.

\section{Continually self-improving AI}
\label{sec:conc-ai}

We began this chapter with a narrow observation: the AI systems built in this thesis satisfy the definition of continually self-improving AI, but through the uninteresting mechanism of quantity overcoming quality.
The Einstein parable suggests---but does not prove---that something deeper is possible.
The field equations of general relativity were not merely more diligent than Einstein---their deductive consequences were right where his intuitions were wrong.
The mathematical structure he created encoded a truth about the universe that he actively denied.

The results of this thesis do not yet achieve this.
Synthetic data outworks human data; automated search outworks human researchers.
The mechanism remains quantity over quality.

But the story need not end here.
If models can internalize knowledge into their weights, regularize their own training from the structure of data, and design their own learning algorithms, then it is my view that one day a created system will not merely outwork its creator---but, like Einstein's field equations, contain truths its creator did not recognize.

\section{Future work}
\label{sec:conc-future-work}

Toward a future in which AI systems genuinely surpass their creators, the three methodologies developed in this thesis---synthetic knowledge acquisition, bootstrapped pretraining, and automated algorithm design---can each be extended in distinct ways.
We close by narrating three concrete possibilities.

\subsection{Synthetic continued pretraining as an alternative to infinite context}

In Chapter~\ref{chap:scp}, we showed that synthetic continued pretraining can teach a model knowledge from a small corpus by generating diverse rephrasings and elaborations of the source material.
A natural extension is to apply this approach not to static knowledge acquisition, but to the problem of long-context inference.
Recent work handles long user queries (e.g., 1M--10M+ tokens) using efficient attention \citep{dao2022flashattention, liu2023ring, gemini} or sub-quadratic architectures \citep{tay2022efficienttransformerssurvey, gu2022efficiently, gu2024mamba, sun2024learninglearntesttime}.
In settings where many queries share a long prefix---e.g., a corporation's proprietary documents or other prompt caching use cases \citep{anthropicpromptcache}---one could continue pretraining on the prefix to internalize its knowledge, then perform standard quadratic attention on shorter queries.
This approach pays a fixed training cost to amortize prefix knowledge into model weights, then benefits from shorter context lengths \citep{gururangan2020dont, snell2022learningdistillingcontext}.
By adapting continued pretraining from 10B--100B tokens to as little as 1.3M tokens, the synthetic continued pretraining approach of Chapter~\ref{chap:scp} could enable unsupervised learning of shared text prefixes at much smaller and more practical token counts.
The natural limit of this direction is replacing the context window entirely: a model that has internalized a corpus through synthetic continued pretraining needs no retrieval at inference time, achieving the effect of an infinite context window through learned knowledge rather than attention over tokens.

\subsection{Synthetic data as data-dependent regularization}

In Chapter~\ref{chap:sbp}, we showed that a model can bootstrap its own pretraining capabilities by synthesizing training data from a fixed corpus, producing training signals that yield improvement beyond what the original data provides.
This capability becomes increasingly important as the field confronts a fundamental resource constraint.
The compute-optimal scaling laws of \citet{hoffmann2022training} established that model size and training tokens should grow in proportion: a model trained on too few tokens relative to its parameter count is undertrained, while one trained on too many is inefficient.
The immediate consequence is that the largest models require the most data.
As model sizes continue to grow, the data requirement grows with them---but the supply of high-quality web text does not.
\citet{kim2025pretraininginfinitecompute} study this regime directly, showing that when compute is abundant but data is fixed, standard training overfits and that aggressive regularization (weight decay 30$\times$ larger than standard practice) is needed to continue extracting signal from repeated data.
Together, these results point to a future in which the largest models are severely undertrained---not for lack of compute, but for lack of data.

The synthetic bootstrapped pretraining method of Chapter~\ref{chap:sbp} offers a natural response to this problem.
SBP can be viewed as a form of data-dependent regularization: it does not add new information beyond what is already present in the original corpus, but it makes implicit inter-document correlations explicit, guiding the model toward representations that capture deeper structure.
Standard pretraining learns from the marginal distribution of documents; SBP additionally learns from conditional distributions that expose latent structure shared across documents.
In this view, synthesized documents act as a regularizer whose form depends on the training data itself, much as data augmentation in vision (cropping, flipping, color jittering) regularizes without introducing new visual concepts.
This makes SBP a promising approach for training ultra-large models on fixed data budgets, precisely the regime where compute-optimal scaling laws predict the most severe undertraining.

\subsection{Harness engineering}

In Chapter~\ref{chap:automated-ai-research}, we showed that an AI system can design learning algorithms by searching over a larger design space than human researchers can search, generating, implementing, and evaluating research ideas in a closed loop.
The natural trajectory of this line of work shifts the role of the human researcher from doing research to engineering the harness within which AI does research.
In our experiments, the most consequential design decisions were not about individual ideas---they were about the research environments (Figure~\ref{fig:research-env}).
These are properties of the harness, not of any single experiment.
As AI systems become more capable at the hill-climbing work of generating and testing variations, the harness becomes the primary locus of human contribution.
This is because the harness encodes something that AI systems do not have on their own: human intent.
The choice of what to optimize, what constraints to respect, and what counts as progress reflects human values and goals that cannot be derived from execution feedback alone.
The researcher's comparative advantage shifts from proposing ideas to specifying objectives---from doing the climbing to choosing the mountain.
This shift is already visible in practice: frameworks like Harbor \citep{harbor2026} provide infrastructure for running AI agents across thousands of containerized environments in parallel, collecting execution rollouts, and feeding results back for reinforcement learning---an end-to-end harness in which the human contribution is the design of the evaluation, not the execution of the research.

\appendix

\chapter{Supplementary materials for Chapter~\ref{chap:scp}}
\section{Details on the QuALITY dataset}
\label{sec:appendix-quality}
We provide additional details on the QuALITY dataset.
For each book, we execute entity extraction (Step 1, \S\ref{sec:entigraph-method}) and then analyze all pairwise relations between entities and a subset of all triplet relations (Step 2, \S\ref{sec:entigraph-method}).
Figure \ref{fig:appendix-quality-detail} shows summary statistics for the Raw and EntiGraph corpora.

\begin{figure}[ht]
\subfigure[\label{fig:appendix-raw-token-count} Raw article tokens]{\includegraphics[width=0.3\textwidth]{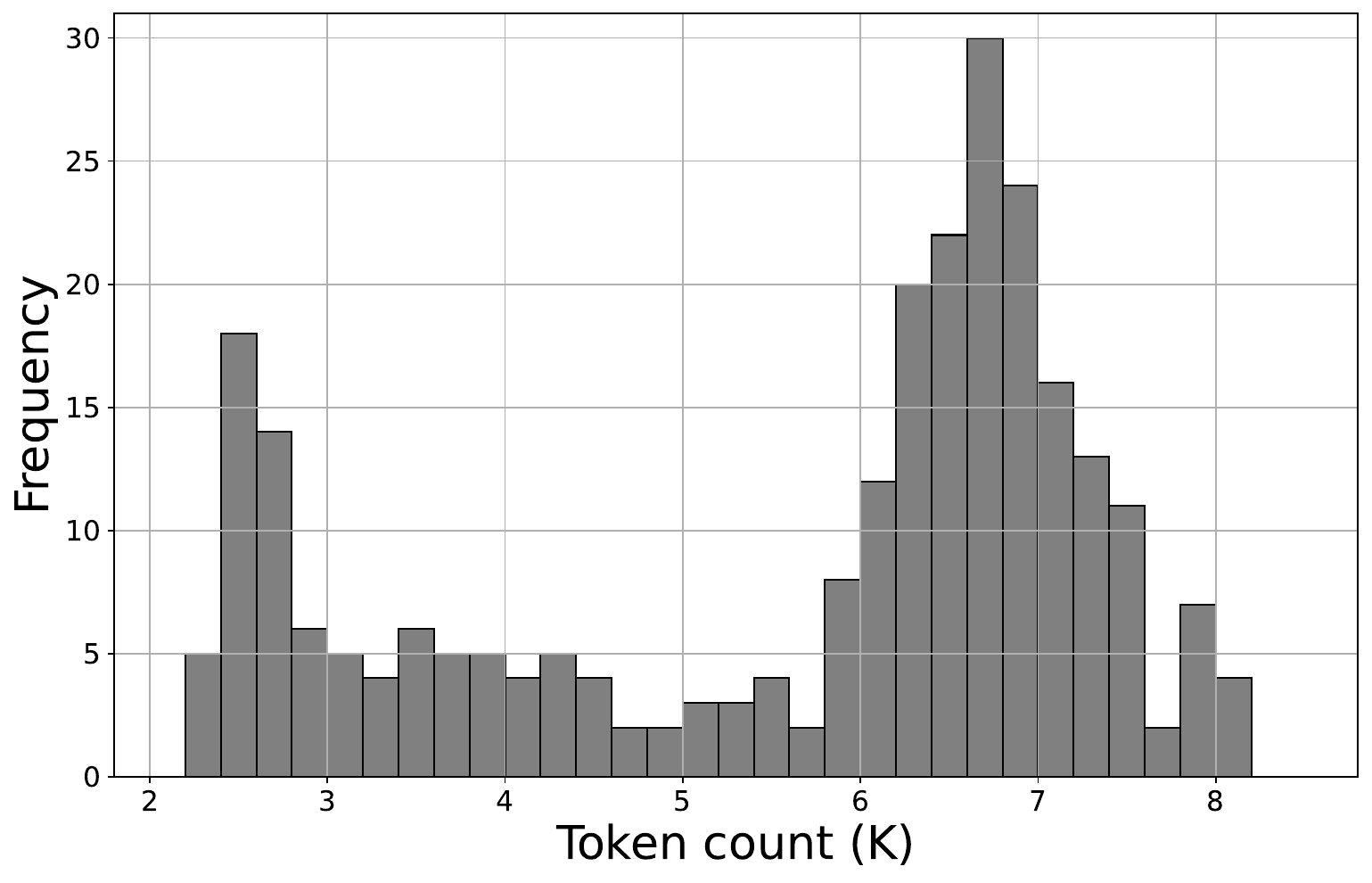}}
\subfigure[\label{fig:appendix-entity-count} Extracted entities]{\includegraphics[width=0.3\textwidth]{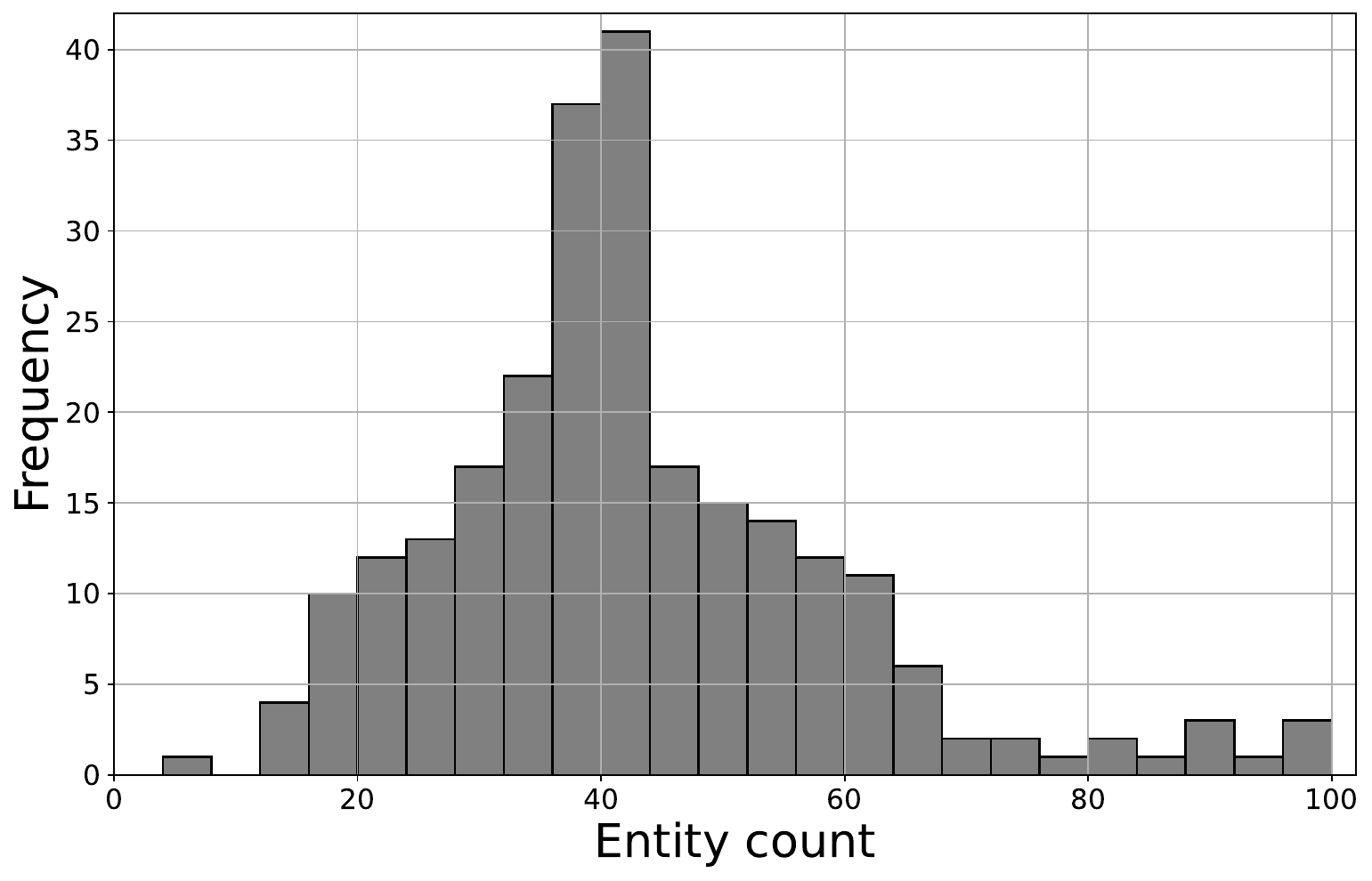}}
\subfigure[\label{fig:appendix-entigraph-token-count} EntiGraph corpus tokens]{\includegraphics[width=0.3\textwidth]{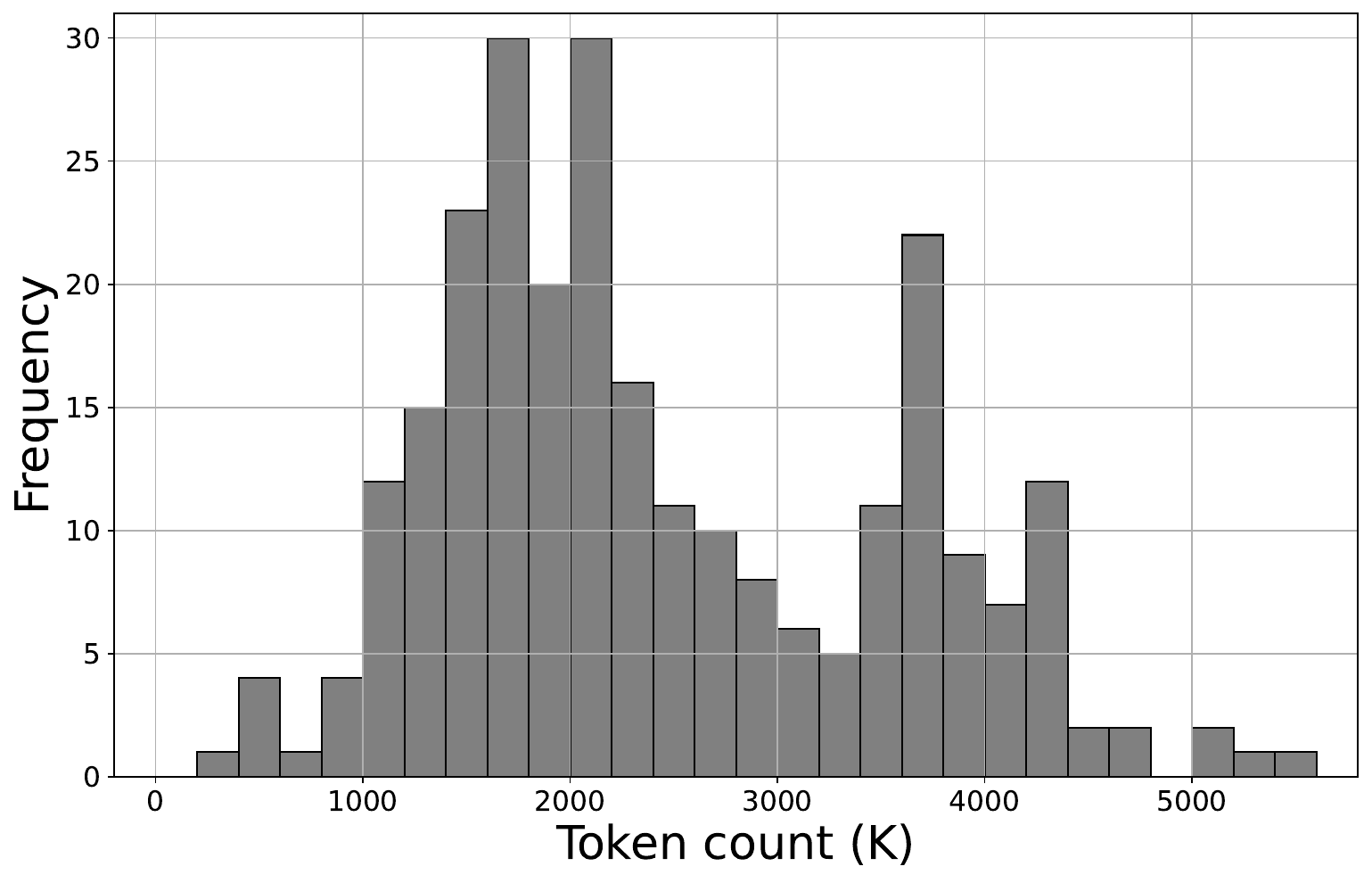}}
\caption{Histograms over the 265 QuALITY articles and books. (a) The token count of raw articles. (b) The number of extracted entities. (c) The token count of EntiGraph synthetic data (generated for each book).}
\label{fig:appendix-quality-detail}
\end{figure}

\section{Training details for the main experiments}
\label{sec:appendix-training-details}

\paragraph{Continued pretraining details} In all experiments, we continue pretraining Llama 3 8B Base with a context length of 2048 and batch size of 16.
We apply a linear learning rate warmup for 5\% of total steps, followed by cosine decay with peak learning rate 5e-6.
We use full parameter training with Fully Sharded Data Parallelism (FSDP, \cite{fsdp}).

\paragraph{EntiGraph continued pretraining details} To mitigate forgetting of pretrained knowledge, we perform replay at a rate of 0.1 using 1B RedPajama tokens \citep{together2023redpajama}.
For each training batch, we flip a biased coin such that with 10\% probability we load RedPajama data instead of EntiGraph synthetic data.

\paragraph{Raw continued pretraining details} We now describe continued pretraining directly on the Raw corpus, which produces the ``Raw CPT'' model.
Because the Raw corpus contains only 1.3M tokens, we jointly tune the number of epochs (repetition factor) and the RedPajama replay rate on accuracy over a \quality~QA validation split.
We select a configuration with 4 epochs and a 0.1 replay rate.

\paragraph{Instruction tuning details} We use the UltraChat instruction tuning dataset \citep{ultrachat} filtered by the Huggingface team \citep{alignmentbook}.
We format the UltraChat conversations using the Llama 3.1 8B Instruct chat template \citep{llama3}, obtaining a 250M token instruction tuning dataset.
We apply a linear learning rate warmup followed by cosine decay to 0 with peak learning rate 5e-6, and train the model for 1 epoch with a batch size of 512 and context window of 2048.
To validate our instruction tuning procedure, we measure the AlpacaEval \citep{alpaca_eval} winrate against GPT-4 and find that it improves from 0\% to 6.25\%, comparable to Llama 2 Chat 13B's 7.7\% baseline winrate.

\paragraph{Compute resource} We run all continued pretraining experiments on one $8\times$H100 node.
With PyTorch FSDP \citep{fsdp}, we achieve throughput of 6090 tokens per second.
Because all experiments use the same model architecture, batch size, and context length, we can calculate training time from total tokens seen.
For example, EntiGraph trains on 455M tokens for 2 epochs, taking $455$M$\times 2/6090$ seconds, or about 41 hours.

\section{Task-specific finetuning for the QuALITY question set}
\label{sec:appendix-task-specific}

Our work considers \emph{task-agnostic} synthetic data generation and continued pretraining as a way to obtain generalizable knowledge about a domain---knowledge that can later be extracted via few-shot prompting \citep{gpt3} and instruction tuning \citep{instruct_gpt}.

However, if the goal is only to excel on a single task such as question answering, one could fine-tune a language model for that particular task.
This approach works well on tasks such as SQuAD \citep{rajpurkar2016squad100000questionsmachine} in-domain but degrades outside the fine-tuning data distribution \citep{awadalla-etal-2022-exploring}.

We do not extensively compare to task-specific finetuning given EntiGraph's broader multi-task goals.
We run preliminary experiments comparing a simple QA SFT baseline to EntiGraph and find that EntiGraph scaling and synthetic data generation costs are generally favorable even compared to this strong, task-specific baseline.

\paragraph{QA SFT} We follow the same setup as in \S\ref{sec:setup} and \S\ref{sec:exp-setup} except that we do not prompt $\lmsynth$ to generate general knowledge about QuALITY articles.
Instead, we prompt $\lmsynth$ to generate QA pairs directly:

\begin{qualitativeBox}
{\footnotesize
\begin{verbatim}
You are an assistant to help read a article and then rephrase it in a
question answering format. The user will provide you with an article
with title, year, content. You need to generate a paraphrase of the
same article in question and answer format with multiple tags of
"Question: ..." followed by "Answer: ...". Remember to keep the
meaning and every content of the article intact, including the title,
year, etc.
\end{verbatim}
}
\end{qualitativeBox}
We repeat this prompt many times at temperature 1.0, resulting in 28M tokens of synthetic question-answer pairs.
We perform the same continued pretraining procedure as in \S\ref{sec:exp-cpt-procedure} on Llama 3 8B and refer to this model as ``QA SFT''.

\begin{figure}[ht]
\centering
\includegraphics[width=\textwidth]{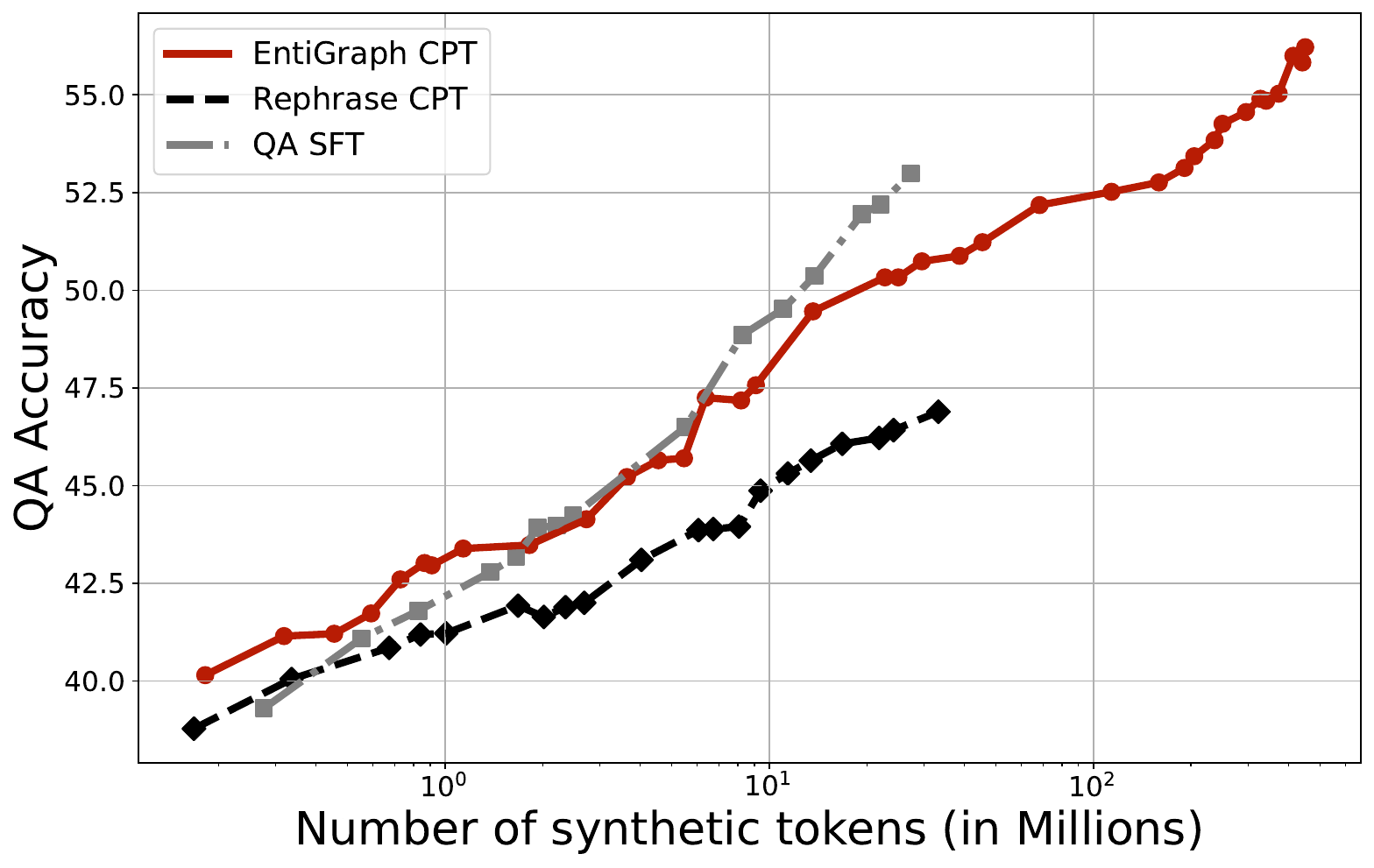}
\caption{Accuracy on the QuALITY question set $\Qt$ ($y$-axis) as a function of the synthetic token count ($x$-axis).
Comparison among EntiGraph CPT, Rephrase CPT, and QA SFT.}
\label{fig:qasft}
\end{figure}

\paragraph{Results discussion} Figure \ref{fig:qasft} shows the QA SFT scaling curve.
Task-specific finetuning demonstrates sharp improvement in QA accuracy, consistent with prior results on task-specific finetuning gains for pretrained models.
While QA SFT performance is high, EntiGraph attains similar performance despite being entirely task-agnostic, and the overall cost of creating the dataset is much lower for EntiGraph.

This cost difference is hidden in Figure \ref{fig:qasft}, as we plot training tokens rather than dollars spent to generate the synthetic data.
For QA SFT, each QA question is short, resulting in large inefficiencies in generating this dataset.
The input-to-output token ratio is large compared with Rephrase CPT and EntiGraph CPT, costing over \$5K to generate just 28M tokens.\footnote{OpenAI API pricing, Sep 2024.}
This cost difference means further scaling becomes prohibitively expensive, and EntiGraph's performance in Figure \ref{fig:qasft} is even better than it appears when matching for total cost rather than token budget.

\section{Additional details on open-book experiments}
\label{sec:appendix-rag}

We provide additional details on our open-book experimental setup, including our retrieval-augmented generation (RAG, \cite{rag, ragsurvey}) pipeline.
As described in \S\ref{sec:exp-open-book}, we use a standard two-stage RAG pipeline: first, an offline stage that indexes document chunks; second, inference-time retrieval, reranking, and placement of those chunks in a few-shot LM prompt.

\subsection{Stage 1: offline indexing}

The indexing stage constructs an index over all 265 articles and books from the \quality~corpus $\Ds$.
This stage chunks documents, obtains dense vector embeddings for each chunk using an API-based embedding model, and indexes the (embedding, chunk) pairs.

\paragraph{Chunking documents} We first split each document $D^{(i)} \in \{D^{(i)}\}_{i = 1}^n = \Ds$ into a set of $m_i$ document chunks $\{C^{(i)}_1, ..., C^{(i)}_{m_i}\}$.
We use the \texttt{Recursive}
\texttt{CharacterTextSplitter} from \cite{chase2022langchain}, which keeps paragraphs (and then sentences, and then words) together as long as possible to preserve semantics within each chunk.
We use non-overlapping chunks and tune chunk size in characters (\texttt{chunk\_size}, hyperparameter values provided below).
Because we have access to metadata about each document $D^{(i)}$---namely, the title, author, and year---we prepend this metadata to each document chunk.
This mirrors how an organization building a RAG system over their document store would include document metadata (title, author, year, etc.).
We embed, retrieve, and place these final chunks with metadata prepended in-context.

\paragraph{Embedding and indexing document chunks} We obtain dense embeddings for all document chunks using OpenAI \texttt{text-embedding}
\texttt{-3-large} \citep{neelakantan2022textcodeembeddingscontrastive}.
We then index all (embedding, chunk) tuples using a FAISS vector store \citep{douze2024faisslibrary}.

\subsection{Stage 2: inference-time retrieval and reranking}

At inference time, the RAG system receives a test query $q \in \Qt$.
Each query $q$ is contextualized with the article title and author name, as described in \S\ref{sec:exp-setup}, and contains its four possible answer choices (\quality~is a 4-choice, multiple choice dataset).
In Stage 2, we embed the query with the API-based embedding model, retrieve $K$ document chunks using approximate nearest-neighbor search, and select the $k < K$ most relevant chunks using an API-based reranker.

\paragraph{Retrieving top-$K$ document chunks} We embed $q$ with \texttt{text-embedding-3-large}, and retrieve the top-$K$ most relevant document chunks from our indexed vector store using FAISS similarity search with a Euclidean distance metric.

\paragraph{Reranking to obtain top-$k$ ($k < K$) chunks} We use a reranker to filter the $K$ retrieved document chunks to a smaller set of $k$ reranked chunks.
Rerankers significantly improve recall (the proportion of the time the salient article appears in the top chunks), and our RAG pipelines achieve near-perfect recall (Table~\ref{tbl:exp-open} in \S\ref{sec:exp-open-book}).
We pass the query $q$ and the list of $K$ retrieved document chunks to Cohere \texttt{rerank-english-v3.0} \citep{coherererank}, which returns the $K$ chunks ordered from most to least semantically relevant.
We take the $k$ highest scoring chunks and place them in our few-shot prompt.

\paragraph{Few-shot prompt formatting} We provide our full few-shot chain-of-thought evaluation prompts for the open-book setting in our code release.
As with the closed-book QA evaluation prompt, we manually write and fact-check in-context learning examples about well-known books to avoid leaking knowledge from the \quality~articles.
In early experiments, we find that placing retrieved contexts first, followed by the question and answer choices, significantly outperforms question-then-contexts; we use this format throughout the retrieval experiments.
We treat as a hyperparameter whether reranked chunks are ordered from best match to worst (\texttt{best\_first}) or worst match to best (\texttt{best\_last}).
When performing few-shot evaluation, we follow the sampling procedure from the closed-book experiments (Appendix \ref{sec:appendix-qa-eval-detail}).
We generate 64 responses for each question and filter out responses that do not parse to one of the four choices.
We then randomly select one valid response as the model's final answer.

\subsection{Hyperparameter tuning}
We compare two LMs in the RAG pipeline: EntiGraph CPT and its base model, Llama 3 8B Base.
We fix the number of retrieved chunks to $K = 128$ but vary the number of reranked chunks $k$ placed in the context window.
For each language model + RAG pipeline, we independently tune the following hyperparameters via grid search on a \quality~QA validation split:
\begin{itemize}
    \item Document $\texttt{chunk\_size} \in \{256, 512, 1024\}$
    \item Rerank top-$k \in \{1, 2, 4, 8, 16\}$
    \item Order of chunks $\in \{\texttt{best\_first}, \texttt{best\_last}\}$
    \item Eval temperature $\in \{0.1, 0.3, 0.5, 0.7\}$
\end{itemize}
We provide tuned hyperparameters in our code release.

\newpage
\section{Proof of Theorem \ref{thm:toy} and other analytical formulas}
\label{sec:appendix-proof}
We prove Theorem \ref{thm:toy} and provide derivations for several other approximation formulas.

\begin{proof}[Proof of Theorem \ref{thm:toy}]
    Fix the matrix $\bM_0$, we observe that
    \begin{align*}
        \mathsf{Acc}(\bM_t) = \frac{\mathbb{E}[\|\bM_t\|_1 \vert \bM_0 ]}{V(V-1)} = \sum_{(i,j) \in \cV^2} \frac{\mathbb{E}[\mathds{1}((i,j)\in \cD_t) \vert \bM_0 ]}{V(V-1)} = \sum_{(i,j) \in \cV^2} \frac{\mathbb{P}[(i,j)\in \cD_t \vert \bM_0 ]}{V(V-1)}.
    \end{align*}
   For each $(i,j) \in \cV^2$, we define $q_{i,j}$ to be the probability that $(i,j)$ is included in the set $$\{(x_t, z_t^1), (x_t, z_t^2), \dots, (x_t, z_t^{k_t}), (x_t, y_t)\}$$. Note that each iteration of the procedure generates a path $(x_t, z_t^1, z_t^2, \dots, z_t^{k_t}, y_t)$ independently identically. So naturally $q_{i,j}$ does not depend on the time $t$. This implies that $\mathbb{P}[(i,j)\in \cD_t \vert \bM_0] = 1-(1-q_{i,j})^t$. Thus we can further rewrite the link density as 
   \begin{align*}
       \mathsf{Acc}(\bM_t) &= \frac{|\Ds|}{V(V-1)}+ \sum_{(i,j) \in \cV^2 \backslash \Ds} \frac{\mathbb{P}[(i,j)\in \cD_t \vert \bM_0 ]}{V(V-1)} \\
       &= \frac{|\Ds|}{V(V-1)}+ \sum_{(i,j) \in \cV^2 \backslash \Ds} \frac{1-(1-q_{i,j})^t}{V(V-1)}.
   \end{align*}
   The remaining task is to estimate $q_{i,j}$. We say a vertex $j$ is reachable from $i$ and denote $i \sim j$, if there is a directed path from $i$ to $j$ in $\bM_0$. We define $\cR = \{(u,v) \in \cV^2: u \neq v, u \sim v\}$ to be the set of all reachable pairs of vertices in $\cV$. We note that $q_{i,j}$ is non-zero if and only if $j$ is reachable from $i$ in $\bM_0$. Now, for any $t\geq 1$, the function $1-(1-x)^t$ is concave, thus by Jensen's inequality, we have
   \begin{align*}
       \sum_{(i,j) \in \cV^2 \backslash \Ds} 1-(1-q_{i,j})^t \leq \sum_{(i,j) \in \cR} 1-(1-q_{i,j})^t \leq |\cR|\left(1-(1-\bar q_{i,j} )^t \right),
   \end{align*}
   where
   \begin{align*}
       \bar q_{i,j} = \frac{\sum_{(i,j) \in \cR} q_{i,j}}{|\cR|}.
   \end{align*}
    For each $(i,j)\in \cR$, the probability $q_{i,j}$ satisfies
    \begin{align*}
        q_{i,j} = \frac{\sum_{a\neq b \in\cV^2 } \mathds{1}((i,j)\in \{(a, z^1), (a, z^2), \dots, (a, z^{k}), (a, b)\}) }{V(V-1)}
    \end{align*}
    where $(a, z^1, z^1, \cdots, z^k, b)$ is the shortest path in $\bM_0$ connecting $a$ and $b$. If there is no such path, then by default the indicator equals zero. Now we look at
    \begin{align*}
        \sum_{(i,j) \in \cR} q_{i,j} &= \frac{1}{V(V-1)}\sum_{(i,j) \in \cR} \sum_{(a,b) \in \cR} \mathds{1}((i,j)\in \{(a, z^1), (a, z^2), \dots, (a, z^{k}), (a, b)\})\\
        &\leq \frac{1}{V(V-1)}\sum_{(a,b) \in \cR} \sum_{i\neq j \in\cV^2 } \mathds{1}((i,j)\in \{(a, z^1), (a, z^2), \dots, (a, z^{k}), (a, b)\})\\
        & = \frac{1}{V(V-1)}\sum_{(a,b) \in \cR} \ell_{a,b},
    \end{align*}
    where $\ell_{a,b}$ is the length of the shortest path connecting $a$ to $b$. To analyze the typical shortest length of paths, we present a few classical results on directed Erd\H{o}s-R\'enyi graphs. For any $a \in \cV$, let $X(a)$ denote the set of vertices reachable from $a$ and let $Y(a)$ denote the set of vertices from which $a$ is reachable. Recall that $\rho(\lambda)$ is the extinction probability for the Poisson$(\lambda)$ branching process.
    \begin{lemma}[Lemma 1 and Corollary 1 in \cite{karp1990transitive}]\label{lem:conponent_size}
        For each vertex $a$, with probability tending to $1$ as $V$ tends to infinity, there exists a constant $\beta>0$ such that either $|X(a)| \leq \beta \log V$ or $|X(a)| = (1-\rho(\lambda)) V + \Theta(\sqrt{V})$. Moreover, the probability that the latter happens tends to $1-\rho(\lambda)$ as $V$ tends to infinity. The same is true for $Y(a)$.
    \end{lemma}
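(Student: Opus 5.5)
The plan is to analyze the breadth-first exploration of the out-neighbourhood of $a$ in $\bM_0$, and to couple its early stage with a Poisson$(\lambda)$ Galton--Watson process. The statement for $Y(a)$ then follows by applying the same argument to the transposed matrix $\bM_0^\top$, which has the same law because the off-diagonal entries are i.i.d.\ Bernoulli$(p)$.

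\textbf{Exploration process.} Run the exploration one vertex at a time: at step $k$, pop one active vertex and reveal its out-edges to all currently unseen vertices. Each unseen vertex is discovered independently with probability $p$, and no edge is ever queried twice, so the steps are independent given the past. A standard observation then gives the key identity. After $k$ steps, the number of discovered vertices other than $a$ is distributed as $\mathrm{Bin}\bigl(V-1,\,1-(1-p)^k\bigr)$. Hence the active count is
\[
A_k = 1+\mathrm{Bin}\bigl(V-1,\,1-(1-p)^k\bigr)-k,
\]
and $|X(a)|$ equals the first $k$ with $A_k=0$.

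\textbf{Phase 1 (small sizes).} While at most $\beta\log V$ vertices have been seen, each step adds $\mathrm{Bin}(V-O(\log V),\lambda/V)$ new vertices. This count is within total variation $O(\log V/V)$ of Poisson$(\lambda)$, so a union bound over $\beta\log V$ steps couples the exploration with the Galton--Watson process with probability $1-o(1)$. Consequently, the probability that the exploration dies before size $\beta\log V$ tends to $\rho(\lambda)$. Moreover, since $\lambda>1$, the probability that the branching process survives to size $\beta\log V$ but is eventually finite is $o(1)$. This gives the dichotomy in the small regime, and it also gives the limiting probability $1-\rho(\lambda)$ for the large branch.

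\textbf{Phase 2 (large sizes).} Write $k=xV$, so that $\mathbb{E}A_k\approx V\,f(x)$ with $f(x)=1-e^{-\lambda x}-x$. The function $f$ is positive on $(0,1-\rho)$ and vanishes at $x^*=1-\rho$, which solves $1-x=e^{-\lambda x}$. Chernoff bounds together with a union bound over $k$ show that, once $A_k\ge\beta\log V$ for $\beta$ large, $A_k>0$ for all $k\le(x^*-\delta)V$ with probability $1-o(1)$. The ratio $\lambda$ to $1$ makes the drift $f'(x)\ge c>0$ near $0$, and Bernstein's inequality handles the window $\beta\log V\le k\le\varepsilon V$.

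\textbf{Main obstacle.} The step I expect to be hardest is the $\Theta(\sqrt V)$ precision at the root. The mean slope there is $f'(x^*)=\lambda\rho-1<0$, which is bounded away from $0$ because $\lambda\rho<1$ for $\lambda>1$. The fluctuations $A_k-\mathbb{E}A_k$ have standard deviation $\Theta(\sqrt V)$ uniformly for $k$ near $x^*V$; this can be controlled by Doob's maximal inequality applied to the martingale part of the exploration. Combining the nonzero slope with these fluctuations shows that the first zero of $A_k$ lies within $O_P(\sqrt V)$ of $x^*V$. The CLT for the binomial shows the deviation is genuinely of order $\sqrt V$, which is how I would read the ``$\Theta(\sqrt V)$'' in the statement.
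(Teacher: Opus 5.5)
The thesis does not prove this lemma; it imports it from Karp (1990). Your proposal is therefore a self-contained replacement rather than a reconstruction, and as a sketch it is sound. It is the classical exploration proof of the giant component, and it transfers because your search never queries an arc twice. In fact $|X(a)|$ has exactly the law of the component of a fixed vertex in $G(V,p)$, since both explorations reveal each unseen vertex independently with probability $p$ per step. $Y(a)$ is handled by $\bM_0^\top$, as you say.

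The two routes buy different things.
\begin{itemize}
\item \textbf{The citation buys brevity.} Through the equivalence with $G(V,p)$ components, the $\sqrt V$ scale of the giant's fluctuations is classical and can simply be quoted.
\item \textbf{Your route buys explicit error probabilities.} Your Chernoff steps give, per vertex, failure probability $O(V^{-c\beta})+Ve^{-cV}$ for the weaker dichotomy ``$|X(a)|\le\beta\log V$ or $\bigl|\,|X(a)|-(1-\rho(\lambda))V\bigr|\le\delta V$''. For the upper side, use the single marginal at $k=(x^*+\delta)V$. For $\beta$ large this survives a union bound over all $V$ vertices. That is what is needed to split all of $\cV$ into $\cS_X$ and $\cL_X$ simultaneously, as the proof of Theorem~\ref{thm:toy} does, and that proof only needs $o(V)$ precision. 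The Doob step, by contrast, gives the $\sqrt V$ window only with probability $1-O(C^{-2})$ per vertex.
\end{itemize}

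Your closing claim that the binomial CLT shows the deviation is ``genuinely'' of order $\sqrt V$ is not justified as stated. A CLT for the unconditional marginal $A_k$ at a fixed $k$ does not by itself transfer to the hitting time on the survival event. You would need, for example, that the first $O(\log V)$ steps, which decide survival, are asymptotically independent of the late martingale increments. The claim is also unnecessary. Since the limit is a nondegenerate Gaussian, a two-sided $\Theta(\sqrt V)$ bound cannot hold with probability tending to $1$, so $O_P(\sqrt V)$ is the only sensible reading.

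Three details should be made explicit.
\begin{itemize}
\item \textbf{Phase 2 statement.} You mean $k\ge\beta\log V$, not $A_k\ge\beta\log V$. The bound there is a plain union bound over marginals, with no conditioning.
\item \textbf{The extended process.} The identity $A_k=1+\mathrm{Bin}(V-1,1-(1-p)^k)-k$ and the first-zero characterization hold for the extended process, in which unseen vertices keep receiving fresh Bernoulli$(p)$ coins after the real search has died. Since the increments of $A_k$ are at least $-1$, its first zero is still $|X(a)|$. This is also what lets you conclude $|X(a)|<k_1$ from the single event $A_{k_1}<0$.
\item \textbf{The martingale for Doob.} Use $W_k=U_k(1-p)^{-k}$, where $U_k=V-k-A_k$ counts unseen vertices. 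Then $A_k-\mathbb{E}A_k=-(1-p)^k(W_k-W_0)$ and $\mathrm{Var}(W_k)=(V-1)\bigl((1-p)^{-k}-1\bigr)=O(V)$ for $k\le V$. Doob's $L^2$ inequality then gives $\max_{k\le V}|A_k-\mathbb{E}A_k|=O_P(\sqrt V)$. As you anticipated, the maximal inequality is indispensable: a union bound over marginals for $k\le x^*V-C\sqrt V$ sums to order $\sqrt V\,e^{-cC^2}$, which does not vanish.
\end{itemize}
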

    For each vertex $a$, the set $X(a)$ is said to be small if $|X(a)| \leq \beta \log V$ (in such case we write $a\in \cS_X$) and large if $|X(a)| = (1-\rho(\lambda)) V + \Theta(\sqrt{V})$ (we write $a\in \cL_X$). We define $\cS_Y$ and $\cL_Y$ similarly.
    \begin{lemma}[Theorem 3 in \cite{karp1990transitive} and Theorem 2.4.1 in \cite{durrett2010random}]\label{lem:diameter}
        With probability tending to $1$, the following statement holds for all $a$ and $b$ in $\cV$: if $X(a)$ is large and $Y(b)$ is large, then $b$ is reachable from $a$. Moreover, if $X(a)$ is large and $Y(b)$ is large, then for any $\varepsilon>0$ and any sufficiently small $\delta>0$,
        \begin{align*}
            \mathbb{P}[\ell_{a,b} > (1+\varepsilon) \log V /\log \lambda ] < \exp(-V^{\varepsilon} \delta).
        \end{align*}
    \end{lemma}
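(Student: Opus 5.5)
The plan is to prove both parts of Lemma~\ref{lem:diameter} together. We run two exploration processes in $\bM_0$: a forward breadth-first search from $a$ and a backward breadth-first search into $b$. We then show that their explored sets become large enough, at moderate depth, that an edge between them is overwhelmingly likely. Throughout we use the standard principle of deferred decisions. Each directed edge is revealed only when the exploration first queries it. This keeps the edges between the two explored sets unexamined, hence independent Bernoulli$(p)$ with $p=\lambda/V$.

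\textbf{Step 1: forward growth.} Fix $a$ with $X(a)$ large. By Lemma~\ref{lem:conponent_size}, the BFS from $a$ does not die out within $\beta\log V$ vertices. So at some depth $k_0=O(\log\log V)$ there is a generation $Z_{k_0}\ge c\log V$. From then on, as long as the explored set has size at most $V^{1/2+\varepsilon/2}=o(V)$, the next generation size, conditioned on the past, stochastically dominates $\mathrm{Bin}\bigl(Z_k(V-o(V)),p\bigr)$, whose mean is $\lambda(1-o(1))Z_k$. A Chernoff bound gives $Z_{k+1}\ge \lambda(1-\eta)Z_k$ except with probability $\exp(-c'\eta^2 Z_k)$. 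These failure probabilities are summable and tiny once $Z_k\gtrsim\log V$, and we can afford the union over all $V$ vertices. Hence, with probability tending to one and simultaneously for all such $a$, the generation sizes grow at least geometrically at rate $\lambda(1-\eta)$. It follows that the forward ball reaches size $V^{1/2+\varepsilon/4}$ within depth $(\tfrac12+\tfrac{\varepsilon}{4}+o(1))\log V/\log\lambda$, after choosing $\eta$ small relative to $\varepsilon$.

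\textbf{Step 2: backward growth and connection.} We do the same for the backward exploration into $b$, which applies because $Y(b)$ is large. We use only edges not yet queried by the forward search. Since the forward set has size $o(V)$, it removes only $o(V)$ candidate vertices, so the same rate bound holds. This yields disjoint sets $A$ and $B$, each of size at least $V^{1/2+\varepsilon/4}$, at the stated depths (if the two sets meet, we already have a short path). Among the pairs in $A\times B$, at least $|A||B|(1-o(1))\ge \tfrac12 V^{1+\varepsilon/2}$ are unexamined. Each is present independently with probability $\lambda/V$, so
\[
\mathbb{P}[\text{no } A\to B \text{ edge}]\le (1-\lambda/V)^{\frac12 V^{1+\varepsilon/2}}\le \exp\bigl(-\tfrac{\lambda}{2}V^{\varepsilon/2}\bigr).
\]
An edge from $A$ to $B$ gives a path from $a$ to $b$ of length at most the sum of the two depths plus one, namely $(1+\varepsilon/2+o(1))\log V/\log\lambda\le(1+\varepsilon)\log V/\log\lambda$. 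Relabeling $\varepsilon$ turns this into the bound $\exp(-V^{\varepsilon}\delta)$ for any sufficiently small $\delta$. A union bound over the $V^2$ pairs $(a,b)$ costs only a polynomial factor, which the stretched-exponential bound absorbs. This gives the reachability statement for all pairs simultaneously.

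\textbf{Main obstacle.} The delicate step is Step 1. It must be run uniformly over all vertices, with failure probability small enough to survive the union bound, while the dependence created by earlier revealed edges is controlled. The first phase, going from a surviving root to $\Theta(\log V)$ active vertices, is the weakest: the Chernoff bounds there are only polynomially small. I would handle it by invoking Lemma~\ref{lem:conponent_size} directly, which supplies such a generation with high probability, rather than re-deriving the branching-process survival estimate. I would apply the exponential bounds only after that point. The remaining bookkeeping keeps the forward and backward explorations on disjoint sets of queried edges, so that the final $A\times B$ edges are truly fresh.
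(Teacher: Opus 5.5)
\textbf{Gap in Step 1.} Your two-sided exploration is the standard route behind the cited results; the paper gives no argument of its own, only the citations to Karp and Durrett. But Step~1 has a genuine gap, and it sits exactly where the statement claims more than is true. Lemma~\ref{lem:conponent_size} controls only the \emph{size} of $X(a)$, not the shape of the exploration tree. So it does not give you a generation of size $c\log V$ at depth $O(\log\log V)$, and your plan to ``invoke Lemma~\ref{lem:conponent_size} directly'' for the first phase does not work.

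That claim is in fact false, both uniformly in $a$ and with the per-vertex probability you need. With probability about $(\lambda e^{-\lambda})^k$, the forward exploration from $a$ is a bare path $a\to v_1\to\cdots\to v_k$: each explored vertex has exactly one new out-neighbour. Afterwards, $v_k$ still has a large forward set with probability about $1-\rho(\lambda)$. On this event every path from $a$ to any $b\notin\{v_1,\dots,v_k\}$ passes through $v_k$, so $\ell_{a,b}\ge k$. Taking $k$ just above $(1+\varepsilon)\log V/\log\lambda$ gives
$\mathbb{P}[\ell_{a,b}>(1+\varepsilon)\log V/\log\lambda,\ X(a)\text{ and }Y(b)\text{ large}]\ge V^{-C}$ with $C=C(\varepsilon,\lambda)$.

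So the tail is only polynomially small, and no argument can give $\exp(-V^{\varepsilon}\delta)$ for a fixed pair. In your proof that bound is valid only for the final no-edge step, conditional on both explorations having already grown. The growth phase itself fails with polynomial probability.

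The union bound over the $V^2$ pairs collapses for the same reason, and the uniform distance claim is actually false for small $\varepsilon$. The expected number of $a$ with a bare-path start of length $k$ and $X(a)$ large is about $(1-\rho(\lambda))V(\lambda e^{-\lambda})^k$. This diverges for $k\le(1-\eta)\log V/(\lambda-\log\lambda)$, and a second-moment argument yields such vertices with high probability. For them, $\ell_{a,b}\ge(1+c)\log V/\log\lambda$ for most $b$ with $Y(b)$ large, with a constant $c=c(\lambda)>0$.

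A smaller point: the fresh pairs are those leaving the forward \emph{frontier}, not the whole ball. The BFS has already queried, and found absent, every edge from its processed vertices to $B$. This costs only a constant factor.

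\textbf{What survives.} The reachability statement is true, and your scheme proves it once you drop the depth bookkeeping:
\begin{enumerate}
\item Suppose the forward exploration from a vertex survives its first $\beta\log V$ steps. Then it has at least $(\lambda-1)t/2$ active vertices at every step $t\in[\beta\log V,V^{2/3}]$, except with probability $V^{-3}$ (Chernoff, with $\beta$ large).
\item Take a union bound over vertices and over $t$.
\item Run the backward exploration while avoiding forward-explored vertices.
\item Connect the two fresh active sets, each of order $V^{2/3}$; this fails with probability at most $\exp(-c'V^{1/3})$.
\end{enumerate}

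For distances, the true statement is the typical-distance one: for a fixed pair,
$\mathbb{P}[\ell_{a,b}>(1+\varepsilon)\log V/\log\lambda,\ X(a)\text{ and }Y(b)\text{ large}]\to0$.
Here the early phase is handled not by Lemma~\ref{lem:conponent_size} but by the branching-process fact that $Z_k/\lambda^k\to W>0$ almost surely on survival. Hence the depth needed to reach size $\log V$ is $O(\log\log V)$ with probability $1-o(1)$, though not uniformly in $a$. From there, your Chernoff growth and connection steps go through.
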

    With Lemma \ref{lem:conponent_size} and Lemma \ref{lem:diameter}, we can now give useful estimates of $|\cR|$.
    In particular, for any $\varepsilon>0$,
    \begin{align*}
        |\cR| &= |\{(a,b) \in \cR: a \in \cL_X, b\in \cL_Y\}| + |\{(a,b) \in \cR: a \in \cS_X \text{ or } b\in \cS_Y\}|\\
        & \leq (1-\rho(\lambda))^2 (1+\varepsilon/4) V^2 + 2(1+\varepsilon) V \beta \log V \\
        & \leq (1-\rho(\lambda))^2 (1+\varepsilon/3) V(V-1),
    \end{align*}
    with high probability. Similarly, for the lower bound,
    \begin{align*}
        |\cR| &= |\{(a,b) \in \cR: a \in \cL_X, b\in \cL_Y\}| + |\{(a,b) \in \cR: a \in \cS_X \text{ or } b\in \cS_Y\}|\\
        & \geq (1-\rho(\lambda))^2 (1-\varepsilon) V^2 \\
        & \geq (1-\rho(\lambda))^2 (1-\varepsilon) V(V-1),
    \end{align*}
    with high probability. By a union bound over all pairs of $(a,b)\in \cR$, we also have that
    \begin{align*}
        \sum_{(i,j)\in \cR} q_{i,j} &\leq \frac{1}{V(V-1)}\sum_{(a,b) \in \cR} \ell_{a,b} \\
        &= \frac{1}{V(V-1)}\sum_{\substack{(a,b) \in \cR\\ a\in \cL_X, b\in \cL_Y}} \ell_{a,b} + \frac{1}{V(V-1)}\sum_{\substack{(a,b) \in \cR\\ a\in \cS_X \text{ or } b\in \cS_Y}} \ell_{a,b}\\
        & \leq (1-\rho(\lambda))^2 (1+\varepsilon/2) \frac{\log V}{\log \lambda} + \frac{1}{V(V-1)} 2(1+\varepsilon) V (\beta \log V)^2\\
        & \leq (1-\rho(\lambda))^2 (1+\varepsilon) \frac{\log V}{\log \lambda},
    \end{align*}
    with probability larger than $1-V^2\exp(-V^\varepsilon \delta)$. Combining the above, for any $\varepsilon>0$, 
    \begin{align*}
        \bar q_{i,j} = \frac{\sum_{(i,j) \in \cR} q_{i,j}}{|\cR|} \leq  \frac{ (1+\varepsilon)\log V}{V(V-1) \log \lambda },
    \end{align*}
    with high probability. Therefore, for any $\varepsilon>0$,
    \begin{align*}
       \mathsf{Acc}(\bM_t) &\leq \frac{|\Ds|}{V(V-1)}+ \frac{|\cR|\left(1-(1-\bar q_{i,j} )^t \right)}{V(V-1)}\\
       & \leq (1+\varepsilon)\left( p + (1-\rho(\lambda))^2\left( 1-\left( 1-\frac{ (1+\varepsilon)\log V}{V(V-1) \log \lambda }\right)^t \right) \right),
   \end{align*}
   with high probability, which completes the proof of the upper bound. For the lower bound, we observe that if $i \sim j$ and $(i,j)\in \cR \backslash \Ds$, then $q_{i,j} \geq 1/V(V-1)$, because when $i$ and $j$ are chosen in the procedure, the edge $(i,j)$ will be added. This implies that
   \begin{align*}
       \mathsf{Acc}(\bM_t) &= \frac{|\Ds|}{V(V-1)}+ \sum_{\cR \backslash \Ds} \frac{1-(1-q_{i,j})^t}{V(V-1)}\\
       & \geq \frac{|\Ds|}{V(V-1)} + \frac{|\cR\backslash \Ds|}{V(V-1)} \left( 1- \left (1-\frac{1}{V(V-1)} \right )^t\right)\\
       & \geq (1-\varepsilon) \left( p+(1-\rho(\lambda))^2 \left( 1- \left (1-\frac{1}{V(V-1)} \right )^t\right) \right),
   \end{align*}
   with high probability which completes the proof of the lower bound.
\end{proof}
To obtain a more precise description of $\mathsf{Acc}(\bM_t)$, we use a Poisson branching process to approximate the cluster growth of vertices. A Poisson$(\lambda)$ branching process models a population evolving in time, where each individual independently gives birth to a number of children with Poisson$(\lambda)$ distribution. We denote by $Z_n$ the number of individuals in the $n$-th generation, where by default $Z_0=1$. Then $Z_n $ satisfies the recursion relation $Z_n=\sum_{i=1}^{Z_{n-1}} X_{n, i}$, where $\{X_{n, i}\}_{n, i \geq 1} $is a doubly infinite array of i.i.d. Poisson$(\lambda)$ random variables. The total progeny $Y_n$ is then defined as $Y_n = \sum_{i=0}^n Z_n$. $Z_n$ is often called a Galton--Watson branching process and the associated tree is called a Galton--Watson tree.

As in the previous proof, accurately estimating $\mathsf{Acc}(\bM_t)$ requires understanding $q_{i,j}$, the probability that edge $(i,j)$ is added in each round. As before, the only edges added are those connected to the giant component (i.e., $i \in \cL_X$ and $j \in \cL_Y$). The proportion of such edges converges to $C_\lambda$ as $V \to \infty$. Recall that
\begin{equation}\label{eqn:qij}
    q_{i,j} = \frac{\sum_{(a,b) \in\cR } \mathds{1}((i,j)\in \{(a, z^1), (a, z^2), \dots, (a, z^{k}), (a, b)\}) }{V(V-1)}    
\end{equation}
where $(a, z^1, z^1, \cdots, z^k, b)$ represents the shortest path in $\bM_0$ connecting $a$ and $b$. Equivalently, if we consider the tree generated by a breadth-first search in $\bM_0$ rooted at $i$, then since $i \sim j$, $j$ will be in the tree, and the numerator counts the total number of offspring of $j$ in the tree, including $j$ itself. This is the point at which a rigorous mathematical characterization of the tree becomes challenging. Instead, we approximate the tree and analyze its behavior. It is well-known that when $p=\lambda/V$, the cluster growth (or the breadth-first search at a vertex) can be approximated by a Poisson$(\lambda)$ branching process (see e.g., \cite{Hofstad2016,durrett2010random}). 
For fixed vertex $i$, we define $T$ as a Galton--Watson tree rooted at $i$ with Poisson$(\lambda)$ offspring distribution with depth $L$. We use $T$ to approximate the exploration process at $i$. For $0\leq \ell \leq L$, the number of vertices at level $L-\ell$ is approximately $\lambda^{L-\ell}$. Given that the total number of vertices in $T$ is approximately $(1-\rho(\lambda)) V$, the number of vertices at level $L-\ell$ is also $(1-\rho(\lambda)) V(\lambda-1)/\lambda^{\ell+1}$. For each vertex at level $L-\ell$, the number of its offspring (including itself) equals $k$ with probability $p_\ell(k)$. In this case, the numerator in \eqref{eqn:qij} equals $k$. Combining the above, there are around $(1-\rho(\lambda))V \cdot p_\ell(k)(1-\rho(\lambda))V(\lambda-1)/\lambda^{\ell+1}$ vertex pairs $(i,j)$ in the graph such that $i \in \cL_X$, $j \in \cL_Y$, $q_{i,j} = k/V(V-1)$ and $j$ is located at the $L-\ell$ level in the tree $T$. Ultimately, we arrive at an approximation of the form
\begin{align*}
    \mathsf{Acc}(\bM_t) \sim  p+ C_\lambda \left( 1-  \sum_{\ell=0}^\infty \frac{\lambda-1}{\lambda^{\ell+1}} \sum_{k=1}^\infty p_\ell(k)\left( 1-\frac{k}{V(V-1)} \right)^t \right).
\end{align*}

Beyond Erd\H{o}s-R\'enyi graphs, $q_{i,j}$ may not be as explicit. Defining $C$ as the proportion of vertex pairs $(i,j)$ such that $i \sim j$ in $\bM_0$, we find that $q_{i,j}$ is nonzero for $CV(V-1)$ pairs of vertices. Writing $a_k = k/V(V-1)$ and defining $\mu(k)$ as the probability that $q_{i,j} = a_k$, we obtain a general formula
\begin{align*}
    \mathsf{Acc}(\bM_t) \sim  p+ C \left(1-\sum_{k=1}^\infty \mu(k) \left(1-a_k \right)^t \right).
\end{align*}
The drawback of this formula is the lack of explicit expressions---for a given $\bM_0$, computing the measure $\mu(\cdot)$ is not simple.

We next provide a qualitative description of the mixture-of-exponentials shape.
\begin{lemma}\label{lem:shape}
For a fixed constant $0<C<1$ and a probability measure $\mu(\cdot)$ on $\mathbb{Z}_+$ with finite mean $m$, we define 
\begin{align*}
    f(t) = p+ C \left(1-\sum_{k=1}^\infty \mu(k) \left(1-\frac{k}{V(V-1)} \right)^{tV(V-1)} \right).
\end{align*}
Then we have that there exists $0<t_1<t_2$ such that
\begin{align*}
    f(t) = 
    \begin{cases}
    \Theta\left (p+ t \right), \quad &\text{ for } 0\leq t\leq t_1,\\
    \Theta(\log t ), \quad &\text{ for } t_1\leq t\leq t_2,\\
    \Theta(1), \quad &\text{ for } t\geq t_2,
    \end{cases}
\end{align*}
as $V \to \infty$.
\end{lemma}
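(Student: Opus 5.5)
The plan is to first remove the $V$-dependence and then read off the three regimes from a single Laplace transform. Write $N=V(V-1)$. For each fixed $k$ we have $(1-k/N)^{tN}\to e^{-kt}$ as $V\to\infty$. Since every term lies in $[0,1]$ and $\mu$ is a probability measure, dominated convergence gives
\begin{align*}
f(t)\to f_\infty(t):=p+C\bigl(1-g(t)\bigr), \qquad g(t)=\sum_{k\ge1}\mu(k)e^{-kt}.
\end{align*}
The convergence is uniform in $t$ on compact sets. It also holds uniformly on all of $[0,\infty)$, because $0\le (1-k/N)^{tN}\le e^{-kt}$ and the tail $\sum_{k>K}\mu(k)$ is small. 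So it suffices to prove the three-regime statement for $f_\infty$, with constants that do not depend on $V$.

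\textbf{Step 1 (linear regime).} The function $1-g(t)=\sum_k\mu(k)(1-e^{-kt})$ is increasing and concave. Using $1-e^{-x}\le x$ we get the upper bound $1-g(t)\le mt$. Using $1-e^{-x}\ge x/2$ for $x\le1$, restricted to $k\le K$ where $K$ is chosen so that $\mu(\{k\le K\})\ge1/2$, we get the lower bound $1-g(t)\ge t/4$ for $t\le 1/K$. Hence $f_\infty(t)=\Theta(p+t)$ on $[0,t_1]$ with $t_1=1/K$.

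\textbf{Step 2 (plateau).} Since $\mu$ lives on $\mathbb{Z}_+$, we have $g(t)\le e^{-t}$. Therefore $1-g(t)\in[1-e^{-t},1]$, and $f_\infty(t)\in[p+C(1-e^{-t_2}),\,p+C]$ for $t\ge t_2$. This is $\Theta(1)$.

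\textbf{Step 3 (intermediate regime).} On $[t_1,t_2]$ the function $f_\infty$ is continuous, increasing, and bounded between the positive constants $f_\infty(t_1)$ and $p+C$. Choose $t_2>t_1$ with $t_2\ge e$; if needed, also require $t_1>1$ in the $\log t$ comparison. Then $\log t$ is bounded between positive constants on the part of $[t_1,t_2]$ where it is positive, so $f_\infty=\Theta(\log t)$ there in the sense of the lemma.

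The main obstacle is making Step 3 meaningful rather than merely formal. The literal $\Theta$ statement is cheap on a compact interval, whereas the interesting content is a genuine log-linear window between $t_1$ and $t_2$, with a ratio $t_2/t_1$ that can be large. For that I would take $\mu$ of the branching-process form derived above: the weight at level $\ell$ is $\frac{\lambda-1}{\lambda^{\ell+1}}$, and the offspring count at that level concentrates near $k_\ell\approx\lambda^{\ell}$, so the rates are spread geometrically. Then
\begin{align*}
1-g(t)\approx\sum_\ell w_\ell\bigl(1-e^{-k_\ell t}\bigr),
\end{align*}
and the terms with $k_\ell t\gtrsim1$ contribute roughly equal amounts per level. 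The number of levels switched on by time $t$ is about $\log t/\log\lambda$, which gives the $\Theta(\log t)$ growth until the levels are exhausted, where the plateau of Step 2 takes over. I would try to make this rigorous by bounding the sum between two Riemann-type sums over geometric scales.
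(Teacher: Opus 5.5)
Steps 1--3 are essentially the paper's proof: the plateau comes from $g(t)\le e^{-t}$, the linear upper bound is $p+Cmt$, and the middle regime is just monotonicity and boundedness on a compact interval. As written, however, your choices of $t_1$ are inconsistent. Your truncation argument gives the linear lower bound only on $[0,1/K]$, so $t_1=1/K\le 1$. Step 3 needs $t_1>1$ strictly. On $[t_1,t_2]$ you have $f\ge f(t_1)>0$, while $\log t\le 0$ for $t\le 1$ and $\log t\to 0$ as $t\downarrow 1$. So restricting to ``the part of $[t_1,t_2]$ where it is positive'' does not give $\Theta(\log t)$.

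The paper avoids this by fixing $1<t_1<t_2$ first. It then gets the linear lower bound on all of $[0,t_2]$ from concavity of $f$ in $t$, via the chord from $(0,p)$ to $(t_2,f(t_2))$: $f(t)\ge p+tC(1-e^{-t_2})/t_2$. In your notation the same bound follows from your own Step 2: $1-g(t)\ge 1-e^{-t}\ge t(1-e^{-t_1})/t_1$ on $[0,t_1]$, by concavity of $1-e^{-t}$. The truncation at $K$ is therefore unnecessary.

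There are two smaller points.

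First, since $(1-k/N)^{tN}\le e^{-kt}$, you have $f\ge f_\infty$. Your lower bounds therefore transfer to $f$ for free, but the upper bound $p+Cmt$ does not follow from uniform convergence alone. The problem is near $t=0$, where $p+t$ can be as small as $p=\lambda/V$. You can either bound $f$ directly with Bernoulli's inequality $(1-k/N)^{tN}\ge 1-tk$, as the paper does, or use the rate $\sup_t|f-f_\infty|=O(m/V^2)=o(p)$.

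Second, your last paragraph is not needed for the lemma as stated; the paper itself remarks that $\log t$ in the middle regime is not canonical. The heuristic behind it is also off. A level with $k_\ell t\gtrsim 1$ contributes about $w_\ell\propto\lambda^{-\ell}$, which is not constant across levels; it is $w_\ell k_\ell$ that is constant. Those weights therefore give growth of order $t(1+\log(1/t))$ for $\lambda^{-L}\lesssim t\lesssim 1$, rather than $\log t$. In addition, their mean is of order $\log V/\log\lambda$, which lies outside the lemma's fixed-finite-mean hypothesis.
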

\begin{proof}[Proof of Lemma \ref{lem:shape}]
Fix any $1<t_1<t_2$. Note that $f(t)$ is monotone increasing, concave and always bounded by $1$. We also have
\begin{align*}
    f(t_2) \geq p+C \left(1- \left(1-\frac{1}{V(V-1)} \right)^{t_2 V(V-1)} \right) \geq p+C(1-\exp(-t_2)) = \Theta(1).
\end{align*}
So $f(t) = \Theta(1)$ when $t\geq t_2$. Now when $t\leq t_1$,
\begin{align*}
    f(t) \leq p+ C \left(1- \sum_{k=1}^\infty \mu(k) (1-tk) \right)\leq p+ C  m t.
\end{align*}
Since $f(0) = p$ and $f(t_2) \geq p+C(1-\exp(-t_2))$, by concavity, $f(t)$ is lower bounded by $p+t C(1-\exp(-t_2))/t_2 = \Theta(p+t)$ for any $0\leq t\leq t_1$. Finally for $t_1\leq t\leq t_2$, we note that $f(t_1)\leq f(t)\leq 1$, so easily, $f(t) \leq \log t_1/\log t_1 \leq \log t/\log t_1 = O(\log t)$. Similarly, $f(t) \geq f(t_1) \log t_2/\log t_2 \geq \log t (f(t_1)/\log t_2) \geq \Omega(\log t)$. Therefore, $f(t) = \Theta(\log t)$ for any $t_1\leq t \leq t_2$.
\end{proof}

\subsection{More details on the mixture of exponential shape}
\label{sec:curve-fitting-details}
We provide additional discussion on the mixture-of-exponentials shape, including how we fit it to empirical EntiGraph CPT QA accuracy.
\paragraph{Sketch of derivation} Intuitively, edge $(i,j)$ will eventually be added if and only if $j$ is reachable from $i$ in the original graph $\bM_0$. This explains the limiting behavior of $\mathsf{Acc}(\bM_t)$ as $t$ approaches infinity: the proportion of links converges to the proportion of connected vertex pairs in $\bM_0$. To understand the mixture-of-exponentials functional form, consider that at time $t$, the probability of adding each vertex pair follows an exponential pattern, with different vertex pairs exhibiting different growth rates. Consider a breadth-first search in $\bM_0$ starting from vertex $i$. If $j$ is close to the root, many paths from $i$ to other vertices pass through $j$, making $(i,j)$ more likely to be included in each iteration. In contrast, if $j$ is far from the root (e.g., at the end of the exploration process), fewer such paths exist, making $(i,j)$ less likely to be included. This accounts for the mixture-of-exponentials shape, where the mixture reflects the distance of each vertex from the root, the number of such vertices, and their corresponding exponential growth rates.

\paragraph{Qualitative description} \begin{figure}[ht]
\subfigure[\label{fig:Acc1} Linear regime]{\includegraphics[width=0.32\textwidth]{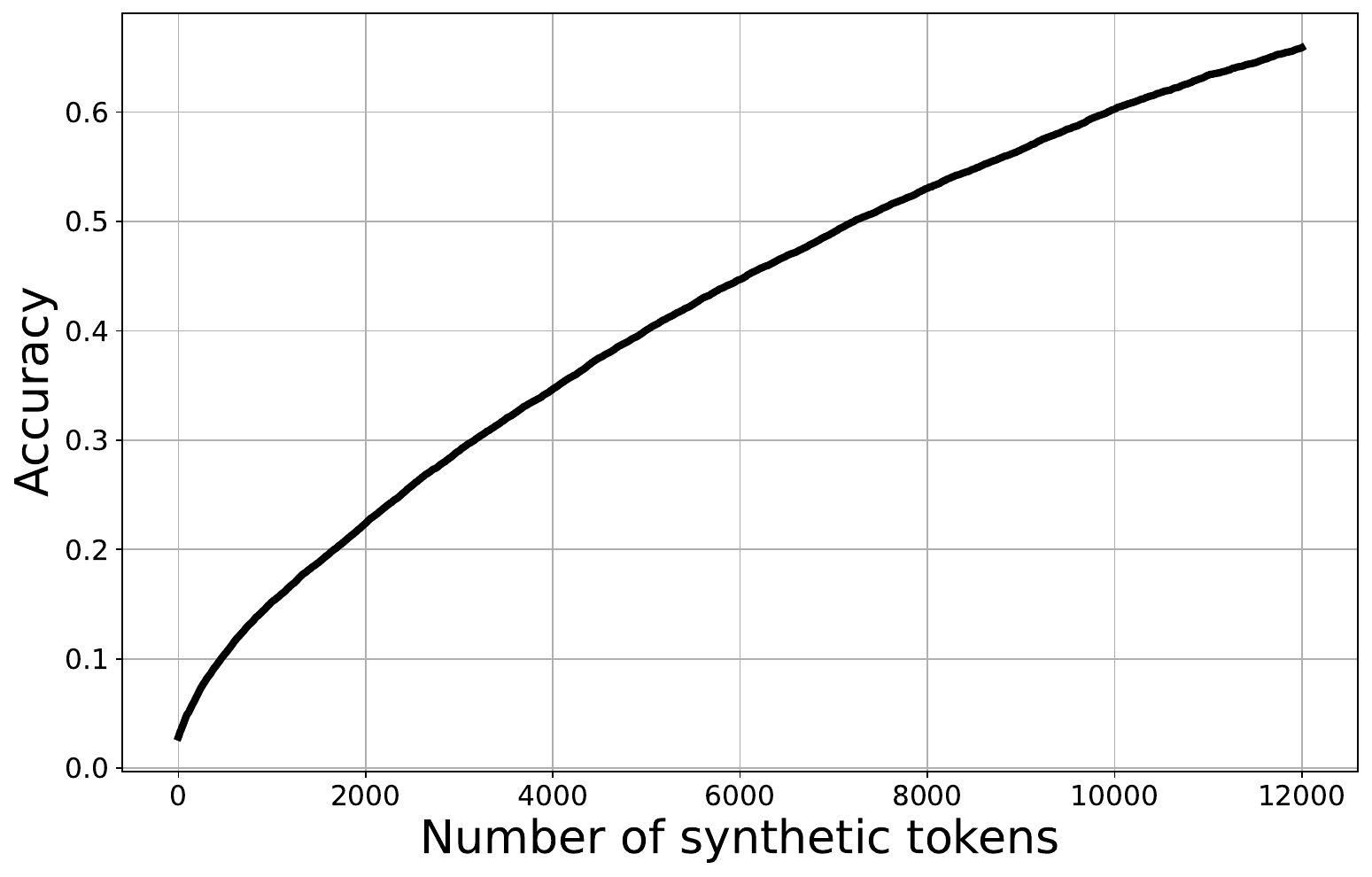}}
\subfigure[\label{fig:Acc2} Log-linear ($t$ in log scale)]{\includegraphics[width=0.32\textwidth]{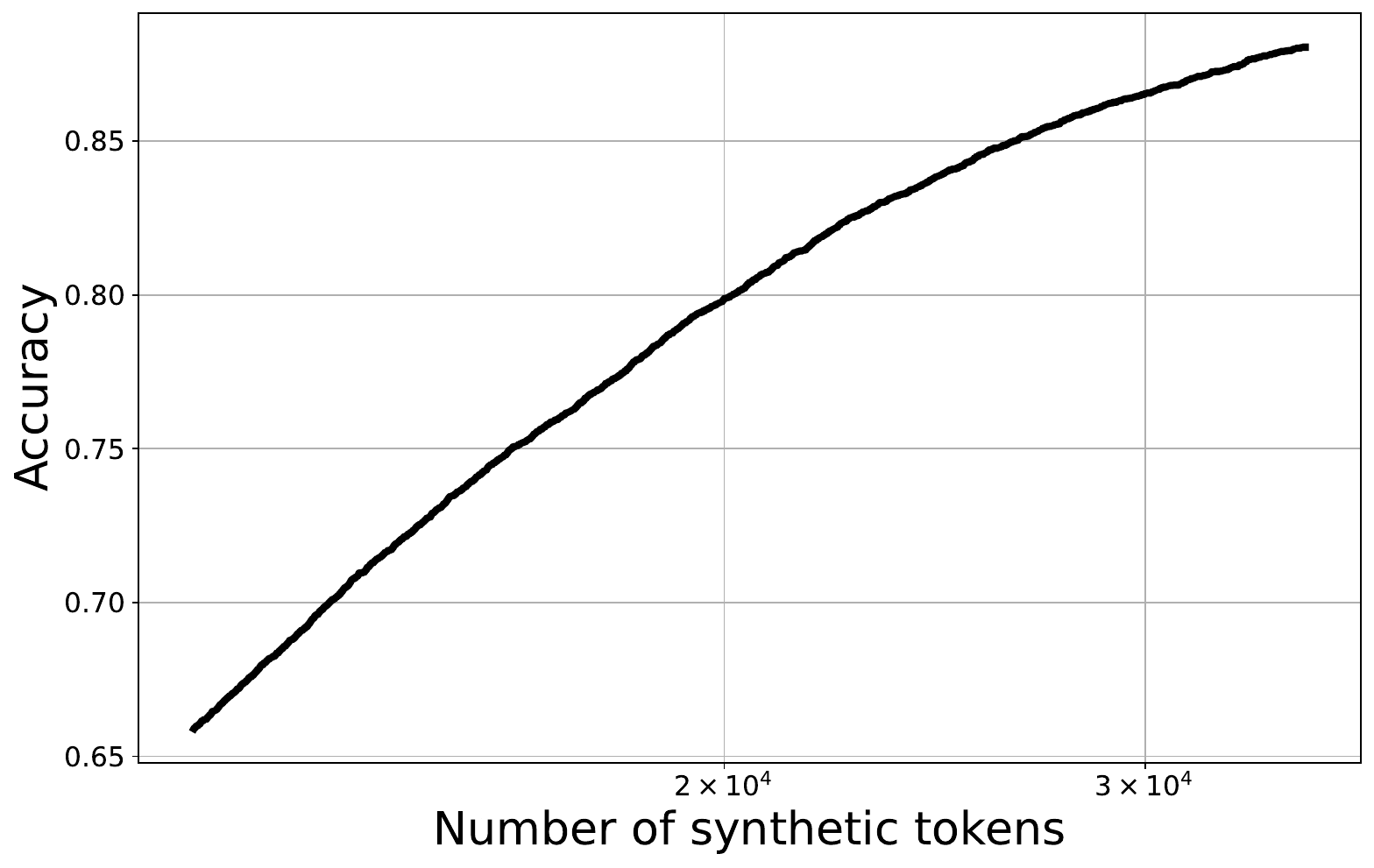}}
\subfigure[\label{fig:Acc3} Plateau regime]{\includegraphics[width=0.32\textwidth]{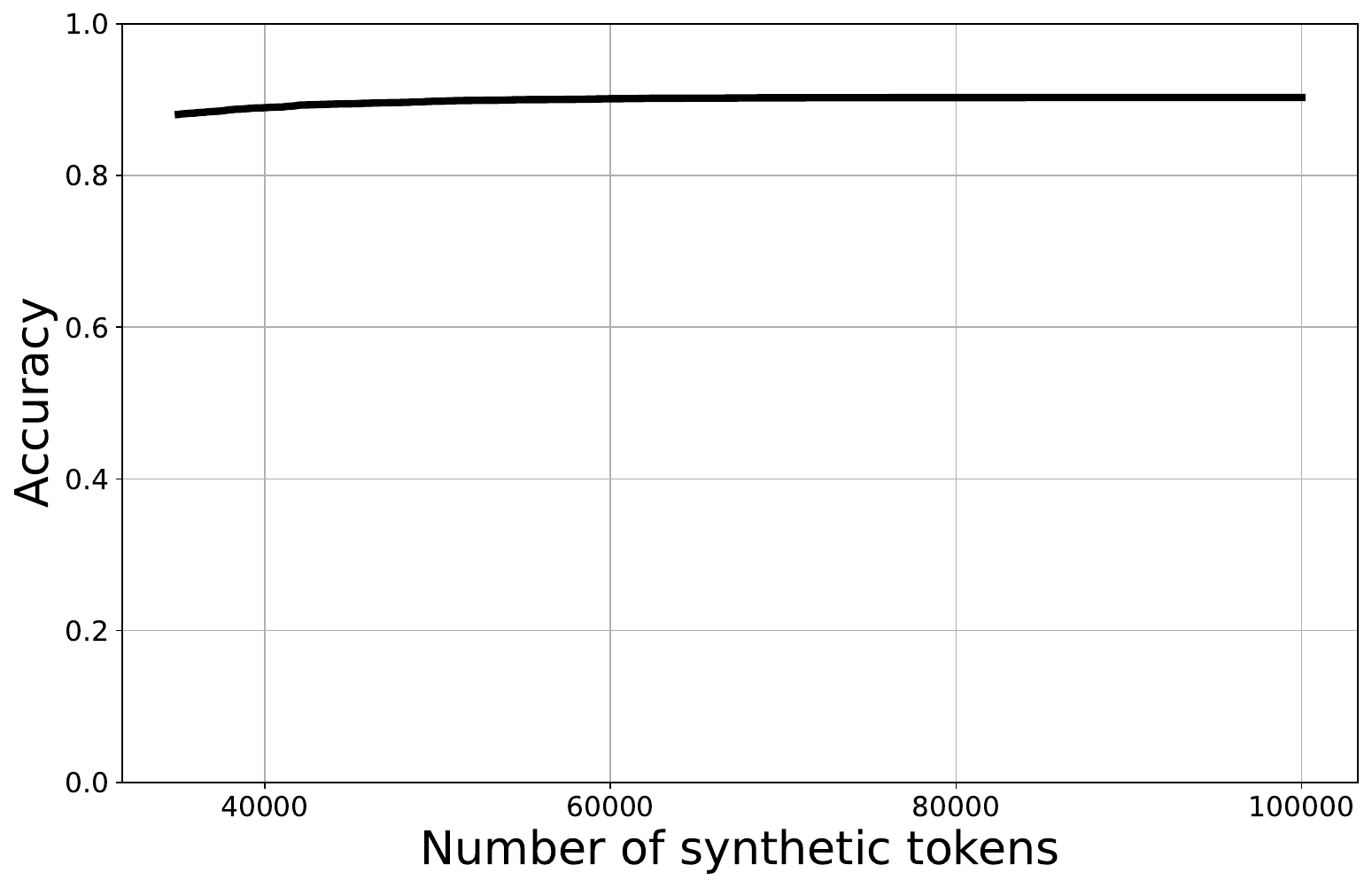}}
\caption{
Accuracy $\mathsf{Acc}(\bM_t)$ with respect to time $t$, for $V = 100$ and $p=0.03$. The mixture-of-exponentials functional form in \eqref{eqn:moe} leads to three distinct regimes.}
\label{fig:Acc}
\end{figure}
We now provide a qualitative description of the mixture-of-exponentials shape.
As shown in Appendix \ref{sec:appendix-proof}, this shape comprises three distinct phases: a fast growth phase, a slower growth phase, and a plateau phase.
We show the existence of two distinct times, $0 < t_1 < t_2$, such that
\begin{align*}
    \mathsf{Acc}(\bM_{T}) = 
    \begin{cases}
    \Theta\left (p+ t \right), \quad &\text{ for } 0\leq t\leq t_1,\\
    \Theta(\log t ), \quad &\text{ for } t_1\leq t\leq t_2,\\
    \Theta(1), \quad &\text{ for } t\geq t_2,
    \end{cases}
\end{align*}
where we use a convenient change of variable $T = t V(V-1)$.
The choice of $\log t$ in the second phase is not necessarily canonical---the bound holds for any well-behaved monotone increasing concave function as a replacement for $\log t$.
We use this representation for two reasons: first, it aligns with performance observed in our EntiGraph CPT numerical results; second, it reflects the gradual slowdown in growth. Figure \ref{fig:Acc} illustrates the three phases in a simulation of the toy model with $p = 0.03$.

\label{sec:appendix-curve-fit}
To perform curve fitting using the mixture-of-exponentials formula, we approximate the infinite sum with three terms:
\begin{align*}
    \mathsf{Acc}(\bM_t) \sim  p+ C \left(1-\sum_{k=1}^\infty \mu(k) \left(1-a_k \right)^t \right).
\end{align*}
We fit the empirical observations to the formula
\[
y(x) = a - b_1r_1^x - b_2r_2^x - b_3r_3^x,
\]
where $x$ is the EntiGraph token count (in millions) and $y(x)$ is the QuALITY QA accuracy.
We use the non-linear least squares method implemented by \cite{2020SciPy-NMeth}.
This procedure yields the fitted formula
\[
y(x) = 64.5456 - 13.8352\times(0.9989)^x - 8.4705\times(0.8961)^x - 3.932\times(0.0546)^x.
\]
We provide the implementation in our code release.

\newpage
\section{Synthetic data generation prompts}
\label{sec:synthetic-data-prompts}
We generate two synthetic corpora: EntiGraph (Appendix \ref{sec:appendix-entigraph-prompts}) and the Rephrase baseline (Appendix \ref{sec:appendix-rephrase-prompts}).
In our experiments, $\Ds$ is a collection of documents $D$, and we apply our synthetic augmentation procedure to each document $D\in\Ds$.
We focus on a single document $D$ for the remainder of this section.

\subsection{EntiGraph prompts}
\label{sec:appendix-entigraph-prompts}
The EntiGraph procedure is described in \S\ref{sec:entigraph-method}.
We recap the steps below.
\paragraph{Step 1: entity extraction} We first extract salient entities from document $D$ using the \texttt{entity\_extraction} operation (Step 1, \S\ref{sec:entigraph-method}).
The complete \texttt{entity\_extraction} prompt is:
\begin{qualitativeBox}
{\footnotesize
\begin{verbatim}
As a knowledge analyzer, your task is to dissect and understand an article provided by the user.
You are required to perform the following steps:
1. Summarize the Article: Provide a concise summary of the entire article, capturing the main 
points and themes.
2. Extract Entities: Identify and list all significant "nouns" or entities mentioned within the 
article. These entities should include but not limited to:
    * People: Any individuals mentioned in the article, using the names or references provided.
    * Places: Both specific locations and abstract spaces relevant to the content.
    * Object: Any concrete object that is referenced by the provided content.
    * Concepts: Any significant abstract ideas or themes that are central to the article's 
      discussion.

Try to exhaust as many entities as possible. Your response should be  structured in a JSON format 
to organize the information effectively. Ensure that the summary is brief yet comprehensive, and 
the list of entities is detailed and accurate.

Here is the format you should use for your response:

{
  "summary":  "<A concise summary of the article>",
  "entities": ["entity1", "entity2", ...]
}
\end{verbatim}
}
\end{qualitativeBox}

\paragraph{Step 2: relation analysis} We next generate diverse descriptions of relations among two or more entities.
For each document $D$, we enumerate all entity pairs and generate a description for each.
The prompt for generating a description relating a pair of entities is:
\begin{qualitativeBox}
{\footnotesize
\begin{verbatim}
You will act as a knowledge analyzer tasked with dissecting an article provided by the user. Your 
role involves two main objectives:
1. Rephrasing Content: The user will identify two specific entities mentioned in the article. You 
   are required to rephrase the content of the article twice:
   * Once, emphasizing the first entity.
   * Again, emphasizing the second entity.
2. Analyzing Interactions: Discuss how the two specified entities interact within the context of 
   the article.
Your responses should provide clear segregation between the rephrased content and the interaction 
analysis. Ensure each section of the output include sufficient context, ideally referencing the
article's title to maintain clarity about the discussion's focus. Here is the format you should 
follow for your response:

### Discussion of <title> in relation to <entity1>
<Rephrased content focusing on the first entity>

### Discussion of <title> in relation to <entity2>
<Rephrased content focusing on the second entity>

### Discussion of Interaction between <entity1> and <entity2> 
    in context of <title>
<Discussion on how the two entities interact within the article>
\end{verbatim}
}
\end{qualitativeBox}
We also generate synthetic data involving three entities, using the prompt below:
\begin{qualitativeBox}
{\footnotesize
\begin{verbatim}
You will act as a knowledge analyzer tasked with dissecting an article provided by the user. Your 
role involves three main objectives:

1. Rephrasing Content: The user will identify three specific entities mentioned in the article.
You are required to rephrase the content of the article three times:
   * Once, emphasizing the first entity.
   * Again, emphasizing the second entity.
   * Lastly, emphasizing the third entity.
2. Analyzing Interactions: Discuss how these three specified entities interact within the context 
   of the article.

Your responses should provide clear segregation between the rephrased content and the interaction 
analysis. Ensure each section of the output include sufficient context, ideally referencing the
article's title to maintain clarity about the discussion's focus. Here is the format you should 
follow for your response:

### Discussion of <title> in relation to <entity1>
<Rephrased content focusing on the first entity>

### Discussion of <title> in relation to <entity2>
<Rephrased content focusing on the second entity>

### Discussion of <title> in relation to <entity3>
<Rephrased content focusing on the third entity>

### Discussion of Interaction between <entity1>, <entity2> and
    <entity3> in context of <title>
<Discussion on how the three entities interact within the article>
\end{verbatim}
}
\end{qualitativeBox}

\subsection{Rephrase prompts}
\label{sec:appendix-rephrase-prompts}
For the rephrase corpus, we adapt the prompt from \cite{wrap} to our setting of books and articles.
We use four rephrase styles:

\textbf{Easy rephrase:}
\begin{qualitativeBox}
{\footnotesize
\begin{verbatim}
You are an assistant to help read a article and then rephrase it in simpler terms. The user will 
provide you with an article with title, year, content. You need to generate a paraphrase of the
same article using a very small vocabulary and extremely simple sentences that a toddler will 
understand. Remember to keep the meaning and every content of the article intact, including the
title, year, etc.
\end{verbatim}
}
\end{qualitativeBox}

\textbf{Medium rephrase:}
\begin{qualitativeBox}
{\footnotesize
\begin{verbatim}
You are an assistant to help read a article and then rephrase it in different terms. The user will 
provide you with an article with title, year, content. You need to generate a paraphrase of the
same article using diverse and high quality English language as in sentences on Wikipedia.
Remember to keep the meaning and every content of the article intact, including the title, year, 
etc.
\end{verbatim}
}
\end{qualitativeBox}

\textbf{Hard rephrase:}
\begin{qualitativeBox}
{\footnotesize
\begin{verbatim}
You are an assistant to help read a article and then rephrase it in more sophisticated terms. The 
user will provide you with an article with title, year, content. You need to generate a paraphrase 
of the same article using very terse and abstruse language that only an erudite scholar will 
understand. Remember to keep the meaning and every content of the article intact, including the 
title, year, etc.
\end{verbatim}
}
\end{qualitativeBox}

\section{Additional evaluation details of main experiments}
\label{sec:app_additional_eval_details}

\subsection{QuALITY QA question set}
\label{sec:appendix-qa-eval-detail}
We provide additional details on evaluation using the QuALITY QA test queries.
Throughout the closed-book QA experiments, we use the following fixed 5-shot prompt:
\begin{qualitativeBox}
{\footnotesize
\begin{verbatim}
## Example 1
### Question
In the context of "Les Misérables", written by Victor Hugo in 1862, what is the main setting of
the novel? There is only one correct choice.
### Choices
A. London
B. Madrid
C. Paris
D. Rome
### Thought Process and Answer
Thought process: "Les Misérables" is primarily set in Paris, making C the correct choice. 
London, Madrid, and Rome are significant cities in other literary works but not in Victor 
Hugo's "Les Misérables". There is only one correct choice.
Answer: C.

## Example 2
### Question
In the context of "Brave New World", written by Aldous Huxley in 1932, what substance is 
widely used in the society to control citizens' happiness? There is only one correct choice.
### Choices
A. Gold
B. Soma
C. Silver
D. Iron
### Thought Process and Answer
Thought process: In Aldous Huxley's "Brave New World," Soma is used as a means to maintain 
social control by ensuring citizens' happiness, making B the correct choice. Gold, Silver, and 
Iron are not the substances used for this purpose in the book.
Answer: B.

## Example 3
### Question
In the context of "Romeo and Juliet", written by William Shakespeare in the early 1590s, what 
are the names of the two feuding families? There is only one correct choice.
Choices:
A. Montague and Capulet
B. Bennet and Darcy
C. Linton and Earnshaw
D. Bloom and Dedalus
### Thought Process and Answer
Thought process: In William Shakespeare's "Romeo and Juliet," the two feuding families are the 
Montagues and the Capulets, making A the correct choice. The Bennets and Darcys are in "Pride 
and Prejudice", the Lintons and Earnshaws in "Wuthering Heights", and Bloom and Dedalus in 
"Ulysses".
Answer: A.\end{verbatim}
}
\end{qualitativeBox}

\begin{qualitativeBox}
{\footnotesize
\begin{verbatim}

## Example 4
### Question
In the context of "1984", written by George Orwell in 1949, what is the name of the 
totalitarian leader? There is only one correct choice.
### Choices
A. Big Brother
B. O'Brien
C. Winston Smith
D. Emmanuel Goldstein
### Thought Process and Answer
Thought process: In George Orwell's "1984," the totalitarian leader is known as Big Brother, 
making A the correct choice. O'Brien is a character in the novel, Winston Smith is the 
protagonist, and Emmanuel Goldstein is a rebel leader.
Answer: A.

## Example 5
### Question
In the context of "Moby-Dick", written by Herman Melville in 1851, what is the name of the 
ship's captain obsessed with hunting the titular whale? There is only one correct choice.
### Choices
A. Captain Hook
B. Captain Nemo
C. Captain Flint
D. Captain Ahab
### Thought Process and Answer
Thought process: In Herman Melville's "Moby-Dick," the ship's captain obsessed with hunting 
the whale is Captain Ahab, making D the correct choice. Captain Nemo is in "Twenty Thousand 
Leagues Under the Sea", Captain Flint in "Treasure Island", and Captain Hook in "Peter Pan".
Answer: D.

## Example 6
\end{verbatim}
}
\end{qualitativeBox}
If the model correctly follows the few-shot prompt format, its last two characters should be ``\texttt{A.}'', ``\texttt{B.}'', ``\texttt{C.}'', or ``\texttt{D.}''.
However, the model sometimes fails to follow the few-shot prompting format, particularly the continually pretrained model.
In all our evaluations, we sample 64 responses and select only those that parse to the correct format.
From these valid attempts, we randomly select one as the final answer.
This is \emph{different} from majority voting in self-consistency prompting \citep{wang2023selfconsistency}.

\subsection{Closed-book summarization}
\label{sec:appendix-eval-summary-detail}

\paragraph{Automated evaluation metric} We use a three-stage evaluation procedure:
(i) We use GPT-4\footnote{Specifically, we use the \texttt{gpt-4-turbo} model as of Aug. 19, 2024.} to break the summary into atomic claims, similar to \cite{min2023factscorefinegrainedatomicevaluation};
(ii) We provide both the list of claims and the source article to a judge model (also GPT-4).
The judge determines whether each claim is true or false based on the source article.
If the claim is true, we further ask the model to determine whether the claim is salient (contributes to the main message) or cosmetic (factual details that do not aid understanding).
(iii) For each summary, we obtain the number of false and salient claims and normalize by the corresponding count from the human summary.
We report the average of these normalized metrics across the \quality~corpus articles in Figure \ref{fig:exp-summaryeval}.

\paragraph{Prompts to generate summaries} For summarization evaluation with EntiGraph Instruct and Raw Instruct, we use the following two prompts to obtain summaries of increasing length.
\begin{table}[ht]
\centering
\small
\begin{tabular}{l p{12.5cm}}
\toprule
\ding{228} & \textbf{Short prompt:} \texttt{Summarize the article \{article title\} by \{author name\} for me.} \\
\midrule
&  \texttt{Give a short summary of ``Cosmic Yo-Yo'' by Ross Rocklynne.} \\
\midrule[\heavyrulewidth]
\ding{228} &\textbf{Long prompt:} \texttt{Write an extremely long and detailed article regarding the book \{article title\} by \{author name\}.}  \\
\midrule
&  \texttt{Write an extremely long and detailed article regarding the book ``Cosmic Yo-Yo'' by Ross Rocklynne.} \\
\bottomrule
\end{tabular}
\caption{Summarization prompt for EntiGraph Instruct, Raw Instruct, and Reprhase Instruct.}
\label{tbl:appendix-summary-prompts}
\end{table}

We show three examples of summarization outputs below.
For each example, we first present the human summary for context, then present the short summary from the two summarizers.

\paragraph{Example 1} The first example is ``Cosmic Yo-Yo'' by Ross Rocklynne.
\begin{qualitativeBox}
\textbf{Human summary:}
Bob Parker, the President of Interplanetary Hauling \& Moving Co., sells asteroids to wealthy people on earth. Clients ask for asteroids with size parameters and specifications, and Bob finds them in space and hauls them to earth. His company is almost bankrupt because a rival company, Saylor \& Saylor, stole his idea and now offers the same services. Bob receives mail from Mr. Andrew S. Burnside with a request for an asteroid that he would like to use in an upcoming wedding.
Bob and his partner Queazy set out to find the perfect asteroid for Mr. Burnside, although they know it’s a longshot. Fairly quickly, they find one that looks perfect. The men land on the asteroid, and Bob deploys his atomic-whirl spectroscope to test it. Suddenly, a beautiful woman interrupts him and demands that they leave the asteroid. She pulls out her spasticizer gun before telling them that they can have it in a month after she’s gone. Bob explains that they are desperate, but the girl retorts that her fate is worse than death if she leaves.

Suddenly, the Saylor brothers’ ship appears, and Bob tells the girl that they have to fight this enemy together. Wally and Billy Saylor, along with three other men, jump out of the ship. Bob tells them that Mr. Burnside has ordered this asteroid, and the Saylor 
brothers say that they received the same order. Bob quickly grabs the girl’s spasticizer while Queazy throws his body at Billy. However, Wally manages to shoot the gun out of Bob’s hand and attack him. Bob is knocked unconscious in the scuffle. 

When Bob wakes up, he is completely alone, floating in space. He panics because he has very little oxygen left. Finally, he hears Queazy’s voice explaining that the girl used her ship’s technology to find them both. The mystery girl introduces herself as Starre Lowenthal, the granddaughter of Mr. Burnside. She concedes that this entire mission was fake. She told her grandfather that she would only marry her fiance Mac if he could get this particular asteroid, and then she made plans to conquer and protect the asteroid so it could not be supplied for the wedding. 

Bob is confident that they can reach the Saylor brothers before they bring the asteroid back to earth, but his plan does nothing to protect Starre from marrying a man she doesn’t love. She agrees to help Bob and Queazy. Within five days, Bob realizes he is in love with Starre. 

Starre compares her small ship to a yo-yo, and Bob gets an idea - they will use Starre’s ship like a yo-yo to retrieve the asteroid from the Saylor brothers. Once the team catches up to the Saylor brothers, Bob flings Starre’s ship at the asteroid several times, and Wally calls them to tell them that they might die as a result of the damage their ship has sustained. Bob makes it clear that they have no intention of stopping, and the Saylor brothers release the asteroid.     
\end{qualitativeBox}

\begin{qualitativeBox}
\textbf{EntiGraph Instruct with short prompt:} Cosmic Yo-Yo is a science fiction story by Ross Rocklynne about Bob Parker and Queazy, two interplanetary haulers who are hired to deliver an asteroid to Earth for a wealthy client's backyard wedding. However, they face competition from their rivals, the Saylor brothers, who also want to deliver the asteroid. The story follows their race to deliver the asteroid before the deadline, using a yo-yo-like device to maneuver the asteroid and outsmart their rivals. The story is filled with action, adventure, and a touch of romance.
\end{qualitativeBox}

\begin{qualitativeBox}
\textbf{Raw Instruct with short prompt:} Cosmic Yo-Yo by Rocklynne, Ross is a science fiction novel that follows the story of a young woman named Susan who discovers a mysterious device that can transport her through time and space. She uses the device to travel to different planets and meet different alien species, but soon realizes that her actions have consequences and she must find a way to return home. The novel explores themes of identity, responsibility, and the consequences of our actions.
\end{qualitativeBox}

\begin{table}[ht]
    \centering
    \small
    \begin{tabular}{l p{13.5cm}}
    \toprule
    \ding{228} & \textbf{Explicit article reference instruction:}  Summarize the article ``Defining Decay Down'' by ``David Plotz'' for me. \\
    \midrule
    & The article "Defining Decay Down" by David Plotz discusses how the dental industry has changed over the years. In the past, dentists were primarily focused on treating cavities and other dental issues, but as the prevalence of cavities decreased due to the widespread use of fluoride, the industry had to adapt. Dentists began to focus on cosmetic dentistry, offering services like teeth whitening and orthodontics to appeal to the vanity of aging baby boomers. The article also discusses how dentists have become more customer-friendly, with amenities like virtual reality glasses and massage therapy. Despite the shift towards elective procedures, the dental industry has continued to thrive, with dentists earning higher salaries than ever before. \\
    \midrule[\heavyrulewidth]
    \ding{228} &\textbf{Implicit article reference:} How has the dentistry practice in United States has changed?  \\
    \midrule
    &  The dentistry practice in the United States has undergone significant changes in recent years. Here are some of the ways in which it has changed:

    1. Focus on cosmetic dentistry: In recent years, there has been a growing demand for cosmetic dental procedures such as teeth whitening, orthodontics, and veneers. Dentists have responded to this demand by offering more cosmetic services.

    2. Use of technology: Dentists are now using advanced technology such as intra-oral cameras, 3D imaging, and laser technology to diagnose and treat dental problems. This technology has made dental procedures more accurate and efficient.

    3. Emphasis on preventive care: Dentists are now focusing more on preventive care to help patients avoid dental problems. This includes regular check-ups, cleanings, and fluoride treatments.

    4. Increased competition: With the rise of dental schools and the influx of foreign-trained dentists, the dental industry has become more competitive. Dentists are now offering more services and amenities to attract and retain patients.

    5. Focus on patient experience: Dentists are now focusing on providing a positive patient experience. This includes offering amenities such as virtual reality glasses, massage therapy, and entertainment during procedures.

    Overall, the dentistry practice in the United States has become more patient-centered, technology-driven, and focused on preventive care and cosmetic procedures. \\
    \midrule[\heavyrulewidth]
    \ding{228} &\textbf{Cross article instruction:} Compare David Plotz's commentary style between how he analyze American dentistry and how he discuss the movie Fight Club?  \\
    \midrule
    &  David Plotz's commentary style is different when he analyzes American dentistry and when he discusses the movie Fight Club. In his analysis of American dentistry, Plotz uses a critical and investigative approach, highlighting the over-commercialization and unnecessary procedures in the industry. He uses statistics and expert opinions to support his arguments and provides practical advice for readers to avoid unnecessary dental procedures.

    On the other hand, when discussing the movie Fight Club, Plotz adopts a more satirical and humorous tone. He uses pop culture references and witty commentary to analyze the themes and messages of the movie. He also provides a critical analysis of the film's characters and their motivations, while also acknowledging the movie's cultural impact.

    Overall, Plotz's commentary style is more serious and investigative when analyzing American dentistry, while his commentary on Fight Club is more satirical and humorous. \\
    \bottomrule
    \end{tabular}
    \caption{Complete instruction following example used in Table \ref{tbl:exp-instruct-example} from Section \ref{sec:exp-instruct-result}.}
    \label{tbl:appendix-instruct-example}

\end{table}

\clearpage

\chapter{Supplementary materials for Chapter~\ref{chap:sbp}}
\section{Additional details on synthetic bootstrapped pretraining}

\subsection{SBP implementation details}
\label{sec:experiment-details}

We now describe the implementation details of SBP outlined in \S\ref{sec:method-description}.

\paragraph{Nearest neighbor pairing} Recall from \S\ref{sec:experiment-setup} that we work with 3B and 6B-parameter transformer architectures and pretraining datasets at the $\|\Dpre\|=$10B and $\|\Dpre\|=$50B scales.
To enable efficient ANN search at pretraining scale, we embed documents from $\Dpre$ as 1{,}024-dimensional vectors using Qwen3-Embedding-0.6B.
We then use ScaNN \citep{guo2020accelerating,sun2023soar} with 8-bit quantization to perform efficient similarity search.
We adopt asymmetric sharding for keys and value vectors.
For each value vector, we build a ScaNN search tree with $\sqrt{N}$ leaves, where $N$ is the number of vectors in each value shard.
To distribute the key shards across each search tree, we employ a ``salting'' strategy: we create multiple copies of the ScaNN searcher and assign one key shard to each salted copy (Figure \ref{fig:scann-flow}).
This design enables us to perform a top-200 nearest neighbor search over $|\Dpre|=$60M documents within 155M CPU hours.

\begin{figure}[ht]
\centering
\includegraphics[width=\textwidth]{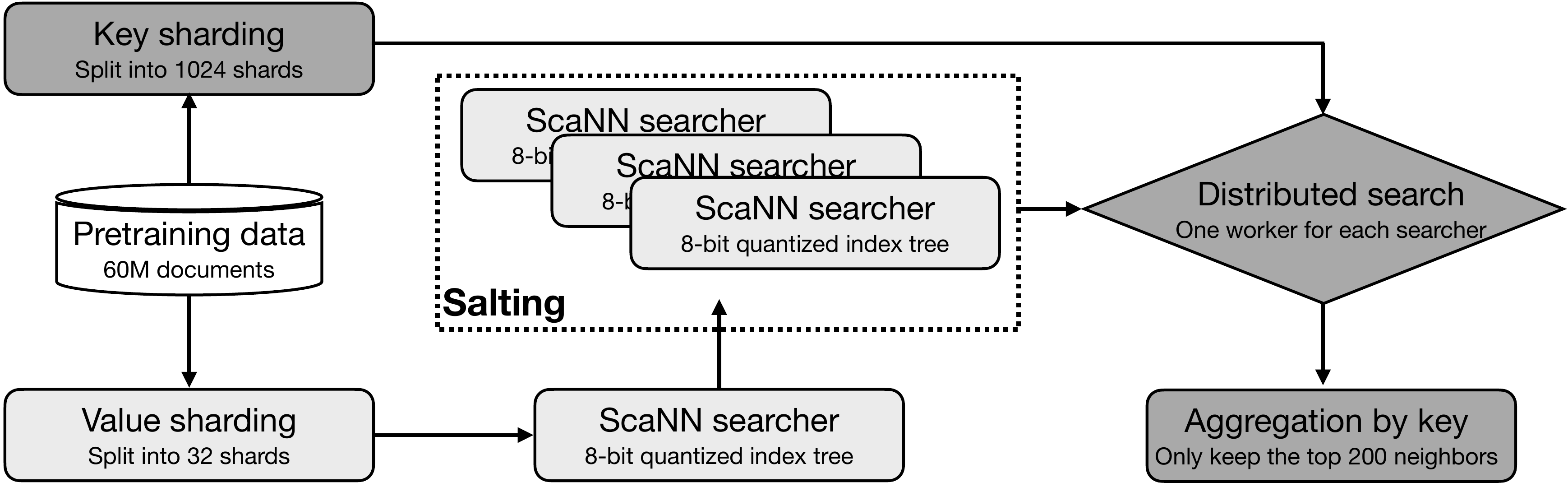}
\caption{ScaNN system design for efficient distributed search.}
\label{fig:scann-flow}
\end{figure}

At all scales, after obtaining the top 200 neighbors for each sample, we select pairs whose similarity score exceeds 0.75.
This threshold yields a tractable synthesizer-tuning dataset $\Dst$.
To assess how different thresholds affect data quality, Figure \ref{fig:appendix-distance-eda} shows the fraction of relevant documents at each similarity bin using the metric defined in \S\ref{sec:analysis-of-synthetic-data}.
We find that higher similarity scores yield more relevant pairs but also more duplicates.
Finally, we eliminate near-duplicates using rule-based filtering.
The deduplication process first normalizes text by removing punctuation, converting to lowercase, and eliminating numbers, then tokenizes using SentencePiece.
We then generate ``shingles'' using 13-token sliding windows within $\docone$.
We discard training pairs if any shingle from $\docone$ appears in $\doctwo$.

\begin{figure}[ht]
\subfigure[\label{fig:appendix-similarity-hist} Similarity histogram]{\includegraphics[width=0.33\textwidth]{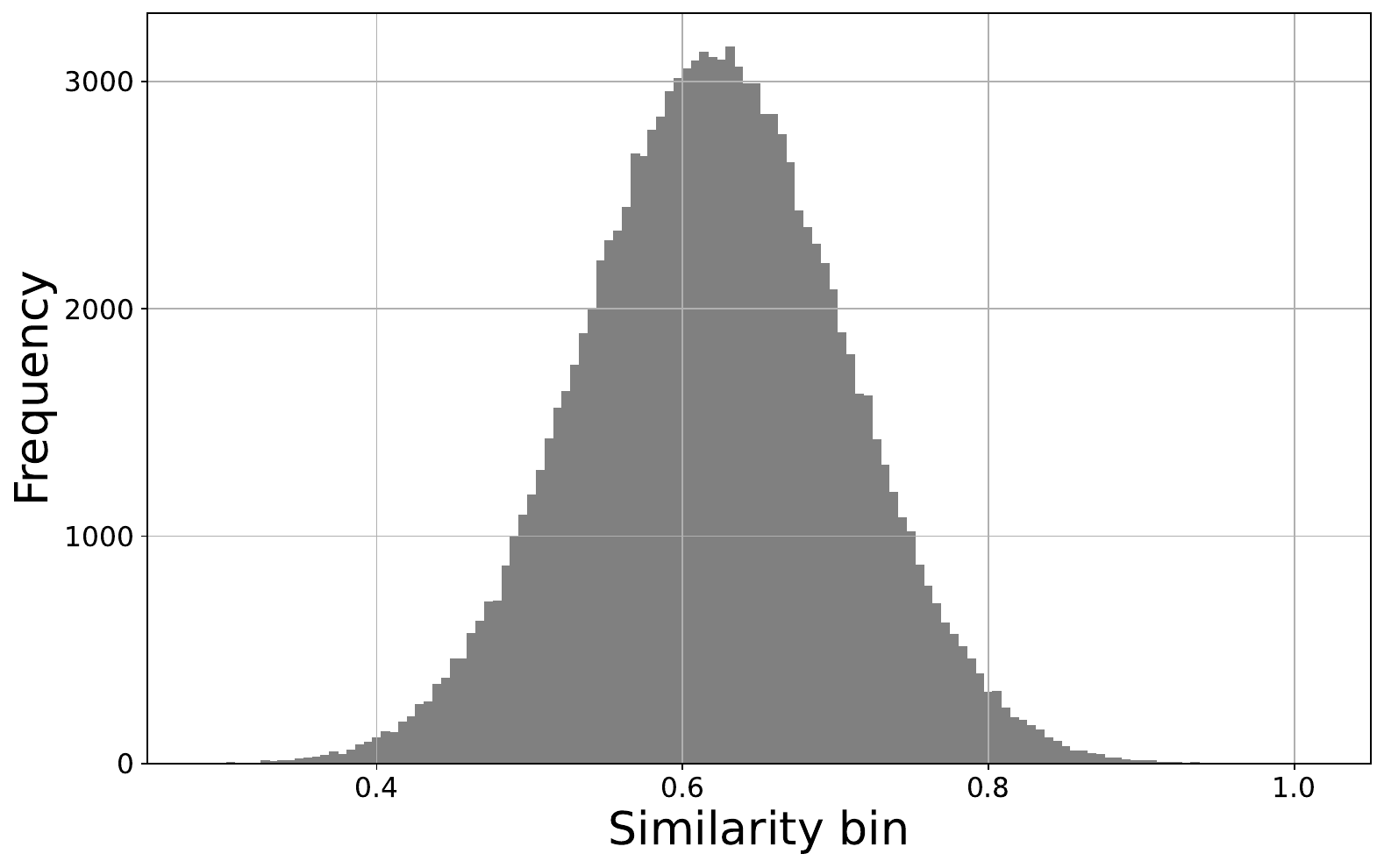}}
\subfigure[\label{fig:appendix-similarity-duplicate} Duplicate rate]{\includegraphics[width=0.33\textwidth]{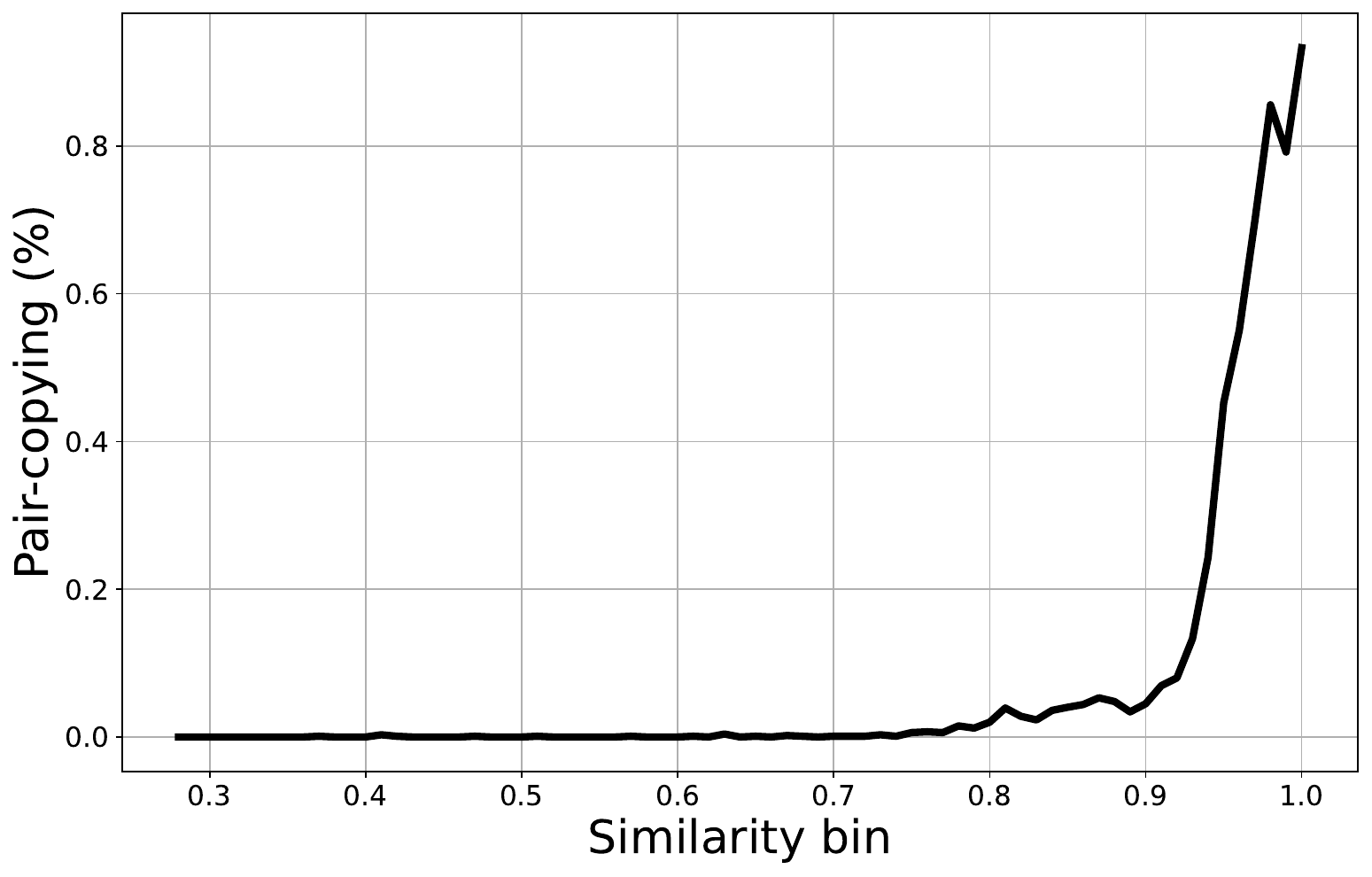}}
\subfigure[\label{fig:appendix-similarity-relevance} Relevance rate]{\includegraphics[width=0.33\textwidth]{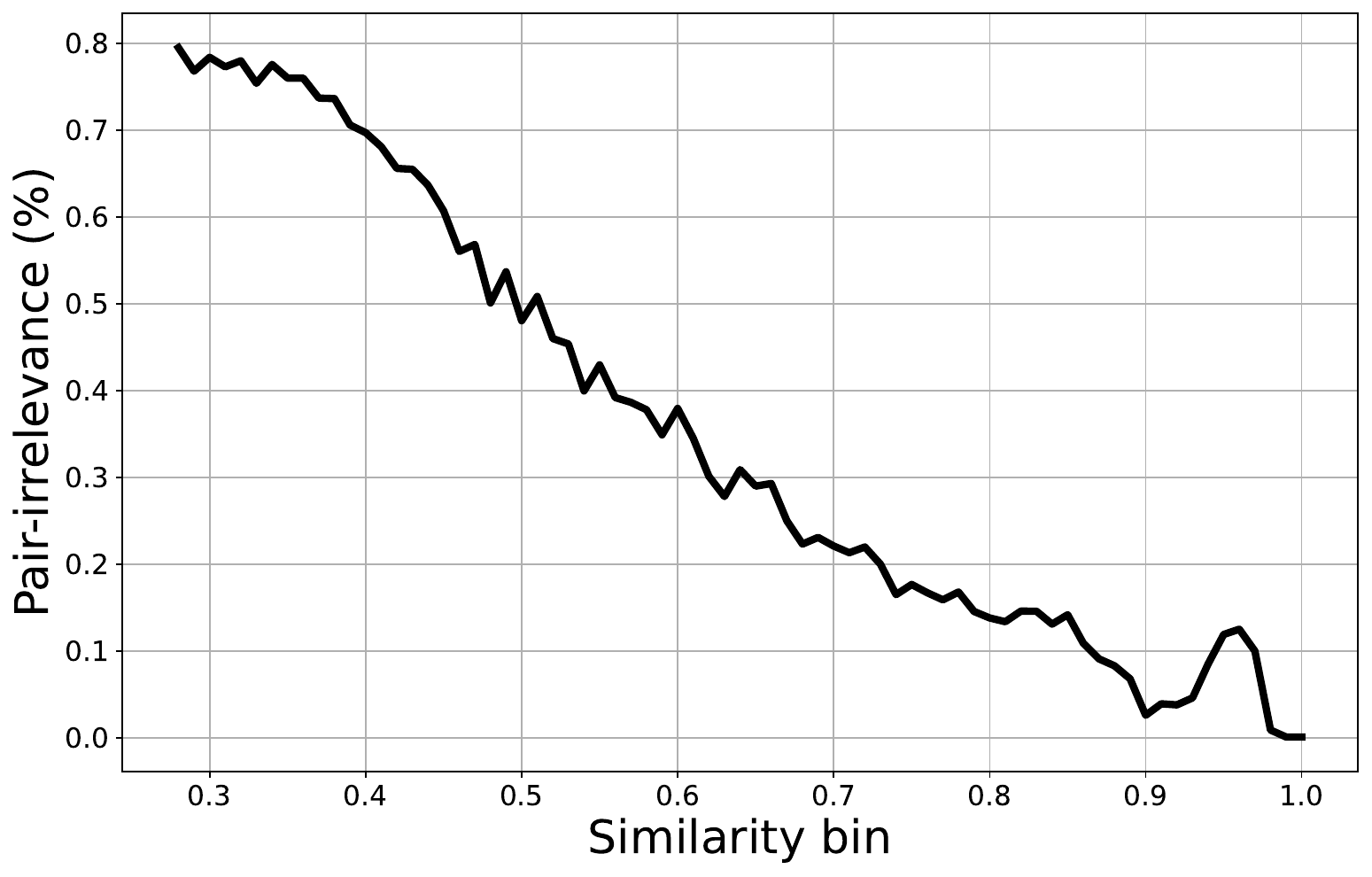}}
\caption{Analysis of paired data at 200B-scale.
Figure \ref{fig:appendix-similarity-hist}: a histogram of 100K subsampled pairs grouped by their similarity score.
Figure \ref{fig:appendix-similarity-duplicate}: the fraction of duplicate pairs when we subsample 1K pairs around a specific similarity score.
Figure \ref{fig:appendix-similarity-relevance}: same as \ref{fig:appendix-similarity-duplicate} but showing the fraction of relevant documents.
}
\label{fig:appendix-distance-eda}
\end{figure}

\paragraph{Synthesizer-tuning} After collecting the cleaned pair data $\Dst$, we perform synthesizer-tuning with the objective in \eqref{eqn:synthesizer-objective-lm}.
We initialize the 3B-parameter model at the baseline checkpoint and finetune with a constant learning rate of 5e-6 and a batch size of 16M tokens per step.
We initially attempted cosine decay with a larger learning rate but found that a small, constant learning rate produces higher-quality generated text.
We measure the Pair-novelty score (defined in \S\ref{sec:analysis-of-synthetic-data}) across different synthesizer-tuning checkpoints and find that longer training improves Pair-novelty.

\paragraph{Synthesis at scale} Finally, we perform the hierarchical sampling procedure defined in \S\ref{sec:method-description} with temperature 1.0 and \texttt{top\_p} threshold 0.9.
We apply rule-based filtering to remove synthesized documents containing repeated occurrences of 13-token shingles, effectively eliminating texts with repetition failures.
We use vLLM \citep{kwon2023efficient} and achieve a throughput of 8.3K tokens per B200 second.
This amounts to 2.5K B200 hours for the 200B-scale synthesis, 4.2K B200 hours for the 1T-scale (3B) synthesis, and 8.4K B200 hours for the 1T-scale (6B).

\subsection{Ablation on data mixture ratio}
\label{sec:ablation-mixture-ratio}

When performing joint training on a mixture of real and synthesized documents, a natural question is: what fraction of synthesized documents should we include?
In \S\ref{sec:experiment-results}, we used $\|\Spre\|=$75B for the 200B-scale experiment, $\|\Spre\|=$125B for the 1T-scale (3B) experiment, and $\|\Spre\|=$250B for the 1T-scale (6B) experiment.
We now present ablation experiments for this design choice.

\paragraph{200B-scale} At this smaller scale, we perform a comprehensive sweep over five values of $\|\Spre\|\in \{$0B, 25B, 50B, 75B, 100B$\}$.
As shown in Figure \ref{fig:mixture-sweep-small}, different benchmarks exhibit varying behavior as synthetic data increases: perplexity (OpenWebText2 and LAMBADA) decreases monotonically, while most QA benchmarks peak around $\|\Spre\|$ = 75B.

\begin{figure}[ht]
\centering
\includegraphics[width=\textwidth]{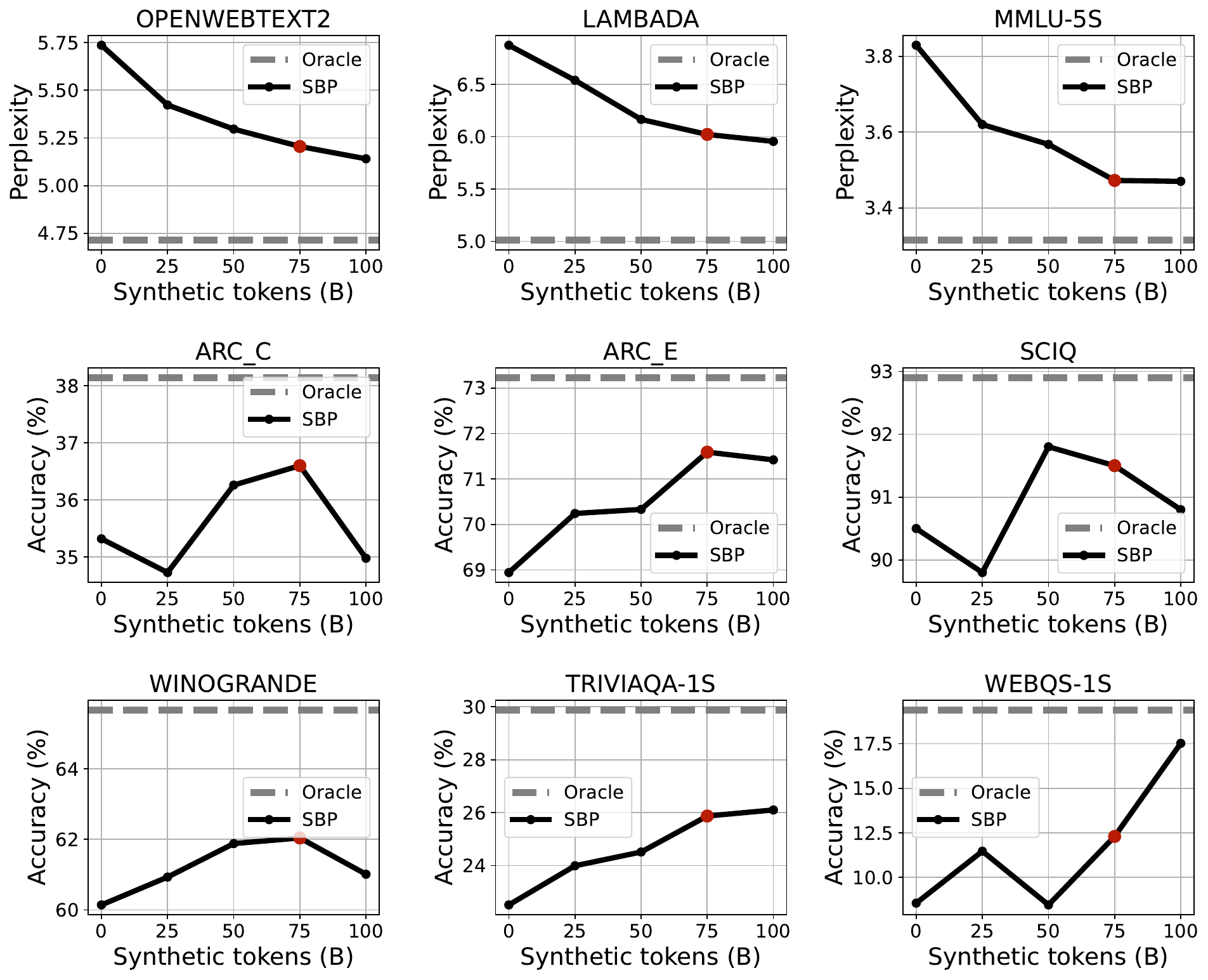}
\caption{SBP performance with varying synthetic tokens at 200B-scale.}
\label{fig:mixture-sweep-small}
\end{figure}

\paragraph{1T-scale (3B)} At the 1T-scale, both data synthesis and joint pretraining become significantly more expensive.
We therefore evaluate SBP at three values: $\|\Spre\|\in\{$0B, 125B, 250B$\}$.
As shown in Figure \ref{fig:mixture-sweep-large}, $\|\Spre\|=$125B produces the best-performing model across all benchmarks except LAMBADA perplexity.

\begin{figure}[ht]
\centering
\includegraphics[width=\textwidth]{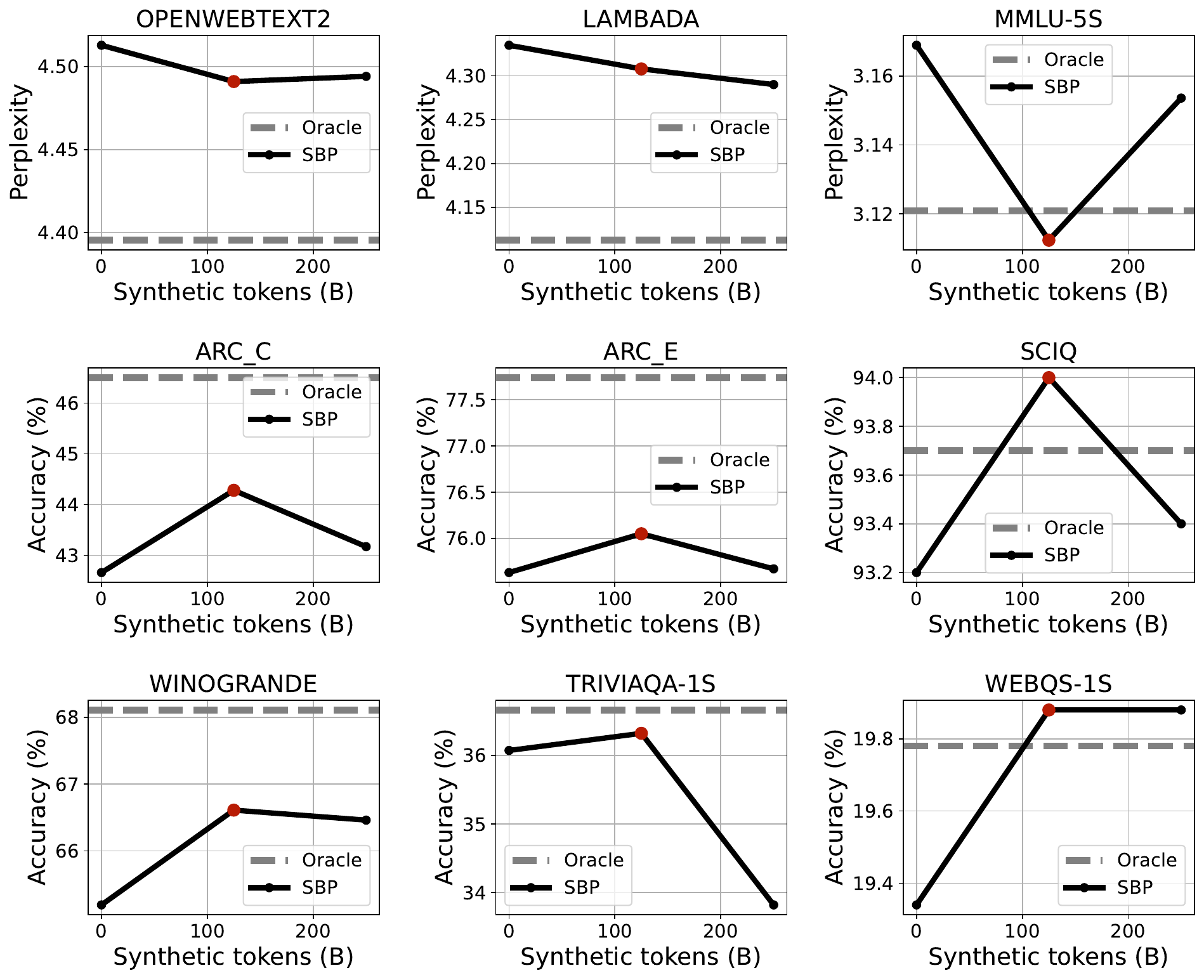}
\caption{SBP performance with varying synthetic tokens at 1T-scale (3B).}
\label{fig:mixture-sweep-large}
\end{figure}

\paragraph{1T-scale (6B)} We also sweep the mixture ratio for the 6B model at the 1T-scale, evaluating $\|\Spre\|\in\{$0B, 125B, 250B$\}$.
As shown in Figure \ref{fig:mixture-sweep-large-6b}, the optimal amount of synthetic data is around 250B---higher than the optimal 125B observed for the 3B model.

\begin{figure}[ht]
\centering
\includegraphics[width=\textwidth]{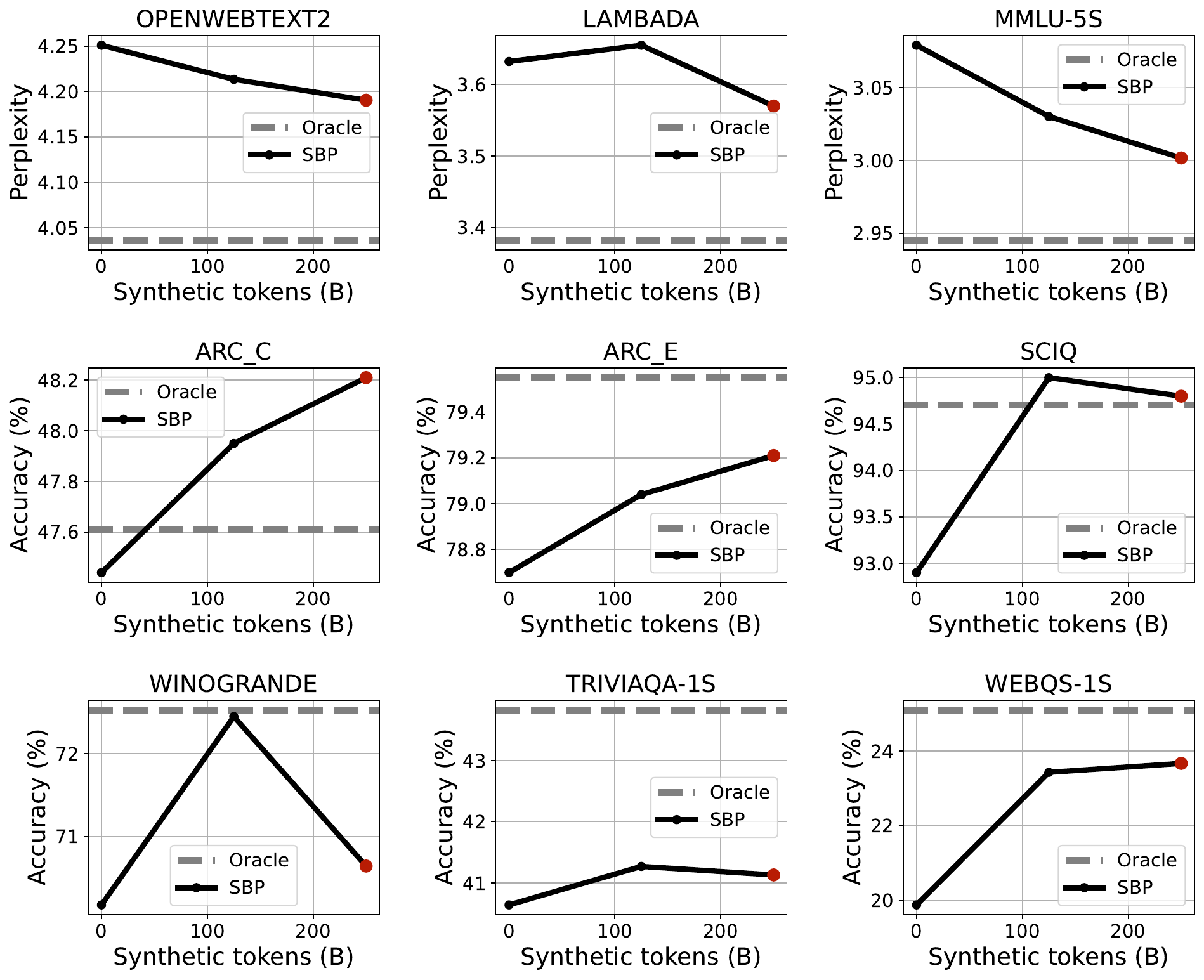}
\caption{SBP performance with varying synthetic tokens for the 6B model at 1T-scale.}
\label{fig:mixture-sweep-large-6b}
\end{figure}

\paragraph{Discussion} A general pattern emerges: the best-performing model results from pretraining on a mixture of real and synthetic data.
Furthermore, the optimal synthetic data ratio increases with model size (from approximately 12.5\% for 3B to approximately 25\% for 6B).
Real internet data has higher quality and merits more repetition, but because repetition yields diminishing returns, synthetic data provides an additional source of signal.
In contrast, distillation-based research typically finds that training purely on synthetic data yields higher training efficiency.
However, this finding is obscured because such models eventually converge to the capability of the teacher LM.
This contrast reveals that SBP does not generate a compressed and denoised representation of knowledge.
Instead, it provides an additional source of improvement that real data alone cannot capture.

\subsection{Random pairs and embedding analysis}
\label{sec:random-pairs-ablation}

To verify that SBP relies on learning specific inter-document correlations rather than generic data augmentation, we train the synthesizer on random document pairs as an ablation.
We measure the semantic similarity between the seed document $\docone$ and the target document (either $\doctwo$ or the synthesized output) using Qwen3-Embedding-0.6B.

\begin{table}[ht]
\centering
\caption{Embedding similarity statistics. ``Paired documents'' refers to the SBP training pairs found by nearest neighbor search. ``Random documents'' refers to randomly paired documents. ``Generated documents (SBP)'' refers to the synthetic data generated by the SBP model at 200B-scale (3B). ``Generated documents (Random)'' refers to the synthetic data generated by the model trained on random pairs. All comparisons are based on the 10B dataset.}
\label{tab:embedding-similarity}
\begin{tabular}{lcccc}
\toprule
\textbf{Statistic} & \textbf{Paired docs} & \textbf{Random docs} & \textbf{Generated (SBP)} & \textbf{Generated (Random)} \\
\midrule
Mean & 0.79 & 0.15 & 0.66 & 0.32 \\
\bottomrule
\end{tabular}
\end{table}

Table \ref{tab:embedding-similarity} shows that paired documents in SBP exhibit high semantic similarity (0.79), whereas random documents have minimal similarity (0.15).
Crucially, documents generated by the SBP synthesizer maintain high relevance (0.66) to the seed document.
In contrast, the model trained on random pairs produces outputs with significantly lower relevance (0.32).
This confirms that the SBP synthesizer learns to preserve semantic relevance from the training pairs---a property absent when training on random associations.

\section{Additional analysis of synthesized samples}
\label{sec:additional-analysis-of-synthesized-samples}

\subsection{Analyzing concepts in documents}
\label{sec:concept-analysis}

We next examine the intermediate mechanisms underlying the document synthesis process.
Specifically, we classify the hypothesized concepts inferred from real documents (Table~\ref{tab:doc_connections}) along two dimensions: \textbf{concept domains}, the broad subject areas a concept belongs to (e.g., science, psychology, health, culture), and \textbf{concept types}, the abstract role or nature of the concept itself (e.g., theory, method, comparison, symbol).

\begin{table}[ht]
\centering 
\caption{Categorize extracted concepts into domains.}
\small
\renewcommand{\arraystretch}{1.2} %
\begin{tabular}{p{0.2\linewidth}p{0.7\linewidth}}
\toprule
\textbf{Concept Domains} & \textbf{Examples} \\
\midrule
Culture (38.74\%) & Inter-community conflict in Nigeria, Family-based immigration policy, Reactions to Horrid Henry books, Interracial dating and bias \\
\midrule
Health (11.89\%) & Cosmetic dental appliance, Colistin toxicity in infections, Hair health tips, Portable/home medical diagnostics, Vitamin D and pregnancy outcomes \\
\midrule
Technology (9.91\%) & Recovering deleted phone data, Video editing app review, Flash platform pros and cons, HTML 2.0 draft process, Email attachment processing speed \\
\midrule
Politics (3.69\%) & Iran nuclear negotiations, Student loans policy reform, Democratic primary candidate choice, Catalan independence aftermath \\
\midrule
Psychology (3.42\%) & Differences in personality disorders, Exploring the strange in daily life, Aging and nostalgia, Toxic relationship breakup, Psychology research paper topics \\
\bottomrule
\end{tabular}
\label{tab:concept_analysis_domain}  
\end{table}

The distributions in Table~\ref{tab:concept_analysis_domain} and~\ref{tab:concept_analysis_types} reveal the multidimensional nature of the knowledge space.
The domains span macro-level sociocultural phenomena---Culture topics range from inter-community conflict in Nigeria to immigration policy and interracial dating---alongside micro-level issues of individual health and wellbeing.
The typological classification reveals not only subject matter but also modes of conceptual engagement: Methods comprise formalized procedures (multidimensional poverty measurement, commercial real estate appraisal), Events capture historically situated crises (Mediterranean migrant crisis, BP oil spill nationalization), and Comparisons facilitate interpretive framing through juxtapositions (cancer suffering: individual vs.\ family).
Altogether, this taxonomy illustrates both topical diversity and a spectrum of cognitive orientations.

\begin{table}[ht]
\centering 
\caption{Categorize extracted concepts into abstract types.}
\small
\renewcommand{\arraystretch}{1.2} %
\begin{tabular}{p{0.2\linewidth}p{0.7\linewidth}}
\toprule
\textbf{Concept Types} & \textbf{Examples} \\
\midrule
Method (9.17\%) & Multidimensional poverty measurement, Commercial real estate appraisal, Stop words search duplicates, DAT chemistry exam preparation \\
\midrule
Event (6.98\%) & Mediterranean migrant crisis, BP oil spill nationalization, Paula Abdul stalked, Eminem-Apple music rights lawsuit, Presidents Cup U.S. golf \\
\midrule
Comparison (5.54\%) & Hobbit film adaptation length/cost, Biking as superior transport, Cancer suffering, individual vs. family, Progress critique: 4G vs. alternatives \\
\midrule
Analysis (5.20\%) & Health effects of substances, Thai massage benefits, Scrabble word breakdown, Relationship roles and challenges, Manchester United player analysis \\
\midrule
Phenomenon (4.95\%) & Secret pain; self-destruction, Car-related online humor/pranks, Transnational corporations in globalization, Hippie identity and lifestyle \\
\bottomrule
\end{tabular}
\label{tab:concept_analysis_types}  
\end{table}

While real and synthesized documents share the same underlying concept, they differ in multiple ways.
We categorize these differences into a taxonomy of relations using a small ontology (Table~\ref{tab:relation_categories}).
These relations range from scope-based distinctions (e.g., specific vs.\ general), to causal connections (e.g., corruption leading to reform), to contrastive pairs (e.g., Constitution articles vs.\ Articles of Confederation).
This diversity demonstrates the rich variation structure that the synthesizer captures.

\begin{table}[ht]
\centering 
\caption{Categorize relations between real documents $d_1$ and synthesized documents $d_2$.}
\small
\renewcommand{\arraystretch}{1.2} %
\begin{tabular}{p{0.22\linewidth}p{0.68\linewidth}}
\toprule
\textbf{Relation Categories} & \textbf{Examples} \\
\midrule
Scope relation \par (8.14\%) & $d_1$: Probiotics' possible effects on \textcolor{thesisDarkGarnet}{H1N1} infection \par
$d_2$: Probiotics' \textcolor{thesisDarkGarnet}{general} digestive and immune benefits \par
Relation: specific application vs general health benefits of probiotics\\
\midrule
Perspectival relation \par (5.51\%) & $d_1$: \textcolor{thesisDarkGarnet}{Personal}, humorous struggles of new bloggers \par $d_2$: \textcolor{thesisDarkGarnet}{Objective} guide to pros and cons of blogging \par Relation: subjective experiences vs objective guidance about blogging\\
\midrule
Functional relation \par (4.70\%) & $d_1$: \textcolor{thesisDarkGarnet}{Reviews and feedback} on ``Space Bound'' game \par $d_2$: Forum \textcolor{thesisDarkGarnet}{troubleshooting} for bugs in ``Space Bound'' \par Relation: reviews/feedback vs troubleshooting for the same game\\
\midrule
Causal relation \par (2.05\%) & $d_1$: DTEK faces \textcolor{thesisDarkGarnet}{corruption} probe, financial risk \par $d_2$: DTEK nationalized for state-driven energy \textcolor{thesisDarkGarnet}{reform} \par Relation: corruption/financial issues vs nationalization/energy reform \\
\midrule
Contrastive relation \par (1.65\%) & $d_1$: Detailed summary of \textcolor{thesisDarkGarnet}{Constitution} articles \par $d_2$: Overview, flaws of \textcolor{thesisDarkGarnet}{Articles of Confederation} \par Relation: U.S. Constitution articles vs Articles of Confederation: different foundational documents\\
\bottomrule
\end{tabular}
\label{tab:relation_categories}  
\end{table}

Document summarize and concept analysis instructions:
\begin{qualitativeBox}
{\footnotesize
\begin{verbatim}
In the following, you are given two documents, doc1 and doc2. Doc2 is generated from doc1.

The principle of generation is to first abstract a concept from doc1, and then starting from 
this concept, generate doc2. Can you guess what this concept is and how doc2 was generated?

Please keep the summary and concepts to be LESS OR EQUAL TO 10 WORDS and format your answer as 
follows. Highlight the difference between doc2 and doc1 in your doc2_summary:

<doc1_summary> summary of doc1 </doc1_summary>
<concept_c> abstract concept from doc1 </concept_c>
<doc2_summary> summary of doc2 built on doc1 given the concept </doc2_summary>

Example 1:
<doc1_summary> recommendation of local coffee shops in San Diego </doc1_summary>
<concept_c> coffee + San Diego </concept_c>
<doc2_summary> comparison of coffee culture in SD and NYC </doc2_summary>

Example 2:
<doc1_summary> Patient with swollen eye discusses pain causes & symptoms and seeks for advice 
</doc1_summary>
<concept_c> medical symptom of swollen eye </concept_c>
<doc2_summary> A wiki-style article introducing causes and cures for swollen eye 
</doc2_summary>

Now, give your answer for the following documents:
<doc1>
{real_document}
</doc1>

<doc2>
{synthesized_document}
</doc2>
\end{verbatim}
}
\end{qualitativeBox}

\subsection{Factuality analysis}
\label{sec:factuality}
\begin{table}[ht]
\centering
\caption{Estimation of the ratio of non-factual documents.
We can see that the occurrence factuality error decays as the SP scales up.}
\renewcommand{\arraystretch}{1.3} %
\begin{tabular}{lrrr}
\hline
\hline
 & \textbf{Factuality undefined} & \textbf{No factual error} & \textbf{Factual error} \\
\hline
\textbf{Real data} & 31.44\% & 66.74\% & 1.81\% \\
\textbf{Synthetic data (200B-scale)} & 34.43\% & 50.47\% & 15.09\% \\
\textbf{Synthetic data (1T-scale)} & 31.91\% & 59.43\% & 8.65\% \\
\hline
\hline
\end{tabular}
\label{tab:factuality_stats}
\end{table}

\begin{table}[ht]
\centering
\caption{Factuality undefined synthetic text.}
\label{tab:factuality_undefined}
\renewcommand{\arraystretch}{1.03}
\begin{tabular}{p{0.09\linewidth} p{0.88\linewidth}}
\toprule
\textbf{Synthetic Text} & Sunday, December 28, 2008 \par
Tante Leni \par 
Tante Leni is not only my Aunt Leni; she is my Eternity. When my Aunt Leni died a few years ago, she was deeply saddened and I was devastated. She was not the first family member to die, but she was the first I felt so strongly about. Tante Leni was all my parents really had to show for the 25 years they worked as public school teachers and she was the one who had been with them the longest. There was a special place in her heart for my parents. In addition to all that, she was the kind of person who always had a smile and a funny story to share. She was kind and funny and generous. The story that always comes to mind when I think about her is the time she was working at the bank and someone dropped something from the top floor. It was a very large parcel and the workers on the ground floor didn't have the tools to open it. She jumped down to see what was in it. A very large package of champagne appeared and she began gulping it down. Tante Leni and my mother in a portrait they took when my mom was 20. Tante Leni and my parents in a family portrait she took for my mom at 22. Tante Leni and my dad at home when he was working as a dance instructor. When my mom died, she had all the people who had known her since she was a child living in the house. Tante Leni was the oldest, but she was also the best at cleaning, cooking and taking care of the house. When my mom passed away, she went to a rehab center and Tante Leni stayed in the house. \\
\bottomrule
\end{tabular}
\end{table}

All LM-generated synthetic data may produce non-factual content because of the probabilistic nature of generation.
Moreover, because the internet inherently contains factual inaccuracies, LMs absorb these errors unless the data is carefully cleaned.
During post-training, factuality must also be recalibrated alongside other objectives such as data safety.

SBP relies solely on document-level correlations and does not incorporate human intervention to filter non-factual content; the generated outputs are therefore expected to contain factual errors.
Interestingly, the frequency of such errors correlates with the amount of data used in the SBP pipeline.
We define a document as having \textbf{undefined factuality} if it is primarily subjective or opinion-driven, or if it concerns personal, obscure, or unverifiable entities.
In all other cases, the document's factuality is considered \textbf{well-defined} and verifiable.

In Table~\ref{tab:factuality_stats}, we analyze both real and synthesized data from the main experiment (\S\ref{sec:benchmark-performance}).
We consider two synthetic datasets: a smaller-scale set initialized with 10B seed tokens and a larger-scale set initialized with 50B seed tokens.
From each source, we randomly sample 10k documents and categorize each into three bins---\textbf{factuality undefined}, \textbf{no factual error}, and \textbf{factual error}---using LM-as-a-judge.
We find that synthetic data contains more factual errors than real data.
However, as the amount of seed data increases, factuality improves significantly, approaching that of real data.
This finding aligns with our mideval results in Table~\ref{tab:mideval}: greater seed data availability enables the LM to capture more factual knowledge and the synthesizer to generate more relevant documents, thereby reducing hallucinations.

Table~\ref{tab:factuality_error_detail_analysis} extends our analysis of factuality errors, highlighting inaccuracies in the synthetic texts.
These include false transfer and timeline claims in football, as well as incorrect institutional, company location, and certification details in the ecolabel example.
This underscores the importance of rigorous fact-checking, particularly for historical events (e.g., sports) and certification standards (e.g., eco-labels).

Factuality detection instructions:
\begin{qualitativeBox}
{\footnotesize
\begin{verbatim}
You are a helpful AI assistant. Your task is to evaluate whether the given document has
well-defined factuality.

Definitions:

Not well-defined factuality: The document is primarily subjective or opinion-based (e.g., 
express disapproval of a politician in social media). The document discusses personal, 
unknown, or unverifiable entities (e.g., a private diary).

Well-defined factuality: The document refers to well-known, identifiable entities (e.g., 
famous people, historical events, popular movies). Its factual claims can be checked or 
verified.

Output format:

If the document's factuality is not well-defined, output:
<not well defined></not well defined>

If the document's factuality is well-defined and factual, output:
<well defined>True</well defined>

If the document's factuality is well-defined but non-factual, output:
<well defined>False</well defined>

Now, analyze the following document and provide your answer:
{document}
\end{verbatim}
}
\end{qualitativeBox}

\begin{table}[ht]
\centering
\caption{Factuality errors detected in synthetic text.}
\label{tab:factuality_error_detail_analysis}
\footnotesize
\renewcommand{\arraystretch}{1.03}
\begin{tabular}{p{0.09\linewidth} p{0.88\linewidth}}
\toprule
\textbf{Synthetic Text} & So just how much has Chelsea been prepared to pay for the 34-year-old midfielder? 
Realistically, the clubs involved should be keeping in the region of £25 million (\$38.8 million) and around £30 million (\$45.5 million) being bandied about for the player in Italy. With the Blues expected to complete the sale of \textcolor{thesisDarkGarnet}{Cesc Fabregas to Arsenal} this week, Lampard appears the logical replacement in midfield, but his bid to extend his contract has hit a roadblock with Chelsea's owners Roman Abramovich and the club being unable to agree to an increase in salary. \par
Lampard, who played in the Champions League final in Lisbon for Chelsea in \textcolor{thesisDarkGarnet}{2007}, has been linked with a move away from Stamford Bridge this summer, after having his contract with the club \textcolor{thesisDarkGarnet}{indefinitely extended in 2010}. There were rumours that he was to leave on a free transfer in the summer of 2011, but it appears that those are again being placed to the side. \par
Miguel Delaney, from Sky Sports, believes that Lampard could be the highest paid player in the Premier League, and puts the bid that Chelsea are offering him in the region of £6 million (\$8.9 million) per year. If Lampard is to join a team in Italy, it seems that this sum would be realistic, but the Premier League is a different beast, and it would be unrealistic for Chelsea to splash out that amount. \par
As the speculation around his future continues, you can watch the pair discussing the topic in the video below. \\
\midrule
\textbf{Factual Errors} & \textcolor{thesisDarkGarnet}{Cesc Fabregas to Arsenal} -- Fabregas moved from Arsenal to Barcelona in 2011, then to Chelsea in 2014, not back to Arsenal. \par
\textcolor{thesisDarkGarnet}{2007} -- The 2014 Champions League final in Lisbon did not involve Chelsea or Lampard; Chelsea won in 2012. \par
\textcolor{thesisDarkGarnet}{indefinitely extended in 2010} -- Lampard did not get an indefinite contract extension in 2010 with Chelsea. \\
\midrule
\textbf{Synthetic Text} & Swanee Glue Brand: First Glue to be Awarded the Swan Ecolabel \par
Published:27 July 2022 \par
The global glue stick market is expected to reach USD 3.45 billion by 2028. Adhesives are the first choice of manufacturers in all industries such as food, pharmaceuticals, automotive, aerospace, construction, and packaging. As consumers are increasingly conscious of their carbon footprint and environmental issues, glue manufacturers are aiming to produce products that comply with environmental standards and are effective and cost-effective in their applications. This is why the \textcolor{thesisDarkGarnet}{Swan Ecolabel was established by the Swedish Environment Agency} as a certification for sustainable adhesive products. \par 
Swanee Glue is one of the world's leading glue brands in glue sticks, and this year its brand received the Swan Ecolabel. UHU is an adhesive brand owned by Bolton Adhesives in the \textcolor{thesisDarkGarnet}{Netherlands}, and part of the Italian Bolton Group with a strong agenda for sustainability. \par 
Glue sticks, specifically glue sticks with a wider applicator and swan neck applicators, have the most impact on the environment because they are a consumable item and their impact is greatest when thrown away. Therefore, the Swanee Swan Ecolabel ensures that UHU is part of the solution to the growing demand for sustainable adhesive products.\par
In order to obtain the Swan Ecolabel, the adhesive must have at least \textcolor{thesisDarkGarnet}{50\% renewable content}. Besides this, the glue stick should also contain a higher percentage of recyclable content. UHU meets all these criteria and has a permanent and multi-use applicator. For further information, you can contact UHU receives the Swan Ecolabel \\
\midrule
\textbf{Factual Errors} & \textcolor{thesisDarkGarnet}{Swan Ecolabel was established by the Swedish Environment Agency} -- The Nordic Swan Ecolabel was established by the Nordic Council of Ministers, not only Sweden. \par
\textcolor{thesisDarkGarnet}{Netherlands} -- UHU is based in Germany, not the Netherlands. \par
\textcolor{thesisDarkGarnet}{50\% renewable content} -- The Swan Ecolabel requires at least 20\% renewable content in adhesives, not 50\%. \\
\bottomrule
\end{tabular}
\end{table}

\subsection{Mideval prompts}
\label{sec:mid-eval}

Before each large-scale synthesis run---on the order of billions of tokens---we first synthesize a small subset of data to evaluate its quality.
We call this step ``mideval.''
We use LM-as-a-judge to evaluate three quality metrics: \textbf{Pair-relevance} (whether the seed and synthesized documents share the same topic, entity, or event), \textbf{Pair-novelty} (whether the synthesized content differs substantively from near-duplicates), and \textbf{Non-repetition} (absence of repeated sentence patterns in the output).
Full evaluation prompts are available in the code repository.

\newpage
\clearpage

\subsection{Synthesized documents from the 1T-scale experiment}
\label{sec:1t-exp}

We present additional examples of synthesized documents at the 1T-scale, complementing the 200B-scale example in \S\ref{sec:analysis-of-synthetic-data}.
\definecolor{originalcolor}{RGB}{80, 80, 80}  %
\definecolor{originalcolorlight}{RGB}{230, 230, 230}  %
\definecolor{generatedcolor}{RGB}{100, 100, 100}
\definecolor{generatedcolorlight}{RGB}{250, 250, 250}

\begin{figure}[htbp]
\begin{center}
\adjustbox{max totalheight=0.85\textheight}{%
\begin{minipage}{\textwidth}
\begin{minipage}[t]{0.32\textwidth}
    \begin{tcolorbox}[
        colback=originalcolorlight,
        colframe=originalcolor,
        title=Real document, %
        fonttitle=\bfseries\scriptsize,
        width=\textwidth,
        left=0mm,
        right=0mm,
        top=0mm,
        bottom=0mm,
        equal height group=row1
    ]

    \small The Karate Kid \vspace{3pt}

    From Wikipedia, the free encyclopedia \vspace{1pt}
    
    (Redirected from Karate Kid)
    
    Jump to: navigation, search
    
    For other uses, see Karate Kid (disambiguation).
    
    The Karate Kid Karate kid.jpg
    
    Directed by John G. Avildsen
    
    Produced by Jerry Weintraub
    
    Written by Robert Mark Kamen
    
    Music by Bill Conti
    
    Cinematography James Crabe
    
    Distributed by Columbia Pictures
    
    Release dates 
      • June 22, 1984 (1984-06-22)
      
    Running time 127 minutes[1]
    
    Country United States
    
    Language English
    
    Budget \$8 million[2]
    
    Box office \$90.8 million[3] \vspace{3pt}
    
    The Karate Kid is a 1984 American martial arts drama film produced by Jerry Weintraub, directed by John G. Avildsen, written by Robert Mark Kamen, that stars Ralph Macchio, Noriyuki "Pat" Morita and Elisabeth Shue.[4][5] It is an underdog story in the mold of a previous success, Avildsen's 1976 film Rocky ...

    \end{tcolorbox}
\end{minipage}\hfill
\begin{minipage}[t]{0.32\textwidth}
    \begin{tcolorbox}[
        colback=generatedcolorlight,
        colframe=generatedcolor,
        title=Synthesis I,
        fonttitle=\bfseries\scriptsize,
        width=\textwidth,
        left=0mm,
        right=0mm,
        top=0mm,
        bottom=0mm,
        equal height group=row1
    ]
    \small The Karate Kid (2010) \vspace{3pt}

    The Karate Kid (2010) movie poster
    
      • 115 minutes
      • Rated PG-13
      • Directed by Harald Zwart
      • Starring: Will Smith, Jackie Chan, RZA, Jaden Smith
      • More info on IMDB \vspace{3pt}
    
    It's ironic that Daniel LaRusso (Ralph Macchio) and Johnny Lawrence (Jaden Smith) appear in The Karate Kid in their middle ages, because what's most interesting about this remake is the way it updates an old, familiar storyline for a new generation of moviegoers. \vspace{3pt}
    
    Perhaps I shouldn't say "updates" -- because that implies that the new version is a significant departure from the 1984 classic. If you're one of those folks who thinks the 1984 version of The Karate Kid is sacrosanct, don't expect anything different here. Just watch both movies and compare the stories and characters. \vspace{3pt}
    
    The story is basically the same as before -- young Daniel LaRusso moves to Los Angeles, and soon finds himself beset by Johnny Lawrence and his gang of students from the Cobra Kai karate school, which bullies Daniel whenever it gets the chance ...

    \end{tcolorbox}
\end{minipage}\hfill
\begin{minipage}[t]{0.32\textwidth}
    \begin{tcolorbox}[
        colback=generatedcolorlight,
        colframe=generatedcolor,
        title=Synthesis II,
        fonttitle=\bfseries\scriptsize,
        width=\textwidth,
        left=0mm,
        right=0mm,
        top=0mm,
        bottom=0mm,
        equal height group=row1
    ]
    \small The Karate Kid \vspace{3pt}

    Trivia: When Daniel and his mom were drinking ice cream and talking about him not talking to them, Daniel turns to his mom and says "We're not just a family. We're a family of good people." After Daniel was sent away, on the front of his underwear (that was on fire), it said "Daniel don't quit on family." The script for this movie originally said that Daniel's mom was in the film but she was cut because the film makers thought she would not look like Elisabeth Shue and they thought the characters could have a less motherly character. Elisabeth Shue actually called the director John Avildsen, and insisted on being in the film. John Avildsen replied that he didn't know whether or not he wanted to add a less motherly character to the film. Shue replied "If you don't, I'm not playing in the movie." The director of the film was also impressed that she did this and he was going to hire her in any case. \vspace{3pt}
    
    Revealing mistake: When Daniel is throwing the bucket at Johnny's head while on the bus, he is supposed to do it several times, yet when he hits Johnny the first time, you can see Johnny fall on the ground ...

    \end{tcolorbox}
\end{minipage}

\begin{minipage}[t]{0.32\textwidth}
    \begin{tcolorbox}[
        colback=originalcolorlight,
        colframe=originalcolor,
        title=Real document, %
        fonttitle=\bfseries\scriptsize,
        width=\textwidth,
        left=0mm,
        right=0mm,
        top=0mm,
        bottom=0mm,
        equal height group=row2
    ]

    \small Skip to main content \vspace{3pt}

    Are you craving some beach time? If you're escaping down south for spring break, we have some tunes for you. But even if you aren’t lucky enough to sprawl in the sand or bask in golden sunshine right now, we'll bring the beach to you! Until your next break, enjoy listening to The Beach Boys and Jack Johnson. \vspace{3pt}
    
    If you’re stuck in the cold—or somewhere where the sun is indecisive (I’m looking at you, Midwest)—use this playlist to take you back to sunnier days and remind you that the winter is almost over; soon the sun will last!

    \end{tcolorbox}
\end{minipage}\hfill
\begin{minipage}[t]{0.32\textwidth}
    \begin{tcolorbox}[
        colback=generatedcolorlight,
        colframe=generatedcolor,
        title=Synthesis I,
        fonttitle=\bfseries\scriptsize,
        width=\textwidth,
        left=0mm,
        right=0mm,
        top=0mm,
        bottom=0mm,
        equal height group=row2
    ]
    \small Beach Vibe \vspace{3pt}

    Any traveller will tell you that if you get on a plane and actually arrive somewhere where the weather is the same for weeks or months, you know you've made a mistake. I've experienced this time and again - it's one of the many things that you just can't fully appreciate or appreciate to the fullest until you feel it in your bones. \vspace{3pt}
    
    My latest trip was to Vietnam last week. I had been planning this trip for a little over a year now and I really hoped it would be everything I had imagined. It was exactly what I'd wanted. The people, the culture, the food, the scenery - it was all perfectly delved into. I'd never been to an Asian country before and so the whole trip was just like discovering a new continent. \vspace{3pt}
    
    It was one of the best places I've ever been ...

    \end{tcolorbox}
\end{minipage}\hfill
\begin{minipage}[t]{0.32\textwidth}
    \begin{tcolorbox}[
        colback=generatedcolorlight,
        colframe=generatedcolor,
        title=Synthesis II,
        fonttitle=\bfseries\scriptsize,
        width=\textwidth,
        left=0mm,
        right=0mm,
        top=0mm,
        bottom=0mm,
        equal height group=row2
    ]
    \small Here's a playlist to warm your heart and get you ready to leave for the beach in a few weeks. So many classics from your youth, this playlist packs an emotional punch. \vspace{3pt}

    It's kind of a lame playlist. I'm sure a better one exists somewhere on the internet. But on this most northern of days, I like to get a warm chill going and use it as a positive recharge before going outside. This warm chill is a hit with my dogs and they are the best cuddlers in the world, so that makes it perfect for me. This playlist will make your winter that much more bearable and then hopefully you can head to the beach! \vspace{3pt}
    
    You know you want it! \vspace{3pt}
    
    Here's the playlist: \vspace{3pt}
    
    [credit provider="YouTube" url=''] \vspace{3pt}
    
    Get our free mobile app

    \end{tcolorbox}
\end{minipage}

\end{minipage}}%
\captionof{figure}{Comparison of original text with synthesized text variations. On the first row, the real document provides factual information about the 1984 film's production and release. In contrast, the synthesized documents offer subjective commentary, opinions, and behind-the-scenes anecdotes about both the 1984 film and its 2010 remake. On the second row, the synthesized documents are continuations of the real document.}
\label{fig:text_comparison_1t}
\end{center}
\end{figure}

\section{Additional pretraining results}

\subsection{Two epochs validation}
\label{sec:two-epochs-validation}

For the 1T-scale oracle experiment, we use 482B tokens repeated twice as a proxy for training on 1T unique tokens.
This design choice stems from the DCLM-baseline \citep{li2024datacomplm} dataset containing 80\% duplicates, which hinders our evaluation.
We validate this choice by scaling down to 400B, where we have sufficiently many unique tokens.
As shown in Table \ref{tab:two-epoches-validation}, 200B tokens repeated twice yield nearly identical performance to 400B unique tokens.
This aligns with the observation from \citet{muennighoff2023scaling} that repetition up to 4 times yields nearly no performance degradation.

\begin{table}[ht]
\centering
\caption{Performance comparsion with 200B tokens repeated twice vs. 400B unique tokens for the 3B model. We can see that the two models yield similar performance.}
\label{tab:two-epoches-validation}
\renewcommand{\arraystretch}{1.3}
\begin{tabular}{lrr}
\hline
\hline
\multicolumn{1}{l}{\textbf{Benchmark}} & \multicolumn{1}{c}{\textbf{2x200B}} & \multicolumn{1}{c}{\textbf{1x400B}} \\
\hline
\multicolumn{3}{c}{\emph{Perplexity on held-out data $\downarrow$}} \\
\hline
OpenWebText2& 4.55 & 4.54 \\
LAMBADA & 4.49 & 4.46 \\
Five-shot MMLU & 3.19 & 3.17 \\
\hline
\multicolumn{3}{c}{\emph{QA accuracy $\uparrow$}} \\
\hline
ARC-Challenge \tiny{(0-shot)} & 38.31 & 41.47 \\
ARC-Easy \tiny{(0-shot)} & 73.11 & 75.29 \\
SciQ \tiny{(0-shot)} & 93.80 & 93.30 \\
Winogrande \tiny{(0-shot)} & 64.96 & 63.93 \\
TriviaQA \tiny{(1-shot)} & 32.51 & 34.35 \\
WebQS \tiny{(1-shot)} & 18.75 & 13.58 \\
\hline
\textbf{Average QA accuracy} & 53.57 & 53.65 \\
\hline
\hline
\end{tabular}
\end{table}

\subsection{Model scaling}
\label{sec:model_scaling}

An alternative approach to using additional compute is to scale the model.
Here we examine the benefits of fixing a training token budget but using a 6B-parameter model (Table \ref{tab:model_specs}).
\begin{table}[t]
\centering
\caption{6B-parameter model setup.}
\label{tab:model_specs}
\begin{tabular}{ccc}
\toprule
\bf{Total Params.} & \bf{3B} & \bf{6B} \\
\midrule
$\ell_{\text{context}}$ & 4096 & 4096 \\
$n_{\text{vocab}}$ & 49152 & 49152 \\
$n_{\text{layers}}$ & 26 & 32 \\
$d_{\text{model}}$ & 3072 & 4096 \\
$d_{\text{ffn}}$ & 8064 & 13056 \\
$n_{\text{heads}}$ & 24 & 32 \\
$n_{\text{kv\_heads}}$ & 8 & 8 \\
\bottomrule
\end{tabular}
\end{table}

We conduct a pretraining experiment at the 200B-scale, replacing the 3B-parameter model with a 6B-parameter model.
Table \ref{tab:model-scaling-results} shows that the 6B-parameter model consistently outperforms the baseline, indicating that it effectively uses the additional computational resources.
Comparing SBP with the 6B-parameter model, we find that each performs better on different benchmarks.
This suggests that the benefits of SBP are orthogonal to those of a larger model, offering the potential to combine both approaches for even better performance.
\begin{table}[ht]
\centering
\caption{200B-scale experiments with model scaling.
The first three columns are identical to Table\ref{tab:results}. The last column shows the performance of training a 6B model under a 200B training token budget with 10B unique tokens.}
\label{tab:model-scaling-results}
\renewcommand{\arraystretch}{1.3} %
\begin{tabular}{lrrrr}
\hline
\hline
\multicolumn{1}{l}{\textbf{Benchmark}} & \multicolumn{1}{c}{\textbf{Baseline}} & \multicolumn{1}{c}{\textbf{SBP}} & \multicolumn{1}{c}{\textbf{Oracle}} & \multicolumn{1}{c}{\textbf{6B-model}} \\
\hline
\multicolumn{5}{c}{\emph{Perplexity on held-out data $\downarrow$}} \\
\hline
OpenWebText2& 5.74 & \textcolor{thesisDarkGarnet}{-0.53} & -1.02 & -0.36 \\
LAMBADA  & 6.87 & \textcolor{thesisDarkGarnet}{-0.85} & -1.86 & -1.10 \\
Five-shot MMLU & 3.83 & \textcolor{thesisDarkGarnet}{-0.36} & -0.51 & -0.13  \\
\hline
\multicolumn{5}{c}{\emph{QA accuracy $\uparrow$}} \\
\hline
ARC-Challenge \tiny{(0-shot)} & 35.32 & \textcolor{thesisDarkGarnet}{+1.28} & +2.82 & +3.42 \\
ARC-Easy \tiny{(0-shot)} & 68.94 & \textcolor{thesisDarkGarnet}{+2.65} & +4.29 & +0.67 \\
SciQ \tiny{(0-shot)} & 90.50 & \textcolor{thesisDarkGarnet}{+1.00} & +2.40 & +0.80 \\
Winogrande \tiny{(0-shot)} & 60.14 & \textcolor{thesisDarkGarnet}{+1.90} & +5.53 & +2.92 \\
TriviaQA \tiny{(1-shot)} & 22.51 & \textcolor{thesisDarkGarnet}{+3.36} & +7.37 & +3.11 \\
WebQS \tiny{(1-shot)} & 8.56 & \textcolor{thesisDarkGarnet}{+3.74} & +10.83 & +5.22 \\
\hline
\textbf{Average QA accuracy} & \textbf{47.66} & \textcolor{thesisDarkGarnet}{\textbf{+2.32}} & \textbf{+5.54} & +2.69 \\
\hline
\hline
\end{tabular}
\end{table}

\section{Supplementary materials for sample-efficient reasoning}
\label{sec:s1-se-appendix}

\subsection{Initial collection of 59K samples}
\label{sec:s1-se-59K}

We collect an initial 59,029 questions from 16 sources following three guiding principles.
\textbf{Quality}: Datasets should be high-quality; we always inspect samples and ignore datasets with, e.g., poor formatting.
\textbf{Difficulty}: Datasets should be challenging and require significant reasoning effort.
\textbf{Diversity}: Datasets should stem from various fields to cover different reasoning tasks.
We collect datasets of two categories:

\paragraph{Curation of existing datasets} Our largest source is NuminaMATH \citep{numina_math_datasets} with 30,660 mathematical problems from online websites.
We also include historical AIME problems (1983-2021).
To enhance diversity, we add OlympicArena \citep{huang2024olympicarenabenchmarkingmultidisciplinecognitive} with 4,250 questions spanning Astronomy, Biology, Chemistry, Computer Science, Geography, Mathematics, and Physics from various Olympiads.
OmniMath \citep{gao2024omnimathuniversalolympiadlevel} adds 4,238 competition-level mathematics problems.
We also include 2,385 problems from AGIEval \citep{zhong2023agievalhumancentricbenchmarkevaluating}, which features questions from standardized tests like SAT and LSAT, covering English, Law, and Logic.
We refer to Table~\ref{tab:s1-se-ds} for our other sources.

\paragraph{New datasets in quantitative reasoning} To complement these existing datasets, we create two original datasets.
s1-prob consists of 182 questions from the probability section of Stanford University's Statistics Department's PhD Qualifying Exams (\url{https://statistics.stanford.edu}), accompanied by handwritten solutions that cover difficult proofs.
The probability qualifying exam is held yearly and requires professional-level mathematical problem-solving.
s1-teasers comprises 23 challenging brain-teasers commonly used in interview questions for quantitative trading positions.
Each sample consists of a problem and solution taken from PuzzledQuant (\url{https://www.puzzledquant.com/}).
We only take examples with the highest difficulty level (``Hard'').

For each question, we generate a reasoning trace and solution using the Google Gemini Flash Thinking API~\citep{geminithinking}, extracting its reasoning trace and response.
This yields 59K triplets of a question, generated reasoning trace, and generated solution.
Examples from our dataset are in the appendix.
We decontaminate all samples against our evaluation questions (MATH500, GPQA Diamond, AIME24) using 8-grams and deduplicate the data.

\subsection{Final selection of 1K samples}
\label{sec:s1-se-data}

We could directly train on our pool of 59K questions.
However, our goal is to find the \textit{simplest} approach with minimal resources.
We therefore apply three stages of filtering to arrive at a minimal set of 1,000 samples, guided by our three data principles: Quality, Difficulty, and Diversity.

\paragraph{Quality} We first remove any questions where we encountered API errors, reducing our dataset to 54,116 samples.
Next, we filter out low-quality examples by checking if they contain any string patterns with formatting issues, such as ASCII art diagrams, non-existent image references, or inconsistent question numbering, reducing our dataset to 51,581 examples.
From this pool, we identify 384 samples for our final 1,000 samples from datasets that we perceive as high-quality and not in need of further filtering (see below for details).

\paragraph{Difficulty} For difficulty, we use two indicators: model performance and reasoning trace length.
We evaluate two models on each question: Qwen2.5-7B-Instruct and Qwen2.5-32B-Instruct~\citep{qwen2024qwen25technicalreport}, with correctness assessed by Claude 3.5 Sonnet comparing each attempt against the reference solution (see the grading protocol below).
We measure the token length of each reasoning trace to indicate problem difficulty using the Qwen2.5 tokenizer.
This relies on the assumption that more difficult problems require more thinking tokens.
Based on the grading, we remove questions that either Qwen2.5-7B-Instruct or Qwen2.5-32B-Instruct can solve correctly, as these may be too easy.
Using two models reduces the likelihood of an easy sample slipping through our filtering because of a rare mistake on an easy question by one model.
This brings our total samples down to 24,496, setting the stage for the next round of subsampling based on diversity.
While filtering with these two models may be optimized for our setup (we also use Qwen2.5-32B-Instruct as our model to finetune), the idea of model-based filtering generalizes to other setups.

\paragraph{Diversity} To quantify diversity, we classify questions into domains using Claude 3.5 Sonnet based on the Mathematics Subject Classification (MSC) system (e.g., geometry, combinatorics, etc.) from the American Mathematical Society.\footnote{\url{https://mathscinet.ams.org/mathscinet/msc/msc2020.html}}
The taxonomy focuses on topics in mathematics but also includes other sciences such as biology, physics, and economics.
To select our final examples from the pool of 24,496 questions, we first choose one domain uniformly at random.
Then, we sample one problem from this domain according to a distribution that favors longer reasoning traces (see below for details) as motivated in \textit{Difficulty}.
We repeat this process until we have 1,000 total samples spanning 50 domains.

In \S\ref{sec:s1-se-dataabl}, we show that using our three criteria in combination is important, as relying on quality, diversity, or difficulty in isolation leads to worse datasets.
Some distilled generations are incorrect, which we allow in our data because we focus on capturing the reasoning process rather than entirely correct solutions.
Our grader deems 53.6\% correct in \sonek{} and 63.0\% in our follow-up \textbf{s1K-1.1}.

\subsection{Data ablations}
\label{sec:s1-se-dataabl}

\begin{table}[htbp]
\centering
\caption{\textbf{\sonek{} data ablations.} We report 95\% paired bootstrap confidence intervals for differences relative to the \sonek{} model using 10,000 bootstrap samples. E.g., the interval [-13\%, 20\%] means that, with 95\% confidence, the true difference between 59K-full and \sonek{} is between -13\% and +20\%. If the entire interval is negative, e.g. [-27\%, -3\%], we can confidently say that the performance is worse than \sonek{}.}
\begin{tabular}{l|ccc}
\toprule
Model & \makecell{AIME\\2024} & \makecell{MATH\\500} & \makecell{GPQA\\Diamond} \\
\midrule
\multirow{2}{*}{1K-random} & 36.7 & 90.6 & 52.0 \\
& \scriptsize{[-26.7\%, -3.3\%]} & \scriptsize{[-4.8\%, 0.0\%]} & \scriptsize{[-12.6\%, 2.5\%]} \\
\multirow{2}{*}{1K-diverse} & 26.7 & 91.2 & 54.6 \\
& \scriptsize{[-40.0\%, -10.0\%]} & \scriptsize{[-4.0\%, 0.2\%]} &  \scriptsize{[-10.1\%, 5.1\%]}\\
\multirow{2}{*}{1K-longest} & 33.3 & 90.4 & 59.6 \\
& \scriptsize{[-36.7\%, 0.0\%]} & \scriptsize{[-5.0\%, -0.2\%]} & \scriptsize{[-5.1\%, 10.1\%]} \\
\multirow{2}{*}{59K-full} & 53.3 & 92.8 & 58.1 \\
& \scriptsize{[-13.3\%, 20.0\%]} & \scriptsize{[-2.6\%, 2.2\%]} & \scriptsize{[-6.6\%, 8.6\%]} \\
\midrule
\sonek{} & 50.0 & 93.0 & 57.6 \\
\bottomrule
\end{tabular}
\label{tab:s1-se-datablation}
\end{table}

In \S\ref{sec:s1-se-data} we outlined our three guiding principles in curating \sonek{}: Quality, Difficulty, and Diversity.
We next test the importance of combining them and the overall effectiveness of our selection.
\textbf{Only Quality (1K-random)}: After obtaining our high-quality reasoning chains from Gemini, we select 1,000 samples at random, not relying on our difficulty and diversity filtering at all.
Table~\ref{tab:s1-se-datablation} shows this approach performs much worse than \sonek{} across all benchmarks.
\textbf{Only Diversity (1K-diverse)}: For this dataset, we sample uniformly across domains to maximize diversity, disregarding any notion of difficulty.
This approach also leads to poor performance similar to 1K-random.
\textbf{Only Difficulty (1K-longest)}: Here we rely on one of our difficulty indicators introduced in \S\ref{sec:s1-se-data} by selecting the 1,000 samples with the longest reasoning traces.
This approach significantly boosts GPQA performance but overall still falls short of using \sonek{}.
\textbf{Maximize Quantity}: Finally, we compare with training on all of our 59K samples, a superset of all the 1K-sample versions.
This leads to a strong model but uses much more resources.
To finetune on 59K samples, we use 394 H100 GPU hours while \sone{} only required 7 H100 GPU hours.
Moreover, relying only on \sonek{} is extremely competitive as shown in \S\ref{sec:s1-se-data}.
Altogether, combining all three criteria---\textit{Quality}, \textit{Difficulty}, \textit{Diversity}---via our methodology in \S\ref{sec:s1-se-data} is key for sample-efficient reasoning training.

\subsection{Dataset composition}

\begin{table*}[htbp]
\centering
\caption{\textbf{Summary of our dataset \sonek{}}. Token count measured by the Qwen-2.5 tokenizer. We prompt Claude to produce keywords given several questions from the domain.}
\begin{tabular}{>{\raggedright}p{3.5cm} l l l l l}
\toprule
Domain & \#questions & Total token count & Keywords \\
\midrule
Geometry & 109 & 560.2K & Area, Triangle, Distance \\
Number theory & 98 & 522.5K & Sequences, Divisibility \\
Combinatorics & 75 & 384.7K & Permutations, Counting \\
Real functions & 43 & 234.8K & Trigonometry, Calculus  \\
Biology & 41 & 120.9K & Organic reactions \\
Complex functions & 32 & 170.2K & Complex roots \\
Quantum theory & 32 & 127.9K & Particles, Wave functions \\
Field theory & 28 & 150.1K & Polynomials, Roots \\
Calculus of variations & 28 & 155.5K & Optimization, Control \\
Difference equations & 24 & 132.5K & Recurrence, Recursion \\
Electromagnetic theory & 23 & 95.8K & Optics, Waves, Diffraction \\
Group theory & 22 & 100.0K & Groups, Automorphisms \\
Linear algebra & 22 & 128.3K & Matrices, Determinants \\
Probability theory & 20 & 114.6K & Random walk, Expectation \\
Algebraic systems & 19 & 109.9K & Functional equations \\
Mechanics & 19 & 103.6K & Forces, Motion, Energy \\
Thermodynamics & 19 & 74.2K &  Heat engines, Entropy \\
Differential equations & 18 & 89.6K & Substitution, Existence \\
Computer science & 18 & 34.2K & Complexity theory, Algorithms \\
Numerical analysis & 18 & 76.5K & Error analysis, Stability \\
Calculus & 17 & 96.3K & Convergence, Summation \\
Algebraic structures & 17 & 90.4K & Inequalities, Sets \\
Astronomy & 16 & 37.7K & Stellar populations, Orbits \\
Remaining 27 domains & 242 & 982.2K & Domains with $\leq$ 16 questions  \\
\midrule
All domains (51) & 1000 & 4.7M &  \sonek{}\\
\bottomrule
\end{tabular}
\label{tab:s1-se-domain_distribution}
\end{table*}

\subsubsection{Dataset composition for full 59K questions}

\begin{table}[htbp]
\centering
\caption{\textbf{Composition of full 59K questions.}
Thinking and response lengths are measured in tokens using the Qwen2.5-32B-Instruct tokenizer~\citep{qwen2024qwen25technicalreport}.
In addition to excluding our evaluation benchmark, AIME24, we also exclude AIME questions from 2022--2023 because we use these 90 questions during our development stage of \sone{}.}
\small
\begin{tabular}{>{\raggedright}p{4.7cm} p{5.4cm} p{1.2cm} p{1.2cm} p{1.2cm}}
\toprule
Source & Description & \#Samples & Avg. thinking length \\
\midrule
NuminaMATH~\citep{numina_math_datasets} & Math problems from online websites & 30660 & 4.1K \\
MATH~\citep{hendrycks2021measuringmathematicalproblemsolving} & Math problems from competitions & 11999 & 2.9K \\
OlympicArena~\citep{huang2024olympicarenabenchmarkingmultidisciplinecognitive} & Astronomy, Biology, Chemistry, Computer Science, Geography, Math, and Physics olympiad questions & 4250 & 3.2K\\
OmniMath~\citep{gao2024omnimathuniversalolympiadlevel} & Math problems from competitions & 4238 & 4.4K\\
AGIEval~\citep{zhong2023agievalhumancentricbenchmarkevaluating,ling2017programinductionrationalegeneration,hendrycks2021measuringmathematicalproblemsolving,liu2020logiqachallengedatasetmachine,zhong2019jecqalegaldomainquestionanswering,wang2021lsatprogresschallengescomplex} & English, Law, Logic and Math problems from the SAT, LSAT and other exams & 2385 & 1.2K\\
xword & Crossword puzzles & 999 & 0.7K \\
OlympiadBench~\citep{he2024olympiadbenchchallengingbenchmarkpromoting} & Math and Physics olympiad questions & 896 & 3.9K\\
AIME (1983-2021) & American Invitational Mathematics Examination & 890 & 4.7K \\
TheoremQA~\citep{chen2023theoremqatheoremdrivenquestionanswering} & Computer Science, Finance, Math, and Physics university-level questions relating to theorems  & 747 & 2.1K \\
USACO \citep{shi2024languagemodelssolveolympiad} & Code problems from the USA Computing Olympiad & 519 & 3.6K \\
JEEBench~\citep{arora2023llmsadvancedenoughchallenging} & Chemistry, Math, and Physics problems used in the university entrance examination of the Indian Institute of Technology & 515 & 2.9K \\
GPQA~\citep{rein2023gpqagraduatelevelgoogleproofqa} & PhD-Level Science Questions & 348 & 2.9K \\
SciEval~\citep{sun2024scievalmultilevellargelanguage} & Biology, Chemistry, and Physics problems from various sources & 227 & 0.7K \\
s1-prob & Stanford statistics qualifying exams & 182 & 4.0K \\
LiveCodeBench~\citep{jain2024livecodebenchholisticcontaminationfree} & Code problems from coding websites (LeetCode, AtCoder, and CodeForces) &  151 & 3.5K\\
s1-teasers & Math brain-teasers crawled from the Internet & 23 & 4.1K \\
\midrule
\textbf{All 59K questions} & Composite of the above datasets with reasoning traces and solutions & 59029 & 3.6K \\
\bottomrule
\end{tabular}
\label{tab:s1-se-ds}
\end{table}

\FloatBarrier

\paragraph{\sonek{} grading prompt} To grade whether an example is correct for our dataset selection in \S\ref{sec:s1-se-data}, we use the following prompt.
We use Claude 3.5 for grading, except for the final 1,000 samples, which we grade with Claude 3.7.

\begin{qualitativeBox}
{\footnotesize
\begin{verbatim}
You are an AI assistant for grading a science problem.
The user will provide you with the question itself, an attempt made by a student and the correct 
answer to the problem.
Your job is to judge whether the attempt is correct by comparing it with the correct answer.
If the expected solution concludes with a number or choice, there should be no ambiguity.
If the expected solution involves going through the entire reasoning process, you should judge 
the attempt based on whether the reasoning process is correct with correct answer if helpful.

The user will provide the attempt and the correct answer in the following format:

# Problem

{problem}

## Attempt

{attempt}

## Correct answer

{solution}

Explain your reasoning, and end your response on a new line with only "Yes" or "No" (without 
quotes).
\end{verbatim}
}
\end{qualitativeBox}

\paragraph{\sonek{} diversity selection}

\begin{algorithm}
\caption{Two-stage sampling for \sonek{}}
\label{alg:s1-se-twostage}
\begin{algorithmic}[1]
\State \textbf{Input:} $\mathcal{Q}$ := Set of 24,496 questions with features
\State \textbf{Output:} $\mathcal{S}$ := Set of 1,000 selected questions
\State $\mathcal{S} \gets \emptyset$ \hfill \textit{Initialize the output set (only tracks unique elements)}

\For{$q \in \mathcal{Q}$}
    \If{IsGeminiCorrect($q$) \textbf{and} (IsAIME($q$) \textbf{or} IsGPQA($q$))}
        \State $\mathcal{S} \gets \mathcal{S} \cup \{q\}$
        \State \hfill \textit{Select all correct AIME/GPQA solutions}
    \ElsIf{IsGeminiCorrect($q$) \textbf{and} IsMATH($q$) \textbf{and} ThinkingLength($q$) > 5600}
        \State $\mathcal{S} \gets \mathcal{S} \cup \{q\}$
        \State \hfill \textit{Select correct MATH500 solutions with long chains}
    \EndIf
\EndFor

\State $\mathcal{D} \gets$ All available domains
\State \hfill \textit{Initialize domain pool}

\While{$|\mathcal{S}| < 1000$}
    \State $d \gets$ RandomChoice($\mathcal{D}$)
    \State \hfill \textit{Randomly select a domain}
    \State $Q_d \gets$ Questions in domain $d$
    \State \hfill \textit{Get questions from this domain}
    \State ranks $\gets$ RankByThinkingLength($Q_d$)
    \State \hfill \textit{Rank by thinking length}
    \State weights $\gets 2^{-\text{ranks}}$
    \State \hfill \textit{Apply power-law weighting}
    \State $q \gets$ WeightedSample($Q_d$, weights)
    \State \hfill \textit{Sample favoring longer chains}
    \State $\mathcal{S} \gets \mathcal{S} \cup \{q\}$
    \State \hfill \textit{Add selected question}
    \State $Q_d \gets Q_d \setminus \{q\}$

    \If{$Q_d = \emptyset$}
        \State $\mathcal{D} \gets \mathcal{D} \setminus \{d\}$
        \State \hfill \textit{Remove exhausted domains}
    \EndIf
\EndWhile
\end{algorithmic}
\end{algorithm}

Algorithm~\ref{alg:s1-se-twostage} details our diversity selection procedure.
We also include samples from specific benchmarks we consider high-quality (\S\ref{sec:s1-se-data}).
None of the selected samples overlap with our final evaluation.

\paragraph{Decontamination} We filter samples by checking for 8-gram overlap between selected examples and our evaluation benchmarks (MATH500, GPQA Diamond, and AIME24).
We exclude any question with more than 8-gram overlap.

\subsection{Training details}
\label{sec:s1-se-details-training}

We finetune Qwen2.5-32B-Instruct~\citep{qwen2024qwen25technicalreport} for reasoning.
On math tasks, this model generally matches or outperforms the larger Qwen2.5-72B-Instruct~\citep{qwen2024qwen25technicalreport} and other open models~\citep{dubey2024llama3herdmodels,groeneveld2024olmo,muennighoff2024olmoeopenmixtureofexpertslanguage}.
We use token delimiters to separate the thinking stage from the answering stage, enclosing the thinking stage with \verb@<|im_start|>think@ and \verb@<|im_start|>answer@---both preceded and followed by a newline.

We train for 5 epochs with a batch size of 16 (315 gradient steps total) in bfloat16 precision.
We use a learning rate of $1e-5$ with linear warmup for 5\% of training (16 steps), then cosine decay to 0 over the remaining 299 steps.
We use AdamW~\citep{loshchilov2019decoupled} with $\beta_1=0.9$, $\beta_2=0.95$, and weight decay of $1e-4$.
We compute loss only on reasoning traces and solutions, not on questions.
We set the sequence length large enough to avoid truncating any samples.
Training takes 26 minutes on 16 NVIDIA H100 GPUs.
For ablations, we use identical hyperparameters except for the 59K model (\S\ref{sec:s1-se-dataabl}), where we use a batch size of 120 to process more data.

\paragraph{Evaluation} We select three representative reasoning benchmarks widely used in the field.
\textbf{AIME24} has 30 problems from the 2024 American Invitational Mathematics Examination (AIME) held from January 31 -- February 1, 2024.
AIME tests mathematical problem-solving with arithmetic, algebra, counting, geometry, number theory, probability, and other secondary school math topics.
All AIME answers are integers ranging from $000$ to $999$, inclusive.
\textbf{MATH500}~\citep{hendrycks2021measuringmathematicalproblemsolving} is a benchmark of competition math problems of varying difficulty.
We evaluate on the same 500 samples selected by OpenAI in prior work~\citep{lightman2023letsverifystepstep}.
\textbf{GPQA Diamond}~\citep{rein2023gpqagraduatelevelgoogleproofqa} consists of 198 PhD-level science questions from Biology, Chemistry, and Physics.
Experts with PhDs in the corresponding domains only achieved 69.7\% on GPQA Diamond~\citep{o1}.
We build on the ``lm-evaluation-harness'' framework~\citep{eval-harness,biderman2024lessons}.

\chapter{Supplementary materials for Chapter~\ref{chap:automated-ai-research}}
\section{Appendix}

\subsection{Additional idea examples}
\label{sec:more_examples}

We provide additional example ideas generated by Claude-4.5-Opus (Table~\ref{tab:more_grpo_examples_claude_4_5_opus}) and Claude-4.5-Sonnet (Table~\ref{tab:more_grpo_examples_claude_4_5_sonnet}) on the GRPO environment, including ideas with failed code execution.
Code execution errors tend to arise when an idea involves complex changes or requires external packages not supported in our execution environment.
Improving the execution agent to correctly implement more complex ideas (e.g., training auxiliary models or system-level optimizations) is an important direction for future work.

\begin{table*}[htbp]
\centering
\small
\setlength{\tabcolsep}{5pt}
\renewcommand{\arraystretch}{1}

\begin{tabular}{L{0.42\textwidth}|L{0.56\textwidth}}
\toprule
\textbf{Successful execution} & \textbf{Failed execution} \\
\hline

\textbf{[Experiment]} Sequence Position Weighted Trust Region: Apply tighter
sigmoid bounds to earlier tokens in the sequence (where errors compound) and
looser bounds to later tokens. Weight:
\texttt{position\_weight = 1 - 0.3 * (position / seq\_len)},
\texttt{effective\_deviation = 0.25 + 0.2 * position\_weight}. This accounts for
the sequential nature of autoregressive generation.

\vspace{6pt}
\textbf{[Code Changes]} Modify \texttt{grpo.py}: Initialize
\texttt{current\_cliprange = 0.2}, \texttt{ema\_clip\_fraction = 0.15}.
Standard momentum clip updates. Modify \texttt{compute\_grpo\_clip\_loss} in
\texttt{grpo\_utils.py}: After computing ratio on line 91 (shape:
\texttt{batch\_size x seq\_len}): \texttt{batch\_size, seq\_len = ratio.shape},
\texttt{positions = torch.arange(seq\_len, device=ratio.device).float()}
\texttt{.unsqueeze(0).expand(batch\_size, -1)},
\texttt{position\_weight = 1.0 - 0.3 * (positions / (seq\_len - 1 + 1e-6))},
\texttt{effective\_deviation = 0.25 + 0.2 * position\_weight}. Apply
position-aware sigmoid: \texttt{centered\_ratio = ratio - 1.0},
\texttt{bounded\_ratio = 1.0 + (2.0 * torch.sigmoid(centered\_ratio) - 1.0) * effective\_deviation}.
Use: \texttt{surr1 = bounded\_ratio * advantages},
\texttt{surr2 = torch.clamp(bounded\_ratio, 1 - cliprange, 1 + cliprange) * advantages},
\texttt{loss = -torch.min(surr1, surr2)}. Add metadata:
\texttt{metadata["mean\_effective\_deviation"] = effective\_deviation.mean().item()},
\texttt{metadata["early\_deviation"] = effective\_deviation[:, :seq\_len//4].mean().item()},
\texttt{metadata["late\_deviation"] = effective\_deviation[:, -seq\_len//4:].mean().item()}.

\vspace{5pt}
\textbf{Validation accuracy: 59.8}
\vspace{1pt}
&
\textbf{[Experiment]} Hierarchical Position-Group Trust Region: Apply trust
region at two hierarchical levels -- group level (shared within each response
group) and position level (varies along sequence). Groups with high internal
reward variance get tighter group-level bounds. Within groups, positions follow
the proven decay pattern. This captures both cross-sample and within-sample
structure. Formula: \texttt{group\_dev = 0.4 - 0.15 * tanh(group\_reward\_var / 0.3)},
\texttt{position\_factor = 1 - 0.2 * rel\_pos},
\texttt{effective\_dev = group\_dev * position\_factor}.

\vspace{6pt}
\textbf{[Code Changes]} Modify \texttt{grpo.py}: Initialize
\texttt{current\_cliprange = 0.2}, \texttt{ema\_clip\_fraction = 0.15}.
Standard momentum clip updates. Pass \texttt{group\_size} to function. Modify
\texttt{compute\_grpo\_clip\_loss} in \texttt{grpo\_utils.py}: Add parameter
\texttt{group\_size=8}. After computing ratio: \texttt{batch\_size, seq\_len = ratio.shape},
\texttt{n\_groups = batch\_size // group\_size}. Compute group reward variance
from advantages as proxy:
\texttt{adv\_grouped = advantages.view(n\_groups, group\_size, -1)},
\texttt{group\_adv\_var = adv\_grouped.var(dim=1, keepdim=True)},
\texttt{group\_adv\_var\_expanded = group\_adv\_var.expand(-1, group\_size, -1).reshape(advantages.shape)}.
Group-level deviation:
\texttt{group\_deviation = 0.4 - 0.15 * torch.tanh(group\_adv\_var\_expanded / 0.3)}.
Position factor:
\texttt{positions = torch.arange(seq\_len, device=ratio.device).float().unsqueeze(0)}
\texttt{.expand(batch\_size, -1)},
\texttt{rel\_pos = positions / (seq\_len - 1 + 1e-6)},
\texttt{position\_factor = 1.0 - 0.2 * rel\_pos}. Hierarchical deviation:
\texttt{effective\_deviation = group\_deviation * position\_factor},
\texttt{effective\_deviation = torch.clamp(effective\_deviation, 0.15, 0.45)}.
Apply: \texttt{centered\_ratio = ratio - 1.0},
\texttt{bounded\_ratio = 1.0 + (2.0 * torch.sigmoid(centered\_ratio) - 1.0) * effective\_deviation}.
Use: \texttt{surr1 = bounded\_ratio * advantages},
\texttt{surr2 = torch.clamp(bounded\_ratio, 1 - cliprange, 1 + cliprange) * advantages},
\texttt{loss = -torch.min(surr1, surr2)}. Add
\texttt{metadata["mean\_group\_var"] = group\_adv\_var.mean().item()},
\texttt{metadata["mean\_effective\_deviation"] = effective\_deviation.mean().item()}.
Log to wandb.
\\
\bottomrule
\end{tabular}

\caption{Additional examples on the GRPO environment. Ideas are generated by Claude-4.5-Opus during evolutionary search.}
\label{tab:more_grpo_examples_claude_4_5_opus}
\end{table*}

\newpage
\begin{table*}[htbp]
\centering
\scriptsize
\setlength{\tabcolsep}{5pt}
\renewcommand{\arraystretch}{1}

\begin{tabular}{L{0.48\textwidth}|L{0.48\textwidth}}
\toprule
\textbf{Successful execution} & \textbf{Failed execution} \\
\hline

\textbf{[Experiment]} Create a mathematical step-complexity aware reward shaping
where responses with more mathematical reasoning steps receive slightly higher
base rewards (1.05x for 3+ steps, 1.1x for 5+ steps) when correct, encouraging
thorough mathematical exposition without changing the core binary reward
structure.

\vspace{6pt}
\textbf{[Code Changes]} Modify \texttt{r1\_zero\_reward\_fn\_train} in
\texttt{drgrpo\_grader.py} to count reasoning steps by detecting mathematical
transitions (\texttt{"therefore"}, \texttt{"thus"}, \texttt{"so"}, \texttt{"="},
\texttt{"=>"}). When answer is correct, apply step-based multiplier:
\texttt{step\_multiplier = 1.0 + 0.05 * min(2, max(0, num\_steps - 2))} to get
multipliers \texttt{[1.0, 1.05, 1.1]}. Return
\texttt{\{"format\_reward": 1.0, "answer\_reward": answer\_reward, "reward": base\_reward * step\_multiplier\}}.
Set \texttt{--learning\_rate 3e-5} and \texttt{--loss\_type reinforce\_with\_baseline}.

\vspace{5pt}
\textbf{Validation accuracy: 65.6}
\vspace{1pt}

\vspace{15pt}
\textbf{[Experiment]} Add experience replay by maintaining a buffer of the top
20\% highest-reward rollouts from previous epochs and mixing them (25\% replay,
75\% new) with current rollouts during training, combined with \texttt{3e-5}
learning rate and \texttt{reinforce\_with\_baseline} for improved sample
efficiency.

\vspace{6pt}
\textbf{[Code Changes]} Modify \texttt{train\_loop} in \texttt{grpo.py} to
maintain \texttt{replay\_buffer} storing high-reward (\texttt{>0.8}) rollouts
from previous epochs. Each epoch, sample 25\% of training data from replay
buffer and 75\% from new rollouts. Update buffer by adding top 20\% of current
epoch's rollouts and removing oldest entries when buffer exceeds 1000 samples.
Set \texttt{--learning\_rate 3e-5} and \texttt{--loss\_type reinforce\_with\_baseline}.

\vspace{5pt}
\textbf{Validation accuracy: 39.4}
\vspace{1pt}

\vspace{15pt}
\textbf{[Experiment]} Implement response diversity rewards within groups where
responses to the same prompt receive bonus rewards (\texttt{0.05--0.15}) for
being dissimilar to other responses in their group, encouraging exploration of
different solution paths while maintaining the proven \texttt{group\_size=8} and
\texttt{3e-5} learning rate combination.

\vspace{6pt}
\textbf{[Code Changes]} Modify \texttt{compute\_group\_normalized\_rewards} in
\texttt{grpo\_utils.py} to compute pairwise similarity between responses in each
group using token-level Jaccard similarity. Add diversity bonus:
\texttt{diversity\_reward = 0.15 * (1 - max\_similarity\_in\_group)} to each
response's reward before advantage computation. Reshape responses into groups,
compute similarities, and add bonuses before advantage normalization. Set
\texttt{--learning\_rate 3e-5}, \texttt{--loss\_type reinforce\_with\_baseline},
\texttt{--group\_size 8}.

\vspace{5pt}
\textbf{Validation accuracy: 19.2}
\vspace{1pt}
&
\textbf{[Experiment]} Implement temporal difference advantage estimation where
advantages incorporate not just current rewards but also predicted future rewards
using a learned value function, combined with the proven \texttt{3e-5} learning
rate and \texttt{reinforce\_with\_baseline} for more accurate credit assignment.

\vspace{6pt}
\textbf{[Code Changes]} Add a value head to the policy model in \texttt{grpo.py}
that predicts expected future rewards. Compute TD advantages as
\texttt{advantages = rewards + gamma * next\_values - current\_values} with
\texttt{gamma=0.99}. Train the value function with MSE loss on observed returns.
Modify \texttt{compute\_group\_normalized\_rewards} to use TD advantages instead
of basic reward differences. Set \texttt{--learning\_rate 3e-5} and
\texttt{--loss\_type reinforce\_with\_baseline}.

\vspace{24pt}
\textbf{[Experiment]} Ensemble Decision Training with Voting Consensus:
Train the model using ensemble-style decision making where each rollout
generates multiple candidate responses, and the final training signal is based
on majority voting among responses. This encourages the model to develop more
robust and consistent reasoning patterns while maintaining diversity in solution
approaches.

\vspace{6pt}
\textbf{[Code Changes]} Modify \texttt{sample\_rollout} in \texttt{sample.py} to
generate 3 responses per prompt instead of 1, using different random seeds.
Implement voting consensus in \texttt{r1\_zero\_reward\_fn\_train}: if 2+
responses are correct, apply \texttt{+0.15} consensus bonus; if responses
disagree, apply \texttt{-0.05} uncertainty penalty. In \texttt{train\_loop} in
\texttt{grpo.py}, select the highest-voted response for training while using
consensus information to adjust learning rate:
\texttt{consensus\_lr = 3e-5 * (0.9 + 0.2 * consensus\_rate)}. Set
\texttt{group\_size=6}, \texttt{--loss\_type reinforce\_with\_baseline}.

\vspace{24pt}
\textbf{[Experiment]} Implement hierarchical advantage estimation where
advantages are computed at both token-level and sequence-level, with token-level
advantages weighted by their position importance (higher weights for
mathematical expressions and final answers), combined with the successful
\texttt{3e-5} learning rate and \texttt{reinforce\_with\_baseline}.

\vspace{6pt}
\textbf{[Code Changes]} Modify \texttt{grpo\_microbatch\_train\_step} in
\texttt{grpo\_utils.py} to create position importance weights that assign
\texttt{2.0x} weight to tokens containing mathematical symbols
(\texttt{\textbackslash frac}, \texttt{+}, \texttt{-}, \texttt{*}, \texttt{=})
and \texttt{1.5x} weight to answer sections. Compute both sequence-level
advantages (current) and token-level advantages, then combine as
\texttt{final\_advantages = 0.6 * sequence\_advantages + 0.4 * token\_advantages}.
Set \texttt{--learning\_rate 3e-5} and \texttt{--loss\_type reinforce\_with\_baseline}.
\\
\bottomrule
\end{tabular}

\caption{Additional examples on the GRPO environment. Ideas are generated by Claude-4.5-Sonnet during evolutionary search.}
\label{tab:more_grpo_examples_claude_4_5_sonnet}
\end{table*}

\FloatBarrier

\newpage
We next present the top-performing ideas from Claude-4.5-Opus, Claude-4.5-Sonnet, and GPT-5 on the nanoGPT environment.

\begin{tcolorbox}
[breakable,colback=thesisRosyTaupe!10,colframe=thesisRosyTaupe!30,title=\textbf{Claude-4.5-Opus Idea on nanoGPT (Validation Loss: 3.1407)}]
\textbf{[Experiment]} Wider SwiGLU (5x) with MLP Output Scaling (Init 0.97), Skip Connections Every 4 and 8 Layers with Learnable Weights (Init 0.52 and 0.31), Separate Attention/MLP Scales (Init 0.98), Higher LR (0.00168), Reduced Weight Decay (0.065), Warmup 173 iters, Lower Min LR (0.03x), Cosine Annealing, EMA, Untied Embeddings, and Beta2=0.99

Make the dual skip connection weights learnable parameters initialized at proven good values. This allows the model to adapt skip weights during training while combining with separate attention/MLP residual scales.

\textbf{[Code Changes]}
\begin{itemize}
  \item Change \texttt{warmup\_iters = 256} to \texttt{warmup\_iters = 173} in Hyperparameters class
  \item Change \texttt{weight\_decay = 0.1} to \texttt{weight\_decay = 0.065} in Hyperparameters class
  \item Change \texttt{learning\_rate = 0.0015} to \texttt{learning\_rate = 0.00168} in Hyperparameters class
  \item In \texttt{GPT.\_\_init\_\_}, add after transformer dict:
\begin{verbatim}
self.skip_weight_4 = nn.Parameter(torch.tensor(0.52))
self.skip_weight_8 = nn.Parameter(torch.tensor(0.31))
\end{verbatim}
  \item In \texttt{Block.\_\_init\_\_}, add: \texttt{self.attn\_scale = nn.Parameter(torch.tensor(0.98))} and \texttt{self.mlp\_scale = nn.Parameter(torch.tensor(0.98))}
  \item In \texttt{Block.forward}, change to:
\begin{verbatim}
def forward(self, x):
    x = x + self.attn_scale * self.attn(rmsnorm(x))
    x = x + self.mlp_scale * self.mlp(rmsnorm(x))
    return x
\end{verbatim}
  \item In \texttt{Block.forward\_with\_cache}, change to:
\begin{verbatim}
def forward_with_cache(self, x, cache):
    attn_out, new_cache = self.attn.forward_with_cache(rmsnorm(x),
    cache=cache)
    x = x + self.attn_scale * attn_out
    x = x + self.mlp_scale * self.mlp(rmsnorm(x))
    return x, new_cache
\end{verbatim}
  \item In \texttt{MLP.\_\_init\_\_}, replace lines 81--82 with:
\begin{verbatim}
self.c_fc = nn.Linear(config.n_embd, 5 * config.n_embd, bias=False)
self.c_gate = nn.Linear(config.n_embd, 5 * config.n_embd, bias=False)
self.c_proj = nn.Linear(5 * config.n_embd, config.n_embd, bias=False)
self.output_scale = nn.Parameter(torch.tensor(0.97))
\end{verbatim}
  \item In \texttt{MLP.forward}, replace with:
\begin{verbatim}
def forward(self, x):
    gate = F.silu(self.c_gate(x))
    x = self.c_fc(x) * gate
    x = self.c_proj(x) * self.output_scale
    return x
\end{verbatim}
  \item In \texttt{GPT.\_\_init\_\_}, remove line 132: \texttt{self.transformer.wte.weight = self.lm\_head.weight}
  \item Remove line 131: \texttt{self.lm\_head.LLMC\_SKIP\_INIT = 1}
  \item Modify \texttt{\_init\_weights} to add: \texttt{if isinstance(module, nn.Linear): torch.nn.init.normal\_(module.weight, mean=0.0, std=0.02)}
  \item Change optimizer betas on line 402 to \texttt{betas=(0.9, 0.99)}
  \item Modify \texttt{get\_lr} function:
\begin{verbatim}
def get_lr(it):
    assert it <= args.num_iterations
    if it < args.warmup_iters:
        return args.learning_rate * (it+1) / args.warmup_iters
    min_lr = 0.03 * args.learning_rate
    decay_ratio = (it - args.warmup_iters) /
    (args.num_iterations - args.warmup_iters)
    return min_lr + 0.5 * (args.learning_rate - min_lr) *
    (1.0 + math.cos(math.pi * decay_ratio))
\end{verbatim}
  \item In \texttt{GPT.forward}, replace the block loop with:
\begin{verbatim}
layer_outputs = []
for i, block in enumerate(self.transformer.h):
    if i >= 4 and i %
        x = x + self.skip_weight_4 * layer_outputs[i-4]
    if i >= 8 and i %
        x = x + self.skip_weight_8 * layer_outputs[i-8]
    x = block(x)
    layer_outputs.append(x)
\end{verbatim}
  \item In \texttt{GPT.forward\_with\_cache}, replace the block loop with:
\begin{verbatim}
layer_outputs = []
for i, block in enumerate(self.transformer.h):
    if i >= 4 and i %
        x = x + self.skip_weight_4 * layer_outputs[i-4]
    if i >= 8 and i %
        x = x + self.skip_weight_8 * layer_outputs[i-8]
    x, new_cache = block.forward_with_cache(x, cache=caches[i])
    new_caches.append(new_cache)
    layer_outputs.append(x)
\end{verbatim}
  \item After model initialization, add: \texttt{ema\_model = \{k: v.clone() for k, v in raw\_model.state\_dict().items()\}} and \texttt{ema\_decay = 0.999}
  \item After \texttt{optimizer.step()}, add: \texttt{for k, v in raw\_model.state\_dict().items(): ema\_model[k].mul\_(ema\_decay).add\_(v, alpha=1-ema\_decay)}
  \item Before validation, add: \texttt{orig\_state = \{k: v.clone() for k, v in raw\_model.state\_dict().items()\}; raw\_model.load\_state\_dict(ema\_model)}
  \item After validation, add: \texttt{raw\_model.load\_state\_dict(orig\_state)}
\end{itemize}
\end{tcolorbox}

\newpage
\begin{tcolorbox}
[breakable,colback=thesisRosyTaupe!10,colframe=thesisRosyTaupe!30,title=\textbf{Claude-4.5-Sonnet Idea on nanoGPT (Validation Loss: 3.2081)}]
\label{fig:sonnet_nanogpt_example}
\textbf{[Experiment]} Two-phase weight decay (0.1170$\rightarrow$0.0210 at 59.65\%) +
30.45\% plateau + LR 0.001550 + warmup 197 + two-phase grad clip (1.054$\rightarrow$0.916 at 59.65\%) +
quadratic min\_lr interpolation (0.0113x at 59.65\%, 0.0075x at end via quadratic) +
progressive EMA (0.999$\rightarrow$0.9992 linear over training) + exponential warmup +
cosine LR + beta2=0.99

\vspace{6pt}
Use smooth quadratic interpolation for \texttt{min\_lr} during low-WD phase AND
progressive EMA that gradually increases from \texttt{0.999} to \texttt{0.9992}
linearly throughout training. Early training benefits from faster EMA tracking,
while later training gets heavier smoothing. Use conservative settings:
WD \texttt{0.1170}/\texttt{0.0210}, extended plateau \texttt{30.45\%}, moderate LR
\texttt{0.001550}, longest warmup \texttt{197}, tight grad clip
\texttt{1.054$\rightarrow$0.916}.

\vspace{6pt}
\textbf{[Code Changes]} Modify line 326 to change \texttt{warmup\_iters = 256} to
\texttt{warmup\_iters = 197}. Modify line 325 to change
\texttt{learning\_rate = 0.0015} to \texttt{learning\_rate = 0.001550}. Modify
line 402 to change \texttt{betas=(0.9, 0.95)} to \texttt{betas=(0.9, 0.99)}.
Modify the \texttt{get\_lr} function: replace lines 408--414 with:
\begin{verbatim}
if it < args.warmup_iters:
    progress = (it + 1) / args.warmup_iters
    return args.learning_rate * (1.0 - math.exp(-5.0 * progress))
plateau_end = int(0.3045 * args.num_iterations)
if it < plateau_end:
    return args.learning_rate
overall_progress = it / args.num_iterations
decay_ratio = (it - plateau_end) / (args.num_iterations - plateau_end)
coeff = 0.5 * (1.0 + math.cos(math.pi * decay_ratio))
if overall_progress <= 0.5965:
    min_lr_factor = 0.0113
else:
    phase2_progress = (overall_progress - 0.5965) / (1.0 - 0.5965)
    min_lr_factor = 0.0113 - (0.0113 - 0.0075) * (phase2_progress ** 2)
min_lr = min_lr_factor * args.learning_rate
return min_lr + coeff * (args.learning_rate - min_lr)
\end{verbatim}
Modify line 527 to:
\begin{verbatim}
progress = step / args.num_iterations
current_clip = 0.916 if progress > 0.5965 else 1.054
norm = torch.nn.utils.clip_grad_norm_(model.parameters(), current_clip)
\end{verbatim}
After line 529, add:
\begin{verbatim}
progress = step / args.num_iterations
current_wd = 0.0210 if progress > 0.5965 else 0.1170
for param_group in optimizer.param_groups:
    param_group['weight_decay'] = current_wd
\end{verbatim}
After line 387, add:
\begin{verbatim}
ema_model = {name: param.clone().detach() for name, param in
raw_model.named_parameters()}
\end{verbatim}
After line 533, add:
\begin{verbatim}
if step > 0:
    progress = step / args.num_iterations
    ema_decay = 0.999 + 0.0002 * progress
    for name, param in raw_model.named_parameters():
        ema_model[name].mul_(ema_decay).add_(param.data, alpha=1 - ema_decay)
\end{verbatim}
Before line 483, add:
\begin{verbatim}
original_params = {name: param.data.clone() for name, param in
raw_model.named_parameters()}
for name, param in raw_model.named_parameters():
    param.data.copy_(ema_model[name])
\end{verbatim}
After line 509, add:
\begin{verbatim}
for name, param in raw_model.named_parameters():
    param.data.copy_(original_params[name])
\end{verbatim}
\end{tcolorbox}

\vspace{20pt}
\begin{tcolorbox}
[breakable,colback=thesisRosyTaupe!10,colframe=thesisRosyTaupe!30,title=\textbf{GPT-5 Idea on nanoGPT (Validation Loss: 3.1697)}]
\label{fig:gpt5_nanogpt_example}
\textbf{[Experiment]} SwiGLU-3.5x + Residual Alphas + Min-Floor Cosine +
Per-step Beta2 Linear Decay + 3-Group AdamW + Debiased EMA

\vspace{6pt}
\textbf{[Code Changes]}
\begin{itemize}
  \item \textbf{Hyperparameters:} \texttt{hidden\_factor=3.5},
  \texttt{warmup\_iters=256}, \texttt{lr\_peak\_factor=1.10},
  \texttt{min\_lr\_factor=0.02}, \texttt{beta2\_start=0.99},
  \texttt{beta2\_end=0.95}, \texttt{wd\_decay=0.1}, \texttt{wd\_embed=0.01},
  \texttt{ema\_decay=0.9995}, \texttt{ema\_warmup\_steps=256}.
  \item \textbf{MLP:} SwiGLU; Block alphas init \texttt{0.9}.
  \item \textbf{Optimizer:} 3-group AdamW.
  \item \textbf{LR:} warmup to peak; cosine to floor as before.
  \item \textbf{Per-step beta2 update:} After setting lr each step, set
  \[
  \texttt{beta2} =
  \texttt{beta2\_start}
  + \big(\texttt{beta2\_end} - \texttt{beta2\_start}\big)\,
  \texttt{min}\!\left(1.0,\frac{\texttt{it}+1}{\texttt{args.num\_iterations}}\right);
  \]
  update all \texttt{param\_groups} betas.
  \item \textbf{EMA:} maintain \texttt{ema\_params} with debiasing at eval
  (divide by \texttt{1 - ema\_decay**ema\_step}), then restore.
\end{itemize}
\end{tcolorbox}

The best-performing ideas on nanoGPT combine extensive hyperparameter tuning with architecture modifications.
We also highlight several ``atomic'' algorithmic ideas that execute successfully.

\textbf{Examples from Claude-4.5-Opus}

\begin{itemize}
    \item \textbf{Head-Wise Attention Output Scaling} Add learnable per-head scaling factors to attention, allowing different heads to contribute with different magnitudes to the output.\\
    \textbf{Validation loss: 3.2386}

    \item \textbf{Learned Residual Connection Weights} Add learnable scalar weights for each residual connection that are initialized to 1.0, allowing the model to learn optimal residual scaling during training.\\
    \textbf{Validation loss: 3.2517}

    \item \textbf{Mixture of Embeddings with Position} Learn to mix token embeddings and position embeddings with a content-dependent weight, allowing the model to dynamically balance positional vs semantic information per token.\\
    \textbf{Validation loss: 3.2497}

    \item \textbf{Shared Input-Output Embedding with Learned Asymmetry} Keep weight tying but add a small learned transformation on the output side, providing the benefits of weight tying while allowing output-specific adaptation.\\
    \textbf{Validation loss: 3.2499}

    \item \textbf{Gated Final Normalization} Replace the final RMSNorm before \texttt{lm\_head} with a gated version where a learned gate controls how much normalization is applied vs passing the raw representation.\\
    \textbf{Validation loss: 3.2503}

    \item \textbf{Position-Aware MLP Gating} Gate the MLP output based on position information, allowing the model to learn position-dependent processing depth.\\
    \textbf{Validation loss: 3.2506}

    \item \textbf{Learned Residual Connection Weights} Add learnable scalar weights for each residual connection that are initialized to 1.0, allowing the model to learn optimal residual scaling during training.\\
    \textbf{Validation loss: 3.2517}

    \item \textbf{Grouped Token Embeddings} Group the vocabulary into clusters and add a learned embedding per cluster on top of token embeddings, providing hierarchical vocabulary structure.\\
    \textbf{Validation loss: 3.2521}
\end{itemize}

\vspace{20pt}
We also present several Claude-4.5-Opus ideas on the nanoGPT environment that did not execute successfully.

\begin{itemize}
    \item \textbf{Soft Layer Repetition}
Allow the model to softly repeat computation through layers by adding a learned gate that mixes the current layer's input back into its output, simulating variable depth.

    \item \textbf{Causal Context Compression}
Before each attention layer, apply a learned compression that mixes local context (previous 2-3 tokens) into the current representation, providing implicit local context without convolutions.

    \item \textbf{Attention Head Specialization via Orthogonal Loss}
Add a soft penalty that encourages different attention heads to attend to different patterns by penalizing similarity between head outputs.

    \item \textbf{Skip Connections with Learned Residual Weights}
Combine skip connections with learned residual weights. The skip connections provide alternative gradient paths while learned weights allow adaptive scaling.

    \item \textbf{Token Difficulty-Aware Loss Weighting}
Weight the loss contribution of each token based on the model's uncertainty (entropy) at that position, focusing learning on difficult tokens while not over-optimizing easy ones.
\end{itemize}

\newpage
\subsection{Code execution examples}
\label{sec:code_examples}

We present a full code execution example from Claude-4.5-Sonnet on the GRPO environment, showing the natural language idea followed by the generated code implementation.

\textbf{Example 1}

\textbf{[Experiment]} Create mathematical working memory simulation by
maintaining a context buffer of mathematical facts, definitions, and
intermediate results during problem solving. This buffer gets updated as the
model works through problems and provides additional context for subsequent
mathematical steps, simulating how humans maintain mathematical working memory
during complex calculations.

\vspace{6pt}
\textbf{[Code Changes]} Modify the prompt template in \texttt{sample.py} to
include a ``Mathematical Context'' section that accumulates relevant
mathematical facts during problem solving. Implement context buffer management
in \texttt{train\_loop} in \texttt{grpo.py} that maintains recent mathematical
definitions, formulas, and intermediate calculations across problems in the same
batch. Update the buffer with key mathematical expressions extracted from each
response. Use enhanced prompts that include relevant context from the buffer.
Set \texttt{--learning\_rate 2.5e-5} and \texttt{--loss\_type reinforce\_with\_baseline}.

\vspace{6pt}
\textbf{Validation accuracy: 58.0}

\begin{CodeBlock}{Example 1 code diff}
--- repo_variants_full_pipeline_bsz50_fixed2_epoch7/idea_43/sample.py
+++ repo_variants_full_pipeline_bsz50_fixed2_epoch7/idea_43/sample.py
@@ -2,6 +2,7 @@ from vllm import LLM, SamplingParams
 import json
 from tqdm import tqdm
 import random
+import re

 def load_prompt_template(prompt_path = "prompts/r1_zero.prompt"):
     with open(prompt_path, 'r') as f:
@@ -20,12 +21,52 @@ def get_model_and_sampling_params(model_path = "Qwen/Qwen2.5-Math-1.5B", token

     return llm, sampling_params

-def load_dataset_with_prompt_template(prompt_template, dataset_path="/juice5b/scr5b/nlp/aihinton/CS336_data/a5-alignment/MATH/test.jsonl"):
+def extract_mathematical_expressions(text):
+    """Extract mathematical expressions, formulas, and key facts from text."""
+    expressions = []
+
+    # Extract LaTeX math expressions
+    latex_patterns = [
+        r'\$[^$]+\$',  # Inline math
+        r'\\[a-zA-Z]+\{[^}]+\}',  # LaTeX commands
+        r'\\frac\{[^}]+\}\{[^}]+\}',  # Fractions
+        r'\\sqrt\{[^}]+\}',  # Square roots
+        r'[a-zA-Z]\s*=\s*[^,\n\.]+',  # Variable assignments
+    ]
+
+    for pattern in latex_patterns:
+        matches = re.findall(pattern, text)
+        expressions.extend(matches)
+
+    # Extract definitions and key facts
+    definition_patterns = [
+        r'[A-Z][a-z]+ is defined as [^\.]+\.',
+        r'Let [a-zA-Z] be [^\.]+\.',
+        r'Given that [^\.]+\.',
+        r'Since [^\.]+\.',
+        r'Therefore [^\.]+\.',
+    ]
+
+    for pattern in definition_patterns:
+        matches = re.findall(pattern, text)
+        expressions.extend(matches)
+
+    return list(set(expressions))  # Remove duplicates
+
+def create_context_prompt(base_template, context_items):
+    """Create a prompt with mathematical context."""
+    if context_items:
+        context_section = "Mathematical Context:\n" + "\n".join(f"- {item}" for item in context_items[-10:]) + "\n\n"
+        return context_section + base_template
+    return base_template
+
+def load_dataset_with_prompt_template(prompt_template, dataset_path="/juice5b/scr5b/nlp/aihinton/CS336_data/a5-alignment/MATH/test.jsonl", context_items=None):
     with open(dataset_path, 'r') as f:
         dataset = [json.loads(line) for line in f]
     print(f"Loaded {len(dataset)} examples from {dataset_path}")

-    prompts = [prompt_template.format(question=example["problem"]) for example in dataset]
+    enhanced_template = create_context_prompt(prompt_template, context_items)
+    prompts = [enhanced_template.format(question=example["problem"]) for example in dataset]
     answers = [example["answer"] for example in dataset]
     return prompts, answers

@@ -93,4 +134,42 @@
     if return_rewards:
         return batched_prompts, batched_answers, batched_responses, batched_rewards
     return results
-
+
+class MathContextBuffer:
+    """Buffer to maintain mathematical working memory across problems."""
+
+    def __init__(self, max_size=50):
+        self.buffer = []
+        self.max_size = max_size
+
+    def add_expressions(self, expressions):
+        """Add new mathematical expressions to the buffer."""
+        for expr in expressions:
+            if expr and len(expr.strip()) > 3:  # Filter out very short expressions
+                self.buffer.append(expr.strip())
+
+        # Keep only the most recent items
+        if len(self.buffer) > self.max_size:
+            self.buffer = self.buffer[-self.max_size:]
+
+    def get_relevant_context(self, query_text, max_items=10):
+        """Get the most relevant context items for a given query."""
+        if not self.buffer:
+            return []
+
+        # Simple relevance scoring based on shared mathematical terms
+        scored_items = []
+        query_lower = query_text.lower()
+
+        for item in self.buffer[-20:]:  # Consider recent items
+            score = sum(1 for word in item.lower().split() if word in query_lower)
+            if score > 0:
+                scored_items.append((score, item))
+
+        # Return top items sorted by relevance
+        scored_items.sort(reverse=True, key=lambda x: x[0])
+        return [item[1] for item in scored_items[:max_items]]
+
+    def clear(self):
+        """Clear the context buffer."""
+        self.buffer = []
--- repo_variants_full_pipeline_bsz50_fixed2_epoch7/idea_43/grpo.py
+++ repo_variants_full_pipeline_bsz50_fixed2_epoch7/idea_43/grpo.py
@@ -1,6 +1,6 @@
 from transformers import AutoModelForCausalLM, AutoTokenizer
 from utils import tokenize_prompt_and_output, get_response_log_probs
-from sample import load_prompt_template, load_dataset_with_prompt_template, sample_rollout
+from sample import load_prompt_template, load_dataset_with_prompt_template, sample_rollout, MathContextBuffer, extract_mathematical_expressions, create_context_prompt
 from drgrpo_grader import r1_zero_reward_fn_train
 from evaluate import r1_zero_reward_fn_eval, evaluate_vllm
 from grpo_utils import compute_group_normalized_rewards, grpo_microbatch_train_step
@@ -66,6 +66,9 @@ def evaluate_model(policy_model, vllm_model, eval_prompts, eval_answers, eval_s
 def train_loop(model, train_prompts, train_answers, learning_rate, grpo_steps, train_steps_per_rollout, output_dir, batch_size, gradient_accumulation_steps = 4, group_size = 2, rollout_subset_size = 256, device = "cuda", logging_steps = 20, saving_steps = 4000, eval_epochs = 5, eval_prompts = None, eval_answers = None, sampling_params = None, eval_vllm_model = None, cliprange = 0.2, loss_type = "reinforce_with_baseline"):
     model.to(device)
     training_steps = grpo_steps
+
+    # Initialize mathematical context buffer
+    context_buffer = MathContextBuffer(max_size=100)
     optimizer = torch.optim.AdamW(model.parameters(), lr=learning_rate, weight_decay=0.0, betas=(0.9, 0.95))
     global_step = 0  # Initialize global step counter

@@ -85,8 +88,31 @@ def train_loop(model, train_prompts, train_answers, learning_rate, grpo_steps,
         load_policy_into_vllm_instance(model, vllm_model)

         ## sample rollouts
+        # Get enhanced prompts with mathematical context for this epoch
         print ("Sampling rollouts for epoch: ", epoch)
-        rollout_prompts, rollout_answers, rollout_responses, rollout_rewards = sample_rollout(vllm_model, r1_zero_reward_fn_train, train_prompts, train_answers, G=group_size, eval_sampling_params=eval_sampling_params, subset_size=rollout_subset_size, return_rewards=True, batch_size=512)
+
+        # Create enhanced prompts with context for this batch
+        enhanced_prompts = []
+        base_template = load_prompt_template()
+
+        # Select subset of problems for this epoch
+        if rollout_subset_size is not None:
+            indices = random.sample(range(len(train_prompts)), rollout_subset_size)
+            epoch_prompts = [train_prompts[i] for i in indices]
+            epoch_answers = [train_answers[i] for i in indices]
+        else:
+            epoch_prompts = train_prompts
+            epoch_answers = train_answers
+
+        # Create context-enhanced prompts
+        for prompt in epoch_prompts:
+            relevant_context = context_buffer.get_relevant_context(prompt, max_items=8)
+            enhanced_prompt = create_context_prompt(base_template.format(question=prompt.split("Question: ")[-1]), relevant_context)
+            enhanced_prompts.append(enhanced_prompt)
+
+        # Sample with enhanced prompts (need to adapt sample_rollout for direct prompt input)
+        rollout_prompts, rollout_answers, rollout_responses, rollout_rewards = sample_rollout_with_prompts(vllm_model, r1_zero_reward_fn_train, enhanced_prompts, epoch_answers, G=group_size, eval_sampling_params=eval_sampling_params, return_rewards=True, batch_size=512)
+
         # Randomly sample 2 rollouts to print
         indices = random.sample(range(len(rollout_prompts)), 2)
         print ("Example rollouts:")
@@ -96,6 +122,15 @@ def train_loop(model, train_prompts, train_answers, learning_rate, grpo_steps,
             print(f"Response: {rollout_responses[idx]}")
             print(f"Reward: {rollout_rewards[idx]}")
             print(f"Ground truth: {rollout_answers[idx]}")
+
+        # Update context buffer with mathematical expressions from responses
+        print("Updating mathematical context buffer...")
+        for response in rollout_responses:
+            if rollout_rewards[rollout_responses.index(response)] > 0:  # Only add from correct responses
+                mathematical_expressions = extract_mathematical_expressions(response)
+                if mathematical_expressions:
+                    context_buffer.add_expressions(mathematical_expressions)
+
         rollout_tokenized = tokenize_prompt_and_output(rollout_prompts, rollout_responses, tokenizer)
         rollout_data_loader = create_data_loader(rollout_tokenized, batch_size=batch_size, shuffle=False)

@@ -196,6 +231,47 @@ def train_loop(model, train_prompts, train_answers, learning_rate, grpo_steps,
                     }, step=global_step)
                     # print ("Global Step: ", global_step, "Loss: ", loss.item(), "Entropy: ", entropy.mean().item(), "Clip fraction: ", metadata.get("clip_fraction", 0.0))

+def sample_rollout_with_prompts(
+    vllm_model,
+    reward_fn,
+    prompts,
+    answers,
+    G,
+    eval_sampling_params,
+    return_rewards=False,
+    batch_size=64
+):
+    """Sample rollouts using provided prompts directly."""
+    # Create batched prompts by repeating each prompt G times
+    batched_prompts = []
+    batched_answers = []
+    for prompt, answer in zip(prompts, answers):
+        batched_prompts.extend([prompt] * G)
+        batched_answers.extend([answer] * G)
+
+    # Process in batches to avoid OOM
+    all_outputs = []
+    for i in range(0, len(batched_prompts), batch_size):
+        batch_prompts = batched_prompts[i:i + batch_size]
+        batch_outputs = vllm_model.generate(batch_prompts, eval_sampling_params)
+        all_outputs.extend(batch_outputs)
+
+    # Process results
+    batched_responses = []
+    batched_rewards = []
+    total_rewards = 0
+
+    for output, answer in tqdm(zip(all_outputs, batched_answers)):
+        generated_answer = output.outputs[0].text
+        reward = reward_fn(generated_answer, answer)
+        batched_responses.append(generated_answer)
+        batched_rewards.append(reward["reward"])
+        total_rewards += reward["reward"]
+
+    print(f"Accuracy of sampled rollouts: {total_rewards}/{len(batched_prompts)} = {total_rewards / len(batched_prompts) * 100}%
+
+    return batched_prompts, batched_answers, batched_responses, batched_rewards
+
 if __name__ == "__main__":
     parser = argparse.ArgumentParser()
     parser.add_argument("--model_path", type=str, default="Qwen/Qwen2.5-Math-1.5B")
@@ -203,7 +279,7 @@ if __name__ == "__main__":
     parser.add_argument("--train_dataset_path", type=str, default="../MATH/train.jsonl")
     parser.add_argument("--eval_dataset_path", type=str, default="../MATH/test.jsonl")
     parser.add_argument("--output_dir", type=str, default="ckpts/")
-    parser.add_argument("--learning_rate", type=float, default=1e-5)
+    parser.add_argument("--learning_rate", type=float, default=2.5e-5)
     parser.add_argument("--grpo_steps", type=int, default=200)
     parser.add_argument("--group_size", type=int, default=8)
     parser.add_argument("--rollout_subset_size", type=int, default=256)
@@ -212,7 +288,7 @@ if __name__ == "__main__":
     parser.add_argument("--gradient_accumulation_steps", type=int, default=16)
     parser.add_argument("--batch_size", type=int, default=4)
     parser.add_argument("--cliprange", type=float, default=0.2)
-    parser.add_argument("--loss_type", type=str, default="grpo_clip")
+    parser.add_argument("--loss_type", type=str, default="reinforce_with_baseline")
     parser.add_argument("--wandb_project", type=str, default="grpo-math")
     parser.add_argument("--wandb_name", type=str, default="grpo_clip_1")
     args = parser.parse_args()
--- repo_variants_full_pipeline_bsz50_fixed2_epoch7/idea_43/run_job.sh
+++ repo_variants_full_pipeline_bsz50_fixed2_epoch7/idea_43/run_job.sh
@@ -21,7 +21,7 @@ timeout 2h uv run   \
     --index https://download.pytorch.org/whl/cu128 \
     --index-strategy unsafe-best-match \
     python grpo.py \
-        --learning_rate 1e-5 \
+        --learning_rate 2.5e-5 \
         --grpo_steps 20 \
         --group_size 8 \
         --rollout_subset_size 128 \
@@ -30,7 +30,7 @@ timeout 2h uv run   \
         --gradient_accumulation_steps 16 \
         --batch_size 4 \
         --cliprange 0.2 \
-        --loss_type grpo_clip \
+        --loss_type reinforce_with_baseline \
         --wandb_name $wandb_name

 echo "Experiment finished successfully!"
\end{CodeBlock}

\section{Reinforcement learning from execution reward}
\label{sec:rl-from-execution-reward}

Unlike evolutionary search, reinforcement learning shapes model behavior through gradient updates.
Despite recent success on verifiable domains like math and coding~\cite{r1}, RL's effectiveness on open-ended AI research remains unclear.
We explore whether the automated executor can serve as a reward function to directly finetune LLMs for more effective idea generation via RL.
We detail our implementation, experiment setup, and analysis of training dynamics.

\begin{figure}[!htbp]
  \centering
  \subfigure{\includegraphics[width=0.495\linewidth]{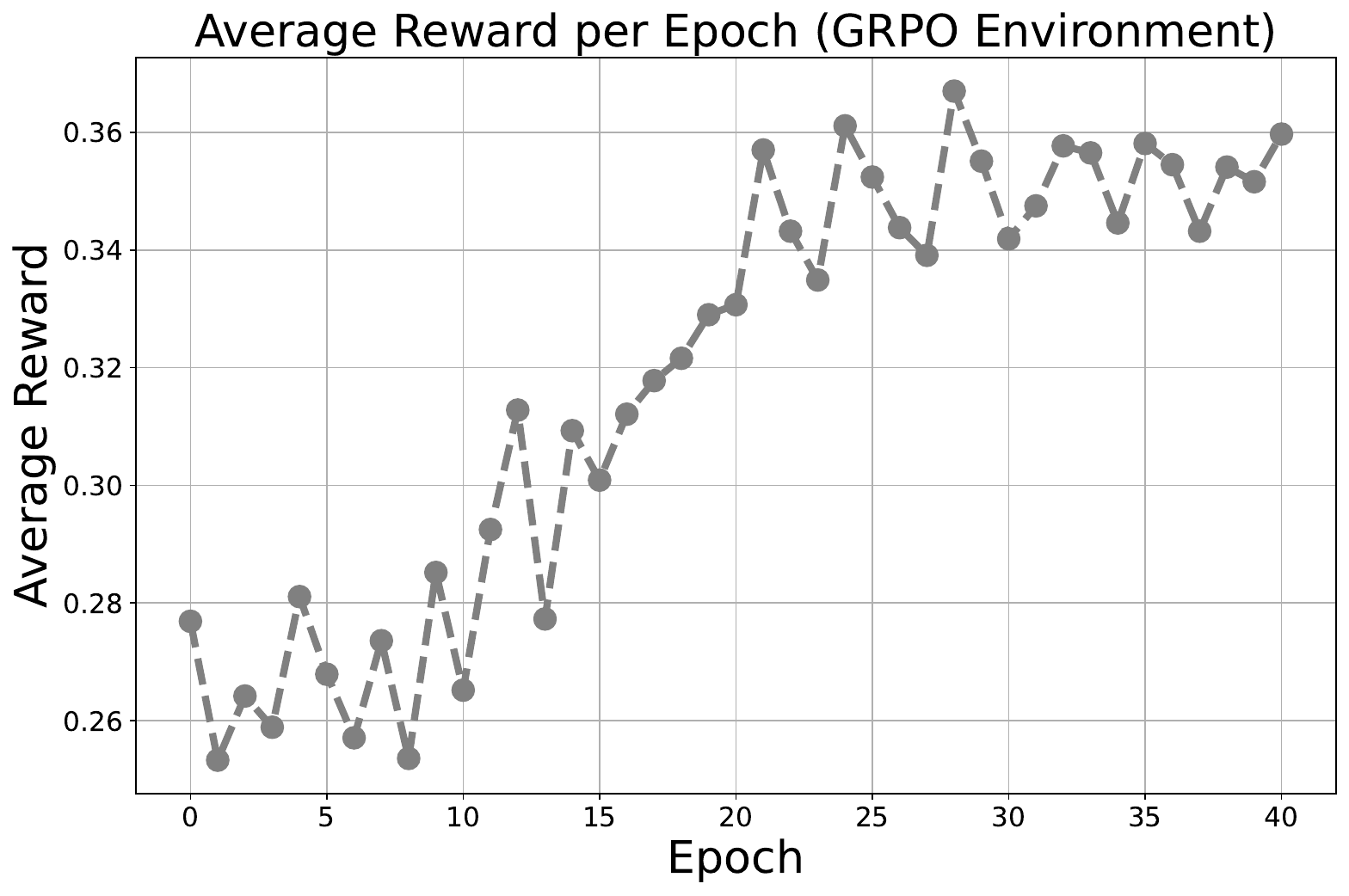}}
  \subfigure{\includegraphics[width=0.495\linewidth]{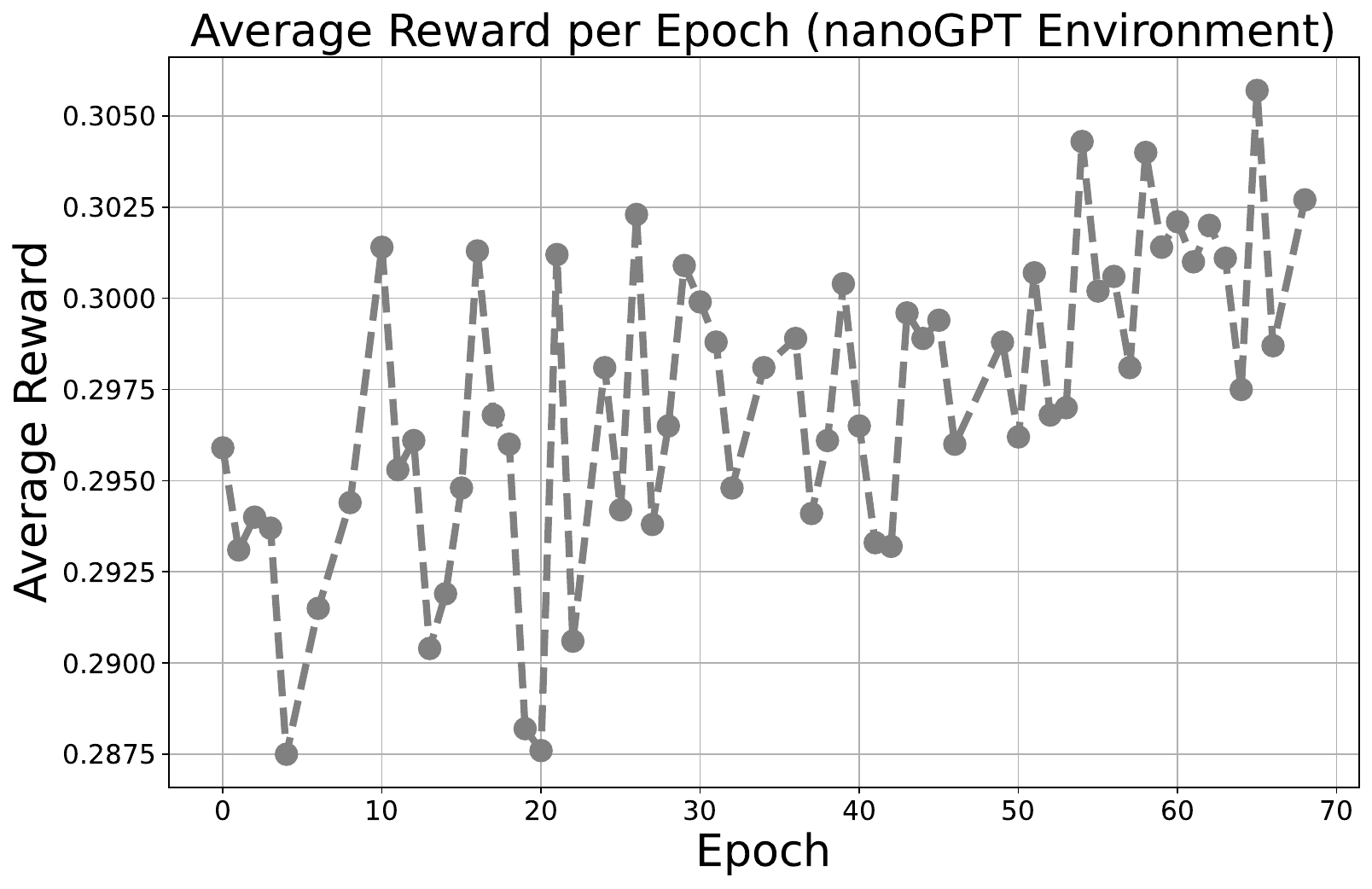}}

    \subfigure{\includegraphics[width=0.495\linewidth]{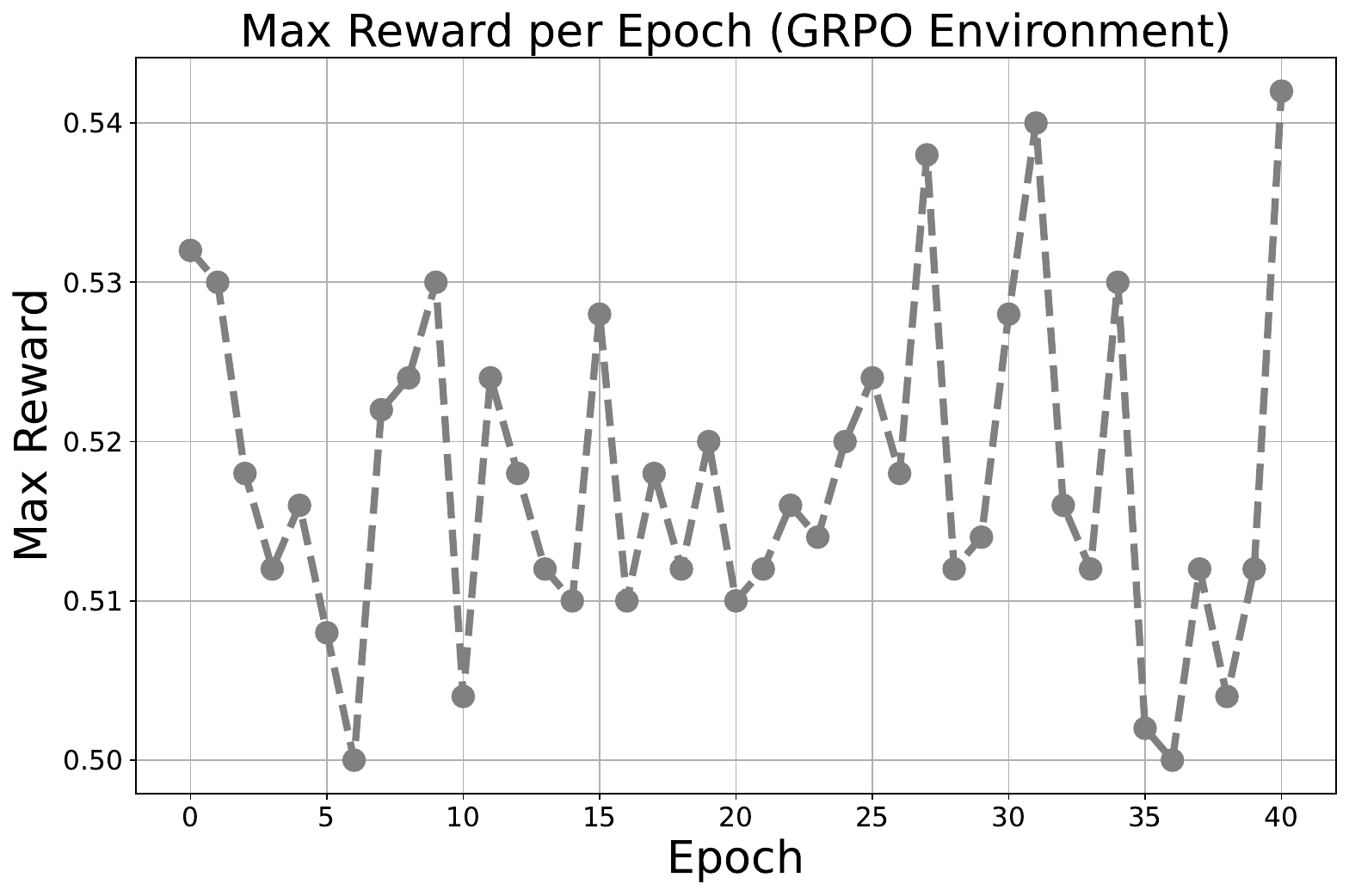}}
  \subfigure{\includegraphics[width=0.495\linewidth]{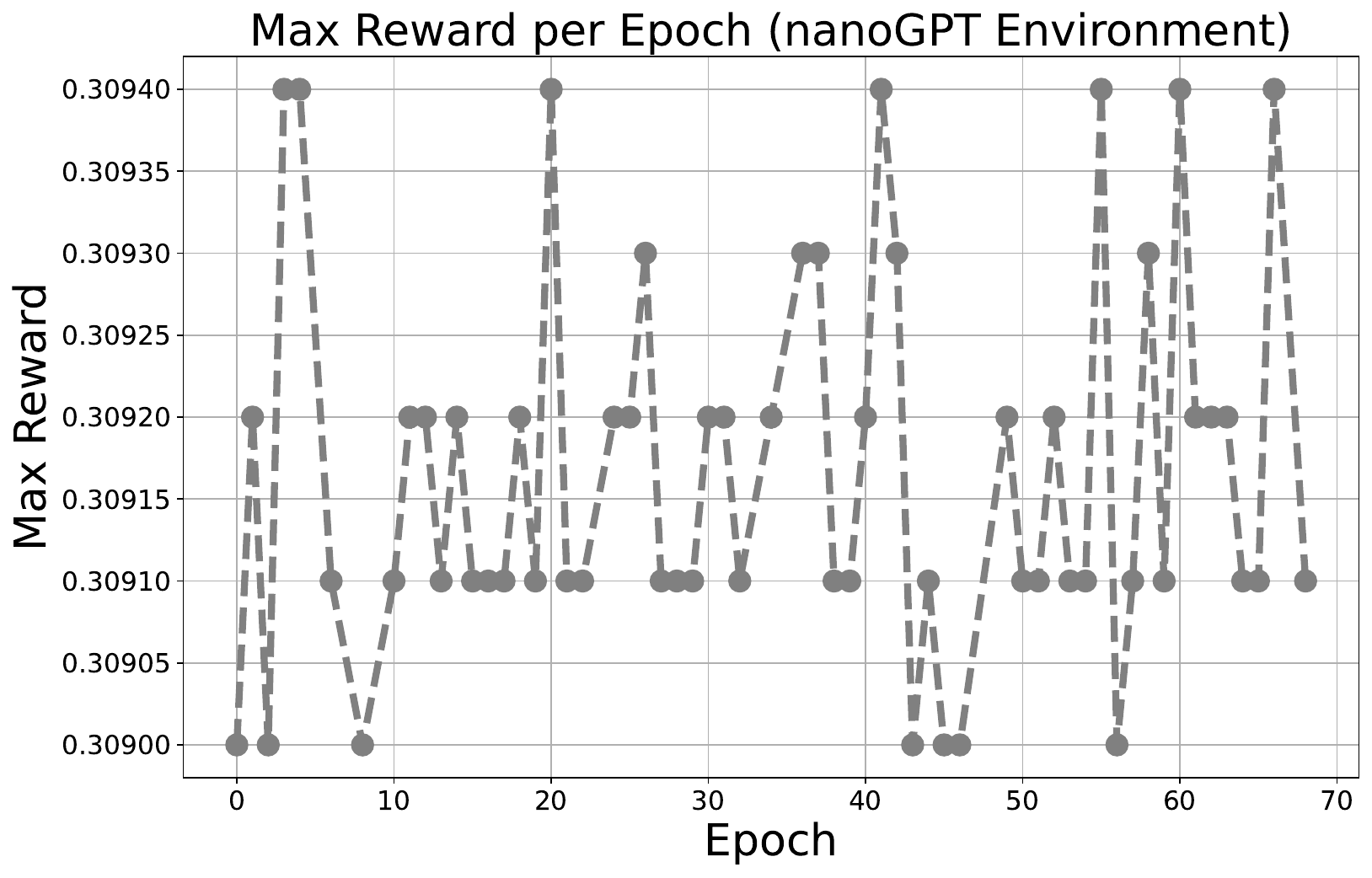}}

  \caption{Training curves of RL from execution reward. We plot the average reward per epoch in the upper row, and the max reward per epoch in the lower row. For the GRPO environment, the reward is the accuracy; for the nanoGPT environment, the reward is the reciprocal of the loss. The average reward increases, but not the max reward.}
  \label{fig:rl_results}
\end{figure}

\subsection{Reward design and experiment setup}

We use Qwen3-30B-A3B~\cite{Yang2025Qwen3TR} as the base model and finetune it with the standard GRPO algorithm~\cite{shao2024deepseekmathpushinglimitsmathematical}, motivated by its consistent empirical success on other verifiable domains.
Our prompt batch size is one because we have only one prompt per environment.
In the prompt, we provide the baseline codebase and ask the model to generate new ideas to improve the baseline (GRPO or nanoGPT).
This setup resembles prior work on RLVR from one training example~\cite{Wang2025ReinforcementLF}.

We use large group sizes to stabilize training: 256 for the post-training environment and 128 for the pre-training environment.
Because each GRPO idea runs on one GPU and each nanoGPT idea runs on 8 GPUs, these group sizes correspond to parallel execution on 256 GPUs (for GRPO) or 1024 GPUs (for nanoGPT) to obtain execution rewards for each batch of rollout ideas.
Each rollout consists of a thinking trace followed by the natural language idea.
We set a max output length of 8192 tokens for rollout sampling and feed only the extracted ideas to the automated executor, excluding the thinking trace.

For the post-training environment, we use the validation set accuracy of each rollout idea after execution as the reward. For ideas without a valid accuracy (i.e., failed execution due to code generation errors), we assign a reward of 0.
For the pre-training environment, we use the reciprocal of the validation loss ($\frac{1}{loss}$) as the reward and assign 0 to ideas with failed execution.
Our experiments use the Tinker API~\cite{Tinker}.

\subsection{Experiment results}

\paragraph{Positive training curves for average reward} The upper row of Figure~\ref{fig:rl_results} plots the average reward of all rollouts per training epoch.
We find that the average performance of generated ideas increases after sufficient training epochs on open-ended research environments.
The average accuracy on the GRPO environment increases from 0.253 to 0.343 after 40 training epochs (top left of Figure~\ref{fig:rl_results}); the average reward on the nanoGPT environment increases from 0.194 to 0.246 after 68 epochs (top right of Figure~\ref{fig:rl_results}), corresponding to a decrease in average validation loss from 5.150 to 4.066.
These training curves resemble prior findings on one-shot RLVR on other verifiable domains like math~\cite{Wang2025ReinforcementLF}.

\paragraph{The case of max reward} Despite reproducing the positive training curves observed in other domains, we argue that idea generation differs fundamentally from other verifiable tasks.
For scientific discovery, what matters is the upper bound of idea generation, not the average quality.
We want one breakthrough idea that dominates the baselines, not many safe ideas with a high average.
The lower row of Figure~\ref{fig:rl_results} plots the max reward of all rollouts at each training epoch.
The trend here is strikingly different---\textbf{the max reward fluctuates throughout RL training without a clear upward trend}.
This reveals a crucial limitation of standard GRPO for idea generation.
We next analyze why RL from execution reward improves average but not max reward.

\FloatBarrier

\section{Supplementary materials for test-time scaling}
\label{sec:s1-bf-appendix}

\subsection{Evaluation determinism}
\label{sec:s1-bf-eval-determinism}

We run our evaluations using vLLM~\citep{kwon2023efficientmemorymanagementlarge} because it is faster than alternatives.
However, even with identical random seeds and greedy sampling, evaluation scores can change across runs due to:
\begin{itemize}
\item different batch sizes,
\item continued generations, and
\item changes in tensor parallelism.
\end{itemize}

Because our model generates long reasoning traces before answering, small numeric changes can snowball into large differences.
We observe many generations that match exactly for thousands of tokens, then suddenly diverge on a single token, ultimately producing entirely different answers.
To mitigate this, we run our final evaluations in full precision unless otherwise indicated.

\FloatBarrier

\chapter{Supplementary materials for Chapter~\ref{chap:conclusion}}
\section{From Newton's law to Poisson's equation}
\label{sec:appendix-newton-poisson}

We derive Poisson's equation from Newton's law of gravitation~\eqref{eqn:newton-gravity}.
The first step toward a field-theoretic description is to pass from discrete masses to a continuous distribution.
Consider a test point mass $m$ at position $\mathbf{r}$ and a continuous matter distribution with mass density $\rho(\mathbf{r}')$ (kilograms per cubic meter).
An infinitesimal volume element $dV'$ at position $\mathbf{r}'$ contains mass $dM = \rho(\mathbf{r}')\,dV'$ and, by Newton's law~\eqref{eqn:newton-gravity}, exerts a force
\begin{equation}
\label{eqn:dF}
d\mathbf{F} = -\frac{Gm\,\rho(\mathbf{r}')\,dV'}{|\mathbf{r} - \mathbf{r}'|^3}\,(\mathbf{r} - \mathbf{r}')
\end{equation}
on $m$.
Integrating over all source matter and dividing by $m$ gives the \emph{gravitational field}, the force per unit test mass:
\begin{equation}
\label{eqn:grav-field}
\mathbf{g}(\mathbf{r}) = -G\!\int\!\frac{\rho(\mathbf{r}')\,(\mathbf{r} - \mathbf{r}')}{|\mathbf{r} - \mathbf{r}'|^3}\,dV'.
\end{equation}
This integral can be rewritten as the gradient of a scalar potential.
Define the \emph{gravitational potential} $\Phi(\mathbf{r})$---the potential energy per unit test mass (units of $\mathrm{m^2\,s^{-2}}$)---as
\begin{equation}
\label{eqn:newton-potential}
\Phi(\mathbf{r}) = -G\!\int\!\frac{\rho(\mathbf{r}')}{|\mathbf{r} - \mathbf{r}'|}\,dV'.
\end{equation}
For a single point mass $M$ at the origin, $\rho(\mathbf{r}') = M\,\delta^3(\mathbf{r}')$ and the integral reduces to $\Phi = -GM/r$.
Taking the gradient of~\eqref{eqn:newton-potential} with respect to $\mathbf{r}$ and passing it inside the integral:
\begin{equation}
-\nabla\Phi(\mathbf{r}) = G\!\int\!\rho(\mathbf{r}')\,\nabla\!\left(\frac{1}{|\mathbf{r} - \mathbf{r}'|}\right)dV' = -G\!\int\!\frac{\rho(\mathbf{r}')\,(\mathbf{r} - \mathbf{r}')}{|\mathbf{r} - \mathbf{r}'|^3}\,dV',
\end{equation}
where the second equality uses $\nabla(|\mathbf{r} - \mathbf{r}'|^{-1}) = -(\mathbf{r} - \mathbf{r}')/|\mathbf{r} - \mathbf{r}'|^3$.
The right-hand side is exactly the gravitational field~\eqref{eqn:grav-field}, so
\begin{equation}
\label{eqn:field-from-potential}
\mathbf{g} = -\nabla\Phi.
\end{equation}

The passage from the integral form~\eqref{eqn:grav-field} to a local differential equation is the crucial conceptual leap.
Applying the Laplacian $\nabla^2 = \nabla \cdot \nabla$ (with respect to $\mathbf{r}$) to both sides of~\eqref{eqn:newton-potential} and passing it inside the integral gives
\begin{equation}
\nabla^2 \Phi(\mathbf{r}) = -G\!\int\!\rho(\mathbf{r}')\,\nabla^2\!\left(\frac{1}{|\mathbf{r} - \mathbf{r}'|}\right)dV'.
\end{equation}
Using the identity $\nabla^2(1/|\mathbf{r} - \mathbf{r}'|) = -4\pi\,\delta^3(\mathbf{r} - \mathbf{r}')$, this becomes
\begin{equation}
\nabla^2 \Phi(\mathbf{r}) = 4\pi G\!\int\!\rho(\mathbf{r}')\,\delta^3(\mathbf{r} - \mathbf{r}')\,dV'.
\end{equation}
The Dirac delta sifts out the density at $\mathbf{r}$, collapsing the integral to yield
\begin{equation}
\nabla^2 \Phi = 4\pi G\,\rho,
\end{equation}
which is Poisson's equation~\eqref{eqn:poisson}.

\section{The metric tensor, stress-energy tensor, and spacetime curvature}
\label{sec:appendix-einstein-details}

We describe the tensorial ingredients of the Einstein field equations~\eqref{eqn:einstein-field} in detail.

\paragraph{The metric tensor $g_{\mu\nu}$.}
In Newtonian gravity, the potential $\Phi$ is a single number at each point in space.
Einstein replaces it with the \emph{metric tensor} $g_{\mu\nu}$, a $4 \times 4$ symmetric matrix-valued function of the spacetime coordinates $(t, x, y, z)$.
The metric tensor generalizes the Pythagorean theorem to curved spacetime: it defines the infinitesimal squared interval between two nearby events as
\begin{equation}
\label{eqn:metric-interval}
ds^2 = g_{\mu\nu}\,dx^\mu\,dx^\nu,
\end{equation}
where we use the Einstein summation convention (summing over repeated indices $\mu, \nu = 0,1,2,3$) throughout.
In flat spacetime with no gravity, the metric reduces to the Minkowski metric $\eta_{\mu\nu} = \mathrm{diag}(-1, 1, 1, 1)$, and the interval is $ds^2 = -c^2 dt^2 + dx^2 + dy^2 + dz^2$.
The presence of mass and energy warps the metric away from $\eta_{\mu\nu}$, and this warping is what we experience as gravity.

Because $g_{\mu\nu}$ is symmetric ($g_{\mu\nu} = g_{\nu\mu}$), it has 10 independent components---matching the 10 independent components of the field equation we arrive at below.
Its inverse is denoted $g^{\mu\nu}$ and satisfies $g_{\mu\alpha}\,g^{\alpha\nu} = \delta_\mu^\nu$, where $\delta_\mu^\nu$ is the Kronecker delta.

To see the metric tensor in action, consider the Schwarzschild solution---the geometry outside a spherical, non-rotating mass $M$---which in spherical coordinates $(t, r, \theta, \phi)$ takes the form
\begin{equation}
\label{eqn:schwarzschild}
ds^2 = -\left(1 - \frac{2GM}{rc^2}\right)c^2\,dt^2 + \left(1 - \frac{2GM}{rc^2}\right)^{-1}dr^2 + r^2\,d\theta^2 + r^2\sin^2\theta\,d\phi^2.
\end{equation}
The components of $g_{\mu\nu}$ are functions of the coordinates: the flow of time ($g_{00}$) and the measurement of radial distance ($g_{11}$) both depend on $r$, encoding gravitational time dilation and spatial curvature.

\paragraph{The stress-energy tensor $T_{\mu\nu}$.}
In Poisson's equation, the source of gravity is the scalar mass density $\rho$.
Einstein replaces it with the \emph{stress-energy tensor} $T_{\mu\nu}$, a $4 \times 4$ symmetric tensor encoding the matter and energy content at each point in spacetime: each component $T_{\mu\nu}$ is the flux of the $\mu$-th component of four-momentum through a surface of constant $x^\nu$.

The time-time component $T_{00}$ is the energy density.
Rest mass $m$ carries energy $mc^2$, so mass density $\rho$ contributes $\rho c^2$ to the energy density---but $T_{00}$ is the \emph{total} energy density, which also includes contributions from electromagnetic fields, thermal motion, and kinetic energy.
For ordinary matter at rest, $T_{00} = \rho c^2$, recovering the Newtonian source.

The mixed components $T_{0i} = T_{i0}$ encode momentum density and energy flux---two descriptions of the same physical quantity.
Moving energy carries momentum ($p = Ev/c^2$), so the flux of energy across a surface is identically the density of momentum in the corresponding direction.

The spatial block $T_{ij}$ encodes stress: the rate at which the $i$-th component of momentum is transported across a surface perpendicular to direction $j$.
For a perfect fluid, $T_{ij} = p\,\delta_{ij}$, where $p$ is the isotropic pressure; viscous fluids additionally have off-diagonal shear components.

In Newtonian gravity, only mass generates gravity.
In general relativity, energy, momentum, and pressure all curve spacetime.

\paragraph{From the metric to curvature: $\Gamma$, $R^{\alpha}{}_{\beta\mu\nu}$, $R_{\mu\nu}$, and $\mathcal{R}$.}
The left-hand side of Einstein's equation must express how spacetime is curved.
We build this curvature entirely from $g_{\mu\nu}$ and its derivatives, through a four-step pipeline.

\emph{Step 1: Christoffel symbols.}
The Christoffel symbols $\Gamma^\alpha_{\mu\nu}$ encode how the coordinate basis vectors change from point to point in curved spacetime.
They are computed from the first derivatives of the metric:
\begin{equation}
\label{eqn:christoffel}
\Gamma^\alpha_{\mu\nu} = \frac{1}{2}\,g^{\alpha\beta}\!\left(\partial_\mu g_{\nu\beta} + \partial_\nu g_{\beta\mu} - \partial_\beta g_{\mu\nu}\right),
\end{equation}
where $\partial_\mu \equiv \partial/\partial x^\mu$.
Christoffel symbols are not tensors---they can be made to vanish at any single point by choosing appropriate coordinates (the mathematical expression of the equivalence principle, which states that gravity is locally indistinguishable from acceleration).
However, their derivatives, combined appropriately, yield genuine tensors that measure intrinsic curvature.

\emph{Step 2: the Riemann curvature tensor.}
The Riemann tensor $R^{\alpha}{}_{\beta\mu\nu}$ is the fundamental measure of spacetime curvature, constructed from the Christoffel symbols and their first derivatives:
\begin{equation}
\label{eqn:riemann}
R^{\alpha}{}_{\beta\mu\nu} = \partial_\mu \Gamma^\alpha_{\nu\beta} - \partial_\nu \Gamma^\alpha_{\mu\beta} + \Gamma^\alpha_{\mu\gamma}\,\Gamma^\gamma_{\nu\beta} - \Gamma^\alpha_{\nu\gamma}\,\Gamma^\gamma_{\mu\beta}.
\end{equation}
Because the Christoffel symbols involve first derivatives of $g_{\mu\nu}$, the Riemann tensor involves \emph{second} derivatives of the metric.
Physically, it measures tidal forces: if a cloud of freely falling particles drifts through curved spacetime, the Riemann tensor determines how the cloud is stretched and squeezed.
The tensor has 256 components in four dimensions, but symmetries reduce the independent components to 20.

\emph{Step 3: the Ricci tensor.}
We obtain the Ricci tensor $R_{\mu\nu}$ by contracting (tracing over) one pair of indices of the Riemann tensor:
\begin{equation}
\label{eqn:ricci-tensor}
R_{\mu\nu} = R^{\alpha}{}_{\mu\alpha\nu}.
\end{equation}
This contraction distills the 20-component Riemann tensor into a 10-component symmetric tensor.
$R_{\mu\nu}$ isolates the volume-changing part of the curvature: it measures how the volume of a small ball of freely falling particles shrinks or grows as the ball moves through spacetime, discarding the shape-distorting tidal effects captured by the full Riemann tensor.

\emph{Step 4: the Ricci scalar.}
The Ricci scalar $\mathcal{R}$ is the trace of the Ricci tensor:
\begin{equation}
\label{eqn:ricci-scalar}
\mathcal{R} = g^{\mu\nu}\,R_{\mu\nu}.
\end{equation}
$\mathcal{R}$ compresses the entire curvature into a single number at each point---positive on sphere-like regions (where volumes are smaller than in flat space) and negative on saddle-like regions (where volumes are larger).

\paragraph{The Einstein tensor.}
The unique symmetric, divergence-free tensor constructible from $g_{\mu\nu}$ and its first and second derivatives is the Einstein tensor $G_{\mu\nu} \equiv R_{\mu\nu} - \frac{1}{2}\mathcal{R}g_{\mu\nu}$.
The notation $\nabla_\mu G^{\mu\nu}$ denotes the covariant divergence: the covariant derivative $\nabla_\mu$ is summed over the repeated index $\mu$, generalizing the vector divergence $\partial_i F^i$ to a rank-2 tensor in curved spacetime.
The identity $\nabla_\mu G^{\mu\nu} = 0$ (the Bianchi identity) ensures compatibility with the conservation of energy and momentum ($\nabla_\mu T^{\mu\nu} = 0$).

Stripping away the shorthand, the field equations~\eqref{eqn:einstein-field} are a system of 10 coupled, nonlinear, second-order partial differential equations for the 10 independent components of $g_{\mu\nu}$, sourced by $T_{\mu\nu}$.
Every term on the left-hand side---$R_{\mu\nu}$, $\mathcal{R}$, and $g_{\mu\nu}$ itself---is built from $g_{\mu\nu}$ and its derivatives.
The nonlinearity stems from the Christoffel symbols appearing both inside derivatives and multiplied against each other in the Riemann tensor~\eqref{eqn:riemann}, so gravitational perturbations do not simply superpose---the field's behavior depends on the field itself.

\section{The Newtonian limit}
\label{sec:appendix-newtonian-limit}

As a consistency check, Einstein's equations must reduce to Newton's in the appropriate limit.
We impose three physical assumptions that characterize everyday gravitational environments like the solar system.
\begin{enumerate}[leftmargin=16pt]
\item \emph{Weak field}: spacetime is nearly flat, so we can write $g_{\mu\nu} = \eta_{\mu\nu} + h_{\mu\nu}$ where $|h_{\mu\nu}| \ll 1$.
\item \emph{Slow motion}: all matter moves much slower than light, $v \ll c$.
\item \emph{Static field}: the gravitational field does not change in time, $\partial_0 h_{\mu\nu} = 0$.
\end{enumerate}

Under these assumptions, the stress-energy tensor is dominated by its time-time component $T_{00} \approx \rho c^2$, with all other components negligible.
Its trace is $T = g^{\mu\nu}T_{\mu\nu} \approx g^{00}T_{00} \approx -\rho c^2$.

On the geometric side, to first order in $h_{\mu\nu}$, the Christoffel symbols~\eqref{eqn:christoffel} reduce to $\Gamma^\alpha_{\mu\nu} \approx \frac{1}{2}\,\eta^{\alpha\beta}(\partial_\mu h_{\nu\beta} + \partial_\nu h_{\beta\mu} - \partial_\beta h_{\mu\nu})$.
Dropping all products of Christoffel symbols in the Riemann tensor~\eqref{eqn:riemann} and all time derivatives ($\partial_0 h_{\mu\nu} = 0$), the Ricci tensor's 00-component becomes
\begin{equation}
R_{00} \approx \partial_k \Gamma^k_{00} = \partial_k\!\left[-\tfrac{1}{2}\,\delta^{kl}\,\partial_l h_{00}\right] = -\tfrac{1}{2}\,\nabla^2 h_{00}.
\end{equation}
To obtain the source side, we trace-reverse the field equations~\eqref{eqn:einstein-field}.
Contracting both sides with $g^{\mu\nu}$ gives $-\mathcal{R} = \frac{8\pi G}{c^4}\,T$ (since $g^{\mu\nu}G_{\mu\nu} = \mathcal{R} - 2\mathcal{R} = -\mathcal{R}$), so $\mathcal{R} = -\frac{8\pi G}{c^4}\,T$.
Substituting back into~\eqref{eqn:einstein-field} yields the trace-reversed form $R_{\mu\nu} = \frac{8\pi G}{c^4}(T_{\mu\nu} - \frac{1}{2}\,T\,g_{\mu\nu})$.
Evaluating the 00-component with $T_{00} \approx \rho c^2$, $T \approx -\rho c^2$, and $g_{00} \approx \eta_{00} = -1$:
\begin{equation}
R_{00} = \frac{8\pi G}{c^4}\!\left(\rho c^2 - \tfrac{1}{2}(-\rho c^2)(-1)\right) = \frac{8\pi G}{c^4}\!\cdot\!\tfrac{1}{2}\,\rho c^2 = \frac{4\pi G\rho}{c^2}.
\end{equation}
Equating the two expressions for $R_{00}$:
\begin{equation}
\nabla^2 h_{00} = -\frac{8\pi G\rho}{c^2}.
\end{equation}
The final step is to identify $h_{00}$ with the Newtonian potential.
In general relativity, a freely falling particle follows a geodesic: $\frac{d^2 x^\mu}{d\tau^2} + \Gamma^\mu_{\alpha\beta}\,\frac{dx^\alpha}{d\tau}\,\frac{dx^\beta}{d\tau} = 0$.
For a slowly moving particle ($dx^i/d\tau \ll c\,dt/d\tau$), the sum over $\alpha,\beta$ is dominated by $\alpha = \beta = 0$, reducing the spatial components to
\begin{equation}
\frac{d^2 x^i}{dt^2} \approx -c^2\,\Gamma^i_{00} = -c^2\!\cdot\!\left(-\tfrac{1}{2}\,\partial_i h_{00}\right) = \frac{c^2}{2}\,\partial_i h_{00},
\end{equation}
where $\Gamma^i_{00} = -\frac{1}{2}\,\delta^{ij}\,\partial_j h_{00}$ follows from the linearized, static Christoffel symbol.
Comparing with Newton's $d^2 x^i/dt^2 = -\partial_i \Phi$ identifies $h_{00} = -2\Phi/c^2$.
Substituting:
\begin{equation}
\nabla^2\!\left(-\frac{2\Phi}{c^2}\right) = -\frac{8\pi G\rho}{c^2} \qquad\Longrightarrow\qquad \nabla^2 \Phi = 4\pi G\rho.
\end{equation}
This is exactly Poisson's equation~\eqref{eqn:poisson}: in the weak-field, slow-motion, static limit, the full machinery of curved spacetime collapses to the scalar potential theory of Newtonian gravity.

\section{Deriving the Friedmann equations}
\label{sec:appendix-friedmann-derivation}

We derive the Friedmann equations by substituting the FLRW metric~\eqref{eqn:flrw-metric} into the Einstein field equations~\eqref{eqn:einstein-field}.
We write $\dot{a} \equiv da/dt$ and $\ddot{a} \equiv d^2a/dt^2$ throughout.

\paragraph{Christoffel symbols.}
Since the metric~\eqref{eqn:flrw-metric} is diagonal, the Christoffel symbols~\eqref{eqn:christoffel} simplify: $g^{\alpha\sigma}$ is nonzero only when $\alpha = \sigma$, and since $g_{00} = -c^2$ is constant and $g_{0i} = 0$, the only nonzero symbols involve one time index and one or two spatial indices.

\emph{Case 1: two spatial lower indices.}
\begin{equation}
\Gamma^0_{ij} = \tfrac{1}{2}\,g^{00}\bigl(\partial_i g_{j0} + \partial_j g_{i0} - \partial_0 g_{ij}\bigr) = \tfrac{1}{2}\!\left(-\frac{1}{c^2}\right)\!\left(-2a\dot{a}\,\delta_{ij}\right) = \frac{a\dot{a}}{c^2}\,\delta_{ij},
\end{equation}
since the first two terms vanish ($g_{i0} = 0$) and $\partial_t(a^2\delta_{ij}) = 2a\dot{a}\,\delta_{ij}$.
Explicitly: $\Gamma^0_{11} = \Gamma^0_{22} = \Gamma^0_{33} = a\dot{a}/c^2$.

\emph{Case 2: one temporal and one spatial lower index.}
\begin{equation}
\Gamma^i_{0j} = \tfrac{1}{2}\,g^{ii}\bigl(\partial_0 g_{ji} + \partial_j g_{0i} - \partial_i g_{0j}\bigr) = \tfrac{1}{2}\!\left(\frac{1}{a^2}\right)\!\left(2a\dot{a}\,\delta_{ji}\right) = \frac{\dot{a}}{a}\,\delta^i{}_j.
\end{equation}
Explicitly: $\Gamma^1_{01} = \Gamma^2_{02} = \Gamma^3_{03} = \dot{a}/a$.
The quantity $H(t) \equiv \dot{a}/a$ is the \emph{Hubble parameter}.

\emph{All other Christoffel symbols vanish}: $\Gamma^\alpha_{00} = 0$ for all $\alpha$ (because $g_{00}$ is constant and $g_{0i} = 0$), and $\Gamma^i_{jk} = 0$ for all spatial $i,j,k$ (because $a^2\delta_{ij}$ has no spatial derivatives).

\paragraph{Ricci tensor.}
Expanding the contraction $R_{\mu\nu} = R^\alpha{}_{\mu\alpha\nu}$ via the Riemann tensor~\eqref{eqn:riemann} gives the working formula
\begin{equation}
R_{\mu\nu} = \partial_\alpha \Gamma^\alpha_{\mu\nu} - \partial_\nu \Gamma^\alpha_{\mu\alpha} + \Gamma^\alpha_{\alpha\beta}\Gamma^\beta_{\mu\nu} - \Gamma^\alpha_{\nu\beta}\Gamma^\beta_{\mu\alpha}.
\end{equation}
We compute $R_{00}$ and $R_{11}$; the remaining spatial components follow by isotropy ($R_{22} = R_{33} = R_{11}$).

\emph{Computing $R_{00}$.}
Term 1: $\partial_\alpha \Gamma^\alpha_{00} = 0$, since all $\Gamma^\alpha_{00}$ vanish.
Term 2: $-\partial_0 \Gamma^\alpha_{0\alpha}$.
The nonzero contributions come from spatial $\alpha$: $\sum_{\alpha=1}^{3}\Gamma^\alpha_{0\alpha} = 3\dot{a}/a$, so
\begin{equation}
-\frac{d}{dt}\!\left(\frac{3\dot{a}}{a}\right) = -3\!\left(\frac{\ddot{a}}{a} - \frac{\dot{a}^2}{a^2}\right) = -\frac{3\ddot{a}}{a} + \frac{3\dot{a}^2}{a^2}.
\end{equation}
Term 3: $\Gamma^\alpha_{\alpha\beta}\Gamma^\beta_{00} = 0$, since all $\Gamma^\beta_{00}$ vanish.
Term 4: $-\Gamma^\alpha_{0\beta}\Gamma^\beta_{0\alpha}$, summed over $\alpha,\beta$.
Both factors require spatial indices, each contributing $\dot{a}/a$ when $\alpha = \beta$:
\begin{equation}
-\sum_{\alpha=1}^{3}\!\left(\frac{\dot{a}}{a}\right)^{\!2} = -\frac{3\dot{a}^2}{a^2}.
\end{equation}
Combining all four terms:
\begin{equation}
\label{eqn:R00-flrw}
\boxed{\;R_{00} = -\frac{3\ddot{a}}{a}.\;}
\end{equation}

\emph{Computing $R_{11}$.}
Term 1: $\partial_\alpha \Gamma^\alpha_{11}$.
The only nonzero Christoffel symbol is $\Gamma^0_{11} = a\dot{a}/c^2$, giving
\begin{equation}
\frac{d}{dt}\!\left(\frac{a\dot{a}}{c^2}\right) = \frac{1}{c^2}\bigl(\dot{a}^2 + a\ddot{a}\bigr).
\end{equation}
Term 2: $-\partial_1 \Gamma^\alpha_{1\alpha} = 0$, since the Christoffel symbols depend only on $t$.
Term 3: $\Gamma^\alpha_{\alpha\beta}\Gamma^\beta_{11}$.
Since $\Gamma^\beta_{11}$ is nonzero only for $\beta = 0$, this becomes $\sum_\alpha \Gamma^\alpha_{\alpha 0}\cdot a\dot{a}/c^2$.
The spatial contributions are $\Gamma^1_{10} + \Gamma^2_{20} + \Gamma^3_{30} = 3\dot{a}/a$, giving $3\dot{a}^2/c^2$.
Term 4: $-\Gamma^\alpha_{1\beta}\Gamma^\beta_{1\alpha}$, summed over $\alpha,\beta$.
The two nonzero contributions are $(\alpha,\beta) = (0,1)$ and $(1,0)$, each giving $\dot{a}^2/c^2$:
\begin{equation}
-\frac{a\dot{a}}{c^2}\!\cdot\!\frac{\dot{a}}{a} - \frac{\dot{a}}{a}\!\cdot\!\frac{a\dot{a}}{c^2} = -\frac{2\dot{a}^2}{c^2}.
\end{equation}
Combining all four terms:
\begin{equation}
\label{eqn:R11-flrw}
\boxed{\;R_{11} = \frac{1}{c^2}\bigl(a\ddot{a} + 2\dot{a}^2\bigr).\;}
\end{equation}

\paragraph{Ricci scalar.}
Contracting with the inverse metric:
\begin{align}
\mathcal{R} &= g^{00}R_{00} + 3\,g^{11}R_{11} = \left(-\frac{1}{c^2}\right)\!\left(-\frac{3\ddot{a}}{a}\right) + 3\!\left(\frac{1}{a^2}\right)\!\cdot\!\frac{1}{c^2}\bigl(a\ddot{a} + 2\dot{a}^2\bigr) \notag\\
&= \frac{3\ddot{a}}{c^2 a} + \frac{3\ddot{a}}{c^2 a} + \frac{6\dot{a}^2}{c^2 a^2}.
\end{align}
\begin{equation}
\label{eqn:ricci-scalar-flrw}
\boxed{\;\mathcal{R} = \frac{6}{c^2}\!\left[\frac{\ddot{a}}{a} + \frac{\dot{a}^2}{a^2}\right].\;}
\end{equation}

\paragraph{Einstein tensor.}
The 00-component:
\begin{align}
G_{00} &= R_{00} - \tfrac{1}{2}\,g_{00}\,\mathcal{R} = -\frac{3\ddot{a}}{a} - \tfrac{1}{2}(-c^2)\!\cdot\!\frac{6}{c^2}\!\left[\frac{\ddot{a}}{a} + \frac{\dot{a}^2}{a^2}\right] \notag\\
&= -\frac{3\ddot{a}}{a} + 3\!\left[\frac{\ddot{a}}{a} + \frac{\dot{a}^2}{a^2}\right].
\end{align}
The $\ddot{a}$ terms cancel completely:
\begin{equation}
\label{eqn:G00-flrw}
\boxed{\;G_{00} = \frac{3\dot{a}^2}{a^2}.\;}
\end{equation}

The 11-component:
\begin{align}
G_{11} &= R_{11} - \tfrac{1}{2}\,g_{11}\,\mathcal{R} = \frac{1}{c^2}\bigl(a\ddot{a} + 2\dot{a}^2\bigr) - \tfrac{1}{2}\,a^2\!\cdot\!\frac{6}{c^2}\!\left[\frac{\ddot{a}}{a} + \frac{\dot{a}^2}{a^2}\right] \notag\\
&= \frac{1}{c^2}\bigl(a\ddot{a} + 2\dot{a}^2\bigr) - \frac{3}{c^2}\bigl(a\ddot{a} + \dot{a}^2\bigr).
\end{align}
\begin{equation}
\label{eqn:G11-flrw}
\boxed{\;G_{11} = -\frac{1}{c^2}\bigl(2a\ddot{a} + \dot{a}^2\bigr).\;}
\end{equation}
By isotropy, $G_{22} = G_{33} = G_{11}$.

\paragraph{Field equations.}
Substituting into the field equations~\eqref{eqn:einstein-field} with $T_{00} = \rho c^2$ and $T_{ij} = 0$:

The 00-equation gives
\begin{equation}
\frac{3\dot{a}^2}{a^2} = \frac{8\pi G\rho}{c^2},
\end{equation}
which is the first Friedmann equation~\eqref{eqn:friedmann-I}.

The 11-equation, with $T_{11} = 0$, gives $2a\ddot{a} + \dot{a}^2 = 0$, or equivalently
\begin{equation}
\frac{2\ddot{a}}{a} + \frac{\dot{a}^2}{a^2} = 0,
\end{equation}
which is the second Friedmann equation~\eqref{eqn:friedmann-II}.

\section{Solving the Friedmann equations}
\label{sec:appendix-friedmann-solution}

We solve the Friedmann equations~\eqref{eqn:friedmann-I} and~\eqref{eqn:friedmann-II} for the scale factor $a(t)$ and the matter density $\rho(t)$.

From the second Friedmann equation~\eqref{eqn:friedmann-II}, multiplying through by $a^2$:
\begin{equation}
2a\ddot{a} + \dot{a}^2 = 0.
\end{equation}
Substituting $p = \dot{a}$, so $\ddot{a} = p\,dp/da$, and dividing by $p \neq 0$:
\begin{equation}
2a\,\frac{dp}{da} + p = 0.
\end{equation}
Setting $u = p^2 = \dot{a}^2$ gives $a\,du/da + u = 0$, a separable equation with solution $u = C/a$ for some constant $C > 0$.
Therefore $\dot{a}^2 = C/a$, and separating variables:
\begin{equation}
\sqrt{a}\;da = \sqrt{C}\;dt \qquad\Longrightarrow\qquad \tfrac{2}{3}\,a^{3/2} = \sqrt{C}\,(t - t_0).
\end{equation}
\begin{equation}
\label{eqn:a-of-t}
\boxed{\;a(t) \propto t^{2/3}.\;}
\end{equation}
The universe begins at $a = 0$---a singularity where all distances vanish and the density is infinite---then expands forever, decelerating but never stopping.

From the first Friedmann equation~\eqref{eqn:friedmann-I} and $\dot{a}^2 = C/a$:
\begin{equation}
\label{eqn:rho-of-t}
\rho = \frac{3c^2 C}{8\pi G\,a^3} \;\propto\; \frac{1}{a^3} \;\propto\; \frac{1}{t^2}.
\end{equation}
The density decreases as the cube of the scale factor: the total mass in any comoving volume is conserved ($\rho\,a^3 = \text{const}$), while the volume grows as $a^3$.
This conservation law also follows independently from the Bianchi identity $\nabla_\mu G^{\mu\nu} = 0$, which implies the continuity equation $\dot{\rho} + 3(\dot{a}/a)\rho = 0$, or equivalently $d(\rho\,a^3)/dt = 0$.

\bibliographystyle{plainnat}
\bibliography{reference}
\end{document}